\theoremstyle{plain}
\newtheorem{thm}{\protect\theoremname}[section]
\theoremstyle{plain}
\newtheorem{lem}[thm]{\protect\lemmaname}
\theoremstyle{remark}
\theoremstyle{plain}
\theoremstyle{plain}
\theoremstyle{definition}
\theoremstyle{plain}
\providecommand{\corollaryname}{Corollary}
\providecommand{\lemmaname}{Lemma}
\providecommand{\remarkname}{Remark}
\providecommand{\theoremname}{Theorem}
\providecommand{\conjecturename}{Conjecture}
\providecommand{\definitionname}{Definition}
\providecommand{\propositionname}{Proposition}
\newcommand{\argmax}{\text{argmax}}
\newcommand{\R}{\mathbb{R}}
\newcommand{\N}{\mathbb{N}}
\renewcommand{\P}{\mathbb{P}}
\newcommand{\E}{\mathbb{E}}
\newcommand{\Var}{\mathbb{V}}
\newcommand{\Phat}{\widehat{P}}
\newcommand{\Qhat}{\widehat{Q}}
\newcommand{\pistar}{\pi^\star}
\newcommand{\dataset}{\mathcal{D}}
\newcommand{\one}{\mathbf{1}}
\newcommand{\zero}{\mathbf{0}}
\newcommand{\ind}{\mathbb{I}}
\newcommand{\pihat}{\widehat{\pi}}
\newcommand{\Vc}{\overline{V}}
\newcommand{\Thpe}{\widehat{\mathcal{T}}_{\mathrm{pe}}}
\newcommand{\Tope}{\overline{\mathcal{T}}_{\mathrm{pe}}}
\newcommand{\Mopm}{\widetilde{\mathcal{T}}}
\newcommand{\Mopsp}{\widetilde{\mathcal{T}}_{1}}
\newcommand{\Mopvar}{\widetilde{\mathcal{T}}_{2}}
\newcommand{\Qhpe}{\widehat{Q}_{\mathrm{pe}}}
\newcommand{\Vhpe}{\widehat{V}_{\mathrm{pe}}}
\newcommand{\clip}{\mathrm{clip}}
\newcommand{\A}{\mathcal{A}}
\renewcommand{\S}{\mathcal{S}}
\newcommand{\Otilde}{\widetilde{O}}
\newcommand{\Ttilde}{\widetilde{\Theta}}
\newcommand{\Omtilde}{\widetilde{\Omega}}
\newcommand{\tmix}{\tau_{\mathrm{unif}}}
\DeclareMathOperator*{\Clim}{\text{C-lim}}
\global\long\def\infnorm#1{\left\Vert #1\right\Vert _{\infty}}%
\global\long\def\infinfnorm#1{\left\Vert #1\right\Vert _{\infty \to \infty}}%
\global\long\def\onenorm#1{\left\Vert #1\right\Vert _{1}}%
\global\long\def\spannorm#1{\left\Vert #1\right\Vert _{\textnormal{span}}}%
\newcommand{\T}{\mathcal{T}}
\newcommand{\Dow}{T_{\mathrm{hit}}}
\newcommand{\bt}{\tilde{b}}
\newcommand{\btp}{\tilde{b}'}
\newcommand{\maxiter}{K}
\newcommand{\Lhat}{\widehat{\mathcal{L}}}
\newcommand{\ntot}{n_{\mathrm{tot}}}
\newcommand{\tmixsingle}{\tau}
\newcommand{\unifspan}{H_{\mathrm{unif}}}
\newcommand{\St}{\tilde{S}}
\newcommand{\term}{q}
\newcommand{\raux}{\overline{r}}
\newcommand{\Maux}{\mathcal{M}'}
\newcommand{\Mor}{\mathcal{M}}
\newcommand{\Mhat}{\widehat{\mathcal{M}}}
\newcommand{\Mt}{\widehat{\mathcal{M}}'}
\newcommand{\Pt}{\widehat{P}'}
\newcommand{\gammap}{\overline{\gamma}}
\newcommand{\nt}{\tilde{n}}
\global\long\def\tspannorm#1{\Vert #1\Vert _{\textnormal{span}}}%
\title{Optimal Single-Policy Sample Complexity and Transient Coverage for Average-Reward Offline RL}
\author{Matthew Zurek}
\author{Guy Zamir}
\author{Yudong Chen}
\affil{Department of Computer Sciences, University of Wisconsin-Madison\\\vspace{0.8em}
\texttt{matthew.zurek@wisc.edu}\quad\texttt{gzamir@wisc.edu}\quad\texttt{yudongchen@cs.wisc.edu}}
\date{}
\begin{document}

\maketitle

\begin{abstract}
  We study offline reinforcement learning in average-reward MDPs, which presents increased challenges from the perspectives of distribution shift and non-uniform coverage, and has been relatively underexamined from a theoretical perspective.
  While previous work obtains performance guarantees under single-policy data coverage assumptions, such guarantees utilize additional complexity measures which are uniform over all policies, such as the uniform mixing time. We develop sharp guarantees depending only on the target policy, specifically the bias span and a novel policy hitting radius, yielding the first fully single-policy sample complexity bound for average-reward offline RL.
  We are also the first to handle general weakly communicating MDPs, contrasting restrictive structural assumptions made in prior work.
  To achieve this, we introduce an algorithm based on pessimistic discounted value iteration enhanced by a novel quantile clipping technique, which enables the use of a sharper empirical-span-based penalty function. Our algorithm also does not require any prior parameter knowledge for its implementation.
  Remarkably, we show via hard examples that learning under our conditions requires coverage assumptions beyond the stationary distribution of the target policy, distinguishing single-policy complexity measures from previously examined cases. We also develop lower bounds nearly matching our main result.
\end{abstract}

\section{Introduction}
Reinforcement learning (RL) has achieved impressive results for many control problems where it is possible to collect large amounts of experience through online interaction with the environment. However, many real-world application areas where we would like to apply RL methods, such as robotics, education, or healthcare, there may not exist simulators and data collection can be expensive or dangerous. Offline RL is a subfield of RL which seeks to address these issues by learning from historical data without online interaction, and hence achieving the maximum possible statistical efficiency is the paramount concern. The lack of online experience collection poses many related challenges to offline RL methods. One issue, often termed \textit{distribution shift}, is that improving a policy's performance will inherently change the distribution of states and actions it experiences, potentially moving it away from the distribution of the historical dataset.
Another closely related issue, sometimes referred to as \textit{non-uniform coverage}, is that our dataset may generally be unevenly concentrated so that it is impossible to estimate the performance of all policies to uniform accuracy, and instead we must balance exploitation with varying degrees of confidence.

Recent research has made significant progress on the theoretical limits of offline RL by addressing these issues. However, many of these advances have been confined to the finite horizon setting, or the discounted infinite horizon setting, which can also behave like a finite horizon due to the irrelevance of distant future rewards.
In this paper we focus on the challenging average-reward setting where the goal is to maximize the long-term average of rewards, which has been underexplored from a theoretical perspective.
We briefly argue that the two aforementioned difficulties are amplified in the average-reward setting, and have not been satisfactorily addressed by previous work. 
First, since the average-reward objective captures performance in the long-horizon limit, we must contend with distribution shifts that occur after arbitrarily long time scales. 
Secondly, the issue of non-uniform coverage is magnified because while the (effective) horizon can serve as an extrinsic upper bound on the complexity of a particular policy, in the average-reward setting different policies can have arbitrarily different intrinsic complexities (as measured by parameters such as the span of the policy's relative value function). 
Existing work has developed algorithms which succeed under single-policy data coverage assumptions/concentrability coefficients, but has only done so when also 
using parameters that upper bound the complexity of all policies.
Such large uniform-policy complexity measures can lead to vacuous bounds and overall fail to fully address both of the above issues.
Additionally, algorithms from prior work fail to obtain optimal statistical efficiency and require foreknowledge of unlearnable parameters (such as coverage coefficients or environmental complexity parameters) for their implementation.


\subsection{Our contributions}
We address all of these challenges, developing an algorithm for (single-policy coverage) offline average-reward RL which is the first to handle the weakly communicating setting where not all policies have constant gains, as well as the first to obtain a convergence rate dependent on the bias span of only the target policy (as opposed to uniform complexity measures). 
Informally, our main theorem provides a high-probability guarantee on the suboptimality of the output policy $\pihat$ of the form
\begin{align}
    \infnorm{\rho^\star - \rho^{\pihat}} \leq \Otilde\bigg( \sqrt{\frac{S \tspannorm{h^{\pistar}}}{m}} \,\, \bigg), \label{eq:informal_main_thm}
\end{align}
where $\tspannorm{h^{\pistar}}$ is the bias-span of the target policy $\pistar$ and $S$ is the number of states. This holds whenever the sample size $n(s,a)$ per state-action pair $(s,a)$ satisfies $n(s, \pistar(s)) \geq m \mu^{\pistar}(s) + \Otilde\big(\Dow(P, \pistar)^2\big)$ for all states $s$. Here $\mu^{\pistar}$ is the stationary distribution of the target policy, $m$ is the ``effective dataset size,'' and $\Dow(P, \pistar)$ is a novel \textit{policy hitting radius} that measures the time for $\pistar$ to reach a particular state in the support of its stationary distribution, and is thus also a single-policy complexity measure.

Interestingly, this condition requires data even for state-action pairs $(s,\pistar(s))$ for which $s$ is transient ($\mu^{\pistar}(s) = 0$) under the target policy, and we show via a hard example that this requirement is nearly unimprovable. In particular, this implies two surprising findings: i) with a fully ``single-policy'' sample complexity, learning a near-optimal policy is impossible under coverage conditions with respect to only the stationary distribution of the target policy, even with arbitrarily large amounts of data; ii) on the other hand, only a bounded amount of data from the transient state-action pairs of the target policy is sufficient to achieve vanishing suboptimality. We also show another lower bound which implies the optimality of the guarantee~\eqref{eq:informal_main_thm} in terms of its dependence on $m$, making our result the first among offline average-reward RL approaches to achieve an optimal rate for large $m$.

Our algorithm is based upon a pessimistic discounted value iteration procedure, involving a very large and prior-knowledge-free choice of discount factor. Most notably we develop a \textit{quantile clipping} technique which enables the use of a sharper empirical-span-based penalty function.

\subsection{Related work}

First we discuss prior work on average-reward offline RL. To the best of our knowledge the only works with explicit results for this setting are \cite{ozdaglar_offline_2024} and \cite{gabbianelli_offline_2023}. \cite{ozdaglar_offline_2024} assume that the MDP is unichain, and obtain guarantees with a constrained linear programming (LP) algorithm in terms of the uniform mixing time $\tmix$ (defined in Section \ref{sec:setup}), for both general function approximation and tabular settings.
We also discuss quantitative comparisons to the tabular results from \cite{ozdaglar_offline_2024} after presenting our main theorem. 
\cite{gabbianelli_offline_2023} assume that all policies in the MDP have constant (state-independent) gain, which is more general than unichain MDPs but does not hold in weakly communicating MDPs. \cite{gabbianelli_offline_2023} consider the linear MDP setting, develop an algorithm based on primal-dual methods for solving LPs, and obtain guarantees in terms of a uniform bound on the span of all policies $\unifspan$. The algorithms in both of these works require knowledge of certain concentrability coefficients.

Next we briefly discuss related work for offline RL outside of the average-reward setting (and refer to the references therein for more extensive background).
Our algorithm is essentially a careful refinement of the pessimistic value iteration approach of \cite{li_settling_2023} for the discounted tabular setting, which in turn is a refinement of \cite{rashidinejad_bridging_2022}. Many recent works (e.g., \cite{liu_provably_2020, jin_is_2021, xie_bellman-consistent_2021, uehara_pessimistic_2021, yin2021towards, rashidinejad_bridging_2022, shi_pessimistic_2022, yan_efficacy_2023}) have demonstrated the ability for pessimistic approaches to address the distribution shift/non-uniform coverage challenges of offline RL and achieve near-optimal performance under single-policy concentrability assumptions, including sharp instance-dependent bounds  \citep{yin2021towards, yin_nearoptimal_2022} for policy learning in finite-horizon settings. These improve upon uniform coverage requirements made in prior work (e.g., \cite{munos_performance_2007, farahmand_error_2010, chen_informationtheoretic_2019}).

Finally we discuss prior work on average-reward RL under uniform coverage assumptions. 
Many papers on average-reward RL considering the tabular generative model setting \citep{kearns_finite-sample_1998} actually only require a dataset with an equal number of samples from all state-action pairs (e.g., \cite{wang_near_2022, wang_optimal_2023, zurek_plug-approach_2025, zurek_span-agnostic_2025, zurek_span-based_2025}), and hence we believe such papers could be easily extended to the uniform coverage setting, obtaining a guarantee dependent on the smallest number of samples for any state-action pair.
While such works could thus certainly be considered as studying offline RL, in this paper we generally mean offline RL to describe guarantees involving only single-policy coverage assumptions. Achieving instance-dependent guarantees in terms of the bias span of an optimal policy (e.g., \cite{zhang_sharper_2023, wang_near_2022, zurek_span-based_2025}) and removing the need for prior knowledge of complexity parameters (e.g., \cite{jin_feasible_2024, neu_dealing_2024, tuynman_finding_2024, zurek_span-agnostic_2025}) have been the objectives of extensive research in the uniform coverage setting.

\section{Background and problem setup}
\label{sec:setup}
\subsection{Background}
A Markov decision process (MDP) is a tuple $(\S, \A, P, r)$ where $\S$ and $\A$ respectively denote the finite state and action spaces, $P : \S \times \A \to \Delta(\S)$ is the transition kernel (with $\Delta(\S)$ denoting the probability simplex on $\S$), and $r : [0,1]^{\S \times \A}$ is the reward function. We let $S = |\S|$ and $A = |\A|$. We generally omit the explicit reference to $\S$ and $\A$ when defining MDPs. A (Markovian/stationary) policy is a mapping $\pi : \S \to \Delta(\A)$. We call a policy deterministic if for all $s\in \S$, $\pi(s)$ only places probability mass on one action, and in this case we also treat $\pi$ as a mapping $\S \to \A$. Let $\Pi$ denote the set of all stationary deterministic policies. An initial state $s_0 \in \S$ and policy $\pi$ induce a distribution over trajectories $(s_0, A_0, S_1, A_1, \dots)$ where $A_t \sim \pi(S_t), S_{t+1} \sim P(\cdot \mid S_t, A_t)$, and we let $\E^\pi_{s_0}$ denote the expectation with respect to this distribution.
We often treat $P$ as an $(\S \times \A)$-by-$\S$ matrix where $P_{sa, s'} = P(s' \mid s,a)$, and let $P_{sa}$ denote the $sa$-th row of this matrix (treated as a ``row vector'', so $P_{sa}X = \sum_{s'}P_{sa}(s') X(s')$ for $X \in \R^\S$).
For $X \in \R^\S$ and $s\in \S,a\in \A$, define the next-state value variance $\Var_{P_{sa}}\left[ X \right] = \sum_{s' \in \S} P(s'\mid s,a) X(s')^2 - (\sum_{s' \in \S} P(s'\mid s,a) X(s'))^2$.

A discounted MDP is a tuple $(\S, \A, P, r, \gamma)$ where $\gamma \in [0,1)$ is the discount factor. For a policy $\pi$, the discounted value function $V_\gamma^\pi \in [0, \frac{1}{1-\gamma}]^\S$ is defined $V_\gamma^\pi(s) = \E^{\pi}_{s} [\sum_{t=0}^\infty \gamma^t R_t]$ where $R_t = r(S_t, A_t)$, and the gain $\rho^\pi \in [0,1]^\S$, is $\rho^\pi(s) = \Clim_{t \to \infty} \E^\pi_s[R_t] = \lim_{T \to \infty} \frac{1}{T}\E^\pi_s[\sum_{t=0}^{T-1} R_t]$ where $\Clim$ is the Cesaro limit. We define the optimal gain $\rho^\star = \sup_{\pi \in \Pi} \rho^\pi$, and we say a policy $\pi$ is gain-optimal if $\rho^\pi = \rho^\star$. A gain-optimal policy always exists \citep{puterman_markov_1994}.
The bias function of a policy $\pi$, $h^\pi \in \R^{\S}$, is $h^\pi(s) = \Clim_{T \to \infty} \E_s^\pi [\sum_{t=0}^{T-1} (R_t - \rho^\pi(S_t))]$. 

$M : \R^{\S \times \A} \to \R^\S$ denotes the action maximization operator where $M(Q)(s) = \max_{a \in \A } Q(s,a)$, and $M^\pi$ denotes the policy matrix where $M^\pi(Q)(s) = \sum_{a\in \A}\pi(s)(a) Q(s,a)$, for any $Q \in \R^{\S \times \A}$, $s \in \S$, and policy $\pi$. We often drop the parenthesis and write $MQ := M(Q)$. For any $Q \in \R^{\S\times \A}$, the discounted (action-value) Bellman operator $\T : \R^{\S \times \A} \to \R^{\S \times \A}$ is $\T(Q) := r + \gamma PM(Q)$, and the policy-evaluation Bellman operator $\T^\pi$ is $\T^{\pi}(Q):= r + \gamma P M^\pi Q$, for any policy $\pi$.

Let $\N = \{1, 2, \dots\}$ denote the set of natural numbers. Define $\zero, \one$ as the all-zero and all-one vectors, respectively. For $X \in \R^\S$, let $\tspannorm{X} = \max_{s \in \S}X(s) - \min_{s \in \S}X(s)$ denote the span semi-norm. We use $\Otilde(\cdot), \Ttilde(\cdot), \Omtilde(\cdot)$ notation to ignore constants as well as logarithmic factors in $S, A, \frac{1}{1-\gamma}$, $\frac{1}{\delta}$, and $\ntot$, where $\delta$ and $\ntot$ are the failure probability and the total dataset size, to be defined below. Let $e_s \in \R^\S$ denote the vector which is all zero except for a $1$ in entry $s \in \S$. For two vectors $v, v' \in \R^d$, $v \geq v'$ denotes the elementwise inequality $v(i) \geq v'(i)$ for all $i$.

Under the transition kernel $P$, a policy $\pi$ induces a Markov chain over state $\S$, whose transition matrix is denoted by $P_\pi$. The policy $\pi$ is said to be unichain if it induces a unichain Markov chain, meaning that the chain consists of a single (irreducible) recurrent class plus a possibly empty set of transient states. An MDP is unichain if all deterministic policies in the MDP are unichain. An MDP is communicating (aka strongly connected) if for any pair of states $s, s'\in \S$, $s'$ is accessible from $s$, meaning there exists some policy $\pi$ and some $k \in \N$ such that $\E_{s}^\pi \ind(S_k = s') > 0$. An MDP is weakly communicating if it consists of a set of states $\S_c$ such that, for any $s, s' \in \S_c$, $s'$ is accessible from $s$, plus a set of states $\S_t = \S \setminus \S_c$ which are transient under all policies. All unichain and communicating MDPs are weakly communicating. 

A unichain policy $\pi$ has constant (state-independent) $\rho^\pi$, and thus in unichain MDPs, all policies have constant gains. In weakly communicating MDPs, the optimal gain $\rho^\star$ is constant, but sub-optimal policies $\pi$ may have non-constant $\rho^\pi$.
For any unichain policy $\pi$, we write its (unique) stationary distribution as $\mu^{\pi} \in \R^\S$ (which we treat as a ``row vector'').
For any unichain policy $\pi$, we define its mixing time $\tmixsingle(\pi) = \inf \{t \geq 0 : \onenorm{e_s^\top P_{\pi}^t - \mu^{\pi}} \leq \frac{1}{2} \}$. Define the uniform mixing time as $\tmix = \sup_{\pi \in \Pi} \tmixsingle(\pi)$. Also define the uniform span bound $\unifspan = \sup_{\pi \in \Pi} \tspannorm{h^\pi}$.
For any $s \in \S$, let $\eta_{s} := \inf\{t \geq 0 : S_t = s\}$ be the first hitting time of state $s$. Define the diameter $D = \max_{s, s' \in \S} \min_{\pi \in \Pi} \E_s^{\pi}[\eta_{s'}]$, and we sometimes write $D_P$ to emphasize the dependence on $P$.

\subsection{Offline RL setting}
We assume a sample size function $n : \S \times \A \to \N$ is fixed a priori, and for each $s \in \S, a \in \A$, we assume that we have $n(s,a)$ samples $S^{1}_{s,a}, \dots, S^{n(s,a)}_{s,a}$ sampled independently from the next-state transition distribution $P(\cdot \mid s,a)$. We define the dataset $\dataset = \left( (s,a,S_{s,a}^{i})\right)_{s\in\S,a\in\A,1 \leq i \leq n(s,a)}$ and let $\ntot = \sum_{s\in \S, a\in\A}n(s,a)$ denote the total dataset size.
We assume the reward function $r$ is known.

We introduce a new quantity which plays a key role in both our main theorem and our lower bounds.
For any transition kernel matrix $P$ and policy $\pi$, we define the \textit{policy hitting radius}
\begin{align}
    \Dow(P, \pi) := \inf_{s^\star \in \S} \sup_{s_0 \in \S} \E_{s_0}^\pi [\eta_{s^\star}],
\end{align}
where again $\eta_s$ is the first hitting time of state $s$. In words, $\Dow(P, \pi)$ measures the largest expected amount of time required to hit the ``center'' state $s^\star$, for the optimal choice of $s^\star$ (which will always be a recurrent state). As shown in Lemma \ref{lem:Dow_finite_unichain}, $\Dow(P, \pi)$ is always finite if $P_\pi$ is unichain. We also always have that $\tspannorm{h^\pi} \leq 4\Dow(P, \pi)$ for any $\pi$ (Lemma \ref{lem:dow_span_bound}). There is generally no relationship between $\Dow(P, \pi)$ and $\tmixsingle(\pi)$; see the discussion in Appendix \ref{sec:tmix_dow_reln}.

\section{Main results}
\label{sec:main_results}


\subsection{Algorithm}

\begin{algorithm}[h] 
\caption{Pessimistic Value Iteration With Quantile Clipping} \label{alg:LCBVI}
\begin{algorithmic}[1]
\Require Dataset $\dataset$, reward function $r$, discount factor $\gamma\in(0,1)$, failure probability $\delta\in(0,1)$
\State Form empirical transition matrix $\Phat$ used in $\Thpe$ from $\dataset$
\State Let $\Qhat_0 = \zero$ and $\maxiter = \Big\lceil \frac{\log ( \frac{2\ntot}{1-\gamma} )}{1-\gamma} \Big \rceil$ \algorithmiccomment{initialization and number of iterations}
\For{$t = 1, \dots, \maxiter$}
\State Let $\Qhat_t = \Thpe(\Qhat_{t-1})$
\EndFor
\State Let $\Qhat = \Qhat_{\maxiter}$ and for each $s \in \S$, let $\pihat(s) \in \argmax_{a \in \A} \Qhat(s,a)$
\State \Return $\pihat, \Qhat$
\end{algorithmic}
\end{algorithm}
First we describe the algorithm used to obtain our main result. We employ a  discounted reduction approach, i.e., approximating the average-reward MDP by a discounted MDP with an appropriate choice of discount factor. The main component of our approach, Algorithm \ref{alg:LCBVI}, is a pessimistic value iteration subroutine which can be understood as solving a discounted MDP.

Now we define the pessimistic Bellman operator $\Thpe : \R^{\S \times \A} \to \R^{\S \times \A}$ used in Algorithm \ref{alg:LCBVI}. $\Thpe$ is a function of $\gamma$ as well as the dataset $\dataset$, utilizing the empirical transition matrix $\Phat$ where $\Phat(s' \mid s, a) = \frac{1}{n(s,a)}\sum_{i=1}^{n(s,a)} \ind(S^i_{sa} = s')$. If $n(s,a)=0$ for some $s,a$ then for concreteness we define $\Phat(s' \mid s, a) = 1/S$, although any default probability distribution over $\S$ would be fine, since our construction of $\Thpe$ does not depend on rows $\Phat_{sa}$ such that $n(s,a)=0$.\footnote{If $n(s,a)=0$ then $\beta(s,a)= \alpha > 1$ and $T_{\beta(s,a)}( \Phat_{sa}, MQ) = (\min_{s'} (MQ)(s'))\one$, causing the $\max$ in~\eqref{eq:defn_of_Thpe} to equal $\min_{s'} (MQ)(s')$.}

For any $Q \in \R^{\S \times \A}$ and any $s \in \S, a \in \A$, we define
\begin{align}
    \Thpe (Q)(s,a) &:= r(s,a) +\gamma\max \left\{  \Phat_{sa} T_{\beta(s,a)}( \Phat_{sa}, MQ) - b(s,a, MQ), \min_{s'}(MQ)(s') \right \} .\label{eq:defn_of_Thpe}
\end{align}
Here $MQ \in \R^\S$ takes the maximum over actions of the Q-function $Q$ (and thus should be understood as the corresponding value function). The term $b(s,a,MQ)\geq 0$ is a certain Bernstein-style penalty, which is chosen below to ensure that $\Thpe (Q)$ lower-bounds the true (unknown) Bellman operator $\T(Q)$ for any $Q$. The expression $\Phat_{sa} T_{\beta(s,a)}( \Phat_{sa}, MQ)$ denotes the inner product of the probability distribution $\Phat_{sa}$ with the vector $T_{\beta(s,a)}( \Phat_{sa}, MQ) \in \R^\S$, which is a ``quantile-clipped'' version of $MQ$ to be defined momentarily. For $\beta \in [0,1]$, the quantile clipping operator $T_\beta: \R^\S \times \R^\S \to \R^S$ is defined as follows: for any $V\in \R^\S$, $s \in \S$, and probability distribution $\mu \in \R^\S$, let
\begin{align}
    T_\beta(\mu, V)(s) = \min\Bigg\{
        V(s), 
        ~\sup \bigg\{V(s') : s' \in \S, \sum_{s'' \in \S : V(s'') \geq V(s')} \!\!\!\! \mu(s') \geq \beta \bigg\}
     \Bigg\}. \label{eq:qc_operator_defn}
\end{align}
In words, all entries of $V$ larger than the (largest) $1-\beta$ quantile with respect to $\mu$ are clipped down to this quantile.
To extend the definition to $\beta > 1$, we set $T_\beta(\mu, V)(s) = \min_{s'\in \S}V(s')$, that is all entries will be clipped to the minimum entry of $V$.
Finally we define the penalty term
\begin{align}
    b(s,a,V) =  \max \bigg \{ \sqrt{\beta(s,a)  \Var_{\Phat_{sa}} \left[ T_{\beta(s,a)}( \Phat_{sa}, V) \right]} ,  \beta(s,a) \spannorm{T_{\beta(s,a)}( \Phat_{sa}, V)} \bigg \} + \frac{5}{\ntot} \label{eq:penalty_term_defn}
\end{align}
where $\alpha = 8\log \big( \frac{6S^2A\ntot}{(1-\gamma)\delta}\big)$ and $\beta(s,a) = \frac{\alpha}{\max\{n(s,a)-1, 1\}}$. Note that $\beta(s,a) = \Otilde(\frac{1}{n(s,a)})$ (whenever $n(s,a) > 0$).

The pessimistic Bellman operator $\Thpe$ has several nice properties that are crucial to our analysis.
\begin{lem}
    \label{lem:pessimistic_Bellman_operator_properties_short}
    $\Thpe$ satisfies the following:
    \begin{enumerate}[itemsep=0pt, topsep=0pt]
        \item Monotonicity: If $Q \geq Q'$ then $\Thpe(Q) \geq \Thpe(Q')$.
        \item Constant shift: For any $c \in \R$, $\Thpe(Q + c \one) = \Thpe(Q) + \gamma c \one$.
        \item $\gamma$-contractivity: $\Thpe$ is a $\gamma$-contraction and has a unique fixed point $\Qhpe^\star \in [0, \frac{1}{1-\gamma}]^\S$.
    \end{enumerate}
\end{lem}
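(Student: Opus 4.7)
The plan is to verify the properties in the order (2), (1), (3): the constant shift is mechanical, the monotonicity contains the main obstacle, and the $\gamma$-contractivity follows by a standard argument combining the first two.

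For the constant shift property, I check that each component of $\Thpe$ interacts appropriately with the shift $c\one$. We have $M(Q+c\one) = MQ + c\one$ since $\max$ is translation-equivariant. The quantile $q(\Phat_{sa}, V) := \sup\{v : \Phat_{sa}(V \geq v) \geq \beta(s,a)\}$ satisfies $q(\Phat_{sa}, V+c\one) = q(\Phat_{sa}, V) + c$, and hence $T_{\beta(s,a)}(\Phat_{sa}, V+c\one) = T_{\beta(s,a)}(\Phat_{sa}, V) + c\one$. Since $\Phat_{sa}$ is a probability distribution, $\Phat_{sa}(W+c\one) = \Phat_{sa}W + c$ for any $W$. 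The variance $\Var_{\Phat_{sa}}$ and the span seminorm $\spannorm{\cdot}$ are shift-invariant, so $b(s,a, MQ+c\one) = b(s,a, MQ)$. Finally, $\min_{s'}(MQ+c\one)(s') = \min_{s'}(MQ)(s') + c$. Combining, the argument of $\gamma$ in the definition shifts by $c$, giving $\Thpe(Q+c\one) = \Thpe(Q) + \gamma c \one$.

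For monotonicity, given $Q \geq Q'$, set $V = MQ$ and $V' = MQ'$, so $V \geq V'$ pointwise. The arm $\min_{s'}V(s')$ is clearly monotone in $V$. For the arm $F_1(V) := \Phat_{sa} T_{\beta(s,a)}(\Phat_{sa}, V) - b(s,a, V)$, the expectation term $\Phat_{sa} T_{\beta(s,a)}(\Phat_{sa}, V)$ is monotone in $V$: the quantile $q(\Phat_{sa}, V)$ is monotone (since $\Phat_{sa}(V \geq v) \geq \Phat_{sa}(V' \geq v)$ whenever $V \geq V'$), the pointwise min in the definition of $T_{\beta(s,a)}$ preserves monotonicity, and averaging with nonnegative weights preserves it further. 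However, the penalty $b$ is not monotone, and the main obstacle is to show that the full expression $\max\{F_1(V), \min_{s'}V(s')\}$ is monotone. The key step is the pair of bounds (i) $\beta(s,a) \spannorm{T_{\beta(s,a)}(\Phat_{sa}, V)} \leq \Phat_{sa} T_{\beta(s,a)}(\Phat_{sa}, V) - \min_{s'}V(s')$, which holds because by the definition of the quantile the $\Phat_{sa}$-mass at $\max T_{\beta(s,a)}(\Phat_{sa}, V) = q(\Phat_{sa}, V)$ is at least $\beta(s,a)$; and (ii) $\sqrt{\beta(s,a) \Var_{\Phat_{sa}}[T_{\beta(s,a)}(\Phat_{sa}, V)]} \leq \Phat_{sa} T_{\beta(s,a)}(\Phat_{sa}, V) - \min_{s'}V(s')$, which follows from (i) together with the elementary estimate $\Var_\mu[X] \leq \spannorm{X}(\E_\mu[X] - \min_s X(s))$. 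Combining yields $b(s,a, V) \leq \Phat_{sa} T_{\beta(s,a)}(\Phat_{sa}, V) - \min_{s'}V(s') + \frac{5}{\ntot}$, so $F_1(V)$ can drop only $O(1/\ntot)$ below the floor $\min_{s'}V(s')$. A case analysis on which arm of the max is active at $V$ and at $V'$, using this bound to control the transition between arms, then establishes monotonicity of the full max and hence of $\Thpe$.

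For $\gamma$-contractivity, the standard argument applies: for $Q, Q' \in \R^{\S \times \A}$, let $c = \infnorm{Q - Q'}$, so $Q' - c\one \leq Q \leq Q' + c\one$. Applying monotonicity and constant shift gives $\Thpe(Q') - \gamma c\one \leq \Thpe(Q) \leq \Thpe(Q') + \gamma c\one$, hence $\infnorm{\Thpe(Q) - \Thpe(Q')} \leq \gamma \infnorm{Q - Q'}$. The Banach fixed-point theorem on the complete space $(\R^{\S \times \A}, \infnorm{\cdot})$ yields a unique fixed point $\Qhpe^\star$. To show $\Qhpe^\star \in [0, \frac{1}{1-\gamma}]^\S$, iterate $\Thpe$ from $Q_0 = \zero$: by induction the iterates remain in $[0, \frac{1}{1-\gamma}]^\S$ (using $r \in [0,1]^{\S \times \A}$ together with the fact that the inner max always lies in $[\min_{s'}MQ_t(s'), \max_{s'}MQ_t(s')]$), and by contractivity they converge to $\Qhpe^\star$, which therefore lies in this range.
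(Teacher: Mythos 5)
Your constant-shift verification and your derivation of $\gamma$-contractivity from monotonicity plus constant shift match the paper's argument, and your inequalities (i) and (ii) are both correct (indeed (ii) follows from (i) via $\Var_\mu[X]\le \spannorm{X}(\E_\mu[X]-\min_s X(s))$); they yield $b(s,a,V)\le \Phat_{sa}T_{\beta(s,a)}(\Phat_{sa},V)-\min_{s'}V(s')+\tfrac{5}{\ntot}$, hence $F_1(V)\ge \min_{s'}V(s')-\tfrac{5}{\ntot}$. The gap is in how you use this for monotonicity. Write $G(V)=\max\{F_1(V),\min_{s'}V(s')\}$ and take $V\ge V'$. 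The only hard case is when $G(V')=F_1(V')>\min_{s'}V'(s')$; there you must show $\max\{F_1(V),\min_{s'}V(s')\}\ge F_1(V')$. Your bound is a \emph{lower} bound on $F_1$ relative to its own floor; it gives no upper bound on $F_1(V')$ in terms of $\min_{s'}V(s')$. In the typical situation where $\Phat_{sa}$ concentrates on high-value states, $F_1(V')$ sits near $\max_{s'}V'(s')$, far above the floor, and if the increase from $V'$ to $V$ does not touch the argmin then $\min_{s'}V(s')=\min_{s'}V'(s')\ll F_1(V')$. So the floor arm cannot rescue you, and you are forced to prove $F_1(V)\ge F_1(V')$ directly — i.e., that $V\mapsto \Phat_{sa}T_{\beta(s,a)}(\Phat_{sa},V)-\max\{\sqrt{\beta(s,a)\Var_{\Phat_{sa}}[T_{\beta(s,a)}(\Phat_{sa},V)]},\,\beta(s,a)\spannorm{T_{\beta(s,a)}(\Phat_{sa},V)}\}$ is itself monotone. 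That is exactly the content your sketch omits.

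This is where the paper spends its effort (Lemma \ref{lem:tope_monotonicity}): monotonicity of the penalized arm is proved coordinate-by-coordinate by showing the right derivative in $V(x)$ is nonnegative, with three cases according to whether $\beta$ is at most $\Phat_{sa}(\S_>)$, between $\Phat_{sa}(\S_>)$ and $\Phat_{sa}(\S_>)+\Phat_{sa}(x)$, or larger; the quantile clipping is what makes the derivative of the span/variance penalty dominated by the derivative of $\Phat_{sa}T_{\beta(s,a)}(\Phat_{sa},\cdot)$ (e.g., in the middle case the expectation term gains at rate $\Phat_{sa}(\S_>\cup\{x\})\ge\beta$ while the span penalty grows at rate exactly $\beta$). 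Your proposal as written does not contain this argument, so the monotonicity claim — and with it contractivity, which you derive from it — is not established.
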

See Lemma \ref{lem:pessimistic_Bellman_operator_properties} for a more complete statement. In summary, like previous pessimistic value iteration approaches \citep{li_settling_2023, rashidinejad_bridging_2022}, our pessimistic Bellman operator shares key properties with usual Bellman operators enabling us to find an approximate fixed point in $\Otilde(\frac{1}{1-\gamma})$ value iteration steps, and then we will choose policy $\pihat$ to be greedy with respect to this fixed point. 

Now we discuss the motivation for quantile clipping, and the differences from prior work. In particular we highlight the constant shift property enjoyed by $\Thpe$. This is highly desirable for the average-reward setting, and more generally any weakly communicating MDPs, since in such MDPs the optimal value function behaves as $V^\star_\gamma \approx \frac{1}{1-\gamma}\rho^\star + h^{\star}$ and $\rho^\star$ is a multiple of $\one$. The constant shift property essentially guarantees that we only penalize the variability in the relative value differences between states, not the overall horizon-dependent scale $\frac{1}{1-\gamma}$ of the cumulative rewards. The $\tspannorm{\cdot}$-based second term in our penalty function definition~\eqref{eq:penalty_term_defn} of $b$ is essential for this constant-shift property, since the span semi-norm is invariant to translation by multiples of $ \one$. Previous ``Bernstein-style'' penalty functions \citep{li_settling_2023} use a larger term like $\beta(s,a)\frac{1}{1-\gamma} \approx \frac{1}{n(s,a)} \frac{1}{1-\gamma}$, which breaks the constant shift property and can dominate the first (variance-based) term in~\eqref{eq:penalty_term_defn} when used with large horizons. Naively using $\beta(s,a)\tspannorm{V}$ in the second term of~\eqref{eq:penalty_term_defn} actually fails to ensure the monotonicity and contractivity properties of $\Thpe$, for reasons that we elaborate upon in Section \ref{sec:proof_sketches}. Fortunately, the introduction of quantile clipping remedies these issues, and only introduces small additional bias: since only entries representing at most $\beta(s,a) = \Otilde(\frac{1}{n(s,a)})$ of the probability mass with respect to $\Phat_{sa}$ have their values clipped, we have
$\Phat_{sa} T_{\beta(s,a)}( \Phat_{sa}, V) \leq \Phat_{sa} V \leq \Phat_{sa} T_{\beta(s,a)}( \Phat_{sa}, V) + \beta(s,a)\spannorm{V},$
and introducing quantile clipping within the two terms of the penalty function $b$ in~\eqref{eq:penalty_term_defn} only reduces the penalty value, relative to instead using $\Var_{\Phat_{sa}}\left[ V\right]$ and $\tspannorm{V}$. (See Lemma \ref{lem:truncated_variance_bound}.)

\subsection{Main theorem}
Now we present our main theorem on the performance of Algorithm \ref{alg:LCBVI}. We will apply Algorithm \ref{alg:LCBVI} with a very large discount factor $\gamma$ such that the effective horizon is $\frac{1}{1-\gamma} = \ntot$.

\begin{thm}
\label{thm:main_theorem}
    There exist absolute constants $C_1, C_2$ such that the following holds:
    Fix $\delta > 0$. Let $\gamma = 1-\frac{1}{\ntot}$ and $\alpha = 8\log \big( \frac{6S^2A\ntot}{(1-\gamma)\delta}\big)$. Let $\pistar $ be a deterministic gain-optimal policy which is unichain with stationary distribution $\mu^{\pistar}$. Suppose there exists some $m \in \N$ such that
    \begin{align*}
        n(s, \pistar(s)) \geq m \mu^{\pistar}(s) + \alpha \left(C_2\Dow(P, \pistar)\right)^2 +4.
    \end{align*}
    Then letting $\pihat$ be the policy returned by Algorithm \ref{alg:LCBVI} with inputs $\dataset$, $r$, $\gamma = 1 - \frac{1}{\ntot}$, and $\delta$,  we have with probability at least $1  - 5\delta$ that
    \begin{align*}
        \rho^{\pihat} \geq \rho^{\star} - \sqrt{\frac{C_1 S (\tspannorm{h^{\pistar}} + 1)\alpha }{m}}.
    \end{align*}
\end{thm}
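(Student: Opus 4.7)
My plan is to run a pessimistic value iteration analysis with $\Thpe$ at the chosen large $\gamma=1-1/\ntot$, obtaining a bound of the form $V^{\pistar}_\gamma - \Vhpe^\star \leq$ expected penalty sum along $\pistar$'s trajectory, then convert this discounted bound into an average-reward bound using the unichain gain-optimality of $\pistar$. The key novelties over a standard LCB-VI analysis are the span-based penalty with quantile clipping (needed so $\Thpe$ retains the contractivity, monotonicity, and constant-shift properties of Lemma~\ref{lem:pessimistic_Bellman_operator_properties_short}) and the careful accounting of transient states of $\pistar$ through $\Dow$.

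First, define a high-probability event $\mathcal{E}$ with $\P(\mathcal{E}^c)\le 5\delta$ on which Bernstein-type concentration $|P_{sa}V-\Phat_{sa}V|\lesssim b(s,a,V)$ holds for all $(s,a)$ and all $V$ in a fine cover of $[0,1/(1-\gamma)]^{\S}$, with the data dependence in the clipped $T_{\beta(s,a)}(\Phat_{sa},V)$ handled by combining the covering net with the deterministic slack $\Phat_{sa}V-\Phat_{sa}T_{\beta(s,a)}(\Phat_{sa},V)\le \beta(s,a)\tspannorm{V}$ (absorbed into the span term of $b$). Together with the floor $\min_{s'}(MQ)(s')$, this gives pointwise pessimism $\Thpe(Q)\le \T(Q)$ on $\mathcal{E}$, and by $\gamma$-contractivity plus the iteration count $\maxiter=\Otilde(1/(1-\gamma))$, $\|\Qhat-\Qhpe^\star\|_\infty=O(1/\ntot)$; the standard LCB chain $\T^{\pihat}(\Qhpe^\star)\ge \Qhpe^\star$ then yields $V^{\pihat}_\gamma\ge \Vhpe^\star-O(1/\ntot)$. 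Next, setting $\Delta(s):=Q^{\pistar}_\gamma(s,\pistar(s))-\Qhpe^\star(s,\pistar(s))$ and comparing the two fixed-point identities at $(s,\pistar(s))$, bounding the per-step discrepancy by (i) a Bernstein concentration error, (ii) the clipping slack $\beta\tspannorm{V^{\pistar}_\gamma}$, and (iii) the penalty $b$, one obtains on $\mathcal{E}$ the recursion $\Delta(s)\le \gamma\Phat_{s\pistar(s)}\Delta+\gamma B(s)$ with $B(s)\lesssim \sqrt{\beta(s,\pistar(s))\Var_{P_{s\pistar(s)}}[V^{\pistar}_\gamma]}+\beta(s,\pistar(s))\tspannorm{V^{\pistar}_\gamma}$. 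The span (not $V_{\max}$) appears precisely because of the constant-shift property of $\Thpe$ and the translation invariance of $\Var$. Unrolling and swapping $\Phat^{\pistar}$ for $P^{\pistar}$ via another concentration bound on $\mathcal{E}$ gives $V^{\pistar}_\gamma(s_0)-\Vhpe^\star(s_0)\lesssim \E^{\pistar}_{s_0}\bigl[\sum_{t\ge 0}\gamma^t B(S_t)\bigr]+O(1/\ntot)$.

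Now bound the penalty sum. Let $d(s)=\sum_t \gamma^t\P^{\pistar}_{s_0}[S_t=s]$ and split $\S=\S_R \sqcup (\S\setminus \S_R)$ with $\S_R:=\supp(\mu^{\pistar})$. For $s\in \S_R$, mixing of $\pistar$ toward its center state gives $d(s)\lesssim \mu^{\pistar}(s)/(1-\gamma)+\Dow(P,\pistar)$, and the coverage hypothesis gives $\beta(s,\pistar(s))\le \alpha/(m\mu^{\pistar}(s))$, so $d(s)\beta(s,\pistar(s))\lesssim \alpha/(m(1-\gamma))$. For $s\in \S\setminus \S_R$, $d(s)\le \Dow(P,\pistar)$ (bounded expected visits before hitting the recurrent class of $\pistar$) while the $\alpha(C_2\Dow)^2$ term of the hypothesis gives $\beta(s,\pistar(s))\le 1/(C_2^2\Dow(P,\pistar)^2)$, so $d\beta\le 1/(C_2^2\Dow(P,\pistar))$---a lower-order contribution. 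Summing, $\sum_s d(s)\beta(s,\pistar(s))=O(S\alpha/(m(1-\gamma)))$. Combining with Cauchy--Schwarz on the variance part and the total variance identity $\sum_t \gamma^{2t}\E^{\pistar}_{s_0}[\Var_{P_{S_t,\pistar(S_t)}}[V^{\pistar}_\gamma]]=\Var^{\pistar}_{s_0}[\sum_t \gamma^t r(S_t,\pistar(S_t))]$ whose right-hand side is controlled via the Poisson equation $h^{\pistar}+\rho^\star \one=r+P_{\pistar}h^{\pistar}$ and reward centering by $\rho^\star$ (exploiting unichain gain-optimality), plus the lower-order span-term contribution, yields $(1-\gamma)\E^{\pistar}_{s_0}[\sum_t \gamma^t B(S_t)]\le \sqrt{C_1 S(\tspannorm{h^{\pistar}}+1)\alpha/m}$. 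Finally apply the standard reduction $\rho^\star-\rho^{\pihat}(s)\le (1-\gamma)(V^{\pistar}_\gamma(s)-V^{\pihat}_\gamma(s))+O((1-\gamma)\tspannorm{h^{\pistar}})$; with $\gamma=1-1/\ntot$ the additive $\tspannorm{h^{\pistar}}/\ntot$ error is lower-order.

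\textbf{Main obstacle.} The hardest step is the penalty sum above. Two points require care: the transient-state handling, where the $\alpha(C_2\Dow)^2$ additive term in the coverage hypothesis is calibrated exactly so that $d(s)\beta(s,\pistar(s))=O(1/\Dow)$ on $\S\setminus \S_R$ and is hence lower-order; and the sharp $\tspannorm{h^{\pistar}}$-first-power scaling inside the square root, which relies on the constant-shift property of $\Thpe$, the Poisson equation, and reward centering by $\rho^\star$ to exploit the unichain constant-gain structure of $\pistar$ in the total variance computation. The Bernstein concentration with quantile clipping in the first step is also delicate because of the data dependence in $T_{\beta(s,a)}(\Phat_{sa},V)$, though resolvable via covering combined with the deterministic clipping slack bound.
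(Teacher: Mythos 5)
Your high-level architecture (pessimism via $\Thpe$, fixed-point comparison against $\pistar$, occupancy-weighted penalty sum split over recurrent/transient states, discounted-to-average reduction) matches the paper, but two steps as you describe them would fail.

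First, the concentration step. You propose a union bound over ``a fine cover of $[0,1/(1-\gamma)]^{\S}$.'' Such a net has log-cardinality $\Theta(S\log(\ntot/(1-\gamma)))$, so the Bernstein penalty you can certify is $\sqrt{S\alpha\,\Var/n(s,a)}$ rather than $\sqrt{\alpha\,\Var/n(s,a)}$, and the final bound degrades to $\sqrt{S^2(\tspannorm{h^{\pistar}}+1)\alpha/m}$ — an extra $\sqrt{S}$. The paper avoids this with a leave-one-out/absorbing-MDP construction (Lemma \ref{lem:LOO_constructions}): the random fixed point $\Vhpe^{\pistar}$ is shown to lie within $1/\ntot$ of one of only $O(S\ntot/(1-\gamma))$ candidate vectors, each independent of the samples at $(s,a)$, so the covering cost is only logarithmic. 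Without this, the stated constant-$C_1$ bound with linear $S$ is not reachable.

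Second, and more fundamentally, your per-step penalty bound $B(s)\lesssim \sqrt{\beta\Var_{P}[V^{\pistar}_\gamma]}+\beta\tspannorm{V^{\pistar}_\gamma}$ is unjustified: the penalty actually appearing in the fixed-point identity for $\Qhpe^{\pistar}$ is $b(s,\pistar(s),\Vhpe^{\pistar})$, which involves $\Var_{\Phat}[T_\beta(\Phat,\Vhpe^{\pistar})]$ and $\spannorm{T_\beta(\Phat,\Vhpe^{\pistar})}$ — quantities of the \emph{empirical} fixed point. You cannot pass from $\spannorm{\Vhpe^{\pistar}}$ to $\spannorm{V^{\pistar}_\gamma}$ via the value-error bound, because that bound is one-sided and of order $\frac{1}{1-\gamma}\sqrt{S\tspannorm{h^{\pistar}}\alpha/m}$, i.e.\ enormous in $\infnorm{\cdot}$; only its span-relevant part is small, and that is exactly what must be proved. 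The paper devotes a separate argument to this (Lemmas \ref{lem:dow_stability} and \ref{lem:empirical_span_bound}): it shows $\Dow(\Phat,\pistar)\le 24\Dow(P,\pistar)$ and deduces $\spannorm{\Vhpe^{\pistar}}\le 3\spannorm{V^{\pistar}}+2$, and \emph{this} is where the $\alpha(C_2\Dow(P,\pistar))^2$ coverage at every state of $\pistar$ (including transient ones) is consumed. Your accounting — that the $\Dow^2$ term is calibrated so that $d(s)\beta(s,\pistar(s))$ is lower-order on transient states — is not the role it plays; for the occupancy sum, first-power coverage $4\Dow(P,\pistar)$ already suffices (cf.\ Lemma \ref{lem:occupancy_l1_diff_coupling_bound_stationary}). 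The empirical-span control is the central missing idea in your proposal. Your total-variance/Poisson-equation route for $\langle d,\Var_P[V^{\pistar}_\gamma]\rangle$ is a reasonable alternative to the paper's self-bounding quadratic inequality, but it only becomes usable after the empirical quantities have been converted to population ones.
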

We prove Theorem \ref{thm:main_theorem} in Appendix \ref{sec:proof_of_main_theorem}. Theorem \ref{thm:main_theorem} demonstrates that as the ``effective dataset size'' $m$ increases, the suboptimality of $\pihat$ decreases at a rate of 
$\Otilde(\sqrt{S \tspannorm{h^{\pistar}}/m})$,
which matches our lower bound Theorem \ref{thm:recurrent_lb}.
Our coverage assumption is qualitatively different than previous works on average-reward RL, since even for states $s$ which are transient under $\pistar$ (and thus have $\mu^{\pistar}(s) = 0$), we still require $\Otilde(\Dow(P, \pistar)^2)$ samples from the state-action pair $(s, \pistar(s))$. Note that up to a log factor this transient state coverage assumption is independent of $m$, meaning that vanishing suboptimality is possible with only an essentially bounded amount of data from transient states. (In the absence of this additional term we could treat $\ntot/m$ as a ``concentrability coefficient'' similar to prior work, but we believe our results are stated more clearly in terms of the effective dataset size $m$.) As shown in Theorem \ref{thm:transient_lb}, this transient data requirement is necessary to obtain a $\tspannorm{h^{\pistar}}$-based guarantee, and our dependence on $\Dow(P, \pistar)$ is nearly optimal. Theorem \ref{thm:main_theorem} requires $\pistar$ to be unichain, which is a mild assumption, since even in weakly communicating MDPs where not all policies are unichain, there always exists a unichain gain-optimal policy \citep{bertsekas_approximate_2018}.

No prior parameter knowledge, such as of $\tspannorm{h^{\pistar}}$ or the value of $m$ (or equivalently a coverage coefficient) is needed for Algorithm \ref{alg:LCBVI} to be implemented and enjoy the above guarantee.
In particular $\gamma$ is set so that the effective horizon is $\ntot$. Actually our theorem would hold for arbitrarily larger choices of the effective horizon, and the guarantee would not degrade except for a logarithmic dependence on the effective horizon, but this would be suboptimal from a computational perspective, since 
$\Otilde(1/(1-\gamma))$ iterations are required for convergence in Algorithm \ref{alg:LCBVI}. Also see Theorem \ref{thm:main_thm_arbitrary_comp} for a version of Theorem \ref{thm:main_theorem} allowing $\pistar$ to be gain-suboptimal.

In the unichain tabular setting, \cite{ozdaglar_offline_2024} obtain a suboptimality bound like 
$\Otilde ( \sqrt{C^2 \tmix^2 S / \ntot})$
where $C \geq 1$ is a certain coverage coefficient roughly equivalent to $\ntot/m$. With this substitution their bound becomes 
$\Otilde ( \sqrt{C \tmix^2 S / m})$,
which interestingly degrades with the coverage coefficient $C$ even as the effective dataset size $m$ is held constant, while our bound has no such issue. We also have $\tspannorm{h^{\pistar}} \leq O(\tmix)$, and qualitatively $\tspannorm{h^{\pistar}}$ is much sharper since it depends only on $\pistar$ rather than all policies.

\subsection{Lower bounds}
In this subsection we present two lower bounds implying the near-optimality of our Theorem \ref{thm:main_theorem}.
Below, for an MDP $(P_\theta, r)$, $\rho_\theta^\pi$, $h_{\theta}^\pi$ and  $\mu_\theta^\pi$ denote the gain, bias and stationary distribution of a policy $\pi$, respectively; $\rho_\theta^*$ and $D_\theta$ denote the optimal gain and the diameter of the MDP, respectively; and $\P_{\theta,n}$ denotes the distribution of the dataset $\dataset$ under this MDP when the sample size function is $n$.

First, we present the surprising fact that, to obtain convergence rates dependent on certain single-policy complexity measures including $\tspannorm{h^{\pistar}}$ and $\Dow(P, \pistar)$, coverage assumptions with respect to only the stationary distribution of the target policy are insufficient to learn a near-optimal policy, even with an arbitrarily large amount of data.

\begin{thm}
\label{thm:transient_lb}
    For any $T \geq 4$ and any $m \in \mathbb{N}$, there exist a finite index set $\Theta$, transition matrices $P_\theta$ for each $\theta\in\Theta$, and a reward function $r$, such that for all $\delta\in\left(0,\frac{1}{e^9}\right]$, there exists a function $n : \S \times \A \to \mathbb{N}$ satisfying the following:
    \begin{enumerate}[itemsep=0pt, topsep=0pt]
        \item For each $\theta \in \Theta$, the MDP $(P_\theta, r)$ is unichain and communicating, with $A \leq O\left( \left\lceil\frac{m}{T}\right\rceil \right)$ actions and diameter $T$.
        \item For each $\theta \in \Theta$, the MDP $(P_\theta, r)$ has a unique deterministic gain-optimal policy $\pistar_\theta$ such that $\Dow(P_\theta, \pistar_\theta) \leq T$ and $n(s, \pistar_\theta(s)) \geq m \mu^{\pistar_\theta}_\theta(s) + \frac{T}{6}\log\left(\frac{1}{\delta}\right)$ for all $s \in \S$.
        \item For any algorithm $\mathscr{A}$ that maps the dataset $\mathcal{D}$ to a stationary policy, we have
        \begin{align*}
            \max_{\theta\in \Theta} \P_{\theta,n} \big(\, \rho_\theta^* - \rho_\theta^{\mathscr{A}(\dataset)} > 1/2 \,\big) \geq \delta .
        \end{align*}
    \end{enumerate}
\end{thm}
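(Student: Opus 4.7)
The plan is to exhibit a family of unichain, communicating MDPs $\{M_\theta : \theta \in \Theta\}$ indexed by $|\Theta| = O(\lceil m/T \rceil)$ so that any algorithm must misidentify the correct $M_\theta$ with probability at least $\delta$, and so that any misidentification forces a gain suboptimality exceeding $1/2$. All the $M_\theta$'s would share a common state space, a common reward function, and a common transition kernel outside of a distinguished transient ``gateway'' state $s_0$. The common structure comprises a ``good'' recurrent cycle of length $T$ with reward $1$, a ``bad'' recurrent region with reward $0$, and a few low-probability escape edges that ensure the MDP is communicating with diameter $T$ and every deterministic policy is unichain. At $s_0$ there are $A = \Theta(\lceil m/T\rceil)$ actions. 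In $M_\theta$, action $\theta$ at $s_0$ is the unique ``correct'' action: the unique gain-optimal deterministic policy $\pistar_\theta$ takes $\pistar_\theta(s_0) = \theta$ and thereafter cycles through the good region, so $\mu^{\pistar_\theta}$ is supported on the good cycle and $\Dow(P_\theta, \pistar_\theta) \leq T$.

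I would choose the sample-size function $n$ by setting $n(s_0, a) = \lceil \tfrac{T}{6}\log(1/\delta)\rceil$ for every action $a$ at $s_0$ (which matches the theorem's constraint at $s_0$, since $\mu^{\pistar_\theta}(s_0) = 0$) and by giving $n(s, a)$ an essentially unbounded number of samples at every other state-action pair, so the algorithm effectively knows those transitions. Because the datasets under any two hypotheses $M_\theta, M_{\theta'}$ differ only in the samples coming from $(s_0, \theta)$ and $(s_0, \theta')$, KL tensorization gives $\mathrm{KL}(\P_{\theta, n} \,\|\, \P_{\theta', n}) \leq 2 n(s_0, \cdot) \cdot \mathrm{KL}_{\text{per-sample}}$. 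By choosing the per-action transition probabilities at $s_0$ to differ by only $O(1/\sqrt T)$ between the $A$ alternatives (so that the per-sample KL is $O(1/T)$), this total KL is $O(\log(1/\delta))$, and the Bretagnolle--Huber inequality produces $\tvnorm{\P_{\theta, n} - \P_{\theta', n}} \leq 1 - \Omega(\delta)$. A standard two-point argument then yields some $\theta \in \Theta$ for which any algorithm $\mathscr{A}$ outputs $\pihat$ with $\pihat(s_0) \neq \theta$ with probability at least $\delta$.

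The main obstacle is arranging the construction so that the gain suboptimality of any $\pihat$ with $\pihat(s_0) \neq \theta$ is at least $1/2$ even though the transition probabilities at $s_0$ differ by only $O(1/\sqrt T)$ across actions: a naive Bernoulli construction gives a gain gap scaling only as $O(1/\sqrt T)$, because the gain depends linearly on the escape probability out of the bad recurrent class. To get an $\Omega(1)$ gap, I would decouple identification from the gain dependence by routing each action at $s_0$ through a short chain of intermediate transient states whose transitions are also $\theta$-perturbed, so that the long-run probability of ending up in the good cycle is a multiplicative function of many $O(1/\sqrt T)$-close Bernoulli trials; tuning these so that in $M_\theta$ only the $\pistar_\theta(s_0) = \theta$ ``branch'' survives the product to give positive long-run mass on the good cycle then produces the required gap. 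Simultaneously verifying that every deterministic policy is unichain, that the MDP is communicating with diameter $T$, and that $\Dow(P_\theta, \pistar_\theta) \leq T$ for this construction is the main technical content; once this is in place the two-point lower bound from the previous paragraph completes the proof.
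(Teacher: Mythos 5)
Your high-level framing (a well-covered rewarding region, a poorly covered gateway, $A=\Theta(\lceil m/T\rceil)$ actions) matches the paper, but the core mechanism you propose for combining indistinguishability with an $\Omega(1)$ gain gap does not work, and the gap you flag as the ``main obstacle'' is resolved in the paper by a different idea that your proposal never reaches. The paper does not make the hypotheses KL-close; it makes the distinguishing transitions \emph{rare}. Each hypothesis's escape/return probabilities differ by $\Omega(1)$ multiplicative factors (e.g.\ $1/T$ for the good action versus $1/(AT)$ for the others), which is what forces an $\Omega(1)$ gain gap, and indistinguishability comes instead from the event $\mathcal{B}$ that none of these rare transitions appears in the dataset at all --- an event of probability $\geq 4\delta$ when $n = \Theta(T\log(1/\delta))$, under which the conditional law of the dataset is literally the same point mass for every $\theta$. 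Your route of perturbing probabilities by $O(1/\sqrt{T})$ to keep the per-sample KL at $O(1/T)$ inherently ties the size of the perturbation to the size of the gain gap, and your proposed repair --- routing through a chain of $\theta$-perturbed \emph{transient} states so that absorption probabilities multiply --- cannot produce a gain gap at all: the gain of a unichain policy equals $\mu^\pi r_\pi$ and is completely insensitive to the dynamics at states that are transient under that policy, and in a unichain MDP there is no long-run ``probability of ending up in the good cycle'' to tune. If you instead make the gateway chain recurrent under $\pistar_\theta$, it enters the coverage requirement with weight $m\mu^{\pistar_\theta}$ and you lose the premise that only $\Theta(T\log(1/\delta))$ samples are available there.

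A second genuine gap is that a two-point (Bretagnolle--Huber) argument does not handle randomized output policies, which the theorem requires. With only two hypotheses the algorithm can hedge by placing weight $1/2$ on each candidate action; in any construction where the gain is governed by the policy's effective escape rate from the zero-reward region, weight $1/2$ on the good action already recovers half the optimal return rate, which (compared against the $\Theta(1/m)$-scale leave rate from the rewarding region) yields a near-optimal gain. This is precisely why the paper needs $A=\Theta(m/T)$ \emph{simultaneous} alternatives together with the pigeonhole step showing that, conditioned on $\mathcal{B}$, some action $a'$ receives weight less than $4/A$ with conditional probability $>3/4$; the value $4/A\cdot 1/T\approx 1/m$ is calibrated exactly against the leave rate $p=1/(3(m+T))$ so that a guessing policy's gain drops below $1/2$. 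Your proposal invokes the $A=\Theta(\lceil m/T\rceil)$ action count but never uses it in the lower-bound argument, which is where it is actually needed.
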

Note that the ``effective dataset size'' parameter $m$ 
can be taken arbitrarily large, 
meaning that learning better than a $\frac{1}{2}$-suboptimal policy is impossible even with arbitrarily large amounts of data from the stationary distribution of the target policy. 
This does not contradict the error bounds from prior work which make stationary-distribution-based coverage assumptions and involve uniform complexity measures $\tmix, \unifspan$ \citep{ozdaglar_offline_2024, gabbianelli_offline_2023}, since the parameters $\tmix, \unifspan$ scale with $m$ in our hard instances in such a way as to render such bounds vacuous.
In contrast, the parameters $\tspannorm{h_{\theta}^{\pistar_\theta}}, \Dow(P_\theta, \pistar_\theta),$ and $D_{\theta}$ remain bounded, implying that a convergence rate involving any of these parameters is impossible without data coverage beyond the stationary distribution, revealing a qualitatively different behavior of such parameters.
While oftentimes results for average-reward setups can be predicted/derived by taking appropriate large-$\gamma$ limits of results for discounted settings, taking the limit as $\gamma \to 1$ of usual discounted occupancy coverage assumptions (e.g., $C^\star$ in \citet[Theorem 6]{rashidinejad_bridging_2022}) only leads to requirements on covering the stationary distribution. 

The setup in Theorem \ref{thm:transient_lb} even provides the learner with $\Omtilde(\Dow(P_\theta, \pistar_\theta))$ samples from state-action pairs which are transient under the target policy $(\mu_\theta^{\pistar_\theta}(s) = 0)$, and this is still insufficient for learning near-optimal policies.
This implies that the transient state dataset coverage requirement of Theorem \ref{thm:main_theorem} is nearly unimprovable, up to an additional factor of $\Otilde(\Dow(P, \pistar))$. A complete proof of Theorem \ref{thm:transient_lb} is provided in Appendix \ref{sec:proof_transient_lb} and a sketch is provided in Section \ref{sec:proof_sketches}, but we briefly summarize the key idea: even with an arbitrarily large (but finite) amount of data from the recurrent class of the target policy, we may inevitably learn a policy with a small probability of leaving these well-covered states. Without any data we cannot learn how to recover from such a transition and navigate back to highly-rewarding regions quickly enough. This unfavorable but rare transition has negligible impact for finite horizon/discounted RL objectives (if the starting state is within the highly-rewarding region). In unichain MDPs all policies are guaranteed to eventually return to the recurrent class of the optimal policy eventually (because all recurrent classes must overlap, otherwise it would be possible to construct a multichain policy), but the fact that some policies take a long time to do so means that the uniform mixing time $\tmix$ is very large, even if the optimal policy can recover quickly. Despite being unichain, such MDPs are qualitatively close to being non-unichain (but weakly communicating).

Next, we present a lower bound which demonstrates that dependence on $m$ in Theorem \ref{thm:main_theorem} is tight.
\begin{thm}
    \label{thm:recurrent_lb}
    There exist absolute constants $c_1, c_2,c_3>0$ such that for any $T\geq c_1$, $S\geq c_2$, $k\geq 0$, and $m \geq \max\{TS,kS\}$, one can construct a finite index set $\Theta$, transition matrices $P_\theta$ for each $\theta \in \Theta$, a reward function $r$, and a function $n : \mathcal{S} \times \mathcal{A} \to \mathbb{N}$ such that the following hold:
    \begin{enumerate}[itemsep=0pt, topsep=0pt]
        \item For each $\theta\in\Theta$, the MDP $(P_\theta, r)$ is unichain and communicating, with $S$ states and diameter $T$.
        \item For each $\theta \in \Theta$, the MDP $(P_\theta, r)$ has a unique stationary gain-optimal policy $\pistar_\theta$ such that $\Dow(P_\theta, \pistar_\theta) \leq T$ and $n(s,\pistar_\theta(s)) \geq m \mu_\theta^{\pistar_\theta}(s) + k$ for all $s\in\mathcal{S}$.
        \item For any algorithm $\mathscr{A}$ that maps the dataset $\mathcal{D}$ to a stationary policy, we have
        \begin{align}
            \max_{\theta \in \Theta} \P_{\theta,n}\bigg(\, \rho_\theta^* - \rho_\theta^{\mathscr{A}(\dataset)} > 
        c_3\sqrt{\frac{TS}{m}} \,\bigg) \geq \frac{1}{64}. \label{eq:recurrent_lb_thm_subopt_bd}
        \end{align}
\end{enumerate}
\end{thm}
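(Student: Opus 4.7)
The strategy is Assouad's method on a multi-parameter family of hard instances $\{P_\theta : \theta \in \Theta\}$ with $\Theta = \{0,1\}^K$ and $K = \Theta(S)$. I would first build a base MDP with $S$ states, diameter $T$, and a single deterministic unichain gain-optimal policy whose stationary distribution puts mass $\Theta(1/S)$ on each of $K$ ``decision states'' $s_1,\dots,s_K$. A concrete design uses $K$ hub states arranged in a rapidly mixing cycle in which the target policy deterministically rotates, together with a ``recovery chain'' of $\Theta(T)$ states entered only on wrong actions and returning via a fixed center state $s^\star$; the center ensures $\Dow(P_\theta,\pistar_\theta) \leq T$ and the chain length sets the diameter to $T$. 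For each $\theta\in\Theta$, the MDP $P_\theta$ is obtained by declaring $\pistar_\theta(s_i)=a_{\theta_i}$ and setting the chain-entry probability from $s_i$ to $p$ under $a_{\theta_i}$ and $p+\epsilon$ under $a_{1-\theta_i}$. Choosing $p = \Theta(1/T)$ keeps the stationary distribution of $\pistar_\theta$ approximately uniform on $\{s_1,\dots,s_K\}$ uniformly in $\theta$, so that a single sample-count function $n$ with $n(s_i,a) \approx m/K$ can be fixed in advance and simultaneously satisfies both the recurrent coverage $n(s,\pistar_\theta(s)) \geq m\mu^{\pistar_\theta}_\theta(s)$ and the transient coverage $n(s,\pistar_\theta(s)) \geq k$ (using $m \geq \max\{TS,kS\}$).

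Next I would translate bit-errors into gain-suboptimality. Each wrong action at $s_i$ produces an excess probability $\epsilon$ per visit of entering the recovery chain, during which the agent loses $\Theta(T)$ reward relative to the hub cycle; weighted by the stationary frequency $\Theta(1/K)$ of $s_i$, this contributes $\Theta(\epsilon T/K)$ to the gain loss. Summing over decision states gives $\rho^\star_\theta - \rho^{\pi}_\theta \geq c\,\epsilon T\,H(\theta_\pi,\theta)/K$ for a universal constant $c>0$, where $\theta_\pi \in \{0,1\}^K$ encodes the actions of $\pi$ at the decision states and $H(\cdot,\cdot)$ is Hamming distance. Choosing $\epsilon = c_0\sqrt{S/(mT)}$ makes this $\Theta(\sqrt{TS/m})\cdot H/K$, which is $\Omega(\sqrt{TS/m})$ once $H = \Omega(K)$.

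To invoke Assouad, I would bound the KL divergence between neighboring $P_\theta$. For Bernoulli transitions with parameters $p$ and $p+\epsilon$, $D_{\mathrm{KL}}(\mathrm{Ber}(p)\,\|\,\mathrm{Ber}(p+\epsilon)) \leq O(\epsilon^2/p) = O(\epsilon^2 T)$, so the dataset KL between two parameters differing in exactly one coordinate is at most $n(s_i,a)\cdot O(\epsilon^2 T) = O(m\epsilon^2 T/K) = O(c_0^2)$. Taking $c_0$ sufficiently small makes this $\leq 1/4$, and hence pairwise TV $\leq 1/2$ between the induced data laws. Assouad's lemma then yields some $\theta^\star\in\Theta$ with $\E_{\theta^\star,n}[H(\theta_{\mathscr{A}(\dataset)},\theta^\star)] \geq K/4$, and since $H \leq K$, Markov's inequality gives $\P_{\theta^\star,n}(H \geq K/8) \geq 1/7 > 1/64$, which translates to gain suboptimality at least $c_3\sqrt{TS/m}$ with probability at least $1/64$.

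The main obstacle is the construction itself: simultaneously arranging diameter equal to $T$, $\Dow(P_\theta,\pistar_\theta)\le T$, a stationary distribution under $\pistar_\theta$ that is essentially uniform on $\Omega(S)$ decision states and only weakly depends on $\theta$ (so that a single $n$ suffices for all $\theta$), and a recovery chain that genuinely costs $\Theta(T)$ reward per entry. Balancing these, in particular ensuring $\mu^{\pistar_\theta}_\theta$ is $\theta$-independent to leading order so that the coverage condition $n(s,\pistar_\theta(s)) \geq m\mu^{\pistar_\theta}_\theta(s)+k$ can be verified uniformly in $\theta$ with a pre-fixed $n$, is the delicate part; the perturbations must be small enough not to move the stationary distribution yet large enough to induce the target gain gap, which is exactly what the scaling $\epsilon = c_0\sqrt{S/(mT)}$ with $p = \Theta(1/T)$ achieves.
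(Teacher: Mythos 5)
Your overall strategy is the same as the paper's: a hypercube $\Theta=\{0,1\}^{\Theta(S)}$ of instances in which each of $\Theta(S)$ reward-$1$ ``decision states'' has two near-identical actions whose probabilities of falling into a costly region differ by $\Theta(\sqrt{S/(mT)})$ on a baseline of $\Theta(1/T)$, a reduction from the number of wrong actions to gain suboptimality, and a hypercube information-theoretic bound (you use Assouad; the paper uses Gilbert--Varshamov plus local Fano --- this difference is immaterial, and your KL accounting and reverse-Markov step are correct). The coverage bookkeeping with a $\theta$-independent $n$ also matches the paper's.

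There is, however, one concrete flaw in your construction: the ``recovery chain of $\Theta(T)$ states.'' The theorem must hold for all $T\geq c_1$ and $S\geq c_2$ independently (e.g.\ $S=33$, $T=10^6$ is allowed, since the only coupling is $m\geq\max\{TS,kS\}$), so you do not have $\Theta(T)$ states to spend; your construction has $\Theta(S)+\Theta(T)$ states and violates the $S$-state budget whenever $T\gg S$. The fix is what the paper does: collapse the recovery region to a \emph{single} state with reward $0$ and a geometric holding time of mean $\Theta(T)$ (self-loop probability $1-\Theta(1/T)$), which delivers the same $\Theta(T)$ expected reward loss per entry while using one state and still giving diameter $\Theta(T)$ and $\Dow\leq T$. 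A secondary point to be aware of: your renewal-reward claim that each wrong action costs $\Theta(\epsilon T/K)$ of gain, summed linearly over coordinates and valid for \emph{randomized} policies, is asserted rather than proved. The paper needs an exact stationary-distribution computation together with a reverse-Jensen inequality (the map $\kappa\mapsto 1/\kappa$ is convex, so naive Jensen points the wrong way), producing a bound of the form $\rho^\star_\theta-\rho^\pi_\theta\geq \tfrac{1}{4}\bigl(\varepsilon L^\pi_\theta/S'-2\varepsilon^2\bigr)$ with an explicit $-\varepsilon^2$ error term that must then be absorbed; your sketch should flag and carry this lower-order correction.
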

Since generally $\Dow(P, \pi) \geq \spannorm{h^\pi}/4$ (see Lemma \ref{lem:dow_span_bound}), Theorem \ref{thm:main_theorem} implies a lower bound in terms of $\tspannorm{h^{\pistar}}$ ($\tspannorm{h_\theta^{\pistar_\theta}}$ and $\Dow(P_\theta, \pistar_\theta)$ are on the same order in the instances of Theorem \ref{thm:recurrent_lb}).
We add the parameter $k$ to demonstrate that a coverage requirement in the form of Theorem \ref{thm:main_theorem} does not affect the dependence on $m$ in \eqref{eq:recurrent_lb_thm_subopt_bd} for sufficiently large $m$. 
In particular after setting $k = \Ttilde(T^2)$ to match Theorem \ref{thm:main_theorem}, its dependence on $\tspannorm{h^{\pistar}}, S,$ and $m$ matches~\eqref{eq:recurrent_lb_thm_subopt_bd} and thus is unimprovable up to $\Otilde(\cdot)$ factors as long as $m \geq \Ttilde(T^2 S)$.
Theorem \ref{thm:recurrent_lb} is proven in Appendix \ref{sec:proof_dataset_lb}.

\section{Proof sketches}
\label{sec:proof_sketches}

\subsection{Main theorem}
First we discuss the proof of Theorem \ref{thm:main_theorem}, including the motivation for quantile clipping.
The key idea of pessimistic value iteration is to choose $\Thpe$ so that $\Thpe(\Qhpe^\star) \leq \T(\Qhpe^\star)$, and then letting $\pihat$ be greedy with respect to $\Qhpe^\star$ (meaning $\T(\Qhpe^\star) = \T^{\pihat}(\Qhpe^\star)$), we have 
\begin{align*}
    \Qhpe^\star = \Thpe(\Qhpe^\star) \leq \T(\Qhpe^\star) = \T^{\pihat}(\Qhpe^\star)
\end{align*}
so by standard monotonicity arguments we have $\Qhpe^\star \leq Q^{\pihat} $. The challenge is then to choose $\Thpe$ as ``close'' to $\T$ as possible, so that $\Qhpe^\star$ is as close as possible to $Q^\star$ (while ensuring $\Thpe(\Qhpe^\star) \leq \T(\Qhpe^\star)$), in order to maximize $Q^{\pihat}$. 
Using $\alpha$ to hide $\Otilde(\cdot)$ terms, an empirical Bernstein-like bound \citep{maurer_empirical_2009} for the quantity $\Vhpe^\star = M \Qhpe^\star$, and upper-bounding a sum by $\max$, yields 
\begin{align}
    P_{sa} \Vhpe^\star \geq \Phat_{sa} \Vhpe^\star - \max \left\{\rule{0cm}{0.7cm}\right. \sqrt{\alpha \frac{\Var_{\Phat_{sa}}\Big[\Vhpe^\star \Big]}{n(s,a)}},  \alpha\frac{\spannorm{\Vhpe^\star}}{n(s,a)} \left\}\rule{0cm}{0.7cm}\right. =: \Phat_{sa} \Vhpe^\star - \bt(s,a, \Vhpe^\star ) ~~\forall s,a.
    \label{eq:pf_sketch_bernstein_like_ineq}
\end{align}
This sharp span-based form of penalty function $\bt$ is crucial for the constant shift property described in Lemma \ref{lem:pessimistic_Bellman_operator_properties_short}, since both $\Var_{\Phat_{sa}}\left[\cdot \right]$ and $\spannorm{\cdot}$ are invariant to shifts by multiples of $ \one$. As discussed there this property is essential for the average-reward setting, and the Bernstein-style penalty used in \cite{li_settling_2023} replaces the second term from the max in \eqref{eq:pf_sketch_bernstein_like_ineq} with $\frac{1}{1-\gamma} \geq \tspannorm{\Vhpe^\star}$ and hence does not enjoy this property. However, we cannot simply use an operator like $\widetilde{\mathcal{T}}(Q)(s,a) := r(s,a) + \gamma \Phat_{sa} \Vhpe^\star - \gamma \bt(s,a, \Vhpe^\star )$, because the span term within $\bt$ would lead to non-monotonicity of $\widetilde{\mathcal{T}}$ and disrupt many other essential properties (like $\gamma$-contractivity). To see the non-monotonicity, suppose some $s'$ has $\Phat(s' \mid s,a) < \frac{\alpha}{n(s,a)}$. Then, for $V \in \R^\S$ where $V(s')$ is the largest entry, ignoring non-differentiability edge cases, we have
\begin{align*}
    \frac{d}{dV(s')}\left(\Phat_{sa} V - \alpha\frac{\tspannorm{V}}{n(s,a)} \right) &=\frac{d}{dV(s')}\left( \Phat_{sa} V - \alpha\frac{V(s') - \min_{s'' \in \S} V(s'')}{n(s,a)} \right) \\
    &= \Phat(s' \mid s,a) - \frac{\alpha}{n(s,a)} < 0.
\end{align*}
However, if we replace $V$ with the quantile-clipped quantity $T_{\alpha/n(s,a)}(\Phat_{s,a},V)$, then increasing $V(s')$ (when it is the largest entry of $V$) will only increase $T_{\alpha/n(s,a)}(\Phat_{s,a},V)$ if $\Phat(s' \mid s,a)$ has at least $\alpha/n(s,a)$ probability mass. Hence, by fixing the overpenalization caused by $\spannorm{\cdot}$, quantile clipping is essential to define our empirical-span-based pessimistic Bellman operator.

Now we discuss a few other aspects of the proof of Theorem \ref{thm:main_theorem}.
Obtaining the Bernstein-style inequality~\eqref{eq:pf_sketch_bernstein_like_ineq} is nontrivial due to statistical dependence between $\Phat_{sa}$ and $\Vhpe^\star$. We remedy this with an argument based on leave-one-out/absorbing MDP techniques \citep{agarwal_model-based_2020}, which requires additional covering steps due to the presence of quantile clipping. (See Lemmas \ref{lem:LOO_constructions} and \ref{lem:concentration_helper_lemma}.)

It is somewhat surprising that Theorem \ref{thm:main_theorem} is able to obtain a bias-span-based guarantee without requiring any prior bias-span knowledge, since prior work in related uniform coverage settings has shown this is impossible when the effective horizon is large/on the same order as the size of the dataset \citep{zurek_plug-approach_2025}. This is closely related to the issue that the bias span $\spannorm{h^{\pi}}$ of a policy $\pi$ is not estimable to multiplicative error with a sample complexity polynomial in only $S, A$, and $\spannorm{h^\pi}$ \citep{zurek_span-based_2025, tuynman_finding_2024}. However, our proof suggests that $\spannorm{h^{\pi}}$ \textit{is} estimable if we allow a dependence on the policy hitting radius $\Dow(P, \pi)$, which we believe is an independently interesting finding. (See Lemma \ref{lem:empirical_span_bound}.) This fact plays a key role in bounding the suboptimality in terms of $\spannorm{h^\pi}$.

\subsection{Transient lower bound}

Next we briefly describe the idea behind the hard instances within Theorem \ref{thm:transient_lb}, which implies that transient coverage is required for offline RL with single-policy complexity parameters. 
Consider the MDP $P$ in Figure \ref{fig:transient_lb_simple}, which is parameterized by $m$, which we imagine as arbitrarily large, and $T$, which we imagine as measuring the complexity of $P$. There are two states with two actions each, an absorbing \texttt{stay} action and a \texttt{leave} action which has a small chance of leading to the other state. State $1$ has reward $1$ for both actions and state $2$ has reward $0$ for both actions, so clearly the optimal policy $\pistar$ is to take \texttt{leave} in state $2$ and take \texttt{stay} in state $1$, and the associated stationary distribution has all its mass on state $1$. Also, assuming $m \geq T$, $\Dow(P, \pistar) = T$, since this is the expected amount of time to hit state $1$ starting from state $2$. Therefore to satisfy the coverage assumption $n(s,\pistar(s)) \geq m \mu^{\pistar}(s) + \Dow(P, \pistar)$, it would suffice to provide $m$ samples for \textit{both} state $1$ actions, and $T$ samples for \textit{both} state $2$ actions. 

\begin{figure}[H]
    \centering
    \resizebox{0.49\textwidth}{!}{
    \begin{tikzpicture}[ -> , >=stealth, shorten >=2pt , line width=0.5pt, node distance =2cm, scale=0.9]

    \node [circle, draw] (one) at (-2 , 0) {1};
    \node [circle, draw] (two) at (2 , 0) {2};
    \node [circle, draw, fill, inner sep=0.01cm, label={above:$a=\texttt{leave}, R =1$~~~~~~}] (dot1) at (-1.7 , 1) {};
    \node [circle, draw, fill, inner sep=0.01cm, label={below:~~~~~~$a=\texttt{leave}, R =0$}] (dot2) at (1.7 , -1) {};
    \path (one) edge [loop below, looseness=15] node [below] {$a=\texttt{stay}, R =1$~~~~~~}  (one) ;
    
    \path (two) edge [loop above, looseness=15] node [above] {~~~~~~$a=\texttt{stay}, R =0$}  (two) ;
    
    \path (one) edge[-] [bend left] node [above] {} (dot1) ;
    \path (dot1) edge[dashed] [bend left] node [right] {$1-\frac{1}{m}$} (one);
    \path (dot1) edge[dashed] [bend left] node [above] {$\frac{1}{m}$} (two);

    \path (two) edge[-] [bend left] node [above] {} (dot2) ;
    \path (dot2) edge[dashed] [bend left] node [left] {$1-\frac{1}{T}$} (two);
    \path (dot2) edge[dashed] [bend left] node [below] {$\frac{1}{T}$} (one);

    \begin{scope}[on background layer]
    \draw[thick] (current bounding box.south west)
                 rectangle
                 (current bounding box.north east);
    \end{scope}
   \node[below=4pt of current bounding box] {\large$P$};
    \end{tikzpicture}
    }{
    \begin{tikzpicture}[ -> , >=stealth, shorten >=2pt , line width=0.5pt, node distance =2cm, scale=0.9]

    \node [circle, draw] (one) at (-2 , 0) {1};
    \node [circle, draw] (two) at (2 , 0) {2};
    
    \path (one) edge [loop below, looseness=15] node [below] {$a=\texttt{stay}, R =1$~~~~~~}  (one) ;
    
    \path (two) edge [loop above, looseness=15] node [above] {~~~~~~$a=\texttt{stay}, R =0$}  (two) ;
    
    \path (one) edge [loop above, looseness=15] node [above] {$a=\texttt{leave}, R =1$~~~~~~} (one) ;

    \path (two) edge [loop below, looseness=15] node [below] {~~~~~~$a=\texttt{leave}, R =0$} (two) ;
    \begin{scope}[on background layer]
    \draw[thick] (current bounding box.south west)
                 rectangle
                 (current bounding box.north east);
    \end{scope}
   \node[below=4pt of current bounding box] {\large$\Phat$};
    \end{tikzpicture}
    }
    \vspace{-10pt}
    \caption{An MDP $P$ parameterized by $m, T$, and an empirical MDP $\Phat$ which has constant probability of being sampled from $P$. Each solid arrow indicates an action and is annotated with its reward. Arrows which split into multiple dashed arrows indicate possible stochastic transitions, and each dashed arrow is annotated with the associated probabilities.}
    \label{fig:transient_lb_simple}
\end{figure}
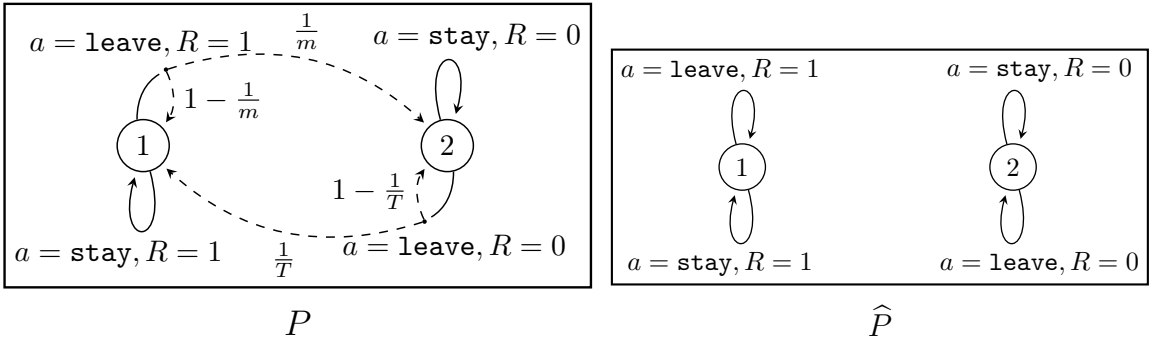

For this sample size function $n$, with constant probability 
we will not observe any transitions to the other state from either of the \texttt{leave} actions (that is, the samples from each of these state-action pairs would all be of the form $(s, \texttt{leave}, s)$). Under such an event, illustrated by the empirical MDP $\Phat$, no algorithm could distinguish between the \texttt{leave} and \texttt{stay} actions in either state better than random guessing. If an algorithm is forced to return a deterministic policy, then there would be a constant probability of choosing the policy $\pi$ where $(\pi(1), \pi(2)) = (\texttt{leave}, \texttt{stay})$, which will remain in state $2$ (and hence have gain $\zero$). To generalize to algorithms which may choose randomized policies, we add more copies of the \texttt{stay} action to state $2$, so that a ``guessed'' randomized policy has a low chance of returning to state $1$ quickly enough for good performance. Also $P$ is not unichain, but we can add an arbitrarily small ($O(m^{-2})$) probability for the \texttt{stay} actions in state $2$ to return to state $1$, which ensures unichainedness without meaningfully changing the story.
We emphasize that the hardness is not due to the inability to identify the \texttt{stay} action in state $1$, since in general we cannot expect to perfectly match the stationary distribution of the target policy (and in this example, the policy $(\texttt{leave}, \texttt{leave})$ still has suboptimality only $O(T/m)$). Rather, the hardness is due to the fact that it is nontrivial to navigate (quickly) back to the target policy's stationary distribution after leaving it, and learning to do so requires data coverage beyond said stationary distribution.

\section{Conclusion}
\label{sec:conclusion}

We developed the first average-reward offline RL algorithms for MDPs where not all policies have constant gain, and also the first convergence rates depending only on the bias span of a single policy. A main limitation of our work is its focus on the tabular setting, hence an important direction is to extend these improvements to function approximation setups to avoid dependence on $S$ in the results. While Theorem \ref{thm:transient_lb} demonstrates the necessity of data from the target policy from all states, this may be limiting in practice, so an interesting future direction is to explore additional assumptions or information that could be provided to the algorithm to circumvent this requirement.

\section*{Acknowledgment}
Y.\ Chen and M.\ Zurek acknowledge support by National Science Foundation grants CCF-2233152 and DMS-2023239.

\bibliographystyle{plainnat}
\bibliography{refs_copy.bib}


\clearpage
\appendix

\section{Additional notation and guide to appendices}

Let $\pi$ be some stationary policy.
Note that $P_\pi$ (defined above as the Markov chain over states induced by policy $\pi$ on the transition kernel $P$) is equal to $M^\pi P$. We also define $r_\pi = M^\pi r$. Then we have $V^\pi_\gamma = (I - \gamma P_\pi)^{-1} r_\pi$. $\infnorm{\cdot}$ and $\onenorm{\cdot}$ denote the usual $\ell_\infty$/$\ell_1$-norms, respectively. $\infinfnorm{W}$ denotes the $\infnorm{\cdot}\to\infnorm{\cdot}$ operator norm of a matrix $W$. In particular $\infinfnorm{(I -\gamma P_\pi)^{-1}} = \frac{1}{1-\gamma}$. We note that the action maximization operator $M$ and the policy matrix $M^\pi$ both satisfy monotonicity: $V \geq V'$ (elementwise, for $Q, Q' \in \R^{\S \times \A}$) implies $M(Q) \geq M(Q')$, and likewise that $M^\pi Q \geq M^\pi Q'$. These two operators also both satisfy the ``constant-shift'' property, that for any $c \in \R$ and any $Q \in \R^{\S \times \A}$, we have $M(Q + c \one) = c \one + M(Q)$ and $M^\pi(Q + c \one) = c \one + M^\pi(Q)$. Also we note that $M$ and $M^\pi$ are both $1$-Lipschitz with respect to $\infnorm{}$, that is $\infnorm{MQ - MQ'} \leq \infnorm{Q - Q'}$ and $\infnorm{M^\pi Q - M^\pi Q'} \leq \infnorm{Q - Q'}$.
For any vector $x$ we let $x^{\circ k}$ denote its elementwise $k$th power.
We let $\ind$ denote the usual indicator function used in probability where $\ind(E)$ is a random variable with value $1$ if the event $E$ holds and $0$ otherwise.

In Appendix \ref{sec:proof_of_main_theorem} we prove the main theorem, Theorem \ref{thm:main_theorem}. In Appendix \ref{sec:proof_transient_lb} we prove Theorem \ref{thm:transient_lb} and in Appendix \ref{sec:proof_dataset_lb} we prove Theorem \ref{thm:recurrent_lb}. Appendix \ref{sec:deferred_proofs} contains additional supporting results.

\section{Proof of main theorem}
\label{sec:proof_of_main_theorem}

\subsection{Well-definedness}
We also define a fixed-policy/policy evaluation version of $\Thpe$ which will be useful within the analysis. For any fixed stationary policy $\pi$, we let
\begin{align}
    \Thpe^\pi (Q)(s,a) &:= r(s,a) +\gamma\max \left\{  \Phat_{sa} T_{\beta(s,a)}( \Phat_{sa}, M^\pi Q) - b(s,a, M^\pi Q), \min_{s'}(M^\pi Q)(s') \right \} .\label{eq:defn_of_Thpe_pi}
\end{align}
We also define $\Vhpe^{\pi} := M^{\pi}  \Qhpe^{\pi}$, where $\Qhpe^{\pi}$ is the unique fixed point of $\Thpe^\pi$ (justified in the below lemma).

The following is a more comprehensive variant of Lemma \ref{lem:pessimistic_Bellman_operator_properties_short}.
\begin{lem}
\label{lem:pessimistic_Bellman_operator_properties}
\begin{enumerate}
    \item \label{lem:pessimistic_Bellman_operator_properties_pt1}
    $\Thpe$ satisfies the following properties:
    \begin{enumerate}
        \item Monotonicity: If $Q \geq Q'$ then $\Thpe(Q) \geq \Thpe(Q')$.
        \item Constant shift: For any $c \in \R$, $\Thpe(Q + c \one) = \Thpe(Q) + \gamma c \one$.
        \item $\gamma$-contractivity: $\Thpe$ is a $\gamma$-contraction and has a unique fixed point $\Qhpe^\star$.
        \item Boundedness: $\zero \leq \Qhpe^\star \leq \frac{1}{1-\gamma}\one$.
    \end{enumerate}
    \item For any fixed stationary deterministic policy $\pi$, the analogous statements hold for $\Thpe^{\pi}$: \label{lem:pessimistic_Bellman_operator_properties_pt2}
    \begin{enumerate}
        \item Monotonicity: If $Q \geq Q'$ then $\Thpe^\pi(Q) \geq \Thpe^\pi(Q')$.
        \item Constant shift: For any $c \in \R$, $\Thpe^\pi(Q + c \one) = \Thpe^\pi(Q) + \gamma c \one$.
        \item $\gamma$-contractivity: $\Thpe^\pi$ is a $\gamma$-contraction and has a unique fixed point $\Qhpe^\pi$.
        \item Boundedness: $\zero \leq \Qhpe^\pi \leq \frac{1}{1-\gamma}\one$.
    \end{enumerate}
    \item For any fixed stationary deterministic policy $\pi$, we have $\Qhpe^\star \geq \Qhpe^\pi$. \label{lem:pessimistic_Bellman_operator_properties_pt3}
\end{enumerate}
\end{lem}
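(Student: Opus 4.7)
The plan is to treat the three parts together, with Part~\ref{lem:pessimistic_Bellman_operator_properties_pt1} being the substantive work, Part~\ref{lem:pessimistic_Bellman_operator_properties_pt2} following verbatim with $M$ replaced by $M^\pi$ (which shares the required monotonicity, shift-equivariance, and Lipschitz properties), and Part~\ref{lem:pessimistic_Bellman_operator_properties_pt3} following from the monotonicity established in Part~\ref{lem:pessimistic_Bellman_operator_properties_pt1} together with the pointwise bound $MQ \geq M^\pi Q$. Within Part~\ref{lem:pessimistic_Bellman_operator_properties_pt1}, I would prove the four properties in the order: constant shift, monotonicity, $\gamma$-contractivity (hence existence and uniqueness of $\Qhpe^\star$ via Banach's fixed point theorem), and finally boundedness, since $\gamma$-contractivity is a standard consequence of monotonicity plus constant shift, and the bounds can then be propagated by monotone iteration.

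Constant shift is direct: $T_\beta$ is shift-equivariant because its quantile threshold shifts with $V$; $\Var_{\Phat_{sa}}[\cdot]$ and $\tspannorm{\cdot}$ are shift-invariant so $b(s,a,\cdot)$ is shift-invariant; $\min$ is shift-equivariant; combining these passes the shift through $\Thpe$ with factor $\gamma$. For $\gamma$-contractivity, from $Q \leq Q' + \infnorm{Q - Q'}\one$ the monotonicity and constant shift give $\Thpe(Q) \leq \Thpe(Q') + \gamma\infnorm{Q - Q'}\one$ and symmetrically. Boundedness is verified by directly evaluating $\Thpe$ at $\zero$ (where the quantile-clipped vector and penalty essentially vanish, leaving $\Thpe(\zero) = r \geq \zero$) and at $\frac{1}{1-\gamma}\one$ (where the clipped vector is constant and all terms inside the max collapse to $\frac{1}{1-\gamma}$, giving $\Thpe(\frac{1}{1-\gamma}\one) \leq \frac{1}{1-\gamma}\one$), then using monotone iteration.

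The main obstacle is monotonicity. Setting $\Psi(V) := \max\{\Phat_{sa} T_{\beta(s,a)}(\Phat_{sa}, V) - b(s,a, V), \min V\}$ for fixed $(s,a)$, it suffices to show $\Psi(V)$ is entrywise monotone in $V$. My first sub-step is to prove $T_\beta(\mu, \cdot)$ is entrywise monotone in $V$ by showing the quantile threshold $\tau(V) := \max\{V(s') : \mu\{V \geq V(s')\} \geq \beta\}$ is itself monotone (via a coupling that transfers the state achieving $\tau(V)$ into a candidate threshold for $V'$). The second and main sub-step is to show the penalized term $\Phat_{sa} T_\beta(\Phat_{sa}, V) - b(s,a, V)$ is monotone via a derivative analysis along coordinate perturbations $V \to V + \delta e_{s^*}$: when $V(s^*)$ lies above the clipping threshold nothing changes (matching the intuition in Section~\ref{sec:proof_sketches}), and otherwise I split on whether the variance or span term dominates $b$. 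The crucial calculation is the variance-dominant case, where for $V_c := T_{\beta(s,a)}(\Phat_{sa}, V)$ I would use that $\Phat_{sa}\{s : V_c(s) = \tau(V)\} \geq \beta(s,a)$ by definition of $\tau$, which yields $\Var_{\Phat_{sa}}[V_c] \geq \beta(s,a)(V_c(s^*) - \Phat_{sa}V_c)^2$, exactly the bound required for the rate of change of $\sqrt{\beta\Var_{\Phat_{sa}}[V_c]}$ to not exceed the rate of change $\Phat(s^*\mid s,a)$ of the linear term. The span-dominant case uses the complementary fact from the sketch that $\Phat(s^*\mid s,a) \geq \beta(s,a)$ whenever $s^*$ is an uncapped max, so the $\beta(s,a)$-derivative of the span penalty is matched by $\Phat(s^*\mid s,a)$. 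Taking $\max$ with $\min V$ preserves monotonicity since $\min V$ is itself monotone in $V$. A minor subtlety is that $\Psi$ is non-smooth at transitions of $\tau$ and at ties between the variance and span terms in $b$, but these form a measure-zero set along any coordinate path and $\Psi$ is continuous, so monotonicity at smooth points extends globally.

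Part~\ref{lem:pessimistic_Bellman_operator_properties_pt3} then follows: since $MQ \geq M^\pi Q$ entrywise for any $Q$, the monotonicity of $\Psi$ gives $\Thpe(Q) \geq \Thpe^\pi(Q)$ pointwise. Inducting from $Q_0 = \zero$ using monotonicity of $\Thpe$ yields $\Thpe^k(\zero) \geq (\Thpe^\pi)^k(\zero)$ for all $k$, and taking $k \to \infty$ under $\gamma$-contractivity gives $\Qhpe^\star \geq \Qhpe^\pi$.
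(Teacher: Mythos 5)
Your proposal is correct and follows essentially the same route as the paper: reduce everything to monotonicity of the value-level operator, prove that by a coordinatewise one-sided-derivative analysis with case splits on the quantile position and on which penalty term attains the max (handling nonsmooth points via continuity), then obtain the constant-shift property from shift-equivariance of $T_\beta$ and shift-invariance of the penalty, contractivity from monotonicity plus constant shift, boundedness by evaluating at $\zero$ and $\frac{1}{1-\gamma}\one$ with monotone iteration, and Parts 2--3 by replacing $M$ with $M^\pi$ and using $MQ \geq M^\pi Q$. The only local variation is your bound $\Var_{\Phat_{sa}}[V_c] \geq \beta\,(V_c(s^\star)-\Phat_{sa}V_c)^2$ derived directly from the quantile mass (versus the paper's use of the variance-dominant case condition together with the gradient bound on $\sqrt{\Var}$); just note that in the span-dominant case the quantity that must exceed $\beta$ is the cumulative mass $\Phat_{sa}(\{s': V(s')\geq V(s^\star)\})$, which is also the derivative of the linear term there, not $\Phat_{sa}(s^\star)$ alone.
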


\begin{proof}

We note that a few steps are similar to \citet[Lemma 1]{li_settling_2023}, but our new choice of penalty requires much more involved analysis. 

We define an auxiliary operator $\Tope : \R^\S \to \R^{\S\A}$ by, for any $V \in \R^\S$,
\begin{align*}
    \Tope (V)(s,a) &:= r(s,a) +\gamma\max \left\{  \Phat_{sa} T_{\beta(s,a)}( \Phat_{sa}, V) - b(s,a, V), \min_{s'}(V)(s') \right \} .
\end{align*}
We defer the verification of the following fact, which involves somewhat lengthy calculations, to Appendix \ref{sec:tope_monotonicity_proof}.
\begin{lem}
    \label{lem:tope_monotonicity}
    Let $V, V' \in \R^\S$ be arbitrary and suppose that $V \geq V'$. Then (elementwise)
    \begin{align*}
        \Tope(V) \geq \Tope(V').
    \end{align*}
\end{lem}
Given Lemma \ref{lem:tope_monotonicity}, we can relatively easily verify Lemma \ref{lem:pessimistic_Bellman_operator_properties}. We note that Lemma \ref{lem:tope_monotonicity} makes use of the quantile clipping in an essential way.

Now we will show item~\ref{lem:pessimistic_Bellman_operator_properties_pt1} except for the boundedness property. Notice that $\Thpe(Q) = \Tope(MQ)$ (for any $Q \in \R^{\S\A}$). Therefore letting $Q, Q' \in\R^{\S\A}$ with $Q \geq Q'$, we have by monotonicity of $M$ that $MQ \geq MQ'$, and thus by monotonicity of $\Tope$ we conclude that
\begin{align*}
    \Thpe(Q) = \Tope(MQ) \geq \Tope(MQ') = \Thpe(Q')
\end{align*}
as desired. Next we check the constant shift property of $\Tope$. Fix $c \in \R, V \in \R^\S$, and $s \in \S, a \in \A$. Then we have that $T_{\beta(s,a)}(\Phat_{sa}, V + c \one) = T_{\beta(s,a)}(\Phat_{sa}, V) + c \one$, regardless of whether $\beta(s,a) \in [0,1]$ or $\beta(s,a) > 1$, since when $\beta(s,a) > 1$ we have $T_{\beta(s,a)}(\Phat_{sa}, V + c \one) = \min_{s \in \S}(V + c \one) \one  = (\min_{s \in \S}(V) + c) \one$, and when $\beta(s,a) \leq 1$, by~\eqref{eq:qc_operator_defn} we have
\begin{align*}
    T_{\beta(s,a)}(\Phat_{sa}, V+ c \one)(s) &= \min\Bigg\{
        V(s) + c, 
        ~\sup \bigg\{V(s') + c : s' \in \S, \sum_{s'' \in \S : V(s'')+c \geq V(s') + c} \!\!\!\! \Phat_{sa}(s') \geq \beta \bigg\}
     \Bigg\} \\
     &= c + \min\Bigg\{
        V(s), 
        ~\sup \bigg\{V(s') : s' \in \S, \sum_{s'' \in \S : V(s'') \geq V(s')} \!\!\!\! \Phat_{sa}(s') \geq \beta \bigg\}
     \Bigg\}\\
     &= c + T_{\beta(s,a)}(\Phat_{sa}, V)(s).
\end{align*}
Therefore
\begin{align*}
    &\Var_{\Phat_{sa}}\left[ T_{\beta(s,a)}(\Phat_{sa}, V + c \one)\right] = \Var_{\Phat_{sa}}\left[ T_{\beta(s,a)}(\Phat_{sa}, V ) + c \one\right]= \Var_{\Phat_{sa}}\left[ T_{\beta(s,a)}(\Phat_{sa}, V ) \right]\\
    \text{and}\quad& \spannorm{T_{\beta(s,a)}(\Phat_{sa}, V + c \one)} = \spannorm{T_{\beta(s,a)}(\Phat_{sa}, V )+ c \one} = \spannorm{T_{\beta(s,a)}(\Phat_{sa}, V)}
\end{align*}
and therefore we have that
$b(s,a,V) = b(s,a,V+c \one)$. Additionally we have that
\begin{align*}
    \min_{s'}\left(V + c \one \right)(s') = \min_{s'} V(s') + c .
\end{align*}
Hence
\begin{align}
    \Tope(V + c \one)(s,a) &= r(s,a) +\gamma\max \left\{  \Phat_{sa} T_{{\beta(s,a)}}( \Phat_{sa}, V+ c \one) - b(s,a, V+ c \one), \min_{s'}(V+ c \one)(s') \right \} \nonumber\\
    &= r(s,a) +\gamma\max \left\{  \Phat_{sa} T_{{\beta(s,a)}}( \Phat_{sa}, V) + c  \Phat_{sa} \one - b(s,a, V), \min_{s'}(V)(s') + c \right \} \nonumber\\
    &= r(s,a) + \gamma c  + \gamma\max \left\{  \Phat_{sa} T_{{\beta(s,a)}}( \Phat_{sa}, V) - b(s,a, V), \min_{s'}(V)(s') \right \} \nonumber\\
    &=  \gamma c  + \Tope(V)(s,a) \label{eq:tope_eigenvector}
\end{align}
(since $\Phat_{sa} \one = 1$). Using~\eqref{eq:tope_eigenvector} and the fact that $M(Q + c \one) = MQ + c \one$ we can show that $\Thpe$ satisfies the constant shift property as well:
\begin{align*}
    \Thpe(Q + c \one) = \Tope(M(Q + c \one)) = \Tope(MQ + c \one) = \Tope(MQ )+ \gamma c \one = \Thpe(Q )+ \gamma c \one
\end{align*}
as desired. Finally we can check contractivity of $\Thpe$. We note that it suffices to show that $\Tope$ is $\gamma$-Lipschitz, since then we would have for any $Q_1, Q_2 \in \R^{\S\A}$ that
\begin{align*}
    \infnorm{\Thpe(Q_1) - \Thpe(Q_2)} = \infnorm{\Tope(MQ_1) - \Tope(MQ_2)} \leq \gamma \infnorm{MQ_1 - MQ_2} \leq \gamma \infnorm{Q_1 - Q_2}
\end{align*}
as desired, where the first inequality is due to the (assumed) Lipschitzness of $\Tope$ and the second inequality is due to the $1$-Lipschitzness of $M$. Now we verify that $\Tope$ is indeed $\gamma$-Lipschitz.
For any $V_1, V_2 \in \R^{\S}$ we have $V_1 \leq V_2 + \infnorm{V_1 - V_2}\one$ (elementwise), so by monotonicity of $\Tope$ (Lemma \ref{lem:tope_monotonicity}), and then using the fact that $\Tope$ satisfies the constant shift property (shown in~\eqref{eq:tope_eigenvector}) in the next inequality, we have
\begin{align*}
    \Tope(V_1) &\leq \Tope(V_2 + \infnorm{V_1-V_2}\one) \\
    &= \Tope(V_2) + \gamma \infnorm{V_1-V_2}\one
\end{align*}
so by rearranging
\begin{align*}
  \Tope(V_1)-\Tope(V_2) \leq   \gamma \infnorm{V_1-V_2}\one.
\end{align*}
By reversing the roles of $V_1$ and $V_2$ we also have
\begin{align*}
  \Tope(V_2)-\Tope(V_1) \leq   \gamma \infnorm{V_1-V_2}\one 
\end{align*}
or equivalently
\begin{align*}
    -\gamma \infnorm{V_1-V_2}\one \leq \Tope(V_1)-\Tope(V_2).
\end{align*}
Combining these two inequalities involving $\Tope(V_2)-\Tope(V_1)$ we conclude that $\infnorm{\Tope(V_1)-\Tope(V_2)} \leq \gamma \infnorm{V_1 - V_2}$ as desired and thus $\Thpe$ is a $\gamma$-contraction. By the Banach fixed-point theorem (e.g. \cite[Chapter 4.5]{pugh_real_2015}) this implies the existence of a unique fixed point of $\Thpe$, which we call $\Qhpe^\star$. (We check that $\zero \leq \Qhpe^\star \leq \frac{1}{1-\gamma}\one$ later.)

Now we will show item~\ref{lem:pessimistic_Bellman_operator_properties_pt2} except for the boundedness property. Notice that similarly to the previous case, $\Thpe^\pi(Q) = \Tope(M^\pi Q)$ (for any $Q \in \R^{\S\A}$). The only properties of $M$ used in the proofs for the previous case were monotonicity (that $Q \geq Q' \implies MQ \geq MQ'$), that $M(Q + c \one) = MQ + c \one$, and that $M$ is $1$-Lipschitz. All of these properties are also true with $M^\pi$ in place of $M$, so in fact all proofs used to verify item~\ref{lem:pessimistic_Bellman_operator_properties_pt1} can immediately be applied (with this minor modification) to also verify item~\ref{lem:pessimistic_Bellman_operator_properties_pt2}.

Next, item \ref{lem:pessimistic_Bellman_operator_properties_pt3} would follow by showing that for any fixed $Q\in \R^{SA}$ we have
\begin{align}
    \Thpe(Q) \geq \Thpe^\pi(Q) \label{eq:bellman_prop_pt3_key}
\end{align}
since then by a standard argument we can show for any integer $k \geq 0$ that
\begin{align*}
    \left(\Thpe\right)^{(k)}(\zero) \geq \left(\Thpe^\pi\right)^{(k)}(\zero)
\end{align*}
(where $(k)$ denotes $k$ compositions of an operator) and therefore that
\begin{align*}
    \Qhpe^\star = \lim_{k \to \infty } \left(\Thpe\right)^{(k)}(\zero) \geq \lim_{k \to \infty } \left(\Thpe^\pi\right)^{(k)}(\zero) = \Qhpe^\pi.
\end{align*}
So now we focus on showing~\eqref{eq:bellman_prop_pt3_key}, but this follows immediately from the fact that $M Q \geq M^\pi Q$ and that $\Tope$ is monotone (Lemma \ref{lem:tope_monotonicity}), since we have
\begin{align*}
    \Thpe(Q) = \Tope(MQ) \geq \Tope(M^\pi Q) = \Thpe^\pi(Q).
\end{align*}

Finally, we check both boundedness properties. Since we already have that $\Qhpe^{\pi} \leq \Qhpe^{\star}$, it suffices to show that $\zero \leq \Qhpe^{\pi}$ and that $\Qhpe^{\star} \leq \frac{1}{1-\gamma}\one$. First, note that we have $\Thpe^\pi(\zero) \geq \zero$, since for any $s\in \S, a \in \A$,
    \begin{align*}
        \Thpe^\pi(\zero)(s,a) &= r(s,a) + \gamma \max \left\{\Phat_{sa} T_{\beta(s,a)}(\Phat_{sa}, M^\pi\zero) - b(s,a,M^\pi \zero), \min_{s'} (M^\pi\zero)(s')\right\} \\
        & \geq r(s,a) + \gamma \min_{s'} (M^\pi\zero)(s') = r(s,a) \geq 0.
    \end{align*}
Then by monotonicity of $\Thpe^\pi$ we have for any integer $k \geq 0$ that
\begin{align*}
    \left(\Thpe^\pi \right)^{(k)} (\zero) \geq \left(\Thpe^\pi \right)^{(k-1)} (\zero) \geq \cdots \geq \zero
\end{align*}
and so
\begin{align*}
    \Qhpe^\pi = \lim_{k \to \infty } \left(\Thpe^\pi\right)^{(k)}(\zero) \geq \zero
\end{align*}
as desired. Similarly, we have that $\Thpe(\one /(1-\gamma)) \leq \one /(1-\gamma)$, since for any $s\in \S, a \in \A$,
    \begin{align*}
        &\Thpe(\one /(1-\gamma))(s,a) \\
        &= r(s,a) + \gamma \max \left\{\Phat_{sa} T_{\beta(s,a)}(\Phat_{sa}, M\one /(1-\gamma)) - b(s,a,M\one /(1-\gamma)), \min_{s'} (M\one /(1-\gamma))(s')\right\} \\
        &\leq 1 + \gamma \frac{1}{1-\gamma} = \frac{1}{1-\gamma}.
    \end{align*}
By an analogous argument to the previous bound, we have from monotonicity of $\Thpe$ that $\left(\Thpe \right)^{(k)} (\one /(1-\gamma)) \leq \one /(1-\gamma)$ for all positive integers $k$ and thus that $\Qhpe^\star \leq \one /(1-\gamma)$.
\end{proof}

In the above proof we defined the operator $\Tope$ and verified its Lipshitzness, which we state in the following lemma as $\Tope$ will appear again later.
\begin{lem}
    \label{lem:tope_additional_properties}
    $\Tope$ is $\gamma$-Lipschitz.
\end{lem}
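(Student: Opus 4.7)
The plan is to extract the argument that was already given inside the proof of Lemma \ref{lem:pessimistic_Bellman_operator_properties}, where $\gamma$-Lipschitzness of $\Tope$ was established as an intermediate step en route to $\gamma$-contractivity of $\Thpe$. Two ingredients are all that I would use: (i) monotonicity of $\Tope$ (Lemma \ref{lem:tope_monotonicity}), and (ii) the constant-shift identity $\Tope(V + c\one) = \Tope(V) + \gamma c \one$ for any $c \in \R$, which was verified in equation~\eqref{eq:tope_eigenvector} by noting that the quantile-clipping operator $T_{\beta(s,a)}(\Phat_{sa}, \cdot)$ commutes with translation by multiples of $\one$, which in turn makes both $\Var_{\Phat_{sa}}[T_{\beta(s,a)}(\Phat_{sa}, \cdot)]$ and the span term in $b$ translation-invariant.

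Given these two ingredients, I would fix arbitrary $V_1, V_2 \in \R^\S$, set $c := \infnorm{V_1 - V_2}$, and apply the elementwise pointwise bound $V_1 \leq V_2 + c\one$. Monotonicity followed by the constant-shift identity then yields
\begin{align*}
    \Tope(V_1) \leq \Tope(V_2 + c\one) = \Tope(V_2) + \gamma c \one,
\end{align*}
so $\Tope(V_1) - \Tope(V_2) \leq \gamma c \one$ elementwise. Swapping the roles of $V_1$ and $V_2$ gives the reverse inequality, and combining the two produces $\infnorm{\Tope(V_1) - \Tope(V_2)} \leq \gamma \infnorm{V_1 - V_2}$, which is the desired conclusion.

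Since the hard work has already been done, namely the monotonicity argument of Lemma \ref{lem:tope_monotonicity} (where quantile clipping is used in an essential way to prevent the span-based term in the penalty from breaking monotonicity) together with the translation-invariance checks for the clipping operator and for the variance/span terms, no new obstacle remains. The present lemma is simply a clean restatement of an intermediate conclusion from the proof of Lemma \ref{lem:pessimistic_Bellman_operator_properties}, isolated here so that $\gamma$-Lipschitzness of $\Tope$ is available as a standalone fact for later use in the appendices.
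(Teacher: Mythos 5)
Your proposal is correct and is essentially identical to the paper's own argument: the paper also derives $\gamma$-Lipschitzness of $\Tope$ inside the proof of Lemma \ref{lem:pessimistic_Bellman_operator_properties} by combining the monotonicity of $\Tope$ (Lemma \ref{lem:tope_monotonicity}) with the constant-shift identity~\eqref{eq:tope_eigenvector}, applying them to $V_1 \leq V_2 + \infnorm{V_1 - V_2}\one$ and then swapping the roles of $V_1$ and $V_2$. Lemma \ref{lem:tope_additional_properties} is indeed just a restatement of that intermediate conclusion, so nothing further is needed.
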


\subsection{Optimization}
In this subsection we establish the basic properties of the outputs of Algorithm \ref{alg:LCBVI}.

\begin{lem}
    \label{lem:opt_guarantees}
    Algorithm \ref{alg:LCBVI} returns $\Qhat$ such that
    \begin{align*}
        \Qhat \leq \Qhpe^\star \leq \Qhat + \frac{1}{2\ntot}\one \quad \text{and} \quad \Thpe(\Qhat) \geq \Qhat.
    \end{align*}
\end{lem}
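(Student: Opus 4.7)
The plan is to establish all three conclusions inductively from basic properties of $\Thpe$ proved in Lemma~\ref{lem:pessimistic_Bellman_operator_properties}, namely monotonicity, $\gamma$-contractivity, boundedness of its fixed point, and the fact that $\Thpe(\zero) \geq \zero$ (which follows from the proof of the $\Qhpe^\pi \geq \zero$ claim, since for any $s,a$, $\Thpe(\zero)(s,a) \geq r(s,a) + \gamma \min_{s'}(M\zero)(s') = r(s,a) \geq 0$).

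First, to show $\Qhat \leq \Qhpe^\star$: note that $\Qhat_0 = \zero \leq \Qhpe^\star$ by the boundedness clause of Lemma~\ref{lem:pessimistic_Bellman_operator_properties}. Then by monotonicity of $\Thpe$ and the fact that $\Qhpe^\star$ is a fixed point, an easy induction gives $\Qhat_t = \Thpe(\Qhat_{t-1}) \leq \Thpe(\Qhpe^\star) = \Qhpe^\star$ for all $t \geq 1$, and in particular $\Qhat = \Qhat_{\maxiter} \leq \Qhpe^\star$.

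Second, the upper bound $\Qhpe^\star \leq \Qhat + \frac{1}{2\ntot}\one$ follows from $\gamma$-contractivity. Iterating the contraction from $\Qhat_0 = \zero$ gives
\begin{align*}
    \infnorm{\Qhat_{\maxiter} - \Qhpe^\star} \leq \gamma^{\maxiter} \infnorm{\Qhat_0 - \Qhpe^\star} \leq \gamma^{\maxiter} \cdot \tfrac{1}{1-\gamma},
\end{align*}
using the boundedness of $\Qhpe^\star$. Plugging in $\maxiter = \lceil \log(2\ntot/(1-\gamma))/(1-\gamma)\rceil$ and using $\gamma^{\maxiter} \leq e^{-(1-\gamma)\maxiter} \leq \frac{1-\gamma}{2\ntot}$ yields $\infnorm{\Qhat - \Qhpe^\star} \leq \frac{1}{2\ntot}$, which combined with the already-established inequality $\Qhat \leq \Qhpe^\star$ gives the claim.

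Third, to show $\Thpe(\Qhat) \geq \Qhat$, I would prove by induction that the iterates $\Qhat_t$ are nondecreasing in $t$. The base case is $\Qhat_1 = \Thpe(\zero) \geq \zero = \Qhat_0$, as observed above. For the inductive step, assuming $\Qhat_t \geq \Qhat_{t-1}$, monotonicity of $\Thpe$ immediately gives $\Qhat_{t+1} = \Thpe(\Qhat_t) \geq \Thpe(\Qhat_{t-1}) = \Qhat_t$. Applying this with $t = \maxiter$ gives $\Thpe(\Qhat) = \Qhat_{\maxiter+1} \geq \Qhat_{\maxiter} = \Qhat$. There are no real obstacles here; the lemma is essentially a bookkeeping consequence of the structural properties of $\Thpe$ already established.
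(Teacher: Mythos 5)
Your proof is correct and follows essentially the same route as the paper's: monotonicity plus $\Thpe(\zero)\geq\zero$ to get the monotone iterates and hence $\Thpe(\Qhat)\geq\Qhat$, and $\gamma$-contractivity with the choice of $\maxiter$ for the $\frac{1}{2\ntot}$ bound. The only (immaterial) difference is that you obtain $\Qhat\leq\Qhpe^\star$ directly by comparing each iterate to the fixed point starting from $\zero\leq\Qhpe^\star$, whereas the paper derives it from $\Thpe(\Qhat)\geq\Qhat$ by iterating to the limit; both are valid.
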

\begin{proof}
     First we note that $\Thpe(\zero) \geq \zero$, which follows easily from the definition~\eqref{eq:defn_of_Thpe} since (for arbitrary $s\in \S, a \in \A$)
    \begin{align*}
        \Thpe(\zero)(s,a) &= r(s,a) + \gamma \max \left\{\Phat_{sa} T_{\beta(s,a)}(\Phat_{sa}, M\zero) - b(s,a,M\zero), \min_{s'} (M\zero)(s')\right\} \\
        & \geq r(s,a) + \gamma \min_{s'} (M\zero)(s') = r(s,a) \geq 0.
    \end{align*}
    $\Thpe(\Qhat) \geq \Qhat$ follows from this fact and monotonicity of $\Thpe$ by standard arguments, since if for any $t \in \N$ we have that $\Thpe(\Qhat_t) \geq \Qhat_t $ then
    \begin{align*}
        \Thpe(\Qhat_{t+1}) = \Thpe\left(\Thpe(\Qhat_t)\right) \geq \Thpe\left(\Qhat_t\right)
    \end{align*}
    so by induction (since $\Qhat_0 = \zero$) $\Thpe(\Qhat_t) \geq \Qhat_t$ holds for $t = K$, and we have $\Qhat_K = \Qhat$ by definition.

    Now we argue that $\Qhat \leq \Qhpe^\star$, which follows from $\Thpe(\Qhat) \geq \Qhat$ and monotonicity of $\Thpe$ by standard arguments, since assuming for some $t \geq 1$ that $\Thpe^{(t)}(\Qhat) \geq \Qhat$, then we have
    by monotonicity that
    \begin{align*}
        \left(\Thpe \right)^{(t+1)}(\Qhat) = \left(\Thpe \right)\left(\Thpe^{(t)} (\Qhat)\right) \geq \Thpe(\Qhat) \geq \Qhat
    \end{align*}
    and so by induction $\left(\Thpe \right)^{(t)}(\Qhat) \geq \Qhat$ for all $t \geq1$, and thus
    \begin{align*}
        \Qhpe^\star = \lim_{t \to \infty} \left(\Thpe \right)^{(t)}(\Qhat) \geq \lim_{t \to \infty}\Qhat = \Qhat
    \end{align*}
    as desired.

    Finally we check that $\Qhpe^\star \leq \Qhat + \frac{1}{2\ntot}\one$.
    Again note that $\Qhat = \Qhat_\maxiter$.
    By the definition of $\maxiter = \left\lceil \frac{\log \left( \frac{2\ntot}{1-\gamma} \right)}{1-\gamma} \right \rceil$, as well as the fact that $\log(1/\gamma) \geq 1- \gamma$ for any $\gamma$, we have
    \begin{align*}
        \gamma^\maxiter = e^{\maxiter \log(\gamma)} \leq e^{\frac{\log \left( \frac{2\ntot}{1-\gamma} \right)}{1-\gamma}\log(\gamma)} = e^{\frac{\log \left( \frac{1-\gamma}{2\ntot} \right)}{1-\gamma}\log(1/\gamma)} \leq e^{\log \left( \frac{1-\gamma}{2\ntot} \right)} = \frac{1-\gamma}{2\ntot}.
    \end{align*}
    Using this bound, $\gamma$-contractivity, and the fact that $\zero \leq \Qhpe^\star \leq \frac{1}{1-\gamma}\one$ from Lemma \ref{lem:pessimistic_Bellman_operator_properties}, we have
    \begin{align*}
        \infnorm{\Qhat_\maxiter - \Qhpe^\star} &\leq \gamma^\maxiter \infnorm{\Qhat_0 - \Qhpe^\star} = \gamma^\maxiter \infnorm{\zero - \Qhpe^\star} \leq \gamma^\maxiter \frac{1}{1-\gamma} \leq \frac{1}{2\ntot}
    \end{align*}
    which implies $\Qhpe^\star \leq \Qhat + \frac{1}{2\ntot}\one$.
\end{proof}

\subsection{Concentration}
In this subsection we establish the key concentration inequalities, given in Lemmas \ref{lem:concentration_for_pessimism_Thpe} and \ref{lem:concentration_for_pessimism_Thpe_pistar}, using leave-one-out techniques. We start with two helper lemmas which abstractly handle the leave-one-out-based covering steps before proving Lemmas \ref{lem:concentration_for_pessimism_Thpe} and \ref{lem:concentration_for_pessimism_Thpe_pistar}.

\begin{lem}
\label{lem:concentration_helper_lemma}
    Fix some $\delta' > 0$ and some $s \in \S, a \in \A$. Suppose that for some random vector $X \in \R^\S$, there exists a (deterministic) set $U$ and some random variables $X_u \in \R^\S$ for each $u$ (that is, for each $u \in U$, $X_u$ is a random vector in $\R^\S$) such that
    \begin{enumerate}
        \item For all $u \in U$, $X_u$ is independent of all samples $S_{sa}^1, \dots, S_{sa}^{n(s,a)}$ drawn from $P(\cdot \mid s,a)$.
        \item Almost surely there exists some $u^\star \in U$ such that $\infnorm{X - X_{u^\star}} \leq \frac{1}{\ntot}$.
    \end{enumerate}
    Also assume $n(s,a) \geq 2$. Then with probability at least $1 - 6\delta'$, we have that
    \begin{align}
        \left|(\Phat_{sa} - P_{sa}) X \right| &\leq \spannorm{X}\sqrt{\frac{\log |U|/\delta'}{2 n(s,a)}} +\frac{2}{\ntot}\left( 1 + \sqrt{\frac{\log |U|/\delta'}{2 n(s,a)}}\right) \\
        \left|(\Phat_{sa} - P_{sa}) X \right|
        & \leq \sqrt{\frac{2 \Var_{P_{sa}} \left[ X \right] \log (|U|/\delta')}{n(s,a)}}+  \spannorm{X}\frac{\log (|U|/\delta')}{3 n(s,a)} \nonumber\\
        & \qquad + \frac{1}{\ntot}\left(2+ \sqrt{\frac{2 \log (|U|/\delta')}{n(s,a)}}+ 2\frac{\log (|U|/\delta')}{3 n(s,a)}\right)  \\
        \left| \sqrt{\frac{n(s,a)}{n(s,a)-1}\Var_{\Phat_{sa}} \left[ X\right]} - \sqrt{\Var_{P_{sa}} \left[ X\right]} \right| &\leq  \spannorm{X} \sqrt{\frac{2 \log |U|/\delta'}{n(s,a)-1}} + \frac{1}{\ntot}\left(2\sqrt{\frac{2 \log |U|/\delta'}{n(s,a)-1}}  + 3\right) \label{eq:concentration_helper_lemma_emp_var_term}\\
        \left|(\Phat_{sa} - P_{sa}) X \right| & \leq \sqrt{\frac{2 \Var_{\Phat_{sa}} \left[ X \right] \log (|U|/\delta')}{n(s,a)-1}}+\spannorm{X}\frac{7}{3}\frac{\log (|U|/\delta')}{ n(s,a)-1} \nonumber\\
        & \qquad+ \frac{1}{\ntot} \left(2 + 3\sqrt{\frac{2 \log (|U|/\delta')}{n(s,a)-1}} + \frac{14}{3}\frac{ \log (|U|/\delta')}{n(s,a)-1} \right)
    \end{align}
\end{lem}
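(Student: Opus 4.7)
The setup is tailor-made for a leave-one-out argument: the random vector $X$ may depend on the samples $S_{sa}^i$ and is hence not directly amenable to concentration, but the hypothesis provides a deterministic family $\{X_u : u \in U\}$, each element of which \emph{is} independent of those samples, with some (possibly random) $u^\star \in U$ satisfying $\infnorm{X - X_{u^\star}} \leq 1/\ntot$ almost surely. My plan is first to apply classical scalar concentration inequalities to $(\Phat_{sa} - P_{sa}) X_u$ for each fixed $u \in U$, then take a union bound over $U$ to obtain uniform control, and finally transfer the bounds from $X_{u^\star}$ to $X$ at the cost of deterministic $O(1/\ntot)$ corrections.

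For the first step, fix $u \in U$. Since $X_u$ is independent of $S_{sa}^1, \dots, S_{sa}^{n(s,a)}$, the difference $(\Phat_{sa} - P_{sa}) X_u = \frac{1}{n(s,a)}\sum_{i=1}^{n(s,a)}\bigl(X_u(S_{sa}^i) - P_{sa}X_u\bigr)$ is a centered empirical mean of $n(s,a)$ i.i.d.\ random variables whose values lie in an interval of length at most $\spannorm{X_u}$ and whose variance is $\Var_{P_{sa}}[X_u]$. Hoeffding's inequality then yields the range-based bound for $X_u$, Bernstein's inequality yields the variance-based bound for $X_u$, and the Maurer--Pontil style empirical standard deviation bound yields the third inequality for $X_u$. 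The fourth (empirical Bernstein) bound for $X_u$ follows by combining Bernstein with empirical variance concentration, replacing $\sqrt{\Var_{P_{sa}}[X_u]}$ in Bernstein's bound by $\sqrt{\frac{n(s,a)}{n(s,a)-1}\Var_{\Phat_{sa}}[X_u]}$ and absorbing the cross-term into the $\spannorm{X_u} \frac{7}{3}\log/(n(s,a)-1)$ term. Choosing each fixed-$u$ failure probability as $\delta'/|U|$ and union-bounding across $u \in U$ (and across the at most six one-sided failure events making up the four displayed inequalities) produces the $\log(|U|/\delta')$ factors and the overall $1 - 6\delta'$ guarantee.

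The transfer from $X_{u^\star}$ to $X$ rests on three elementary deterministic perturbation facts. First, $\bigl|(\Phat_{sa} - P_{sa})(X - X_{u^\star})\bigr| \leq 2 \infnorm{X - X_{u^\star}} \leq 2/\ntot$, because both $\Phat_{sa}$ and $P_{sa}$ are probability distributions. Second, $\bigl|\spannorm{X} - \spannorm{X_{u^\star}}\bigr| \leq 2 \infnorm{X - X_{u^\star}} \leq 2/\ntot$ by the reverse triangle inequality for the span seminorm. Third, for any distribution $\mu$ on $\S$, $\bigl|\sqrt{\Var_\mu[X]} - \sqrt{\Var_\mu[X_{u^\star}]}\bigr| \leq \sqrt{\Var_\mu[X - X_{u^\star}]} \leq \infnorm{X - X_{u^\star}} \leq 1/\ntot$, where the first inequality uses the triangle inequality for the $L^2(\mu)$ seminorm on centered random variables. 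Substituting these three facts into each of the four bounds for $X_{u^\star}$ established above and collecting the resulting $1/\ntot$ error terms reproduces the additive corrections displayed in the four conclusions.

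The main technical obstacle is bookkeeping for the empirical Bernstein (fourth) inequality: the $1/\ntot$ perturbation enters there both through the range appearing in the remainder term and through the empirical variance term, the latter itself controlled via the third inequality applied to $X_{u^\star}$. Chaining $\sqrt{\Var_{\Phat_{sa}}[X]} \leq \sqrt{\Var_{\Phat_{sa}}[X_{u^\star}]} + 1/\ntot$ with the empirical Bernstein bound for $X_{u^\star}$ produces mixed $\sqrt{\log(|U|/\delta')/(n(s,a)-1)} \cdot (1/\ntot)$ and $\log(|U|/\delta')/(n(s,a)-1) \cdot (1/\ntot)$ correction terms, which must be carefully grouped into the displayed $\frac{1}{\ntot}\bigl(2 + 3\sqrt{2\log(|U|/\delta')/(n(s,a)-1)} + \frac{14}{3}\log(|U|/\delta')/(n(s,a)-1)\bigr)$ factor. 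Beyond this constant tracking, no conceptual difficulty remains.
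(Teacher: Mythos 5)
Your proposal follows essentially the same route as the paper: fix $u$, apply Hoeffding / Bernstein / Maurer–Pontil empirical-variance concentration to $(\Phat_{sa}-P_{sa})X_u$ conditionally on $X_u$, union-bound over $U$ to pay the $\log(|U|/\delta')$ factor, and transfer from $X_{u^\star}$ to $X$ via the three deterministic $O(1/\ntot)$ perturbation facts (the $\onenorm{\Phat_{sa}-P_{sa}}\le 2$ inner-product bound, the span perturbation bound, and the $L^2(\mu)$-triangle-inequality bound on $\sqrt{\Var_\mu[\cdot]}$), then chain the second and third displays to obtain the empirical Bernstein (fourth) display with the cross term folded into the $\frac{7}{3}$ coefficient. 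The only cosmetic difference is the order in which you perform the $X_{u^\star}\to X$ transfer versus the Bernstein/empirical-variance chaining for the fourth display, which is immaterial to the bookkeeping.
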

\begin{proof}
    We start by showing that
    \begin{align}
        \left|(\Phat_{sa} - P_{sa}) X \right| &\leq \left|(\Phat_{sa} - P_{sa}) X_{u^\star} \right| + \left|(\Phat_{sa} - P_{sa}) (X - X_{u^\star}) \right| \nonumber \\
        & \leq \left|(\Phat_{sa} - P_{sa}) X_{u^\star} \right| + \onenorm{\Phat_{sa} - P_{sa} } \infnorm{X - X_{u^\star}} \nonumber \\
        & \leq \left|(\Phat_{sa} - P_{sa}) X_{u^\star} \right| + \frac{2}{\ntot} \label{eq:concentration_covering_step_1}
    \end{align}
    where the final inequality is because $\onenorm{\Phat_{sa} - P_{sa} } \leq 2$ and $\infnorm{X - X_{u^\star}} \leq \frac{1}{\ntot}$.

    Then since for any fixed $u \in U$ we have $(\Phat_{sa} - P_{sa}) X_{u} = \sum_{i=1}^{n(s,a)} (X_{u}(S^i_{sa}) - P_{sa}X_{u})$, by Hoeffding's inequality conditioned on $X_u$ (since by assumption $X_{u}$ is independent from the $S^i_{sa}$ and each term in the above sum is contained within the interval $[\min X_{u}, \max X_{u}]$ which has length $\spannorm{X_{u}}$) we have that
    \begin{align*}
        \P \left( \left|\sum_{i=1}^{n(s,a)} (X_{u}(S^i_{sa}) - P_{sa}X_{u})\right| \geq \spannorm{X_{u}}\sqrt{\frac{\log |U|/\delta'}{2 n(s,a)}}  ~\middle|~ X_u\right) \leq \frac{2 \delta'}{|U|}
    \end{align*}
    and so
    \begin{align*}
        \P \left( \left|\sum_{i=1}^{n(s,a)} (X_{u}(S^i_{sa}) - P_{sa}X_{u})\right| \geq \spannorm{X_{u}}\sqrt{\frac{\log |U|/\delta'}{2 n(s,a)}}  \right) \leq \E \frac{2 \delta'}{|U|} = \frac{2 \delta'}{|U|}.
    \end{align*}
    Taking a union bound, the above inequality holds for all $u \in U$ with probability at least $1 - 2\delta'$.
    Finally, since
    \begin{align}
        \spannorm{X_{u^\star}} \leq \spannorm{X} + \spannorm{X_{u^\star} - X} \leq \spannorm{X} + 2\infnorm{X_{u^\star} - X} \leq \spannorm{X} + \frac{2}{\ntot}, \label{eq:concentration_covering_span_bound}
    \end{align}
    combining with~\eqref{eq:concentration_covering_step_1} we have that
    \begin{align*}
        \left|(\Phat_{sa} - P_{sa}) X \right| & \leq \frac{2}{\ntot} + \spannorm{X_{u^\star}}\sqrt{\frac{\log |U|/\delta'}{2 n(s,a)}} \leq \frac{2}{\ntot} + \spannorm{X}\sqrt{\frac{\log |U|/\delta'}{2 n(s,a)}} + \frac{2}{\ntot}\sqrt{\frac{\log |U|/\delta'}{2 n(s,a)}}.
    \end{align*}

    Next we would like to apply the concentration inequalities of \cite{maurer_empirical_2009}. To apply their theorems as stated, we must shift and normalize to define (for each $u \in U$)
    \begin{align*}
        X'_{u} := \frac{X_u - \min_{x \in \S}X_u(x)}{\spannorm{X_u }}
    \end{align*}
    so that $X'_{u} \in [0,1]$ almost surely. Fixing some $u \in U$ and applying \citet[Theorem 10]{maurer_empirical_2009}, assuming $n(s,a) \geq 2$, we have with probability at least $1 - 2 \delta'/|U|$ that
    \begin{align*}
        \left| \sqrt{\frac{n(s,a)}{n(s,a)-1}\Var_{\Phat_{sa}} \left[ X'_u\right]} - \sqrt{\Var_{P_{sa}} \left[ X'_u\right]} \right| &\leq \sqrt{\frac{2 \ln |U|/\delta'}{n(s,a)-1}}
    \end{align*}
    using the facts that by standard calculations, abbreviating $\nt = n(s,a)$ for convenience,
    \begin{align*}
        \E \left[\frac{1}{2\nt(\nt-1)} \sum_{i=1}^{\nt} \sum_{j=1}^{\nt} \left(X'_u(S^i_{sa}) - X'_u(S^j_{sa}) \right)^2 \right]&= \frac{\nt}{2\nt(\nt-1)} 0 + \frac{\nt(\nt-1)}{2\nt(\nt-1)}\E\left[ \left(X'_u(S^1_{sa}) - X'_u(S^2_{sa}) \right)^2 \right] \\
        &= \frac{1}{2} \left( 2\E \left[ \left(X'_u(S^1_{sa}) \right)^2\right] - 2\E \left[ X'_u(S^1_{sa}) \right]^2  \right) \\
        &= \Var \left[ X'_u(S^1_{sa}) \right] = \Var_{P_{sa}} \left[X'_u \right]
    \end{align*}
    and
    \begin{align*}
        \frac{1}{2\nt(\nt-1)} \sum_{i=1}^{\nt} \sum_{j=1}^{\nt} \left(\Vhpe^{s,u,s'}(S^i_{sa}) - \Vhpe^{s,u,s'}(S^j_{sa}) \right)^2 &= \frac{\nt}{\nt-1}\Phat_{sa} \left( X'_u - \Phat_{sa}X'_u\one  \right)^{\circ 2}\\
        &= \frac{\nt}{\nt-1}\Var_{\Phat_{sa}} \left[X'_u \right]
    \end{align*}
    (since \citet[Theorem 10]{maurer_empirical_2009} as stated involves the quantity $\frac{1}{2\nt(\nt-1)} \sum_{i=1}^{\nt} \sum_{j=1}^{\nt} \left(X'_u(S^i_{sa}) - X'_u(S^j_{sa}) \right)^2$ and its expectation). Taking a union bound and undoing the normalization and shifting, we have for all $u \in U$ that
    \begin{align}
        \left| \sqrt{\frac{n(s,a)}{n(s,a)-1}\Var_{\Phat_{sa}} \left[ X_u\right]} - \sqrt{\Var_{P_{sa}} \left[ X_u\right]} \right| &\leq \spannorm{X_u} \sqrt{\frac{2 \log |U|/\delta'}{n(s,a)-1}} \label{eq:concentration_variance_covering_step_1}
    \end{align}
    with probability at least $1 - 2 \delta'$. For any arbitrary probability distribution $\mu \in \R^\S$ we have that
    \begin{align}
        \left| \sqrt{\Var_\mu\left[ X \right] } - \sqrt{\Var_\mu\left[ X_{u^\star} \right] }\right| & \leq \frac{1}{\ntot} \label{eq:concentration_variance_covering_step_2}
    \end{align}
    since
    \begin{align*}
        \sqrt{\Var_\mu\left[ X \right] } &= \sqrt{\Var_\mu\left[ X_{u^\star} + (X - X_{u^\star}) \right] } \leq \sqrt{\Var_\mu\left[ X_{u^\star} \right]} + \sqrt{\Var_\mu\left[X - X_{u^\star} \right] }
    \end{align*}
    (where the inequality step follows from triangle inequality since $Y \mapsto \sqrt{\E Y^2}$ is a norm on random variables $Y$) and then we have
    \begin{align*}
        \sqrt{\Var_\mu\left[X - X_{u^\star} \right] } \leq \infnorm{X - X_{u^\star}} \leq \frac{1}{\ntot}.
    \end{align*}
    Thus combining~\eqref{eq:concentration_variance_covering_step_1} with~\eqref{eq:concentration_variance_covering_step_2} we conclude that
    \begin{align}
        &\left| \sqrt{\frac{n(s,a)}{n(s,a)-1}\Var_{\Phat_{sa}} \left[ X\right]} - \sqrt{\Var_{P_{sa}} \left[ X\right]} \right| \\
        &\qquad\leq \left| \sqrt{\frac{n(s,a)}{n(s,a)-1}\Var_{\Phat_{sa}} \left[ X_{u^\star}\right]} - \sqrt{\Var_{P_{sa}} \left[ X_{u^\star}\right]} \right| + \sqrt{\frac{n(s,a)}{n(s,a)-1}}\frac{1}{\ntot} + \frac{1}{\ntot}\nonumber\\
        & \qquad\leq \spannorm{X_{u^\star}} \sqrt{\frac{2 \log |U|/\delta'}{n(s,a)-1}} + \sqrt{\frac{n(s,a)}{n(s,a)-1}}\frac{1}{\ntot} + \frac{1}{\ntot} \nonumber\\
        & \qquad\leq \spannorm{X} \sqrt{\frac{2 \log |U|/\delta'}{n(s,a)-1}} + \frac{2}{\ntot}\sqrt{\frac{2 \log |U|/\delta'}{n(s,a)-1}} + \sqrt{\frac{n(s,a)}{n(s,a)-1}}\frac{1}{\ntot} + \frac{1}{\ntot} \label{eq:concentration_covering_true_vs_emp_var}
    \end{align}
    using~\eqref{eq:concentration_covering_span_bound} again in the final inequality.
    To obtain the slightly simplified bound~\eqref{eq:concentration_helper_lemma_emp_var_term} we use that by assumption $n(s,a) \geq 2$, so $\sqrt{\frac{n(s,a)}{n(s,a)-1}} \leq \sqrt{2} \leq 2$.
    
    Now, similarly to our use of Hoeffding's inequality, using Bernstein's inequality (e.g., \citet[Theorem 3]{maurer_empirical_2009}), as well as a union bound over all $u \in U$, we have that with probability at least $1 - 2 \delta'$, for all $u \in U$,
    \begin{align*}
        \left|(\Phat_{sa} - P_{sa}) X_{u} \right| \leq \sqrt{\frac{2 \Var_{P_{sa}} \left[ X_{u} \right] \log (|U|/\delta')}{n(s,a)}} + \spannorm{X_u}\frac{\log (|U|/\delta')}{3 n(s,a)}.
    \end{align*}
    Combining this inequality (for $u = u^\star$) along with~\eqref{eq:concentration_covering_step_1},~\eqref{eq:concentration_covering_span_bound}, and~\eqref{eq:concentration_variance_covering_step_2}, we obtain that
    \begin{align*}
        &\left|(\Phat_{sa} - P_{sa}) X \right| \\
        &\leq \left|(\Phat_{sa} - P_{sa}) X_{u^\star} \right| + \frac{2}{\ntot} \\
        & \leq \sqrt{\frac{2 \Var_{P_{sa}} \left[ X_{u^\star} \right] \log (|U|/\delta')}{n(s,a)}} + \spannorm{X_u}\frac{\log (|U|/\delta')}{3 n(s,a)}+ \frac{2}{\ntot} \\
        & \leq \sqrt{\frac{2 \Var_{P_{sa}} \left[ X \right] \log (|U|/\delta')}{n(s,a)}}+ \frac{1}{\ntot} \sqrt{\frac{2 \log (|U|/\delta')}{n(s,a)}} + \spannorm{X}\frac{\log (|U|/\delta')}{3 n(s,a)}+\frac{2}{\ntot}\frac{\log (|U|/\delta')}{3 n(s,a)}+ \frac{2}{\ntot}.
    \end{align*}
    Combining this with~\eqref{eq:concentration_covering_true_vs_emp_var} we furthermore obtain that
    \begin{align*}
        &\left|(\Phat_{sa} - P_{sa}) X \right| \\
        & \leq \sqrt{\frac{2 \Var_{P_{sa}} \left[ X \right] \log (|U|/\delta')}{n(s,a)}}+\spannorm{X}\frac{\log (|U|/\delta')}{3 n(s,a)}+ \frac{1}{\ntot} \left(\sqrt{\frac{2 \log (|U|/\delta')}{n(s,a)}} +2\frac{\log (|U|/\delta')}{3 n(s,a)}+ 2\right) \\
        & \leq \sqrt{\frac{2 \Var_{\Phat_{sa}} \left[ X \right] \log (|U|/\delta')}{n(s,a)-1}}+\spannorm{X}\frac{\log (|U|/\delta')}{3 n(s,a)}+ \frac{1}{\ntot} \left(\sqrt{\frac{2 \log (|U|/\delta')}{n(s,a)}} +2\frac{\log (|U|/\delta')}{3 n(s,a)}+ 2\right) \\
        & \qquad  + \sqrt{\frac{2  \log (|U|/\delta')}{n(s,a)}}\left(\spannorm{X} \sqrt{\frac{2 \log |U|/\delta'}{n(s,a)-1}} + \frac{2}{\ntot}\sqrt{\frac{2 \log |U|/\delta'}{n(s,a)-1}} + \sqrt{\frac{n(s,a)}{n(s,a)-1}}\frac{1}{\ntot} + \frac{1}{\ntot} \right) \\
        & \leq \sqrt{\frac{2 \Var_{\Phat_{sa}} \left[ X \right] \log (|U|/\delta')}{n(s,a)-1}}+\spannorm{X}\frac{7}{3}\frac{\log (|U|/\delta')}{ n(s,a)-1} \\
        & \qquad + \frac{1}{\ntot} \left(\sqrt{\frac{2 \log (|U|/\delta')}{n(s,a)}} +2\frac{\log (|U|/\delta')}{3 n(s,a)}+ 2 + 2\frac{2 \log (|U|/\delta')}{n(s,a)-1} + 2\sqrt{\frac{2  \log (|U|/\delta')}{n(s,a)-1}}  \right) \\
        & \leq \sqrt{\frac{2 \Var_{\Phat_{sa}} \left[ X \right] \log (|U|/\delta')}{n(s,a)-1}}+\spannorm{X}\frac{7}{3}\frac{\log (|U|/\delta')}{ n(s,a)-1} \\
        & \qquad+ \frac{1}{\ntot} \left(2 + 3\sqrt{\frac{2 \log (|U|/\delta')}{n(s,a)-1}} + \frac{14}{3}\frac{ \log (|U|/\delta')}{n(s,a)-1} \right).
    \end{align*}
    
\end{proof}

Now we develop several leave-one-out constructions which satisfy the conditions of Lemma \ref{lem:concentration_helper_lemma}.
\begin{lem}
    \label{lem:LOO_constructions}
    \begin{enumerate}
        \item (LOO construction for $\Vhpe^\star$) For each $s,a$, there exists a set $U^1_{sa} \subseteq \R$ with $|U^1_{sa}| \leq \frac{\ntot}{1-\gamma}$ and random vectors $(X^1_u)_{u \in U^1_{sa}}$ such that 1) for all $u \in U^1_{sa}$, $X^1_u$ is independent from $S^1_{sa}, \dots, S^{n(s,a)}_{sa}$, and 2) almost surely there exists some $u \in U^1_{sa}$ such that $\infnorm{\Vhpe^\star - X^1_u} \leq \frac{1}{\ntot}$. \label{item:LOO_constructions_1}
        \item (LOO constructions for $T_{\beta(s,a)}(\Phat_{sa},\Vhpe^\star)$) For each $s,a$, there exists a set $U^2_{sa}\subseteq \R$ with $|U^2_{sa}| \leq S\frac{\ntot}{1-\gamma}$ and random vectors $(X^2_u)_{u \in U^2_{sa}}$ such that 1) for all $u \in U^2_{sa}$, $X^2_u$ is independent from $S^1_{sa}, \dots, S^{n(s,a)}_{sa}$, and 2) almost surely there exists some $u \in U^2_{sa}$ such that $\infnorm{T_{\beta(s,a)}(\Phat_{sa},\Vhpe^\star) - X^2_u} \leq \frac{1}{\ntot}$. \label{item:LOO_constructions_2}
        \item (LOO construction for $\Vhpe^\pi$) Fix any policy $\pi$. For each $s,a$, there exists a set $U^3_{sa}\subseteq \R$ with $|U^3_{sa}| \leq \frac{\ntot}{1-\gamma}$ and random vectors $(X^3_u)_{u \in U^3_{sa}}$ such that 1) for all $u \in U^3_{sa}$, $X^3_u$ is independent from $S^1_{sa}, \dots, S^{n(s,a)}_{sa}$, and 2) almost surely there exists some $u \in U^3_{sa}$ such that $\infnorm{\Vhpe^\pi - X^3_u} \leq \frac{1}{\ntot}$. \label{item:LOO_constructions_3}
        \item (LOO constructions for $T_{\beta(s,a)}(\Phat_{sa},\Vhpe^\pi)$) Fix any policy $\pi$. For each $s,a$, there exists a set $U^4_{sa}\subseteq \R$ with $|U^4_{sa}| \leq S\frac{\ntot}{1-\gamma}$ and random vectors $(X^4_u)_{u \in U^4_{sa}}$ such that 1) for all $u \in U^4_{sa}$, $X^4_u$ is independent from $S^1_{sa}, \dots, S^{n(s,a)}_{sa}$, and 2) almost surely there exists some $u \in U^4_{sa}$ such that $\infnorm{T_{\beta(s,a)}(\Phat_{sa},\Vhpe^\pi) - X^4_u} \leq \frac{1}{\ntot}$. \label{item:LOO_constructions_4}
    \end{enumerate}
\end{lem}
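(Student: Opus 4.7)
The plan is to adapt the absorbing-MDP leave-one-out construction of \cite{agarwal_model-based_2020} to our setting, with an additional covering step to handle the quantile-clipping operator that appears in $\Thpe$.

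For item \ref{item:LOO_constructions_1}, I would for each scalar $u \in \R$ define a modified operator $\Thpe^{(s,a,u)}$ that agrees with $\Thpe$ at every row $(s',a') \neq (s,a)$ but sets the $(s,a)$-th row to the constant $r(s,a) + \gamma u$, independent of its argument. Since $\Phat_{sa}$ enters the definition of $\Thpe$ only in the $(s,a)$-th output row (through both the clipped inner product and the penalty $b$), and samples at distinct state-action pairs are independent by the problem setup, $\Thpe^{(s,a,u)}$ is independent of $S^1_{sa},\dots,S^{n(s,a)}_{sa}$. Repeating the argument of Lemma \ref{lem:pessimistic_Bellman_operator_properties} shows that $\Thpe^{(s,a,u)}$ remains a $\gamma$-contraction with a unique fixed point $\Qhpe^{(s,a,u)}$. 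Taking $u^\star := (\Qhpe^\star(s,a) - r(s,a))/\gamma$, we have $\Thpe^{(s,a,u^\star)}(\Qhpe^\star) = \Qhpe^\star$ by construction, so $\Qhpe^{(s,a,u^\star)} = \Qhpe^\star$ and hence $M\Qhpe^{(s,a,u^\star)} = \Vhpe^\star$ exactly. A standard perturbation/contraction argument then yields the Lipschitz estimate $\infnorm{\Qhpe^{(s,a,u)} - \Qhpe^{(s,a,u')}} \leq \frac{\gamma}{1-\gamma}|u-u'|$, so choosing $U^1_{sa}$ to be a sufficiently fine grid over an interval containing the admissible values of $u^\star$ and setting $X^1_u := M\Qhpe^{(s,a,u)}$ produces the desired cover of $\Vhpe^\star$ to precision $1/\ntot$.

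For item \ref{item:LOO_constructions_2}, the extra obstacle is that $T_{\beta(s,a)}(\Phat_{sa}, \Vhpe^\star)$ depends on samples at $(s,a)$ through \emph{both} $\Vhpe^\star$ and the clipping threshold (which is determined by $\Phat_{sa}$). The key observation is that the threshold is always one of the $S$ entries of $\Vhpe^\star$, so a finite union bound over states suffices to cover it. I would therefore index the cover by pairs $(u, s') \in U^1_{sa} \times \S$ and set $X^2_{(u,s')} := \min\{X^1_u,\, X^1_u(s')\one\}$ (elementwise), which remains independent of samples at $(s,a)$. For $u$ closest to $u^\star$ and $s'$ equal to the state attaining the true clipping threshold, combining $\infnorm{X^1_u - \Vhpe^\star} \leq 1/\ntot$ with the $1$-Lipschitz property of the elementwise $\min$ (applied jointly in the value vector and in the threshold) gives $\infnorm{X^2_{(u,s')} - T_{\beta(s,a)}(\Phat_{sa}, \Vhpe^\star)} \leq O(1/\ntot)$, with cardinality $|U^2_{sa}| = S\cdot|U^1_{sa}|$ as claimed. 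Items \ref{item:LOO_constructions_3} and \ref{item:LOO_constructions_4} follow by repeating these arguments with $\Thpe^\pi$ in place of $\Thpe$ and $M^\pi$ in place of $M$; all required monotonicity, constant-shift, and $\gamma$-contractivity properties carry over via Lemma \ref{lem:pessimistic_Bellman_operator_properties}(\ref{lem:pessimistic_Bellman_operator_properties_pt2}).

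The main obstacle I expect is the two-way dependence on $(s,a)$-samples in items \ref{item:LOO_constructions_2} and \ref{item:LOO_constructions_4}: a naive absorbing-MDP cover of $\Vhpe^\star$ alone is insufficient, because the clipping threshold would still couple to $\Phat_{sa}$. Exploiting the discrete structure of this threshold --- always equal to $\Vhpe^\star(s')$ for some $s' \in \S$ --- is what keeps the enlargement of the cover to only a factor of $S$ rather than forcing an additional continuous grid, and is also what makes the introduction of quantile clipping in $\Thpe$ tractable for the leave-one-out analysis.
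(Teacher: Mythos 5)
Your overall architecture matches the paper's: a one-parameter family of leave-one-out fixed points for item \ref{item:LOO_constructions_1}, a product with $\S$ to cover the (discrete) clipping threshold for item \ref{item:LOO_constructions_2}, and the substitution $M \mapsto M^\pi$ for items \ref{item:LOO_constructions_3}--\ref{item:LOO_constructions_4}. Your handling of the clipping threshold in item \ref{item:LOO_constructions_2} is exactly the paper's construction ($X^2_{(u,s')} = \min\{X^1_u, X^1_u(s')\one\}$ with $s'$ ranging over the possible argmax states of the quantile), and your observation that the threshold always equals an entry of the value vector is precisely the right idea.

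There is, however, a genuine quantitative gap in item \ref{item:LOO_constructions_1}: your parametrization does not achieve the claimed cardinality $|U^1_{sa}| \leq \frac{\ntot}{1-\gamma}$. You set the $(s,a)$-th row of the operator to the constant $r(s,a)+\gamma u$, so the correct parameter is $u^\star = (\Qhpe^\star(s,a)-r(s,a))/\gamma = \max\{\Phat_{sa}T_{\beta(s,a)}(\Phat_{sa},\Vhpe^\star)-b(s,a,\Vhpe^\star),\,\min_{s'}\Vhpe^\star(s')\}$, which a priori ranges over an interval of length $\Theta(\frac{1}{1-\gamma})$ (it lives on the scale of $\Vhpe^\star$). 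Since the fixed point is $\frac{\gamma}{1-\gamma}$-Lipschitz in $u$, covering this interval to fixed-point precision $\frac{1}{\ntot}$ requires grid spacing $\frac{1-\gamma}{\gamma\ntot}$ and hence $\Theta\big(\frac{\ntot}{(1-\gamma)^2}\big)$ points --- a factor $\frac{1}{1-\gamma}$ more than claimed. The paper avoids this by a different parametrization: it makes state $s$ absorbing for \emph{all} actions with reward $u$, so that the fixed-point equation at $s$ reads roughly $V(s) = u + \gamma V(s)$ and the parameter $u$ lives on the \emph{reward} scale; the paper then spends a nontrivial case analysis verifying that the exact matching value $U^\star$ lies in $[0,1]$, which is what permits a grid of only $\frac{\ntot}{1-\gamma}$ points. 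Your proposal skips this range verification entirely ("an interval containing the admissible values of $u^\star$"), and with your parametrization no such unit-length interval exists. The downstream damage is limited --- the cover size enters the concentration bounds only through $\log|U|$, so one could repair everything by enlarging $\alpha$ by a constant factor --- but as written your construction does not prove the lemma as stated, and the fix is not a one-line reparametrization: something like the paper's absorbing-state device is needed to compress the parameter range to $O(1)$.
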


\begin{proof}
    We start by showing \autoref{item:LOO_constructions_1}. Fix arbitrary $s \in \S, a \in \A$. For any $u \in \R$ we define the reward function $r^{s,u} \in \R^{\S \A}$, (random) transition matrix $\Phat^s \in \R^{\S\A \times \S}$, and (random) operator $\Tope^{s,u} : \R^\S \to \R^{\S\A}$ by (for arbitrary $s' \in \S, a' \in \S, V \in \R^\S$)
    \begin{align*}
        \Phat^{s}_{s'a'} &= \begin{cases}
            e_{s}^\top & s' = s \\
            \Phat_{s'a'} &  s' \neq s
        \end{cases} \\
        r^{s,u}(s',a') &= \begin{cases}
            u & s' = s \\
            r(s',a') &  s' \neq s
        \end{cases} \\
        \Tope^{s,u}(V)(s',a') &= r^{s,u}(s',a') +\gamma\max \left\{  \Phat^s_{s'a'} T_{\beta(s',a')}( \Phat^s_{s'a'}, V) - b^s(s',a', V), \min_{s''}(V)(s'') \right \}
    \end{align*}
    where
    \begin{align}
        b^{s}(s',a', V) := \max \left \{ \sqrt{\beta(s',a')   \Var_{\Phat^s_{s'a'}} \left[T_{\beta(s',a')}(\Phat^s_{s'a'}, V)\right]  } , \beta(s',a') \spannorm{ T_{\beta(s',a')}(\Phat^s_{s'a'}, V)} \right\} + \frac{5}{\ntot} \label{eq:LOO_penalty_term}
    \end{align}
    Note $e_{s}^\top$ is a vector which is all $0$ except for a $1$ in state $s$, meaning that state $s$ is absorbing in $\Phat^{s,u}$, for all actions. Also all actions receive reward $u$ in this state. All other state-action pairs have the same rewards and transition distributions as in the MDP $(\Phat, r)$. Also, we have defined $b^s$ and $\Tope^{s,u}$ in an identical manner to $b$ and $\Tope$, except we now use $r^{s,u}$ and $\Phat^{s}$ in place of $r$ and $\Phat$. Since all of the properties of $\Tope$ verified above only required $\Phat$ to be a valid transition matrix and for $r$ to be a vector in $[0,1]^{\S\A}$, the properties hold identically for $\Tope^{s,u}$, and thus by Lemma \ref{lem:tope_additional_properties} we have that $\Tope^{s,u}$ is $\gamma$-Lipschitz.

    Now we define $\Lhat^{s,u} : \R^\S \to \R^\S$ as $\Lhat^{s,u}(V) := M \Tope^{s,u} (V)$ (for any $V \in \R^\S$). By the $\gamma$-Lipschitzness of $\Tope^{s,u}$ and the $1$-Lipschitzness of $M$, we immediately have that $\Lhat^{s,u}$ is a $\gamma$-contraction, since
    \begin{align*}
        \infnorm{\Lhat^{s,u}(V_1) - \Lhat^{s,u}(V_2)} = \infnorm{M \Tope^{s,u}(V_1) - M \Tope^{s,u}(V_2) } \leq \infnorm{\Tope^{s,u}(V_1) -  \Tope^{s,u}(V_2)} \leq \gamma \infnorm{V_1 - V_2}
    \end{align*}
    for any $V_1, V_2 \in \R^\S$. Therefore contractivity implies that there exists a unique fixed point of $\Lhat^{s,u}$ (e.g. \cite[Chapter 4.5]{pugh_real_2015}), which we call $X^1_u$. Note that since $\Lhat^{s,u}$ is defined without using $\Phat_{sa}$, it is independent of all samples $S^1_{sa}, \dots, S^{n(s,a)}_{sa}$ drawn from $P(\cdot \mid s,a)$.

    Now, as intermediate steps, we show the following two properties:
    \begin{enumerate}
        \item[A.] For any $u, u' \in \R$, we have $\infnorm{X^1_u - X^1_{u'}} \leq \frac{|u-u'|}{1-\gamma}$.
        \item[B.] Letting $U^\star = \Vhpe^\star(s) - \gamma \max_{\widetilde{a} \in \A}\max \left\{ \Phat^s_{s\widetilde{a}}T_{\beta(s,\widetilde{a})}(\Phat^s_{s\widetilde{a}}, \Vhpe^\star) - \frac{5}{\ntot}, \min_{s''} \Vhpe^\star(s'')\right\}$, we have $X^1_{U^\star} = \Vhpe^\star$, and $U^\star \in [0,1]$. 
    \end{enumerate}

    For A, letting $u, u' \in \R$, we can calculate that
    \begin{align*}
        \infnorm{X^1_u - X^1_{u'}} &= \infnorm{\Lhat^{s,u}(X^1_u) - \Lhat^{s,u'}(X^1_{u'})} = \infnorm{M \Tope^{s,u}(X^1_u) - M\Tope^{s,u'}(X^1_{u'})} \\
        & \leq \infnorm{ \Tope^{s,u}(X^1_u) - \Tope^{s,u'}(X^1_{u'})} \\
        & =\infnorm{ r^{s,u} - r^{s,u'} + \Tope^{s,u'}(X^1_u) - \Tope^{s,u'}(X^1_{u'})} \\
        & \leq \infnorm{ r^{s,u} - r^{s,u'}} + \infnorm{\Tope^{s,u'}(X^1_u) - \Tope^{s,u'}(X^1_{u'})} \\
        & \leq |u-u'| + \gamma \infnorm{X^1_u - X^1_{u'}}
    \end{align*}
    where the key equality step was that $\Tope^{s,u}(X^1_u) = r^{s,u} - r^{s,u'} + \Tope^{s,u'}(X^1_u)$, and in the final inequality we used $\gamma$-Lipschitzness of $\Tope^{s,u'}$. Rearranging we obtain that $\infnorm{X^1_u - X^1_{u'}} \leq \frac{|u-u'|}{1-\gamma}$ as desired, verifying A.

    For B we first check that $X^1_{U^\star} = \Vhpe^\star$. It suffices to check that $M \Tope^{s, U^\star}(\Vhpe^\star) = M \Tope(\Vhpe^\star)$, because then we would have that 
    \begin{align*}
        \Lhat^{s,U^\star}(\Vhpe^\star) = M \Tope^{s, U^\star}(\Vhpe^\star) = M \Tope(\Vhpe^\star) = M\Thpe(\Qhpe^\star) = M \Qhpe^\star = \Vhpe^\star
    \end{align*}
    thus showing that $\Vhpe^\star$ is a fixed point of $\Lhat^{s,U^\star}$, and by uniqueness of this fixed point we must have $X^1_{U^\star} = \Vhpe^\star$. Comparing the definitions of $\Tope(\Vhpe^\star)$ and $\Tope^{s, U^\star}(\Vhpe^\star)$, it is immediate that $M \left(\Tope^{s, U^\star}(\Vhpe^\star)\right)(s') = M \left(\Tope(\Vhpe^\star)\right)(s')$ for all $s' \neq s$, so it remains to check the equality for $s' = s$. 
    
    First we argue that for all $a' \in \A$, we have $b^s(s, a', \Vhpe^\star ) = \frac{5}{\ntot}.$ If $\beta(s,a') > 1$ then we have $T_{\beta(s,a')}(\Phat^s_{sa'}, \Vhpe^\star) = \left(\min_{s'} \Vhpe^\star(s') \right)\one$, and if $\beta(s,a') \leq 1$ then we have $T_{\beta(s,a')}(\Phat^s_{sa'}, \Vhpe^\star) =  \Vhpe^\star(s)\one$, since $\Phat^s_{sa'} = e_s^\top$ ($\Phat^s_{sa'}$ transitions to state $s$ with probability $1$). Either way $T_{\beta(s,a')}(\Phat^s_{sa'}, \Vhpe^\star)$ is a multiple of the all-ones vector, which implies $\Var_{\Phat^s_{s'a'}} \left[T_{\beta(s',a')}(\Phat^s_{s'a'}, \Vhpe^\star)\right] = 0$ and $\spannorm{ T_{\beta(s',a')}(\Phat^s_{s'a'}, \Vhpe^\star)} = 0$, and thus that $b^s(s, a', \Vhpe^\star ) = \frac{5}{\ntot}.$ Therefore by the construction of $U^\star$ we have that
    \begin{align*}
        M \left(\Tope^{s, U^\star}(\Vhpe^\star)\right)(s) &= \max_{a'} U^\star + \gamma\max \left\{  \Phat^s_{sa'} T_{\beta(s,a')}( \Phat^s_{sa'}, \Vhpe^\star) - b^s(s,a', \Vhpe^\star), \min_{s''}(\Vhpe^\star)(s'') \right \} \\
        &= \max_{a'} U^\star +\gamma \max \left\{ \Phat^s_{sa'} T_{\beta(s,a')}( \Phat^s_{sa'}, \Vhpe^\star) - \frac{5}{\ntot}, \min_{s''} \Vhpe^\star(s'')\right\} \\
        &=   \Vhpe^\star(s) - \gamma \max_{\widetilde{a} \in \A}\max \left\{ \Phat^s_{s\widetilde{a}}T_{\beta(s,\widetilde{a})}(\Phat^s_{s\widetilde{a}}, \Vhpe^\star) - \frac{5}{\ntot}, \min_{s''} \Vhpe^\star(s'')\right\} \\
        & \qquad+\gamma \max_{a'} \max \left\{ \Phat^s_{sa'} T_{\beta(s,a')}( \Phat^s_{sa'}, \Vhpe^\star) - \frac{5}{\ntot}, \min_{s''} \Vhpe^\star(s'')\right\} \\
        &= \Vhpe^\star(s) = M \left(\Tope(\Vhpe^\star)\right)(s)
    \end{align*}
    as desired, so we have checked that $X^1_{U^\star} = \Vhpe^\star$.
    
    Now it remains to verify that $U^\star \in [0,1]$. Given our calculation of $T_{\beta(s,a')}(\Phat^s_{s a'}, \Vhpe^\star)$ (for any $a' \in \A$) above, we have the alternate expression for $U^\star$
    \begin{align*}
        U^\star &= \Vhpe^\star(s)  - \gamma \begin{cases}
            \max \left\{ \Vhpe^\star(s) - \frac{5}{\ntot}, \min_{s''} \Vhpe^\star(s'')\right\} & \exists a' \in \A : \beta(s, a') \leq 1 \\
            \max \left\{ \min_{s''} \Vhpe^\star(s'') - \frac{5}{\ntot}, \min_{s''} \Vhpe^\star(s'')\right\} & \text{o.w.}
        \end{cases} \\
        &= \Vhpe^\star(s)  - \gamma \begin{cases}
            \max \left\{ \Vhpe^\star(s) - \frac{5}{\ntot}, \min_{s''} \Vhpe^\star(s'')\right\} & \exists a' \in \A : \beta(s, a') \leq 1 \\
             \min_{s''} \Vhpe^\star(s'') & \text{o.w.}
        \end{cases}.
    \end{align*}
    We consider the two cases in the above expression. If $\exists a' \in \A : \beta(s, a') \leq 1$, then we can upper bound $U^\star$ as
    \begin{align*}
        U^\star \leq  \Vhpe^\star(s) - \gamma \max \left\{ \Vhpe^\star(s) - \frac{5}{\ntot}, \min_{s''} \Vhpe^\star(s'')\right\} \leq \Vhpe^\star(s) - \gamma \Vhpe^\star(s) \leq (1-\gamma)\frac{1}{1-\gamma} = 1
    \end{align*}
    where the last inequality is due to the fact that $\Vhpe^\star = M \Qhpe^\star \leq M\frac{1}{1-\gamma}\one = \frac{1}{1-\gamma}\one$ (by Lemma \ref{lem:pessimistic_Bellman_operator_properties}). For the lower bound in this case, we have
    \begin{align*}
        U^\star = \min \left\{ (1-\gamma)\Vhpe^\star(s) + \frac{5}{\ntot}, \Vhpe^\star(s) - \min_{s''} \Vhpe^\star(s'')\right\}
    \end{align*}
    which is clearly $\geq 0$ (note the first term within the $\min$ is $\geq 0$ by Lemma \ref{lem:pessimistic_Bellman_operator_properties}). 

    Now we consider the case that there does not exist $a' \in \A $ such that $ \beta(s, a') \leq 1$, that is, the case that $\beta(s,a') > 1$ for all $a' \in \A$. Then as argued above we have for all $a' \in \A$ that $T_{\beta(s,a')}(\Phat_{sa'}, \Vhpe^\star) = \left(\min_{s''} \Vhpe^\star(s'') \right)\one$, and so by the definition of $\Thpe$ and the fact that $\Qhpe^\star $ is its fixed point and $\Vhpe^\star = M \Qhpe^\star$, we have
    \begin{align*}
        \Vhpe^\star(s) &=\max_{a' \in \A} r(s,a') +\gamma\max \left\{  \Phat_{sa'} T_{\beta(s,a')}( \Phat_{sa'}, \Vhpe^\star) - b(s,a', \Vhpe^\star), \min_{s''} \Vhpe^\star(s'') \right \} \\
        &= \max_{a' \in \A} r(s,a') +\gamma\max \left\{  \min_{s''} \Vhpe^\star(s'') -b(s,a', \Vhpe^\star), \min_{s''} \Vhpe^\star(s'') \right \} \\
        &= \max_{a' \in \A} r(s,a') +\gamma \min_{s''} \Vhpe^\star(s'') 
    \end{align*}
    (using the fact that $b(s,a', \Vhpe^\star) \geq 0$ to compute the max). Hence in this case
    \begin{align*}
        U^\star = \Vhpe^\star(s) - \gamma \min_{s''} \Vhpe^\star(s'') = \max_{a' \in \A} r(s,a') +\gamma \min_{s''} \Vhpe^\star(s'')-  \gamma \min_{s''} \Vhpe^\star(s'') = \max_{a' \in \A} r(s,a')
    \end{align*}
    which is clearly in $[0,1]$.
    We have thus verified B.

    Now unfix $u$ and let $U^1_{sa}$ be a set of $\frac{\ntot}{1-\gamma}$ points chosen by dividing $[0,1]$ into $\frac{\ntot}{1-\gamma}$ intervals and placing a point at the midpoint of each such interval. Note this guarantees that for any $x \in [0,1]$ there exists some $u \in U$ such that $|x-u| \leq \frac{1-\gamma}{2\ntot}$. Therefore, letting $\widetilde{U}^\star \in U$ be this closest point in $U$ to the value $U^\star $, we have by A and B that
    \begin{align*}
        \infnorm{X^1_{\widetilde{U}^\star} - \Vhpe^\star} = \infnorm{X^1_{\widetilde{U}^\star} - X^1_{U^\star}} \leq \frac{|\widetilde{U}^\star-U^\star|}{1-\gamma} \leq \frac{1}{1-\gamma}\frac{1-\gamma}{2\ntot} = \frac{1}{2\ntot} \leq \frac{1}{\ntot}.
    \end{align*}
    Therefore we have confirmed \autoref{item:LOO_constructions_1}.
    
    Now we continue to \autoref{item:LOO_constructions_2}. Fix $s \in \S, a \in \A$, and define $U^2_{sa} = U^1_{sa} \times \S$. For each $u, s' \in U^2_{sa}$, we define
    \begin{align*}
        X^2_{u,s'} = \clip(X^1_{u}, X^1_{u}(s')),
    \end{align*}
    that is, we clip all entries of the vector $X^1_{u}$ constructed in the previous part so that they are $\leq X^1_{u}(s')$. Since $X^1_{u}$ was independent of all samples $S^1_{sa}, \dots, S^{n(s,a)}_{sa}$ drawn from $P(\cdot \mid s,a)$, the same is true of $X^2_{u,s'}$. Define $S^\star(s,a)$ to be a state such that $Q_{\beta(s,a)}(\Phat_{sa}, \Vhpe^\star) = \Vhpe^\star(S^\star(s,a))$ (if multiple states satisfy this, we can break ties in some consistent manner). Then for any $u, s' \in U^2_{sa}$ we have
    \begin{align}
        \infnorm{T_{\beta(s,a)}(\Phat_{sa},\Vhpe^\star) - X^2_{u,s'}} &= \infnorm{\clip\left(\Vhpe^\star, Q_{\beta(s,a)}(\Phat_{sa}, \Vhpe^\star)\right) - \clip(X^1_{u}, X^1_{u}(s'))} \nonumber\\
        &=\infnorm{\clip\left(\Vhpe^\star, \Vhpe^\star(S^\star(s,a))\right) - \clip(X^1_{u}, X^1_{u}(s'))} \nonumber\\
        &\leq \infnorm{\clip\left(\Vhpe^\star, \Vhpe^\star(S^\star(s,a))\right) - \clip\left(X^1_{u}, \Vhpe^\star(S^\star(s,a))\right)} \nonumber \\
         &\qquad +\infnorm{ \clip\left(X^1_{u}, \Vhpe^\star(S^\star(s,a))\right)  -\clip(X^1_{u}, X^1_{u}(s'))} \nonumber \\
         & \leq \infnorm{\Vhpe^\star - X^1_u} + \left|\Vhpe^\star(S^\star(s,a)) -  X^1_{u}(s')\right|. \label{eq:LOO_constructions_2_intermediate_bound}
    \end{align}
    From \autoref{item:LOO_constructions_1} we know there exists some $u \in U^1_{sa}$ such that $\infnorm{\Vhpe^\star - X^1_u} \leq \frac{1}{2\ntot}$, and furthermore if $s' = S^\star(s,a)$ then
    \begin{align*}
        \left|\Vhpe^\star(S^\star(s,a)) -  X^1_{u}(s')\right| = \left|\Vhpe^\star(s') -  X^1_{u}(s')\right| \leq \infnorm{\Vhpe^\star - X^1_u} \leq \frac{1}{2\ntot}.
    \end{align*}
    Combining these with~\eqref{eq:LOO_constructions_2_intermediate_bound} we conclude that almost surely there exists some $(u, s') \in U^1_{sa} \times \S = U^2_{sa}$ such that $\infnorm{T_{\beta(s,a)}(\Phat_{sa},\Vhpe^\star) - X^2_{u,s'}} \leq \frac{1}{\ntot}$ as desired. Therefore we have confirmed \autoref{item:LOO_constructions_2}.

    For \autoref{item:LOO_constructions_3} and \autoref{item:LOO_constructions_4}, we can use nearly identical constructions, with the only difference being that for \autoref{item:LOO_constructions_3} we define $X^3_u$ to be the fixed point of the operator $\Lhat^{\pi, s,u} : \R^\S \to \R^\S$ as $\Lhat^{\pi, s,u}(V) := M^\pi \Tope^{s,u} (V)$ (and otherwise use the same construction as for $X^1_u$), and then for \autoref{item:LOO_constructions_4} we use $X^3_u$ in place of $X^1_u$ in the construction for $X^2_u$. Thus, the key difference is replacing $M$ with $M^\pi$ within the construction for $X^3_u$, and since the only properties of $M$ used were $1$-Lipschitzness and that $M \one = \one$, which both hold with $M^\pi$ in place of $M$, and also the fact that $\Vhpe^\star = M \Qhpe^\star$ which is analogous to the fact that $\Vhpe^{\pi} = M^{\pi} \Qhpe^{\pi}$, all steps work in an analogous manner.
\end{proof}

Now we can prove the key concentration inequalities needed for the rest of the proof.

\begin{lem}
    \label{lem:concentration_for_pessimism_Thpe}
    With probability at least $1 - \delta$, for all $s \in \S, a \in \A$, if $n(s,a) \geq 1+8\log \left( \frac{6S^2A\ntot}{(1-\gamma)\delta}\right)$, then
    \begin{align*}
         &\left|\left(\Phat_{sa}  - P_{sa}\right) T_{\beta(s,a)}(\Phat_{sa},  \Vhpe^\star) \right| \\
         & \qquad\leq \max \left \{ \sqrt{\beta(s,a)  \Var_{\Phat_{sa}} \left[T_{\beta(s,a)}(\Phat_{sa},  \Vhpe^\star)\right]  } , \beta(s,a) \spannorm{ T_{\beta(s,a)}(\Phat_{sa},  \Vhpe^\star)} \right\} + \frac{4.5}{\ntot} \\
         &\qquad =  b(s,a, \Vhpe^\star) - \frac{1}{2 \ntot}
    \end{align*}
    where $\alpha = 8\log \left( \frac{6S^2A\ntot}{(1-\gamma)\delta}\right)$ and $\beta(s,a) = \frac{\alpha}{\max\{n(s,a)-1,1\}}$.
\end{lem}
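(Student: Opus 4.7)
The goal is to establish a Bernstein-type concentration inequality for the random variable $X := T_{\beta(s,a)}(\Phat_{sa}, \Vhpe^\star)$, which is especially delicate because $X$ depends on the samples $S^1_{sa},\dots,S^{n(s,a)}_{sa}$ in two ways: through $\Vhpe^\star$ (which is determined by the entire $\Phat$) and directly through the quantile clipping with respect to $\Phat_{sa}$. The plan is to invoke the abstract leave-one-out concentration bound of Lemma \ref{lem:concentration_helper_lemma}, using the LOO construction for quantile-clipped vectors supplied by Lemma \ref{lem:LOO_constructions}, item \ref{item:LOO_constructions_2}, and then show that with the specified $\alpha$ the resulting bound cleanly collapses into the stated max-form penalty plus an $O(1/\ntot)$ slack.

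First, for each $(s,a)$ I would apply Lemma \ref{lem:LOO_constructions}, item \ref{item:LOO_constructions_2}, to obtain a deterministic set $U^2_{sa}$ with $|U^2_{sa}| \leq S \ntot/(1-\gamma)$ and random vectors $(X^2_u)_{u\in U^2_{sa}}$ satisfying the two hypotheses of Lemma \ref{lem:concentration_helper_lemma} with $X = T_{\beta(s,a)}(\Phat_{sa}, \Vhpe^\star)$. Then I would apply the fourth (empirical-Bernstein) inequality of Lemma \ref{lem:concentration_helper_lemma} with failure parameter $\delta' = \delta/(6SA)$, so that after a union bound over the $SA$ state-action pairs the total failure probability is at most $\delta$. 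With this choice, the log factor evaluates as
\begin{align*}
\log\!\big(|U^2_{sa}|/\delta'\big) \leq \log\!\left( \frac{6 S^2 A \ntot}{(1-\gamma)\delta}\right) = \frac{\alpha}{8},
\end{align*}
so $2\log(|U^2_{sa}|/\delta')/(n(s,a)-1) \leq \beta(s,a)/4$ and $(7/3)\log(|U^2_{sa}|/\delta')/(n(s,a)-1) \leq (7/24)\beta(s,a)$.

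Plugging these into the fourth inequality of Lemma \ref{lem:concentration_helper_lemma} yields, for all $(s,a)$ simultaneously with probability at least $1-\delta$,
\begin{align*}
\left|(\Phat_{sa}-P_{sa})X\right| \leq \tfrac{1}{2}\sqrt{\beta(s,a)\Var_{\Phat_{sa}}[X]} + \tfrac{7}{24}\beta(s,a)\spannorm{X} + \frac{1}{\ntot}\Big( 2 + 3\sqrt{2\log(|U^2_{sa}|/\delta')/(n(s,a)-1)} + \tfrac{14}{3}\tfrac{\log(|U^2_{sa}|/\delta')}{n(s,a)-1}\Big).
\end{align*}
Since $\frac{1}{2}+\frac{7}{24} = \frac{19}{24} < 1$, the first two (main) terms are each bounded by a fraction of $\max\{\sqrt{\beta(s,a)\Var_{\Phat_{sa}}[X]},\ \beta(s,a)\spannorm{X}\}$ and their sum is at most that max.

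Finally, the assumption $n(s,a) \geq 1 + \alpha = 1 + 8\log(6S^2A\ntot/((1-\gamma)\delta))$ gives $\log(|U^2_{sa}|/\delta')/(n(s,a)-1) \leq 1/8$, hence the bracketed residual is at most $2 + 3\sqrt{1/4} + (14/3)(1/8) = 2 + 1.5 + 7/12 < 4.5$. Adding the $\frac{4.5}{\ntot}$ slack to the max recovers $b(s,a,\Vhpe^\star) - \frac{1}{2\ntot}$ by the definition~\eqref{eq:penalty_term_defn}, completing the proof. The main obstacle is the arithmetic bookkeeping: the covering-induced log factor in $|U^2_{sa}|$ carries a factor $S\ntot/(1-\gamma)$ rather than just $SA$, so it is essential that $\alpha$ is chosen large enough to dominate this while remaining small enough that the coefficients $1/2$ and $7/24$ on the leading terms sum to strictly below $1$, allowing them to be absorbed into a single max rather than a sum.
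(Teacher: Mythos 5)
Your proposal is correct and follows essentially the same route as the paper: invoke the LOO construction of Lemma \ref{lem:LOO_constructions} (item \ref{item:LOO_constructions_2}), feed it into the empirical-Bernstein inequality of Lemma \ref{lem:concentration_helper_lemma} with $\delta' = \delta/(6SA)$, union bound over $(s,a)$, and absorb the two leading terms into the max using the choice of $\alpha$. The only cosmetic difference is that the paper bounds the sum of the two main terms via $a+b\leq 2\max\{a,b\}$ while you track the coefficients $\tfrac12$ and $\tfrac{7}{24}$ directly; both yield the same conclusion.
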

\begin{proof}
    Fix some $s \in \S$ and $a \in \A$. If $n(s,a) < 1+8\log \left( \frac{6S^2A\ntot}{(1-\gamma)\delta}\right)$ then we have nothing to check. Otherwise, we can immediately combine \autoref{item:LOO_constructions_2} of Lemma \ref{lem:LOO_constructions} (which gives $|U| \leq S\frac{\ntot}{1-\gamma}$) with Lemma \ref{lem:concentration_helper_lemma} (since our condition on $n(s,a)$ clearly implies $n(s,a) \geq 2$) to conclude that with probability at least $1 - 6 \delta'$,
    \begin{multline*}
        \left|(\Phat_{sa} - P_{sa}) T_{\beta(s,a)}(\Phat_{sa},  \Vhpe^\star) \right| \\
        \qquad \qquad \leq \sqrt{\frac{2 \Var_{\Phat_{sa}} \left[ T_{\beta(s,a)}(\Phat_{sa},  \Vhpe^\star) \right] \log \left(S\frac{\ntot}{(1-\gamma)\delta'} \right)}{n(s,a)-1}}+\spannorm{T_{\beta(s,a)}(\Phat_{sa},  \Vhpe^\star)}\frac{7}{3}\frac{\log \left(S\frac{\ntot}{(1-\gamma)\delta'} \right)}{ n(s,a)-1} \\
         + \frac{1}{\ntot} \left(2 + 3\sqrt{\frac{2 \log \left(S\frac{\ntot}{(1-\gamma)\delta'} \right)}{n(s,a)-1}} + \frac{14}{3}\frac{ \log \left(S\frac{\ntot}{(1-\gamma)\delta'} \right)}{n(s,a)-1} \right).
    \end{multline*}
    Taking a union bound over all $s \in \S, a \in \A$, and setting $\delta' = \frac{\delta}{6SA}$, we obtain that with probability at least $1 - \delta$, for all $s \in \S, a \in \A$ where $n(s,a) \geq 1+8\log \left( \frac{6S^2A\ntot}{(1-\gamma)\delta}\right)$, we have
    \begin{align*}
        &\left|(\Phat_{sa} - P_{sa}) T_{\beta(s,a)}(\Phat_{sa},  \Vhpe^\star) \right|  \\
        & \leq \sqrt{\frac{2 \Var_{\Phat_{sa}} \left[ T_{\beta(s,a)}(\Phat_{sa},  \Vhpe^\star) \right] \log \left(\frac{6S^2A\ntot}{(1-\gamma)\delta} \right)}{n(s,a)-1}}+\spannorm{T_{\beta(s,a)}(\Phat_{sa},  \Vhpe^\star)}\frac{7}{3}\frac{\log \left(\frac{6S^2A\ntot}{(1-\gamma)\delta} \right)}{ n(s,a)-1} \\
         & \qquad\qquad\qquad+ \frac{1}{\ntot} \left(2 + 3\sqrt{\frac{2 \log \left(\frac{6S^2A\ntot}{(1-\gamma)\delta} \right)}{n(s,a)-1}} + \frac{14}{3}\frac{ \log \left(\frac{6S^2A\ntot}{(1-\gamma)\delta} \right)}{n(s,a)-1} \right)\\
          &\quad\leq \sqrt{\frac{2 \Var_{\Phat_{sa}} \left[ T_{\beta(s,a)}(\Phat_{sa},  \Vhpe^\star) \right] \log \left(\frac{6S^2A\ntot}{(1-\gamma)\delta} \right)}{n(s,a)-1}}+\spannorm{T_{\beta(s,a)}(\Phat_{sa},  \Vhpe^\star)}\frac{7}{3}\frac{\log \left(\frac{6S^2A\ntot}{(1-\gamma)\delta} \right)}{ n(s,a)-1} + \frac{4.5}{\ntot} \\
          &\quad\leq 2\max \left\{ \sqrt{\frac{2 \Var_{\Phat_{sa}} \left[ T_{\beta(s,a)}(\Phat_{sa},  \Vhpe^\star) \right] \log \left(\frac{6S^2A\ntot}{(1-\gamma)\delta} \right)}{n(s,a)-1}}, \spannorm{T_{\beta(s,a)}(\Phat_{sa},  \Vhpe^\star)}\frac{7}{3}\frac{\log \left(\frac{6S^2A\ntot}{(1-\gamma)\delta} \right)}{ n(s,a)-1} \right\} + \frac{4.5}{\ntot} \\
          &\quad= \max \left\{ \sqrt{\frac{8 \Var_{\Phat_{sa}} \left[ T_{\beta(s,a)}(\Phat_{sa},  \Vhpe^\star) \right] \log \left(\frac{6S^2A\ntot}{(1-\gamma)\delta} \right)}{n(s,a)-1}}, \spannorm{T_{\beta(s,a)}(\Phat_{sa},  \Vhpe^\star)}\frac{14}{3}\frac{\log \left(\frac{6S^2A\ntot}{(1-\gamma)\delta} \right)}{ n(s,a)-1} \right\} + \frac{4.5}{\ntot} \\
          &\quad\leq b(s,a, \Vhpe^\star) - \frac{1}{2\ntot}.
    \end{align*}
    where the second inequality uses the assumption that $n(s,a) \geq 1+8\log \left( \frac{6S^2A\ntot}{(1-\gamma)\delta}\right)$ and the fact that $2 + 3 \sqrt{\frac{1}{4}} + \frac{14}{3}\frac{1}{8} < 4.5$, and then we bounded $a + b \leq 2 \max\{a,b\}$. We also note that since we are in the case that $n(s,a) \geq 1+8\log \left( \frac{6S^2A\ntot}{(1-\gamma)\delta}\right) \geq 9$, we have that $n(s,a) - 1 = \max\{n(s,a) - 1, 1\}$.
\end{proof}

\begin{lem}
    \label{lem:concentration_for_pessimism_Thpe_pistar}
    Fix any policy $\pistar$.
    With probability at least $1 - 2\delta$, for all $s \in \S, a \in \A$, if $n(s,a) \geq 1 + 8\ln \left( \frac{6S^2A\ntot}{(1-\gamma)\delta}\right)$, then
    \begin{align*}
         &\left|\left(\Phat_{sa}  - P_{sa}\right) T_{\beta(s,a)}(\Phat_{sa}, \Vhpe^{\pistar}) \right| \\
         &\qquad \leq \max \left \{ \sqrt{\beta(s,a)  \Var_{\Phat_{sa}} \left[T_{\beta(s,a)}(\Phat_{sa}, \Vhpe^{\pistar})\right]  } , \beta(s,a) \spannorm{ T_{\beta(s,a)}(\Phat_{sa}, \Vhpe^{\pistar})} \right\} + \frac{5}{\ntot} \\
         &\qquad = b(s,a, \Vhpe^{\pistar})  
    \end{align*}
    and
    \begin{align}
        \sqrt{\Var_{\Phat_{sa}} \left[\Vhpe^{\pistar}\right] } \leq \sqrt{\Var_{P_{sa}} \left[\Vhpe^{\pistar}\right]} + \spannorm{\Vhpe^{\pistar}} \sqrt{\frac{2 \log \left( \frac{6S^2A\ntot}{(1-\gamma)\delta}\right)}{n(s,a)}} + \frac{4}{\ntot} \label{eq:concentration_for_pessimism_Thpe_pistar_variance}
    \end{align}
    and
    \begin{align}
         \left|(\Phat_{sa} - P_{sa}) \Vhpe^{\pistar} \right| & \leq \sqrt{\frac{2 \Var_{P_{sa}} \left[ \Vhpe^{\pistar} \right] \log \left(\frac{6S^2A\ntot}{(1-\gamma)\delta} \right)}{n(s,a)}}+\spannorm{\Vhpe^{\pistar}}\frac{\log \left(\frac{6S^2A\ntot}{(1-\gamma)\delta} \right)}{ 3n(s,a)} + \frac{3}{\ntot} \label{eq:concentration_for_pessimism_Thpe_pistar_bernstein}
    \end{align}
    where $\alpha = 8\log \left( \frac{6S^2A\ntot}{(1-\gamma)\delta}\right)$ and $\beta(s,a) = \frac{\alpha}{\max\{n(s,a)-1,1\}}$.
\end{lem}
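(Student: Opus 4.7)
\textbf{Proof plan for Lemma \ref{lem:concentration_for_pessimism_Thpe_pistar}.} The plan is to mirror the proof of Lemma \ref{lem:concentration_for_pessimism_Thpe}, but substitute the target-policy LOO constructions from items \ref{item:LOO_constructions_3} and \ref{item:LOO_constructions_4} of Lemma \ref{lem:LOO_constructions} (applied with $\pi = \pistar$) in place of items \ref{item:LOO_constructions_1} and \ref{item:LOO_constructions_2}. Since those fixed-point constructions are defined using $M^{\pistar}$ rather than $M$ but satisfy exactly the same independence and $\frac{1}{\ntot}$-covering guarantees, they are precisely the inputs demanded by Lemma \ref{lem:concentration_helper_lemma}. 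We will perform two independent applications of Lemma \ref{lem:concentration_helper_lemma}: one to the clipped vector $T_{\beta(s,a)}(\Phat_{sa}, \Vhpe^{\pistar})$ using the cover $U^4_{sa}$ of size at most $S \ntot/(1-\gamma)$, and another to the unclipped vector $\Vhpe^{\pistar}$ using the cover $U^3_{sa}$ of size at most $\ntot/(1-\gamma)$.

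For the first bound, I would apply the empirical-Bernstein inequality from Lemma \ref{lem:concentration_helper_lemma} with $X = T_{\beta(s,a)}(\Phat_{sa}, \Vhpe^{\pistar})$ and $\delta' = \delta/(6SA)$, then union-bound over $(s,a)$. This yields, on an event of probability at least $1-\delta$, a bound of the same form as in Lemma \ref{lem:concentration_for_pessimism_Thpe}; the condition $n(s,a) \geq 1 + 8\log(6S^2 A \ntot /((1-\gamma)\delta))$ ensures that $\log(|U^4_{sa}|/\delta') \leq n(s,a) - 1$ divided by the right constants so that the additive $\frac{1}{\ntot}$-type terms collapse into the $\frac{5}{\ntot}$ slack, and so that the sum of the variance and span contributions can be absorbed into $2\max\{\cdot,\cdot\} = b(s,a,\Vhpe^{\pistar})$ after recalling $\beta(s,a) = \alpha/(n(s,a)-1)$.

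For~\eqref{eq:concentration_for_pessimism_Thpe_pistar_variance} and~\eqref{eq:concentration_for_pessimism_Thpe_pistar_bernstein}, I would apply Lemma \ref{lem:concentration_helper_lemma} with $X = \Vhpe^{\pistar}$ and the cover $U^3_{sa}$, at confidence parameter $\delta' = \delta/(6SA)$, and again union-bound over $(s,a)$, paying another $\delta$ in probability. The variance inequality in Lemma \ref{lem:concentration_helper_lemma} is two-sided, so after dropping the $\sqrt{n/(n-1)}$ factor (which is $\geq 1$) we obtain the one-sided direction needed in~\eqref{eq:concentration_for_pessimism_Thpe_pistar_variance}; the condition on $n(s,a)$ again makes the numerical constants collapse into the stated $\frac{4}{\ntot}$ slack. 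The true-variance Bernstein inequality from Lemma \ref{lem:concentration_helper_lemma} gives~\eqref{eq:concentration_for_pessimism_Thpe_pistar_bernstein} directly, with the leading logarithmic factor of $\log(|U^3_{sa}|/\delta') \leq \log(6S^2 A \ntot/((1-\gamma)\delta))$, and the slack terms collapsing into $\frac{3}{\ntot}$.

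The overall event is the intersection of the two application events, which holds with probability at least $1 - 2\delta$ as claimed. I do not expect any conceptual obstacle here: the LOO/covering work is already packaged by Lemmas \ref{lem:LOO_constructions} and \ref{lem:concentration_helper_lemma}, so the entire proof is a bookkeeping exercise of inserting those inputs and consolidating constants in the same fashion as Lemma \ref{lem:concentration_for_pessimism_Thpe}. The one detail that deserves some care is verifying that, under the assumed lower bound on $n(s,a)$, each of the $\frac{1}{\ntot}(\cdots)$ correction terms in Lemma \ref{lem:concentration_helper_lemma} is bounded by a small absolute constant divided by $\ntot$, so that they fit inside the stated $\frac{5}{\ntot}, \frac{4}{\ntot}, \frac{3}{\ntot}$ budgets after union bounding.
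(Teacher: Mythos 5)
Your proposal matches the paper's proof essentially step for step: the first inequality is obtained by feeding the LOO cover from item 4 of Lemma \ref{lem:LOO_constructions} into Lemma \ref{lem:concentration_helper_lemma} and repeating the simplifications of Lemma \ref{lem:concentration_for_pessimism_Thpe}, while the last two inequalities come from a single application with the item-3 cover (the helper lemma's event gives them simultaneously), each union-bounded over $(s,a)$ with $\delta' = \delta/(6SA)$ for a total failure probability of $2\delta$. The constant bookkeeping you describe (absorbing the $\frac{1}{\ntot}(\cdots)$ corrections using $n(s,a)-1 \geq \alpha$) is exactly what the paper does, so this is correct and the same route.
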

\begin{proof}
    The first statement is analogous to Lemma \ref{lem:concentration_for_pessimism_Thpe} but uses the construction of \autoref{item:LOO_constructions_4} of Lemma \ref{lem:LOO_constructions} in place of \autoref{item:LOO_constructions_2}. Thus combining \autoref{item:LOO_constructions_4} of Lemma \ref{lem:LOO_constructions} with Lemma \ref{lem:concentration_helper_lemma}, taking a union bound and performing the same simplifications, we obtain that with probability at least $1 - \delta$,
    for all $s \in \S, a \in \A$, if $n(s,a) \geq 1 + 8\ln \left( \frac{6S^2A\ntot}{(1-\gamma)\delta}\right)$, then
    \begin{align*}
         \left|\left(\Phat_{sa}  - P_{sa}\right) T_{\beta(s,a)}(\Phat_{sa}, \Vhpe^{\pistar}) \right| & \leq b(s,a, \Vhpe^{\pistar}) .
    \end{align*}

    Now we establish the second two properties. We will show that they both hold with probability $1-\delta$, after which we are done since we can then use a union bound to combine with the above. Fixing some $s \in \S$ and $a \in \A$, if $n(s,a) < 1+8\log \left( \frac{6S^2A\ntot}{(1-\gamma)\delta}\right)$ then we have nothing to check. Otherwise, we can immediately combine \autoref{item:LOO_constructions_3} of Lemma \ref{lem:LOO_constructions} (which gives $|U| \leq \frac{\ntot}{1-\gamma} \leq S\frac{\ntot}{1-\gamma}$) with Lemma \ref{lem:concentration_helper_lemma} (since our condition on $n(s,a)$ implies $n(s,a) \geq 2$) to conclude that with probability at least $1 - 6 \delta'$, we have both
    \begin{align}
        \left|(\Phat_{sa} - P_{sa}) \Vhpe^{\pistar} \right| & \leq \sqrt{\frac{2 \Var_{P_{sa}} \left[ \Vhpe^{\pistar} \right] \log \left(S\frac{\ntot}{(1-\gamma)\delta'} \right)}{n(s,a)}}+\spannorm{\Vhpe^{\pistar}}\frac{\log \left(S\frac{\ntot}{(1-\gamma)\delta'} \right)}{ 3n(s,a)} \nonumber\\
         &\qquad \qquad + \frac{1}{\ntot} \left(2 + \sqrt{\frac{2 \log \left(S\frac{\ntot}{(1-\gamma)\delta'} \right)}{n(s,a)}} + 2\frac{ \log \left(S\frac{\ntot}{(1-\gamma)\delta'} \right)}{3n(s,a)} \right) \label{eq:concentration_for_pessimism_Thpe_pistar_1}
    \end{align}
    and
    \begin{align}
        \sqrt{\frac{n(s,a)}{n(s,a)-1}}\sqrt{\Var_{\Phat_{sa}} \left[\Vhpe^{\pistar}\right]} &\leq \sqrt{\Var_{P_{sa}} \left[\Vhpe^{\pistar}\right]} + \spannorm{\Vhpe^{\pistar}} \sqrt{\frac{2 \log \left( \frac{S\ntot}{(1-\gamma)\delta'}\right)}{n(s,a)-1}} \\
        & \qquad \qquad + \frac{1}{\ntot}\left(2\sqrt{\frac{2 \log \left( \frac{S\ntot}{(1-\gamma)\delta'}\right)}{n(s,a)-1}}  + 3\right).\label{eq:concentration_for_pessimism_Thpe_pistar_2}
    \end{align}
    Taking a union bound over all $s, a \in \S, \A$ and setting $\delta' = \frac{\delta}{6SA}$, we have that with probability at least $1 - \delta$, for all $s,a$ such that $n(s,a) \geq 1 + 8\ln \left( \frac{6S^2A\ntot}{(1-\gamma)\delta}\right)$, both
    \begin{align*}
        \left|(\Phat_{sa} - P_{sa}) \Vhpe^{\pistar} \right| & \leq \sqrt{\frac{2 \Var_{P_{sa}} \left[ \Vhpe^{\pistar} \right] \log \left(\frac{6S^2A\ntot}{(1-\gamma)\delta} \right)}{n(s,a)}}+\spannorm{\Vhpe^{\pistar}}\frac{\log \left(\frac{6S^2A\ntot}{(1-\gamma)\delta} \right)}{ 3n(s,a)} + \frac{3}{\ntot}
    \end{align*}
    and
    \begin{align*}
        \sqrt{\Var_{\Phat_{sa}} \left[\Vhpe^{\pistar}\right]} &\leq \sqrt{\frac{n(s,a)-1}{n(s,a)}}\sqrt{\Var_{P_{sa}} \left[\Vhpe^{\pistar}\right]} + \spannorm{\Vhpe^{\pistar}} \sqrt{\frac{2 \log \left( \frac{6S^2A\ntot}{(1-\gamma)\delta}\right)}{n(s,a)}} \\
        & \qquad \qquad + \frac{1}{\ntot}\left(2\sqrt{\frac{2 \log \left( \frac{6S^2A\ntot}{(1-\gamma)\delta}\right)}{n(s,a)}}  + 3\right) \\
        & \leq \sqrt{\Var_{P_{sa}} \left[\Vhpe^{\pistar}\right]} + \spannorm{\Vhpe^{\pistar}} \sqrt{\frac{2 \log \left( \frac{6S^2A\ntot}{(1-\gamma)\delta}\right)}{n(s,a)}} + \frac{4}{\ntot}
    \end{align*}
    where for the first bound we simplified~\eqref{eq:concentration_for_pessimism_Thpe_pistar_1} using the condition on $n(s,a)$ and the fact that $2 + \sqrt{\frac{2}{8}} + \frac{2}{3}\frac{1}{8} < 3$, and for the second bound we simplified~\eqref{eq:concentration_for_pessimism_Thpe_pistar_2} also using the condition on $n(s,a)$ and then the fact that $2 \sqrt{\frac{2}{8}} + 3 = 4$.
\end{proof}

\subsection{Pessimism}
In this subsection we establish the following essential pessimism property, making use of the previous concentration results and our construction of $\Thpe$.

\begin{lem}
    \label{lem:pessimism}
    Under the event in Lemma \ref{lem:concentration_for_pessimism_Thpe}, we have that
    \begin{align*}
        Q^{\pihat} \geq \Qhat.
    \end{align*}
\end{lem}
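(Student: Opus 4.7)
The plan is to establish the chain $\Qhat \leq \Thpe(\Qhat) \leq \T(\Qhat) = \T^{\pihat}(\Qhat)$ and then iterate the last inequality. The first inequality is already provided by Lemma~\ref{lem:opt_guarantees}. Since $\pihat$ is greedy with respect to $\Qhat$, we have $M\Qhat = M^{\pihat}\Qhat$, which immediately gives $\T(\Qhat) = r + \gamma P M\Qhat = r + \gamma P M^{\pihat}\Qhat = \T^{\pihat}(\Qhat)$. Once $\Qhat \leq \T^{\pihat}(\Qhat)$ is established, repeatedly applying the monotone operator $\T^{\pihat}$ yields $\Qhat \leq (\T^{\pihat})^k(\Qhat)$ for every $k \geq 1$, and because $\T^{\pihat}$ is a $\gamma$-contraction with unique fixed point $Q^{\pihat}$, passing $k \to \infty$ gives $\Qhat \leq Q^{\pihat}$.

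The core technical step is therefore the middle inequality $\Thpe(\Qhat) \leq \T(\Qhat)$, i.e., pessimism at the iterate $\Qhat$ itself rather than at the fixed point $\Qhpe^\star$. Fixing $(s,a)$ and writing $\Vhat := M\Qhat$, it suffices to verify
\begin{align*}
    \max\left\{ \Phat_{sa} T_{\beta(s,a)}(\Phat_{sa}, \Vhat) - b(s,a, \Vhat),\; \min_{s'}\Vhat(s') \right\} \leq P_{sa}\Vhat.
\end{align*}
The second branch is immediate. Using $T_{\beta(s,a)}(\Phat_{sa}, \Vhat) \leq \Vhat$ elementwise, the first branch reduces to $(\Phat_{sa} - P_{sa}) T_{\beta(s,a)}(\Phat_{sa}, \Vhat) \leq b(s,a, \Vhat)$. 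When $\beta(s,a) > 1$, quantile clipping forces $T_{\beta(s,a)}(\Phat_{sa}, \Vhat)$ to be a constant multiple of $\one$, so the first branch equals $\min_{s'}\Vhat(s') - \tfrac{5}{\ntot}$ and is dominated by the second, giving pessimism deterministically.

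For the remaining $(s,a)$, Lemma~\ref{lem:concentration_for_pessimism_Thpe} provides the analogous bound with $\Vhpe^\star$ in place of $\Vhat$, carrying a deliberate $-\tfrac{1}{2\ntot}$ slack. I would transfer the bound to $\Vhat$ using $\infnorm{\Vhat - \Vhpe^\star} \leq \tfrac{1}{2\ntot}$ from Lemma~\ref{lem:opt_guarantees} together with the $1$-Lipschitzness of $T_{\beta(s,a)}(\Phat_{sa}, \cdot)$: the leave-one-out cover in Lemma~\ref{lem:LOO_constructions}~item~\ref{item:LOO_constructions_2}, which approximates $T_{\beta(s,a)}(\Phat_{sa}, \Vhpe^\star)$ at resolution $\tfrac{1}{\ntot}$, also approximates $T_{\beta(s,a)}(\Phat_{sa}, \Vhat)$ at comparable resolution. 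Since the event in Lemma~\ref{lem:concentration_for_pessimism_Thpe} is in fact a uniform Hoeffding/Bernstein bound over every element of that cover (cf.~Lemma~\ref{lem:concentration_helper_lemma}), under the same event it yields $(\Phat_{sa} - P_{sa})T_{\beta(s,a)}(\Phat_{sa}, \Vhat) \leq b(s,a, \Vhat)$, completing the pessimism step.

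I expect the main obstacle to be this transfer from $\Vhpe^\star$ to $\Vhat$. A naive triangle-inequality argument would only produce a bound of the form $b(s,a, \Vhpe^\star) + O(1/\ntot)$, and dominating this by $b(s,a, \Vhat)$ is awkward because $b(s,a, \cdot)$ is built from a clipped-vector variance and span, neither of which is monotone in $V$. The cleaner route is to reapply the Bernstein-type bound in Lemma~\ref{lem:concentration_helper_lemma} directly with $X = T_{\beta(s,a)}(\Phat_{sa}, \Vhat)$, so the penalty is natively $b(s,a, \Vhat)$ and no cross-comparison between penalties at different value functions is required; the $-\tfrac{1}{2\ntot}$ slack in Lemma~\ref{lem:concentration_for_pessimism_Thpe} is sized precisely to absorb the small degradation of the cover's resolution when the target changes from $\Vhpe^\star$ to $\Vhat$.
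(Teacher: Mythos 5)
Your overall skeleton matches the paper: show $\T^{\pihat}(\Qhat) \geq \Qhat$ (using $\min_{s'}V(s') \leq P_{sa}V$ for the second branch of the max and greediness of $\pihat$), then iterate the monotone operator $\T^{\pihat}$. Where you diverge is the key middle step. You try to establish pessimism \emph{at $\Qhat$ itself}, i.e.\ $(\Phat_{sa}-P_{sa})T_{\beta(s,a)}(\Phat_{sa},M\Qhat) \leq b(s,a,M\Qhat)$, by re-running the covering/Bernstein argument with $X = T_{\beta(s,a)}(\Phat_{sa},M\Qhat)$. The paper avoids this entirely: in the first-branch case it bounds $\Thpe(\Qhat)(s,a) \leq \Thpe(\Qhpe^\star)(s,a)$ by monotonicity of $\Thpe$ together with $\Qhat \leq \Qhpe^\star$, applies Lemma \ref{lem:concentration_for_pessimism_Thpe} exactly at the fixed point $\Vhpe^\star$ where it is stated, obtains $\leq r(s,a) + \gamma P_{sa}M\Qhpe^\star - \tfrac{1}{2\ntot}$, and then returns to $\Qhat$ via $M\Qhpe^\star \leq M\Qhat + \tfrac{1}{2\ntot}\one$. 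This is what the $-\tfrac{1}{2\ntot}$ slack in Lemma \ref{lem:concentration_for_pessimism_Thpe} is actually for --- absorbing the gap between $\Qhpe^\star$ and $\Qhat$ at the \emph{end} of the chain --- not for absorbing a degradation of the cover's resolution, as you conjecture.

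Your alternative route has a concrete quantitative gap as written. The leave-one-out cover of Lemma \ref{lem:LOO_constructions} approximates $T_{\beta(s,a)}(\Phat_{sa},\Vhpe^\star)$ at resolution $\tfrac{1}{\ntot}$; transferring it to $T_{\beta(s,a)}(\Phat_{sa},M\Qhat)$ via $\infnorm{M\Qhat - \Vhpe^\star} \leq \tfrac{1}{2\ntot}$ and $1$-Lipschitzness of the clipping degrades the resolution (to roughly $\tfrac{2}{\ntot}$, since both the base approximation and the clipping threshold shift), which scales up the additive $O(1/\ntot)$ terms in Lemma \ref{lem:concentration_helper_lemma}. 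The resulting additive error then exceeds the $\tfrac{5}{\ntot}$ budget built into $b$, so the $\tfrac{1}{2\ntot}$ of headroom in Lemma \ref{lem:concentration_for_pessimism_Thpe} does not cover it. This is repairable by enlarging the constant in the penalty, but it means your argument does not close with the paper's stated constants, and there is also a bookkeeping issue in that ``the event in Lemma \ref{lem:concentration_for_pessimism_Thpe}'' must be reinterpreted as the underlying uniform-over-the-cover event rather than its stated conclusion about $\Vhpe^\star$. The monotonicity detour through $\Qhpe^\star$ is the cleaner and intended solution to the very obstacle you identify.
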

\begin{proof}
    We will show that $\T^{\pihat}(\Qhat) \geq \Qhat$ (where $\T^{\pihat}(Q) := r + PM^{\pihat} Q$ is the Bellman evaluation operator for $\pihat$), which by a standard argument implies that $Q^{\pihat} \geq \Qhat$, since we can then easily derive (by monotonicity of $\T^{\pihat}$) that $(\T^{\pihat})^{(k)}(\Qhat) \geq \Qhat$ for any integer $k \geq 0$, and thus
    \begin{align*}
        Q^{\pihat} = \lim_{k\to\infty}(\T^{\pihat})^{(k)}(\Qhat) \geq \Qhat.
    \end{align*}
    Fixing arbitrary $s \in \S, a \in \A$, we will now verify that $\T^{\pihat}(\Qhat)(s,a) \geq \Qhat(s,a)$. From Lemma \ref{lem:opt_guarantees} we have that $\Thpe(\Qhat)(s,a) \geq \Qhat(s,a)$.
    We consider two cases based upon the value of $\Thpe(\Qhat)(s,a)$, which by~\eqref{eq:defn_of_Thpe} is either 1) equal to $r(s,a) + \gamma \Phat_{sa} T_{\beta(s,a)}(\Phat_{sa}, M\Qhat) - \gamma b(s,a,M\Qhat)$ or 2) equal to $r(s,a) + \gamma \min_{s'} (M\Qhat)(s')$. In the simpler case 2, we thus have that
    \begin{align*}
        \Thpe(\Qhat)(s,a) = r(s,a) + \gamma \min_{s'} (M\Qhat)(s') \leq r(s,a) + \gamma P_{sa} M \Qhat = r(s,a) + \gamma P_{sa} M^{\pihat} \Qhat = \T^{\pihat}(\Qhat)(s,a)
    \end{align*}
    using the facts that $\min_{s'} V(s') \leq P_{sa} V$ for any $V \in \R^{\S}$ (since $P_{sa}$ is a probability distribution) and that $M \Qhat = M^{\pihat} \Qhat$ since $\pihat$ is greedy with respect to $\Qhat$. We therefore have that $\Qhat(s,a) \leq \Thpe(\Qhat)(s,a) \leq \T^{\pihat}(\Qhat)(s,a)$ in case 2, as desired. Now we consider case 1. Note that since we are in case 1, we must have that $\beta(s,a) \leq 1$, which implies that $n(s,a) \geq \alpha + 1$ (because if we had $\beta(s,a) > 1$, then we would have $T_{\beta(s,a)}(\Phat_{sa}, M\Qhat) = \min_{s'} (M\Qhat)(s')$, and $b(s,a, M\Qhat)>0$, so the term $T_{\beta(s,a)}(\Phat_{sa}, M\Qhat) - b(s,a, M\Qhat)$ could not have achieved the maximum in the definition~\eqref{eq:defn_of_Thpe} of $\Thpe$). Then we have that
    \begin{align*}
        \Qhat(s,a) &\leq \Thpe(\Qhat)(s,a) \\
        & \leq \Thpe(\Qhpe^\star)(s,a) = r(s,a) + \gamma \Phat_{sa} T_{\beta(s,a)}(\Phat_{sa}, M\Qhpe^\star) - \gamma b(s,a,M\Qhpe^\star) \\
        & \leq r(s,a) + \gamma P_{sa} T_{\beta(s,a)}(\Phat_{sa}, M\Qhpe^\star) + \gamma \left|(\Phat_{sa} - P_{sa}) T_{\beta(s,a)}(\Phat_{sa}, M\Qhpe^\star)\right| - \gamma b(s,a,M\Qhpe^\star) \\
        & \leq r(s,a) + \gamma P_{sa} T_{\beta(s,a)}(\Phat_{sa}, M\Qhpe^\star) + \gamma b(s,a,M\Qhpe^\star) - \frac{1}{2\ntot} - \gamma b(s,a,M\Qhpe^\star) \\
        & \leq r(s,a) + \gamma P_{sa} M\Qhpe^\star -\frac{1}{2\ntot} \\
        & \leq r(s,a) + \gamma P_{sa} M\Qhat \\
        & = r(s,a) + \gamma P_{sa} M^{\pihat}\Qhat = \T^{\pihat}(\Qhat)(s,a)
    \end{align*}
    where the first inequality is due to $\Thpe(\Qhat) \geq \Qhat$ from Lemma \ref{lem:opt_guarantees}, the second inequality is due to monotonicity of $\Thpe$ (Lemma \ref{lem:pessimistic_Bellman_operator_properties}) and the fact that $\Qhat \leq \Qhpe^\star$ (Lemma \ref{lem:opt_guarantees}), the third inequality is by triangle inequality, the fourth inequality is from Lemma \ref{lem:concentration_for_pessimism_Thpe}, the fifth inequality is from the trivial fact that elementwise $T_{\beta(s,a)}(\Phat_{sa}, M\Qhpe^\star) \leq M \Qhpe^\star$, the sixth inequality follows from $\Qhpe^\star \leq \Qhat + \frac{1}{2\ntot}\one$ due to Lemma \ref{lem:opt_guarantees} (since by monotonicity of $M$, $M\Qhpe^\star \leq M(\Qhat + \frac{1}{2\ntot}\one) = M \Qhat + \frac{1}{2\ntot}\one$), and the final equality is from the definition of $\pihat$ (from Algorithm \ref{alg:LCBVI}) since it is greedy with respect to $\Qhat$. Combining the two cases we have shown that $\T^{\pihat}(\Qhat) \geq \Qhat$ as desired.
    Combining the two cases we have shown that $\T^{\pihat}(\Qhat) \geq \Qhat$ as desired.
\end{proof}


\subsection{Policy hitting radius lemmas}
\label{sec:pol_hit_rad_lemmas}

In this subsection we establish some key properties regarding the relationship between $\Dow$ and certain discounted policy occupancy measures which will appear in later analysis steps. We also establish some facts about $\Dow$ of general interest and compare it to the mixing time.

Recall that $\eta_{s} := \inf\{t \geq 0 : S_t = s\}$ is the first hitting time of state $s$.
We define an additional useful quantity: for any $s^\star \in \S$, let
\begin{align*}
    \Dow(P, \pi, s^\star) := \sup_{s_0} \E_{s_0}^\pi \eta_{s^\star}.
\end{align*}
This is the maximum expected hitting time of state $s^\star$ in the Markov chain $P_\pi$ (which can be infinite). Then we have
\begin{align*}
    \Dow(P, \pi) := \inf_{s^\star} \Dow(P, \pi, s^\star) = \inf_{s^\star} \sup_{s_0} \E_{s_0}^\pi \eta_{s^\star}.
\end{align*}
$\Dow(P, \pi)$ is finite if and only if $P_\pi$ is unichain:
\begin{lem}
    \label{lem:Dow_finite_unichain}
    Fix a policy $\pi$ and an MDP transition kernel $P$. Then the Markov chain $P_\pi$ is unichain if and only if $\Dow(P, \pi)$ is finite.
\end{lem}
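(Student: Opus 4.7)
The plan is to prove both directions using the standard decomposition of a finite-state Markov chain into recurrent classes and transient states, and then leverage the fact that $\Dow(P,\pi)$ is finite iff some $s^\star$ is reachable with finite expected time from every starting state.

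For the direction ``$P_\pi$ unichain $\Rightarrow \Dow(P,\pi) < \infty$'', I would let $R$ be the unique recurrent class of $P_\pi$ and pick any $s^\star \in R$. It then suffices to show $\E_{s_0}^{\pi}[\eta_{s^\star}] < \infty$ for every $s_0 \in \S$. For $s_0 \in R$, irreducibility of $R$ (viewed as a finite-state Markov chain) gives finite expected hitting times between any pair of its states by standard theory. For $s_0 \in \S \setminus R$ (a transient state), since $\S$ is finite and $R$ is absorbing as a set, the chain enters $R$ in finite expected time (e.g., bounded by a geometric tail because after some fixed number of steps there is a positive probability of having entered $R$, uniformly in the starting state). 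Combining these two bounds via the strong Markov property at the hitting time of $R$ then yields $\sup_{s_0} \E_{s_0}^{\pi}[\eta_{s^\star}] < \infty$, giving $\Dow(P, \pi) < \infty$.

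For the direction ``$P_\pi$ not unichain $\Rightarrow \Dow(P,\pi) = \infty$'', I would argue by contrapositive. If $P_\pi$ is not unichain, then the finite-state chain has at least two distinct recurrent classes $R_1, R_2$, each of which is closed under $P_\pi$. For an arbitrary candidate center $s^\star \in \S$, note $s^\star$ belongs to at most one of $R_1, R_2$, so WLOG $s^\star \notin R_1$. Pick any $s_0 \in R_1$; since $R_1$ is closed, the trajectory started at $s_0$ stays in $R_1$ forever almost surely, so $\eta_{s^\star} = \infty$ almost surely and $\E_{s_0}^{\pi}[\eta_{s^\star}] = \infty$. Hence $\sup_{s_0} \E_{s_0}^{\pi}[\eta_{s^\star}] = \infty$ for every $s^\star \in \S$, and therefore $\Dow(P,\pi) = \inf_{s^\star}\sup_{s_0}\E_{s_0}^{\pi}[\eta_{s^\star}] = \infty$.

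The only mildly nontrivial step is the finite-expected-hitting-time claim inside a finite irreducible chain (and the geometric-tail bound for hitting the recurrent class from transient states), both of which are standard and can be cited from a Markov chain textbook. No new ideas beyond the recurrent-class decomposition are needed, so I do not anticipate any real obstacle.
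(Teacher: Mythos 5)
Your proof is correct and takes essentially the same approach as the paper: the forward direction via mean first passage times within the recurrent class, finite absorption time from transient states, and the strong Markov property; the converse via the disjointness of closed recurrent classes (the paper states this direction directly rather than by contrapositive, but the underlying argument is identical).
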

\begin{proof}
    First, suppose that $\Dow(P, \pi)$ is finite. Then there exists some $s^\star$ such that for all $s_0 \in \S$, $\E^\pi_{s_0} \eta_{s^\star} < \infty$. Therefore $s^\star$ is reachable from any state, so all recurrent classes must contain $s^\star$, but since the irreducible closed recurrent classes (along with the transient states) form a partition of $\S$, this implies that there can only be one closed irreducible recurrent class, that is that $P_\pi$ is unichain.

    Next, suppose that $P_\pi$ is unichain. Let $\tilde{s}^\star$ be some state in the single closed irreducible recurrent class of $P_\pi$. Now we argue that $\E_{s_0}^\pi[\eta_{\tilde{s}^\star}] < \infty$ for any $s_0 \in \S$. First, it is a standard fact (in finite Markov chains) that letting $C$ be the recurrent class, we have $M := \max_{s_0 \in C} \E_{s_0}^\pi[\eta_{\tilde{s}^\star}] < \infty$ (e.g. \cite{kemeny_finite_1976}, where $\E_{s_0}^\pi[\eta_{\tilde{s}^\star}]$ is referred to as the mean first passage time). Now letting $s_0$ be any fixed transient state, since there exists a unique irreducible recurrent class $C$, letting $\eta_C = \inf\{t \geq 0 : S_t \in C\}$ be its first hitting time, it is also a standard fact (for finite Markov chains) that $\E_{s_0}^\pi \eta_C < \infty$ (replacing $C$ with a single absorbing state, the new chain becomes an absorbing chain, and the absorption time formulas in \cite{kemeny_finite_1976} imply $\E_{s_0}^\pi \eta_C < \infty$). Then a calculation using the strong Markov property (where $\mathcal{F}_{\eta_C}$ is the stopped sigma-algebra associated with the stopping time $\eta_C$) implies that
    \begin{align*}
        \E^\pi_{s_0}[\eta_{\tilde{s}^\star}] = \E^\pi_{s_0} \E^\pi_{s_0}\left[\eta_{\tilde{s}^\star} \mid \mathcal{F}_{\eta_C}\right] = \E^\pi_{s_0} \left[\E^\pi_{S_{\eta_C}}\left[\eta_{\tilde{s}^\star} \right] + \eta_{C} \right] \leq \E^\pi_{s_0} \left[M + \eta_{C} \right] < \infty.
    \end{align*}
    Since there are only a finite number of such transient states $s_0$, the maximum of $\E^\pi_{s_0}[\eta_{\tilde{s}^\star}]$ over all such states is finite. Hence $\Dow(P, \pi) \leq \max_{s_0 \in \S} \E_{s_0}^\pi[\eta_{\tilde{s}^\star}] < \infty$.
\end{proof}

Define $d^\pi_{\gamma,s_0} \in \R^\S$ as
\begin{align*}
    d^\pi_{\gamma,s_0}(s) = \sum_{t=0}^\infty \gamma^t e_{s_0}^\top P_\pi^t e_s.
\end{align*}
We often drop the dependence on $\gamma, \pi$ and simply write $d_{s_0}$. We also define $d^\star(s) = \frac{1}{1-\gamma}\mu^\star(s)$.

\begin{lem}
\label{lem:occupancy_l1_diff_coupling_bound}
    Let $s^\star \in \S$ satisfy $\Dow(P, \pi) = \Dow(P, \pi, s^\star)$. Then
    \begin{align*}
        \sup_{s_0} \sum_{s \in \S} \left| d_{s_0}(s) - d_{s^\star}(s)\right| \leq 2 \Dow(P, \pi)
    \end{align*}
    and
    \begin{align*}
        \sup_{s_0, s_1} \sum_{s \in \S} \left| d_{s_0}(s) - d_{s_1}(s)\right| \leq 4 \Dow(P, \pi).
    \end{align*}
\end{lem}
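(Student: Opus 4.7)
The plan is to exploit the strong Markov property at the hitting time $\tau := \eta_{s^\star}$ to decompose the occupancy measure $d_{s_0}$ into the contribution before hitting $s^\star$ and the contribution afterwards, the latter matching (up to a discount factor) the occupancy from $s^\star$. Since $\Dow(P,\pi) = \Dow(P,\pi,s^\star) < \infty$, the chain under $\pi$ is unichain (by Lemma \ref{lem:Dow_finite_unichain}) and $\tau < \infty$ almost surely starting from any $s_0$, so this decomposition is well-defined.

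Concretely, for any fixed $s \in \S$, I would split
\begin{align*}
d_{s_0}(s) = \E_{s_0}^\pi \sum_{t=0}^{\tau-1}\gamma^t \ind(S_t = s) + \E_{s_0}^\pi \sum_{t=\tau}^{\infty} \gamma^t \ind(S_t = s),
\end{align*}
and then apply the strong Markov property at $\tau$, using $S_\tau = s^\star$ and the fact that $\gamma^\tau$ is $\mathcal{F}_\tau$-measurable, to rewrite the second term as $\E_{s_0}^\pi[\gamma^\tau]\, d_{s^\star}(s)$. Subtracting $d_{s^\star}(s)$ from both sides gives
\begin{align*}
d_{s_0}(s) - d_{s^\star}(s) = \E_{s_0}^\pi \sum_{t=0}^{\tau-1}\gamma^t \ind(S_t = s) - \bigl(1 - \E_{s_0}^\pi[\gamma^\tau]\bigr)\, d_{s^\star}(s),
\end{align*}
so by the triangle inequality and summing over $s \in \S$,
\begin{align*}
\sum_{s \in \S}|d_{s_0}(s) - d_{s^\star}(s)| \leq \E_{s_0}^\pi \sum_{t=0}^{\tau-1}\gamma^t + \bigl(1 - \E_{s_0}^\pi[\gamma^\tau]\bigr)\sum_{s\in\S} d_{s^\star}(s).
\end{align*}

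The sums simplify because $\sum_{t=0}^{\tau-1} \gamma^t = \frac{1-\gamma^\tau}{1-\gamma}$ and $\sum_{s\in\S} d_{s^\star}(s) = \frac{1}{1-\gamma}$, yielding the upper bound $\frac{2(1 - \E_{s_0}^\pi[\gamma^\tau])}{1-\gamma}$. I would then use the elementary inequality $1 - \gamma^\tau = (1-\gamma)(1 + \gamma + \cdots + \gamma^{\tau-1}) \leq (1-\gamma)\tau$ for any integer $\tau \geq 0$ (interpreting $\gamma^\infty = 0$) to conclude
\begin{align*}
\sum_{s \in \S}|d_{s_0}(s) - d_{s^\star}(s)| \leq 2\, \E_{s_0}^\pi[\tau] \leq 2 \Dow(P,\pi,s^\star) = 2 \Dow(P,\pi),
\end{align*}
where the last inequality uses the definition of $\Dow(P,\pi,s^\star)$ as the supremum over starting states of $\E_{s_0}^\pi[\eta_{s^\star}]$. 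Taking the sup over $s_0$ gives the first claim, and the second claim follows by applying the first bound twice together with the triangle inequality $\sum_s|d_{s_0}(s) - d_{s_1}(s)| \leq \sum_s|d_{s_0}(s) - d_{s^\star}(s)| + \sum_s|d_{s^\star}(s) - d_{s_1}(s)|$.

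The only subtle step is justifying the strong Markov identity $\E_{s_0}^\pi\left[\gamma^\tau \sum_{u=0}^\infty \gamma^u \ind(S_{\tau+u} = s)\right] = \E_{s_0}^\pi[\gamma^\tau]\, d_{s^\star}(s)$; this requires pulling the $\mathcal{F}_\tau$-measurable factor $\gamma^\tau$ outside the conditional expectation and noting that $\E_{s_0}^\pi\!\left[\sum_{u=0}^\infty\gamma^u\ind(S_{\tau+u} = s)\,\middle|\,\mathcal{F}_\tau\right]$ equals the deterministic quantity $d_{s^\star}(s)$ on the event $\{\tau<\infty\}$ (and the contribution from $\{\tau = \infty\}$ vanishes since $\gamma^\tau = 0$ there). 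Everything else is a routine computation, and no further hidden difficulty is expected.
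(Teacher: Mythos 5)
Your proof is correct, and it takes a genuinely different route from the paper's. The paper proves the first bound by an explicit pathwise coupling: it runs two copies of the chain (from $s_0$ and from $s^\star$), glues the first onto the second at the hitting time $\eta_{s^\star}$, introduces an independent geometric termination time to represent the discounting, and then bounds $\sum_s|\cdot|$ \emph{pathwise} by $2\eta_{s^\star}$ via a case analysis on whether termination occurs before or after hitting $s^\star$. You instead work directly with expectations: you split $d_{s_0}$ at $\tau=\eta_{s^\star}$ via the strong Markov property, obtain the exact identity $d_{s_0}(s)-d_{s^\star}(s)=\E_{s_0}^\pi\sum_{t=0}^{\tau-1}\gamma^t\ind(S_t=s)-(1-\E_{s_0}^\pi[\gamma^\tau])\,d_{s^\star}(s)$, and bound the two nonnegative pieces separately, each contributing $\E[(1-\gamma^\tau)/(1-\gamma)]\le\E[\tau]$. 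The two arguments are morally dual — your renewal decomposition is the analytic shadow of the paper's coupling — but yours avoids constructing the auxiliary terminated processes and the somewhat delicate index-shifting cancellation in the paper's second case, at the cost of the one technical point you correctly flag (pulling $\gamma^\tau$ out of the conditional expectation and handling $\{\tau=\infty\}$, which is a null event here since $\Dow(P,\pi)<\infty$ is the only nontrivial case). Both yield the same constant $2$, and your derivation of the second claim by the triangle inequality through $s^\star$ matches the paper exactly.
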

\begin{proof}
We use a coupling argument, and these calculations are somewhat inspired by those in \cite[Lemma B.13]{cheikhi_statistical_2023}.
    Starting with the first statement, fix some $s_0 \in \S$. Let $S^\star_0, S^\star_1, \dots,$ be the stochastic process with distribution given by the Markov chain $P_\pi$ with starting state $s^\star$, and let $S_0, S_1, \dots,$ be the stochastic process with distribution given by the Markov chain $P_\pi$ but with starting state $s_0$. Let $\eta_{s^\star} = \inf \{t : S_t = s^\star\}$ be the first hitting time of the state $s^\star$ by the process $(S_t)_{t=0}^\infty$. Now define the process $S'_0, S'_1, \dots$ identically to $(S_t)_{t=0}^\infty$ but to follow $(S^\star_t)_{t=0}^\infty$ once it reaches $s^\star$, that is $S'_{\eta_{s^\star}} = S^\star_0, S'_{\eta_{s^\star}+1} = S^\star_1,$ and so on. 
    It is a standard fact due to the Markov property that $(S'_t)_{t=0}^\infty$ has the same distribution as $(S_t)_{t=0}^\infty$. Now add an absorbing terminal state $\term$ (which we do not consider as an element of $\S$) and for all $t \geq 1$ let $Z_t \sim \text{Bernoulli}(\gamma)$ (independently), and define the processes $(\St'_{t})_{t=0}^\infty$ and $(\St^\star_{t})_{t=0}^\infty$ by $\St'_{0} = S'_0$, $\St^\star_0 = S^\star_0$, and for all $t \geq 0$,
    \begin{align*}
        \St^\star_{t+1} &= \begin{cases}
            \term & \text{$\exists k \in \{1, \dots, t+1\}$ such that $Z_k = 1$} \\
            S^\star_{t+1} & \text{otherwise} 
    \end{cases},\\
    \St'_{t+1} &= \begin{cases}
            \term & \text{$\exists k \in \{1, \dots, t+1\}$ such that $Z_k = 1$} \\
            S'_{t+1} & \text{otherwise} 
    \end{cases}.
    \end{align*}
    Intuitively speaking, $(\St'_{t})_{t=0}^\infty$ and $(\St^\star_{t})_{t=0}^\infty$ will reach the absorbing state $\term$ at the same time, and the probability of reaching it on any given timestep is $\gamma$ if it has not yet been reached. It is a standard fact that $d^\pi_{\gamma,s_0}(s) = \E \sum_{t=0}^\infty \ind(\St'_t = s)$ and that $d^\pi_{\gamma,s^\star}(s) = \E \sum_{t=0}^\infty \ind(\St^\star_t = s)$. 
    Hence using the above coupling we can bound $d^\pi_{\gamma,s_0}(s) - d^\pi_{\gamma,s^\star}(s)$. Specifically we have
    \begin{align}
        \sum_{s \in \S} \left|d^\pi_{\gamma,s_0}(s) - d^\pi_{\gamma,s^\star}(s) \right| &= \sum_{s \in \S} \left|\E \sum_{t=0}^\infty\left( \ind(\St'_t = s)  - \ind(\St^\star_t = s) \right)\right| \nonumber\\
        & \leq \sum_{s \in \S} \E\left| \sum_{t=0}^\infty\left( \ind(\St'_t = s)  - \ind(\St^\star_t = s) \right)\right| \nonumber\\
        &= \E \sum_{s \in \S} \left| \sum_{t=0}^{\eta_{\term}-1} \left( \ind(\St'_t = s)  - \ind(\St^\star_t = s) \right)\right| \label{eq:dow_coupling_1}
    \end{align}
    where in the final equality we let $\eta_{\term} = \inf \{t \geq 1 : Z_t = 1\}$ be the first hitting time of the terminal state.
    Now we consider two cases. On the event that $\eta_{\term} \leq \eta_{s^\star}$, we have
    \begin{align*}
         \sum_{s \in \S} \left| \sum_{t=0}^{\eta_{\term}-1} \left( \ind(\St'_t = s)  - \ind(\St^\star_t = s) \right)\right| & \leq \sum_{s \in \S} \sum_{t=0}^{\eta_{\term}-1}\left|  \ind(\St'_t = s)  - \ind(\St^\star_t = s) \right| \\
        &=    \sum_{t=0}^{\eta_{\term}-1} 2 \ind(\St'_t \neq \St^\star_t) \\
        &= 2 \eta_{\term} \leq 2 \eta_{s^\star}.
    \end{align*}
    On the event that $\eta_{s^\star} < \eta_{\term}$, we have
    \begin{align*}
         &\sum_{s \in \S} \left| \sum_{t=0}^{\eta_{\term}-1} \left( \ind(\St'_t = s)  - \ind(\St^\star_t = s) \right)\right| \\
         &= \sum_{s \in \S} \left| \sum_{t=0}^{\eta_{\term}-1} \left( \ind(S'_t = s)  - \ind(S^\star_t = s) \right)\right| \\
         &= \sum_{s \in \S} \left| \sum_{t=0}^{\eta_{s^\star}-1}  \ind(S'_t = s) + \sum_{t=\eta_{s^\star}}^{\eta_{\term}-1}  \ind(S'_t = s)  - \sum_{t=0}^{\eta_{\term} - \eta_{s^\star}-1}\ind(S^\star_t = s) - \sum_{t=\eta_{\term} - \eta_{s^\star}}^{\eta_{\term}  -1}\ind(S^\star_t = s) \right| \\
         &= \sum_{s \in \S} \left| \sum_{t=0}^{\eta_{s^\star}-1}  \ind(S'_t = s)  - \sum_{t=\eta_{\term} - \eta_{s^\star}}^{\eta_{\term} - 1}\ind(S^\star_t = s) \right| \\
         &= \sum_{s \in \S} \left| \sum_{t=0}^{\eta_{s^\star}-1} \left(  \ind(S'_t = s)  - \ind(S^\star_{t + \eta_{\term} - \eta_{s^\star}} = s) \right) \right| \\
         & \leq \sum_{s \in \S}  \sum_{t=0}^{\eta_{s^\star}-1} \left|  \ind(S'_t = s)  - \ind(S^\star_{t + \eta_{\term} - \eta_{s^\star}} = s) \right| \\
         & = 2 \sum_{t=0}^{\eta_{s^\star}-1} \ind(S'_t \neq S^\star_{t + \eta_{\term} - \eta_{s^\star}}) \leq 2 \eta_{s^\star}
    \end{align*}
    using the fact that $S'_{\eta_{s^\star}} = S^\star_0, S'_{\eta_{s^\star}+1} = S^\star_1, \dots$ to cancel terms.
    Combining the bounds for the two cases with~\eqref{eq:dow_coupling_1}, we have that
    \begin{align*}
        \sum_{s \in \S} \left|d^\pi_{\gamma,s_0}(s) - d^\pi_{\gamma,s^\star}(s) \right| 
        &\leq  \E 2 \eta_{s^\star} \leq 2 \Dow(P, \pi)
    \end{align*}
    as desired. 

    The second statement of the lemma follows immediately from the first, since by triangle inequality
    \begin{align*}
        \sup_{s_0, s_1} \sum_{s \in \S} \left| d_{s_0}(s) - d_{s_1}(s)\right| = \sup_{s_0, s_1} \onenorm{d_{s_0} - d_{s_1}} \leq \sup_{s_0, s_1} \onenorm{d_{s_0} - d_{s^\star}} + \onenorm{d_{s^\star} - d_{s_1}} \leq 4 \Dow(P, \pi).
    \end{align*}
\end{proof}

\begin{lem}
\label{lem:occupancy_l1_diff_coupling_bound_stationary}
    Let $\pi$ be a policy such that $P_\pi$ is unichain, and let $\mu^\pi \in \R^\S$ denote its stationary distribution. Then
    \begin{align*}
            \sum_{s \in \S} \left|d^\pi_{\gamma, s_0}(s) - \frac{1}{1-\gamma}\mu^\pi(s)\right| \leq 4\Dow(P, \pi).
    \end{align*}
\end{lem}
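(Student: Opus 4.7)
The plan is to reduce this to Lemma \ref{lem:occupancy_l1_diff_coupling_bound} by observing that the normalized stationary distribution $\frac{1}{1-\gamma}\mu^\pi$ can itself be written as an average of the discounted occupancy measures $d^\pi_{\gamma, s_1}$ when $s_1$ is drawn from $\mu^\pi$. Specifically, since $\mu^\pi$ is stationary for $P_\pi$ we have $\mu^\pi P_\pi^t = \mu^\pi$ for every $t \geq 0$, so
\begin{align*}
\sum_{s_1 \in \S} \mu^\pi(s_1)\, d^\pi_{\gamma, s_1}(s)
&= \sum_{t=0}^\infty \gamma^t \sum_{s_1 \in \S} \mu^\pi(s_1) \, e_{s_1}^\top P_\pi^t e_s
= \sum_{t=0}^\infty \gamma^t \mu^\pi(s)
= \frac{\mu^\pi(s)}{1-\gamma}.
\end{align*}

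With this identity in hand, the proof is just a triangle-inequality / Jensen step. I would write
\begin{align*}
\sum_{s \in \S} \left| d^\pi_{\gamma, s_0}(s) - \frac{\mu^\pi(s)}{1-\gamma} \right|
&= \sum_{s \in \S} \left| \sum_{s_1 \in \S} \mu^\pi(s_1) \bigl(d^\pi_{\gamma, s_0}(s) - d^\pi_{\gamma, s_1}(s)\bigr) \right| \\
&\leq \sum_{s_1 \in \S} \mu^\pi(s_1) \sum_{s \in \S} \bigl| d^\pi_{\gamma, s_0}(s) - d^\pi_{\gamma, s_1}(s) \bigr|,
\end{align*}
and then invoke Lemma \ref{lem:occupancy_l1_diff_coupling_bound}, whose second inequality bounds each inner sum by $4 \Dow(P, \pi)$ uniformly over $s_0, s_1$. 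Since the weights $\mu^\pi(s_1)$ sum to $1$, the full expression is at most $4 \Dow(P, \pi)$, as required.

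There is essentially no obstacle here once one recognizes the averaging identity: the unichain assumption guarantees existence and uniqueness of $\mu^\pi$, and it guarantees $\Dow(P, \pi) < \infty$ by Lemma \ref{lem:Dow_finite_unichain}, so the bound is meaningful. The only minor care point is justifying the interchange of the sum over $t$ with the sum over $s_1$ when deriving the averaging identity, but all terms are nonnegative so Tonelli applies. No further coupling argument is needed because the heavy lifting was already done in the proof of Lemma \ref{lem:occupancy_l1_diff_coupling_bound}.
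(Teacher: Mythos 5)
Your proposal is correct and follows essentially the same route as the paper: both establish the averaging identity $\sum_{s_1} \mu^\pi(s_1)\, d^\pi_{\gamma,s_1} = \frac{1}{1-\gamma}\mu^\pi$ from stationarity, apply Jensen/triangle inequality, and then invoke the uniform bound $\sup_{s_0,s_1}\onenorm{d_{s_0}-d_{s_1}} \leq 4\Dow(P,\pi)$ from Lemma \ref{lem:occupancy_l1_diff_coupling_bound}. No gaps.
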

\begin{proof}
    Since $\mu^\pi$ is a stationary distribution, we have for any $s \in \S$ that
    \begin{align*}
        \sum_{s' \in \S} \mu^\pi(s') d^\pi_{\gamma, s'}(s) = (\mu^\pi)^\top (I - \gamma P_\pi)^{-1} e_s = (\mu^\pi)^\top \sum_{t=0}^\infty \gamma^t P_\pi^t e_s = \sum_{t=0}^\infty \gamma^t (\mu^\pi)^\top e_s = \frac{1}{1-\gamma}\mu^\pi(s)
    \end{align*}
    (since $(\mu^\pi)^\top P_\pi = (\mu^\pi)^\top$). Then we can calculate by Jensen's inequality that for any fixed $s \in \S$,
    \begin{align*}
        \left| d^\pi_{\gamma, s_0}(s) - \frac{1}{1-\gamma}\mu^\pi(s) \right| &= \left| d^\pi_{\gamma, s_0}(s) - \sum_{s' \in \S} \mu^\pi(s') d^\pi_{\gamma, s'}(s) \right| \\
        &= \left| \sum_{s' \in \S} \mu^\pi(s') \left( d^\pi_{\gamma, s_0}(s) - d^\pi_{\gamma, s'}(s) \right) \right| \\
        & \leq \sum_{s' \in \S} \mu^\pi(s') \left|  d^\pi_{\gamma, s_0}(s) - d^\pi_{\gamma, s'}(s)  \right| .
    \end{align*}
    Therefore
    \begin{align*}
        \sum_{s \in \S} \left|d^\pi_{\gamma, s_0}(s) - \frac{1}{1-\gamma}\mu^\pi(s)\right| &\leq \sum_{s \in \S}\sum_{s' \in \S} \mu^\pi(s') \left|  d^\pi_{\gamma, s_0}(s) - d^\pi_{\gamma, s'}(s)  \right| = \sum_{s' \in \S} \mu^\pi(s') \sum_{s \in \S} \left|  d^\pi_{\gamma, s_0}(s) - d^\pi_{\gamma, s'}(s)  \right| \\
        & \leq \sum_{s' \in \S} \mu^\pi(s') 4 \Dow(P, \pi) = 4 \Dow(P, \pi)
    \end{align*}
    where in the second inequality step we used Lemma \ref{lem:occupancy_l1_diff_coupling_bound}.
\end{proof}

\begin{lem}
\label{lem:dow_span_bound}
    For any policy $\pi$, $\spannorm{h^\pi} \leq 4\Dow(P, \pi)$.
\end{lem}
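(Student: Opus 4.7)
First I would dispense with the non-unichain case: by Lemma \ref{lem:Dow_finite_unichain}, if $P_\pi$ is not unichain then $\Dow(P,\pi) = \infty$ and the inequality is vacuous. So assume henceforth that $P_\pi$ is unichain, in which case $\rho^\pi$ is constant (a multiple of $\one$) and $\Dow(P,\pi)$ is finite.

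The main strategy is to transfer the $\ell^1$-bound on discounted occupancy differences from Lemma \ref{lem:occupancy_l1_diff_coupling_bound} to a span bound on $V^\pi_\gamma$, and then pass to the large-$\gamma$ limit to recover $h^\pi$. Concretely, using $V^\pi_\gamma = (I-\gamma P_\pi)^{-1} r_\pi$, we have
\begin{align*}
    V^\pi_\gamma(s_0) = \sum_{s\in\S} d^\pi_{\gamma,s_0}(s)\, r_\pi(s),
\end{align*}
so that for any $s_0, s_1 \in \S$,
\begin{align*}
    \bigl|V^\pi_\gamma(s_0) - V^\pi_\gamma(s_1)\bigr|
    \leq \onenorm{d^\pi_{\gamma,s_0} - d^\pi_{\gamma,s_1}}\,\infnorm{r_\pi}
    \leq 4\,\Dow(P,\pi),
\end{align*}
where the last step uses Lemma \ref{lem:occupancy_l1_diff_coupling_bound} and $\infnorm{r_\pi}\leq 1$. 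Taking the supremum over $s_0, s_1$ gives $\spannorm{V^\pi_\gamma} \leq 4\,\Dow(P,\pi)$ uniformly in $\gamma \in [0,1)$.

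Finally, since $P_\pi$ is unichain, standard MDP theory \citep{puterman_markov_1994} gives the Laurent expansion $V^\pi_\gamma = \tfrac{1}{1-\gamma}\rho^\pi\one + h^\pi + o(1)$ as $\gamma\to 1$. Because the span semi-norm is invariant to additive multiples of $\one$, we have $\spannorm{V^\pi_\gamma} = \spannorm{V^\pi_\gamma - \tfrac{1}{1-\gamma}\rho^\pi\one}$, and the continuity of $\spannorm{\cdot}$ on $\R^\S$ yields
\begin{align*}
    \spannorm{h^\pi} = \lim_{\gamma \to 1}\spannorm{V^\pi_\gamma - \tfrac{\rho^\pi}{1-\gamma}\one} = \lim_{\gamma \to 1}\spannorm{V^\pi_\gamma} \leq 4\,\Dow(P,\pi),
\end{align*}
which is the desired bound.

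The only subtle step is the invocation of the Laurent expansion / large-$\gamma$ limit for unichain policies; if one wishes to avoid citing it, an alternative direct route is to fix an $s^\star$ achieving $\Dow(P,\pi) = \Dow(P,\pi,s^\star)$ and use the Poisson equation $(I-P_\pi)h^\pi = r_\pi - \rho^\pi\one$ together with the strong Markov property at $\eta_{s^\star}$ to write $h^\pi(s_0) - h^\pi(s^\star) = \E^\pi_{s_0}\sum_{t=0}^{\eta_{s^\star}-1}(r_\pi(S_t) - \rho^\pi)$, which is bounded in absolute value by $\E^\pi_{s_0}\eta_{s^\star} \leq \Dow(P,\pi)$, so $\spannorm{h^\pi} \leq 2\Dow(P,\pi) \leq 4\Dow(P,\pi)$. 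I would prefer the discounted-limit argument because it reuses Lemma \ref{lem:occupancy_l1_diff_coupling_bound} directly.
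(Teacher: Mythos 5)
Your main argument is essentially identical to the paper's proof: both reduce to the unichain case via Lemma \ref{lem:Dow_finite_unichain}, write $V^\pi_\gamma(s)-V^\pi_\gamma(s') = (d^\pi_{\gamma,s}-d^\pi_{\gamma,s'})r_\pi$, apply H\"older together with Lemma \ref{lem:occupancy_l1_diff_coupling_bound}, and pass to the limit $\gamma\to 1^-$ using the standard fact that $h^\pi(s)-h^\pi(s')=\lim_{\gamma\to 1^-}\bigl(V^\pi_\gamma(s)-V^\pi_\gamma(s')\bigr)$ (the paper cites \citet[Corollary 8.2.4]{puterman_markov_1994} where you invoke the Laurent expansion). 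The alternative Poisson-equation route you sketch at the end is a valid and slightly sharper variant (constant $2$ instead of $4$), but your preferred argument matches the paper's.
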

\begin{proof}
    Note that by Lemma \ref{lem:Dow_finite_unichain}, if $P_\pi$ is not unichain then $\Dow(P, \pi) = \infty$ and so the desired bound holds trivially (note $\spannorm{h^\pi}$ is always finite). So we can now focus on the case that $P_\pi$ is unichain. This implies $\rho^\pi$ is a state-independent constant. In this case it is a standard fact (e.g. \cite[Corollary 8.2.4]{puterman_markov_1994}) that for any $s,s' \in \S$,
    \begin{align*}
        h^\pi(s) - h^\pi(s') = \lim_{\gamma \to 1^-} V_\gamma^\pi(s) - V_\gamma^\pi(s').
    \end{align*}
    Therefore
    \begin{align*}
        \spannorm{h^\pi} &= \max_{s,s'\in \S}h^\pi(s) - h^\pi(s') \\
        &= \max_{s,s'\in \S} \lim_{\gamma \to 1^-} V_\gamma^\pi(s) - V_\gamma^\pi(s') \\
        &= \max_{s,s'\in \S} \lim_{\gamma \to 1^-} e_s^\top (I - \gamma P_\pi)^{-1} r_\pi - e_{s'}^\top (I - \gamma P_\pi)^{-1} r_\pi \\
        &= \max_{s,s'\in \S} \lim_{\gamma \to 1^-} (d^\pi_{\gamma, s} - d^\pi_{\gamma, s'}) r_\pi \\
        &\leq \max_{s,s'\in \S} \lim_{\gamma \to 1^-} \onenorm{d^\pi_{\gamma, s} - d^\pi_{\gamma, s'}} \infnorm{r_\pi} \\
        &\leq \max_{s,s'\in \S} \lim_{\gamma \to 1^-} 4 \Dow(P, \pi) \\
        &= 4 \Dow(P, \pi)
    \end{align*}
    where the inequality steps are by Holder's inequality and Lemma \ref{lem:occupancy_l1_diff_coupling_bound}.
\end{proof}

\subsubsection{Relationship between policy hitting radius and uniform mixing time}
\label{sec:tmix_dow_reln}
Here we argue that there is generally no relationship between the policy hitting radius and the mixing time. First, if $P_\pi$ is a unichain and periodic Markov chain, then the mixing time will be infinite/undefined whereas $\Dow(P, \pi) < \infty$ by Lemma \ref{lem:Dow_finite_unichain}.

Now we show an example where the mixing time can be arbitrarily smaller than the policy hitting radius. Suppose that $P$, $\pi$ are defined so that $P_\pi$ is the random walk on the complete graph on $L$ nodes, where $L$ is any positive integer. Then $\mu^\pi(s) = 1/L$ for all $s \in \S$, and after just one step from any starting state we have that $S_1$ has distribution $\mu^\pi$ so $\tmixsingle(\pi)=1$. However, for any fixed starting state $s_0$ and any state $s \neq s_0$, we have that $\eta_{s} \sim \text{Geom}(1/L)$, so $\E^\pi_{s_0} \eta_s = L$, and hence $\Dow(P, \pi) = L$.

\subsection{Error analysis}

Now we can continue with analyzing the relationship between $\Qhpe^\star$ and $\rho^{\pistar}$, for a comparator policy $\pistar$. Having established pessimism (Lemma \ref{lem:pessimism}), which implies an upper bound on $\Qhpe^\star$, we now seek to lower-bound this quantity.
Since (by Lemma \ref{lem:pessimistic_Bellman_operator_properties}) $ \Qhpe^\star \geq \Qhpe^{\pistar}$, it suffices to lower-bound $\Qhpe^{\pistar}$ in terms of $V^{\pistar}$, which is then related to $\rho^{\pistar}$. 

\begin{lem}
\label{lem:truncated_variance_bound}
For any probability distribution $\mu \in \Delta^{\S}$, any $V \in \R^\S$, and any $\beta \in [0,1]$, we have that
    \begin{align*}
        \Var_{\mu} \left[ T_{\beta}(\mu, V) \right] \leq \Var_{\mu} \left[ V \right].
    \end{align*}
\end{lem}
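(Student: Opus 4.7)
The plan is to first reduce the statement to the fact that pointwise clipping from above never increases variance, and then verify this standard fact using the symmetrization identity $\Var_\mu[X] = \tfrac{1}{2}\E_{\mu\otimes\mu}[(X(S_1)-X(S_2))^2]$.

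First I would inspect the definition \eqref{eq:qc_operator_defn} and observe that, for $\beta \in [0,1]$, the value
$\tau := \sup\{V(s') : s' \in \S,\ \sum_{s'' : V(s'') \ge V(s')} \mu(s'') \ge \beta\}$
does not depend on the outer argument $s$, so $T_\beta(\mu, V) = \min\{V, \tau\}$ pointwise (as a vector in $\R^\S$). Hence it suffices to prove the general fact: for any $V \in \R^\S$, any probability distribution $\mu$, and any threshold $\tau \in \R \cup \{+\infty\}$, writing $V^\tau := \min\{V, \tau\}$ pointwise,
\begin{equation*}
    \Var_\mu[V^\tau] \le \Var_\mu[V].
\end{equation*}

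To prove this, I would use the two-sample representation of variance. Let $S_1, S_2$ be i.i.d.\ with law $\mu$. Then
\begin{equation*}
    \Var_\mu[V] = \tfrac{1}{2}\, \E\!\left[(V(S_1) - V(S_2))^2\right], \qquad \Var_\mu[V^\tau] = \tfrac{1}{2}\, \E\!\left[(V^\tau(S_1) - V^\tau(S_2))^2\right].
\end{equation*}
Thus it is enough to show the deterministic pointwise inequality $|V^\tau(s) - V^\tau(s')| \le |V(s) - V(s')|$ for every $s,s' \in \S$. This is a short case analysis on whether $V(s)$ and $V(s')$ lie above or below $\tau$: if both are $\le \tau$ or both are $> \tau$, the left side equals $|V(s)-V(s')|$ or $0$; and in the mixed case (say $V(s) \le \tau < V(s')$), we get $|V^\tau(s) - V^\tau(s')| = \tau - V(s) \le V(s') - V(s)$. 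Squaring and taking expectations yields the claim.

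I do not expect any genuine obstacle here; the only subtlety is correctly identifying $T_\beta(\mu,V)$ with a pure upward clip in the regime $\beta \in [0,1]$, since the outer $\min$ with $V(s)$ in \eqref{eq:qc_operator_defn} is vacuous (the supremum $\tau$ is always $\ge V(s)$ for at least one element and the outer $\min$ simply enforces $V^\tau(s) = V(s)$ when $V(s) \le \tau$, which coincides with the formula $\min\{V(s),\tau\}$). Once this reduction is made, the coupling/symmetrization argument is a clean two-line computation.
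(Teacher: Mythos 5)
Your proposal is correct. The reduction is the same as the paper's: for $\beta\in[0,1]$ the quantile threshold $\tau$ in \eqref{eq:qc_operator_defn} is independent of the outer argument $s$ (and the defining set is nonempty since a minimizer of $V$ always carries total mass $1\geq\beta$), so $T_\beta(\mu,V)=\min\{V,\tau\}$ pointwise, and the lemma reduces to the fact that clipping from above cannot increase variance. Where you diverge is in the proof of that core fact. The paper writes $X = T + \Delta$ with $T=\min(X,a)$ and $\Delta = X-T\geq 0$, expands $\Var[X]=\Var[T]+\Var[\Delta]+2\,\mathrm{Cov}(T,\Delta)$, and checks $\mathrm{Cov}(T,\Delta)\geq 0$ by splitting on the event $\{X\geq a\}$. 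You instead use the symmetrization identity $\Var_\mu[X]=\tfrac12\E[(X(S_1)-X(S_2))^2]$ together with the $1$-Lipschitzness of $x\mapsto\min(x,\tau)$. Both are two-line arguments; yours is arguably the more transparent and more general one, since it immediately yields that \emph{any} contraction of $V$ (in the sense $|f(V(s))-f(V(s'))|\leq|V(s)-V(s')|$) has no larger variance, whereas the covariance-sign argument is tailored to the specific structure of the clip. Neither buys anything extra for the rest of the paper, so the choice is a matter of taste.
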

\begin{proof}
    We prove this by showing the more general statement that for any random variable $X$ and any scalar $a$,
    \[
    \Var \left[ \min(X,a)\right] \leq \Var \left[ X\right].
    \]
    Let $T = \min(X,a)$ and $\Delta = X - T$. Then
    \begin{align*}
        \Var \left[ X\right] &= \Var \left[ T\right] + \Var \left[ \Delta\right] + 2 \text{Cov}(T, \Delta).
    \end{align*}
    Thus to show $ \Var \left[ X\right] \geq \Var \left[ T\right]$ it suffices to show that $\text{Cov}(T, \Delta) \geq 0$. Now we compute
    \begin{align*}
        \text{Cov}(T, \Delta) &= \E \left[ \Delta (T - \E T)\right] \\
        &= \E \left[ \Delta (T - \E T)\ind\{X \geq a\}\right]+ \E \left[ \Delta (T - \E T)\ind\{X < a\}\right].
    \end{align*}
    On the event $\{X < a\}$ we have $\Delta = 0$, so $\E \left[ \Delta (T - \E T)\ind\{X < a\}\right] = 0$. On the event $\{X \geq a\}$, $(T - \E T) \geq 0$ since $T=a$ and $\E T \leq a$, and $\Delta \geq 0$, so $\E \left[ \Delta (T - \E T)\ind\{X \geq a\}\right] \geq 0$. Therefore $\text{Cov}(T, \Delta) \geq 0$ as desired.
\end{proof}

\begin{lem}
    \label{lem:Qhpe_pistar_almost_sim_lemma}
    Fix any deterministic policy $\pistar$.
    Under the event in Lemma \ref{lem:concentration_for_pessimism_Thpe_pistar},
    \begin{align*}
        V^{\pistar} - \Vhpe^{\pistar} \leq (I - \gamma P_{\pistar})^{-1} \gamma \bt_{\pistar}
    \end{align*}
    where
    \begin{align*}
        \bt_{\pistar}(s) = 2\sqrt{\beta(s,\pistar(s))  \Var_{P_{s\pistar(s)}} \left[ \Vhpe^{\pistar} \right]} + 4\beta(s,\pistar(s)) \spannorm{\Vhpe^{\pistar}} + \frac{12}{\ntot} .
    \end{align*}
    We also have that
    \begin{align}
        \Vhpe^{\pistar} - \gamma P_{\pistar} \Vhpe^{\pistar} + \gamma \bt_{\pistar} \geq r_{\pistar}. \label{eq:Vhpe_pistar_lb_sim_lem}
    \end{align}
\end{lem}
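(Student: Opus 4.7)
The plan is to first establish the pointwise inequality \eqref{eq:Vhpe_pistar_lb_sim_lem}, and then derive the first bound by inverting $I - \gamma P_{\pistar}$. Given \eqref{eq:Vhpe_pistar_lb_sim_lem}, subtracting the exact Bellman equation $(I - \gamma P_{\pistar})V^{\pistar} = r_{\pistar}$ gives $(I - \gamma P_{\pistar})(V^{\pistar} - \Vhpe^{\pistar}) \leq \gamma \bt_{\pistar}$. Since $(I - \gamma P_{\pistar})^{-1} = \sum_{t \geq 0}\gamma^{t} P_{\pistar}^{t}$ has nonnegative entries, multiplying both sides on the left preserves the inequality and yields the first claim.

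For \eqref{eq:Vhpe_pistar_lb_sim_lem}, I fix $s$ and set $a = \pistar(s)$. Since $\pistar$ is deterministic, $M^{\pistar}\Qhpe^{\pistar} = \Vhpe^{\pistar}$ and the fixed-point identity \eqref{eq:defn_of_Thpe_pi} reads
\[
\Vhpe^{\pistar}(s) = r(s,a) + \gamma \max\bigl\{ \Phat_{sa} T_{\beta(s,a)}(\Phat_{sa}, \Vhpe^{\pistar}) - b(s,a,\Vhpe^{\pistar}),\; \min_{s'}\Vhpe^{\pistar}(s') \bigr\}.
\]
If $\beta(s,a) > 1$ (so $n(s,a)$ is too small for the concentration of Lemma~\ref{lem:concentration_for_pessimism_Thpe_pistar} to apply at $(s,a)$), then $4\beta(s,a)\spannorm{\Vhpe^{\pistar}} \geq \spannorm{\Vhpe^{\pistar}} \geq P_{sa}\Vhpe^{\pistar} - \min_{s'}\Vhpe^{\pistar}(s')$, and taking the second branch of the $\max$ already yields \eqref{eq:Vhpe_pistar_lb_sim_lem}. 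If $\beta(s,a) \leq 1$, I instead lower-bound the $\max$ by its first branch and then replace the empirical quantities with their population counterparts.

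For the $\Phat T_{\beta}$ term, I chain the quantile-clipping inequality $\Phat_{sa} T_{\beta(s,a)}(\Phat_{sa},\Vhpe^{\pistar}) \geq \Phat_{sa}\Vhpe^{\pistar} - \beta(s,a)\spannorm{\Vhpe^{\pistar}}$ (stated immediately after \eqref{eq:penalty_term_defn}) with the Bernstein bound \eqref{eq:concentration_for_pessimism_Thpe_pistar_bernstein} to pass from $\Phat_{sa}\Vhpe^{\pistar}$ to $P_{sa}\Vhpe^{\pistar}$ at the cost of a $\sqrt{\beta(s,a)\Var_{P_{sa}}[\Vhpe^{\pistar}]}$ term plus lower-order span and $O(1/\ntot)$ contributions. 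For the penalty $b(s,a,\Vhpe^{\pistar})$, I upper-bound the inner $\max$ by the sum, apply Lemma~\ref{lem:truncated_variance_bound} to obtain $\Var_{\Phat_{sa}}[T_{\beta(s,a)}(\Phat_{sa},\Vhpe^{\pistar})] \leq \Var_{\Phat_{sa}}[\Vhpe^{\pistar}]$, use the trivial $\spannorm{T_{\beta(s,a)}(\Phat_{sa},\Vhpe^{\pistar})} \leq \spannorm{\Vhpe^{\pistar}}$, and then invoke \eqref{eq:concentration_for_pessimism_Thpe_pistar_variance} to replace $\sqrt{\Var_{\Phat_{sa}}[\Vhpe^{\pistar}]}$ by $\sqrt{\Var_{P_{sa}}[\Vhpe^{\pistar}]}$ plus a $\sqrt{\beta(s,a)}\spannorm{\Vhpe^{\pistar}}$-type correction. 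Adding these contributions gives $\Vhpe^{\pistar}(s) \geq r(s,a) + \gamma P_{sa}\Vhpe^{\pistar} - \gamma \bt_{\pistar}(s)$, which stacks over $s$ to produce \eqref{eq:Vhpe_pistar_lb_sim_lem}.

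The main obstacle will be the careful constant bookkeeping rather than any conceptual difficulty. The Bernstein-type and variance-conversion inequalities each contribute multiple lower-order terms of the forms $\sqrt{\beta(s,a)\Var_{P_{sa}}[\Vhpe^{\pistar}]}$, $\beta(s,a)\spannorm{\Vhpe^{\pistar}}$, $\sqrt{\beta(s,a)}\spannorm{\Vhpe^{\pistar}}$, and $c/\ntot$; I will need to consolidate these using $\sqrt{\beta(s,a)} \leq 1$ in the relevant regime and $2\sqrt{xy}\leq x+y$ to match the specific coefficients $2$, $4$, and $12$ in the stated definition of $\bt_{\pistar}$.
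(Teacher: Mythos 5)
Your proposal is correct and follows essentially the same route as the paper's proof: establish the elementwise inequality \eqref{eq:Vhpe_pistar_lb_sim_lem} via the fixed-point identity for $\Thpe^{\pistar}$, splitting into the cases $\beta(s,\pistar(s))\leq 1$ (quantile-clipping lower bound plus the Bernstein and variance-conversion bounds from Lemma \ref{lem:concentration_for_pessimism_Thpe_pistar}, together with Lemma \ref{lem:truncated_variance_bound} to control the penalty) and $\beta(s,\pistar(s))>1$ (trivial span bound), then invert $(I-\gamma P_{\pistar})$. The remaining work is exactly the constant bookkeeping you identify, which the paper carries out the same way.
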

\begin{proof}
    Fix $s \in \S, a \in \A$. First we handle the case that $\beta(s,a) \leq 1$. This implies that $n(s,a) \geq 1 + \alpha = 1 + 8 \log \left( \frac{6S^2A\ntot}{(1-\gamma)\delta}\right)$.
    By the definition~\eqref{eq:defn_of_Thpe_pi} of $\Thpe^{\pistar}$ we have that
    \begin{align}
        \Qhpe^{\pistar}(s,a) \geq r(s,a) + \gamma \Phat_{sa} T_{\beta(s,a)}(\Phat_{sa}, \Vhpe^{\pistar}) - \gamma b(s,a,\Vhpe^{\pistar}). \label{eq:Qhpe_pistar_lb_1}
    \end{align}
    By the definition of $T_{\beta(s,a)} (\Phat_{sa}, \Vhpe^{\pistar})$ we have that (elementwise)
    \begin{align}
        \Phat_{sa} T_{\beta(s,a)}(\Phat_{sa}, \Vhpe^{\pistar}) &= \sum_{s'} \Phat_{sa}(s') T_{\beta(s,a)}(\Phat_{sa}, \Vhpe^{\pistar})(s') \nonumber\\
        &= \sum_{s':\Vhpe^{\pistar}(s') \leq Q_{\beta(s,a)}(\Phat_{sa}, \Vhpe^{\pistar})} \Phat_{sa}(s') T_{\beta(s,a)}(\Phat_{sa}, \Vhpe^{\pistar})(s') \nonumber\\
        & \qquad \qquad+ \sum_{s':\Vhpe^{\pistar}(s') > Q_{\beta(s,a)}(\Phat_{sa}, \Vhpe^{\pistar})} \Phat_{sa}(s') T_{\beta(s,a)}(\Phat_{sa}, \Vhpe^{\pistar})(s') \nonumber\\
        &= \sum_{s':\Vhpe^{\pistar}(s') \leq Q_{\beta(s,a)}(\Phat_{sa}, \Vhpe^{\pistar})} \Phat_{sa}(s') \Vhpe^{\pistar}(s') \nonumber\\
        & \qquad \qquad+ \sum_{s':\Vhpe^{\pistar}(s') > Q_{\beta(s,a)}(\Phat_{sa}, \Vhpe^{\pistar})} \Phat_{sa}(s') Q_{\beta(s,a)}(\Phat_{sa}, \Vhpe^{\pistar}) \nonumber\\
        &\geq \sum_{s':\Vhpe^{\pistar}(s') \leq Q_{\beta(s,a)}(\Phat_{sa}, \Vhpe^{\pistar})} \Phat_{sa}(s') \Vhpe^{\pistar}(s') \nonumber\\
        & \qquad \qquad+ \sum_{s':\Vhpe^{\pistar}(s') > Q_{\beta(s,a)}(\Phat_{sa}, \Vhpe^{\pistar})} \Phat_{sa}(s') \left(\Vhpe^{\pistar}(s') - \spannorm{\Vhpe^{\pistar}} \right) \nonumber\\
        & > \Phat_{sa} \Vhpe^{\pistar} - \beta(s,a) \spannorm{\Vhpe^{\pistar}} \label{eq:Vhpe_pistar_inner_prod_lb}
    \end{align}
    where in the final inequality we used that $\sum_{s':\Vhpe^{\pistar}(s') > Q_{\beta(s,a)}(\Phat_{sa}, \Vhpe^{\pistar})} \Phat_{sa}(s') < \beta(s,a)$. Using~\eqref{eq:concentration_for_pessimism_Thpe_pistar_bernstein} from Lemma \ref{lem:concentration_for_pessimism_Thpe_pistar} to relate $\Phat_{sa} \Vhpe^{\pistar}$ to $P_{sa} \Vhpe^{\pistar}$, we can further bound
    \begin{align}
        \Phat_{sa} T_{\beta(s,a)}(\Phat_{sa}, \Vhpe^{\pistar}) & \geq P_{sa} \Vhpe^{\pistar} - \left|(\Phat_{sa} - P_{sa}) \Vhpe^{\pistar} \right| - \beta(s,a) \spannorm{\Vhpe^{\pistar}} \nonumber \\
        & \geq P_{sa} \Vhpe^{\pistar} - \sqrt{\frac{2 \Var_{P_{sa}} \left[ \Vhpe^{\pistar} \right] \log \left(\frac{6S^2A\ntot}{(1-\gamma)\delta} \right)}{n(s,a)}}-\spannorm{\Vhpe^{\pistar}}\frac{\log \left(\frac{6S^2A\ntot}{(1-\gamma)\delta} \right)}{ 3n(s,a)} \nonumber \\
        &\qquad - \frac{3}{\ntot} - \beta(s,a) \spannorm{\Vhpe^{\pistar}} \nonumber \\
        & \geq P_{sa} \Vhpe^{\pistar} - \sqrt{ \beta(s,a) \Var_{P_{sa}} \left[ \Vhpe^{\pistar} \right]} - 2 \beta(s,a) \spannorm{\Vhpe^{\pistar}} - \frac{3}{\ntot}.
        \label{eq:Vhpe_pistar_inner_prod_lb_2}
    \end{align}    

    To finish lower-bounding~\eqref{eq:Qhpe_pistar_lb_1} we must also lower-bound $b(s,a,\Vhpe^{\pistar})$.
    It is immediate to see that $\spannorm{T_{\beta(s,a)} (\Phat_{sa}, \Vhpe^{\pistar})} \leq \spannorm{\Vhpe^{\pistar}}$, and also by Lemma \ref{lem:truncated_variance_bound} (since we are in the $\beta(s,a) \leq 1$ case) we have that $\Var_{\Phat_{sa}} \left[T_{\beta(s,a)}(\Phat_{sa}, \Vhpe^{\pistar}) \right] \leq \Var_{\Phat_{sa}} \left[\Vhpe^{\pistar}\right]$. These two facts yield that
    \begin{align}
        b(s,a,\Vhpe^{\pistar}) &= \max \left \{ \sqrt{\beta(s,a)  \Var_{\Phat_{sa}} \left[ T_{\beta(s,a)}( \Phat_{sa}, \Vhpe^{\pistar}) \right]} ,  \beta(s,a) \spannorm{T_{\beta(s,a)}( \Phat_{sa}, \Vhpe^{\pistar})} \right \} + \frac{5}{\ntot} \nonumber \\
        & \leq \max \left \{ \sqrt{\beta(s,a)  \Var_{\Phat_{sa}} \left[ \Vhpe^{\pistar} \right]} ,  \beta(s,a) \spannorm{\Vhpe^{\pistar}} \right \} + \frac{5}{\ntot} \nonumber \\
        & \leq  \sqrt{\beta(s,a)  \Var_{\Phat_{sa}} \left[ \Vhpe^{\pistar} \right]} + \beta(s,a) \spannorm{\Vhpe^{\pistar}} + \frac{5}{\ntot}. \label{eq:Vhpe_pistar_b_bd_1}
    \end{align}
    Furthermore, using the bound~\eqref{eq:concentration_for_pessimism_Thpe_pistar_variance} from Lemma \ref{lem:concentration_for_pessimism_Thpe_pistar}, we can further bound~\eqref{eq:Vhpe_pistar_b_bd_1} as
    \begin{align}
        &b(s,a,\Vhpe^{\pistar}) \nonumber\\
        & \leq  \sqrt{\beta(s,a)  \Var_{\Phat_{sa}} \left[ \Vhpe^{\pistar} \right]} + \beta(s,a) \spannorm{\Vhpe^{\pistar}} + \frac{5}{\ntot} \nonumber\\
        & \leq \sqrt{\beta(s,a)} \left( \sqrt{\Var_{P_{sa}} \left[\Vhpe^{\pistar}\right]} + \spannorm{\Vhpe^{\pistar}} \sqrt{\frac{2 \log \left( \frac{6S^2A\ntot}{(1-\gamma)\delta}\right)}{n(s,a)}} + \frac{4}{\ntot} \right) + \beta(s,a) \spannorm{\Vhpe^{\pistar}} + \frac{5}{\ntot} \nonumber \\
        & \leq  \sqrt{\beta(s,a)} \left( \sqrt{\Var_{P_{sa}} \left[\Vhpe^{\pistar}\right]} + \spannorm{\Vhpe^{\pistar}} \sqrt{\beta(s,a)} + \frac{4}{\ntot} \right) + \beta(s,a) \spannorm{\Vhpe^{\pistar}} + \frac{5}{\ntot} \nonumber \\
        & \leq \sqrt{\beta(s,a)  \Var_{P_{sa}} \left[ \Vhpe^{\pistar} \right]} + 2 \beta(s,a) \spannorm{\Vhpe^{\pistar}} + \frac{9}{\ntot} \label{eq:Vhpe_pistar_b_bd_2}
    \end{align}
    (using the definition of $\beta(s,a)$ and the fact that we are in the $\beta(s,a) \leq 1$ case).

    Combining~\eqref{eq:Vhpe_pistar_b_bd_2} and~\eqref{eq:Vhpe_pistar_inner_prod_lb_2} with~\eqref{eq:Qhpe_pistar_lb_1} we obtain that
    \begin{align*}
        \Qhpe^{\pistar}(s,a) &\geq r(s,a) + \gamma P_{sa} \Vhpe^{\pistar} - 2 \gamma \sqrt{ \beta(s,a) \Var_{P_{sa}} \left[ \Vhpe^{\pistar} \right]} - 4 \gamma \beta(s,a) \spannorm{\Vhpe^{\pistar}} - \frac{12 \gamma}{\ntot}  \\
        &= r(s,a) + \gamma P_{sa} \Vhpe^{\pistar} - \gamma \bt(s,a)
    \end{align*}
    where we define $\bt(s,a) = \sqrt{ \beta(s,a) \Var_{P_{sa}} \left[ \Vhpe^{\pistar} \right]} + 4  \beta(s,a) \spannorm{\Vhpe^{\pistar}} + \frac{12 }{\ntot} $.

    Now for the simpler case that $\beta(s,a) > 1$, we have that
    \begin{align*}
        \Qhpe^{\pistar}(s,a) &= r(s,a) + \gamma \min_{s'} \Vhpe^{\pistar}(s') \\
        &\geq  r(s,a) + \gamma P_{sa} \Vhpe^{\pistar} - \gamma \spannorm{\Vhpe^{\pistar}} \\
        &\geq  r(s,a) + \gamma P_{sa} \Vhpe^{\pistar} - \gamma \beta(s,a) \spannorm{\Vhpe^{\pistar}} \\
        & \geq  r(s,a) + \gamma P_{sa} \Vhpe^{\pistar} - \gamma \bt(s,a).
    \end{align*}

    Combining the two cases of $\beta(s,a)$, we have for all $s, a$ that $\Qhpe^{\pistar}(s,a)\geq  r(s,a) + \gamma P_{sa} \Vhpe^{\pistar} - \gamma \bt(s,a)$.
    Therefore by monotonicity of $M^{\pistar}$,
    \begin{align*}
        \Vhpe^{\pistar} = M^{\pistar} \Qhpe^{\pistar} \geq M^{\pistar} \left(r + \gamma P \Vhpe^{\pistar} - \gamma \bt \right) = r_{\pistar} + \gamma P_{\pistar} \Vhpe^{\pistar} - \gamma \bt_{\pistar}.
    \end{align*}
    We also have
    $\Vhpe^{\pistar} - \gamma P_{\pistar} \Vhpe^{\pistar} + \gamma \bt_{\pistar} \geq r_{\pistar}$, which will be needed later.
    By the Bellman equation for $\pistar$ we also have that $V^{\pistar} = r_{\pistar} + \gamma P_{\pistar} V^{\pistar}$. Combining these, rearranging, and using the monotonicity of multiplication by $(I-\gamma P_{\pistar})^{-1}$ (since all its entries are nonnegative), we obtain
    \begin{align*}
        & V^{\pistar} - \Vhpe^{\pistar}  \leq r_{\pistar} + \gamma P_{\pistar} V^{\pistar} - r_{\pistar} + \gamma \bt_{\pistar} - \gamma P_{\pistar} \Vhpe^{\pistar} = \gamma \bt_{\pistar} + \gamma P_{\pistar} (V^{\pistar} - \Vhpe^{\pistar}) \\
        \implies & (I-\gamma P_{\pistar}) (V^{\pistar} - \Vhpe^{\pistar}) \leq \gamma \bt_{\pistar} \\
        \implies & V^{\pistar} - \Vhpe^{\pistar}  \leq (I-\gamma P_{\pistar})^{-1} \gamma \bt_{\pistar}
    \end{align*}
    as desired.
\end{proof}

\begin{lem}
\label{lem:error_analysis_empirical_span}
    Fix a deterministic unichain policy $\pistar$.
    Suppose that for all $s \in \S$, $n(s, \pistar(s)) \geq m \mu^{\pistar}(s) + 4 + 4 \Dow(P, \pistar)$, $\frac{1}{1-\gamma} \geq m$, and $\frac{1}{1-\gamma} \geq 2$. Then under the event in Lemma \ref{lem:concentration_for_pessimism_Thpe_pistar}, we have that
    \begin{align*}
        &\max_{s_0 \in \S} \left( V^{\pistar}(s_0) - \Vhpe^{\pistar}(s_0) \right) \\
        &\leq  \frac{1}{1-\gamma}\sqrt{\frac{2048 S \spannorm{\Vhpe^{\pistar}} \log \left( \frac{6S^2A\ntot}{(1-\gamma)\delta}\right)}{m}} +  \frac{640 S \spannorm{\Vhpe^{\pistar}}  \log \left( \frac{6S^2A\ntot}{(1-\gamma)\delta}\right)}{(1-\gamma)m} + \frac{12}{(1-\gamma)\ntot}.
    \end{align*}
\end{lem}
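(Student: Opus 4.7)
The plan is to combine the matrix inequality $V^{\pistar}-\Vhpe^{\pistar} \leq \gamma(I-\gamma P_{\pistar})^{-1}\bt_{\pistar}$ from Lemma~\ref{lem:Qhpe_pistar_almost_sim_lemma} with occupancy-measure estimates. Evaluating at a starting state $s_0$ and recognizing $e_{s_0}^\top(I-\gamma P_{\pistar})^{-1}$ as the discounted occupancy row vector $d^{\pistar}_{\gamma,s_0}$ from Appendix~\ref{sec:pol_hit_rad_lemmas} yields $V^{\pistar}(s_0)-\Vhpe^{\pistar}(s_0) \leq \gamma\sum_s d^{\pistar}_{\gamma,s_0}(s)\bt_{\pistar}(s)$. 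Expanding the definition of $\bt_{\pistar}$ splits the right-hand side into three summands aligned with the three summands of the claim: a variance piece $2\sum_s d_s\sqrt{\beta(s,\pistar(s))\Var_{P_{s\pistar(s)}}[\Vhpe^{\pistar}]}$, a linear span piece $4\spannorm{\Vhpe^{\pistar}}\sum_s d_s\beta(s,\pistar(s))$, and the trivial piece $(12/\ntot)\sum_s d_s = 12/((1-\gamma)\ntot)$, which already realizes the third summand exactly. The first two pieces I bound separately.

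The workhorse estimate is $\Sigma_\beta(s_0):=\sum_s d^{\pistar}_{\gamma,s_0}(s)\beta(s,\pistar(s)) \lesssim \alpha S/((1-\gamma)m)$. I prove this by splitting the sum over recurrent states $\{s:\mu^{\pistar}(s)>0\}$ and transient states $\{s:\mu^{\pistar}(s)=0\}$. The coverage hypothesis gives $\beta(s,\pistar(s))\leq \alpha/(m\mu^{\pistar}(s))$ on the recurrent set and $\beta(s,\pistar(s))\leq \alpha/(4\Dow(P,\pistar))$ uniformly. Writing $d^{\pistar}_{\gamma,s_0}(s) = \frac{1}{1-\gamma}\mu^{\pistar}(s) + (d^{\pistar}_{\gamma,s_0}(s)-\frac{1}{1-\gamma}\mu^{\pistar}(s))$ and invoking Lemma~\ref{lem:occupancy_l1_diff_coupling_bound_stationary} to bound the $\ell_1$-deviation by $4\Dow(P,\pistar)$ (and in particular $\sum_{s:\mu^{\pistar}(s)=0}d^{\pistar}_{\gamma,s_0}(s)\leq 4\Dow(P,\pistar)$), the recurrent contribution is at most $\alpha S/((1-\gamma)m)+O(\alpha)$ while the transient contribution is at most $(\alpha/(4\Dow(P,\pistar)))\cdot 4\Dow(P,\pistar) = \alpha$. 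The hypothesis $\frac{1}{1-\gamma}\geq m$ absorbs the $O(\alpha)$ pieces into the leading term. Plugging this directly into the linear span piece of $\bt_{\pistar}$ reproduces the middle summand of the claim.

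For the variance piece I apply Cauchy--Schwarz,
\[
\sum_s d_s\sqrt{\beta_s\Var_s}\ \leq\ \sqrt{\Sigma_\beta(s_0)}\ \cdot\ \sqrt{\sum_s d_s\Var_{P_{s\pistar(s)}}[\Vhpe^{\pistar}]},
\]
and bound the total-variance factor using the shifted inequality $\Var_{P_s}[V]\leq \spannorm{V}(P_s V-\min V)$ summed against $d^{\pistar}_{\gamma,s_0}$ via the identity $\gamma\, d^{\pistar}_{\gamma,s_0}P_{\pistar} = d^{\pistar}_{\gamma,s_0}-e_{s_0}^\top$, which restates the defining relation for the occupancy vector. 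This yields a control on $\sum_s d_s\Var_s$ in terms of $\spannorm{\Vhpe^{\pistar}}$ and $1/(1-\gamma)$; combining with $\Sigma_\beta(s_0)$ and the a priori bound $\Vhpe^{\pistar}\in[0,1/(1-\gamma)]^{\S}$ from Lemma~\ref{lem:pessimistic_Bellman_operator_properties} produces the leading $\frac{1}{1-\gamma}\sqrt{S\spannorm{\Vhpe^{\pistar}}\alpha/m}$ term. Collecting the three bounds, multiplying by $\gamma\leq 1$, and taking $\sup_{s_0}$ (whose effect enters only through the uniformly $\Dow(P,\pistar)$-bounded $\onenorm{d^{\pistar}_{\gamma,s_0}-\frac{1}{1-\gamma}\mu^{\pistar}}$) delivers the claimed three-term inequality.

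I expect the main obstacle to be the variance step: the naive Cauchy--Schwarz paired with the crude $\Var\leq\spannorm{V}^2/4$ bound produces $\spannorm{\Vhpe^{\pistar}}$ outside the square root rather than $\sqrt{\spannorm{\Vhpe^{\pistar}}}$ inside it, so the claimed scaling requires carefully exploiting the ceiling $\spannorm{\Vhpe^{\pistar}}\leq 1/(1-\gamma)$ together with the refined variance estimate above (and possibly a self-referential use of Lemma~\ref{lem:Qhpe_pistar_almost_sim_lemma} to trade factors of $\spannorm{\Vhpe^{\pistar}}$ for $1/(1-\gamma)$ via $\Vhpe^{\pistar}\leq V^{\pistar}$). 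Secondary obstacles are bookkeeping the explicit constants $2048$, $640$, and $12$, and confirming that the $\Dow$-dependent lower-order corrections can all be consolidated under the assumptions $\frac{1}{1-\gamma}\geq m$ and $\frac{1}{1-\gamma}\geq 2$.
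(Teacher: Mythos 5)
Your overall architecture matches the paper's: start from Lemma \ref{lem:Qhpe_pistar_almost_sim_lemma}, convert the coverage hypothesis into control of the occupancy-weighted $\beta$'s via Lemma \ref{lem:occupancy_l1_diff_coupling_bound_stationary} (your recurrent/transient split is a legitimate variant of the paper's pointwise inequality $n(s,\pistar(s))\geq(1-\gamma)m\,d^{\pistar}_{\gamma,s_0}(s)$), apply Cauchy--Schwarz, and read off the third summand exactly. The gap is precisely where you predicted it, and your proposed fix does not close it.

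Your ``refined'' variance estimate $\Var_{P_{s\pistar(s)}}[V]\leq\spannorm{V}\,(P_{s\pistar(s)}V-\min V)$, summed against $d^{\pistar}_{\gamma,s_0}$ via the resolvent identity, gives
\[
\sum_{s} d^{\pistar}_{\gamma,s_0}(s)\,\Var_{P_{s\pistar(s)}}\left[\Vhpe^{\pistar}\right]\;\leq\;\spannorm{\Vhpe^{\pistar}}\cdot\frac{1}{\gamma}\left(\langle d^{\pistar}_{\gamma,s_0},\Vc\rangle-\Vc(s_0)\right)\;\leq\;\frac{\spannorm{\Vhpe^{\pistar}}^2}{\gamma(1-\gamma)},
\]
which is still quadratic in the span; after Cauchy--Schwarz this yields a leading term of order $\frac{\spannorm{\Vhpe^{\pistar}}}{1-\gamma}\sqrt{S\alpha/m}$, exceeding the claimed $\frac{1}{1-\gamma}\sqrt{S\spannorm{\Vhpe^{\pistar}}\alpha/m}$ by a factor of $\sqrt{\spannorm{\Vhpe^{\pistar}}}$, which is unbounded in the regime of interest. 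Neither the ceiling $\spannorm{\Vhpe^{\pistar}}\leq\frac{1}{1-\gamma}$ nor trading a span factor for $\frac{1}{1-\gamma}$ via $\Vhpe^{\pistar}\leq V^{\pistar}$ helps, since both substitutions only enlarge the bound. What is actually needed is the second-moment (total-variance) telescoping: decompose $\langle d,\Var\rangle=\langle d,P_{\pistar}\Vc^{\circ 2}-\tfrac{1}{\gamma^2}\Vc^{\circ 2}\rangle+\tfrac{1}{\gamma^2}\langle d,\Vc^{\circ 2}-\gamma^2(P_{\pistar}\Vc)^{\circ 2}\rangle$, note the first term is $\leq 0$ by the resolvent identity, factor the second as $(\Vc-\gamma P_{\pistar}\Vc)\circ(\Vc+\gamma P_{\pistar}\Vc)$, bound one factor by $2\spannorm{\Vhpe^{\pistar}}$, and telescope the other using the approximate Bellman inequality~\eqref{eq:Vhpe_pistar_lb_sim_lem}. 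This gives $\langle d,\Var\rangle\leq\frac{2\spannorm{\Vhpe^{\pistar}}}{\gamma^2(1-\gamma)}+\frac{4\spannorm{\Vhpe^{\pistar}}}{\gamma}\langle d,\bt_{\pistar}\rangle$, linear in the span; the second summand makes the resulting inequality self-bounding in $x=\sqrt{\langle d,\bt_{\pistar}\rangle}$, and solving the quadratic produces the stated constants. Without this step the lemma as stated is not reached.
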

\begin{proof}
    First we note that, using Lemma \ref{lem:Qhpe_pistar_almost_sim_lemma}, we have
    \begin{align*}
        \max_{s_0 \in \S} \left( V^{\pistar}(s_0) - \Vhpe^{\pistar}(s_0) \right) \leq \max_{s_0 \in \S} e_{s_0}^\top (I - \gamma P_{\pistar})^{-1} \gamma \bt_{\pistar} = \max_{s_0 \in \S} \left \langle d^{\pistar}_{\gamma,s_0} , \bt_{\pistar}\right\rangle.
    \end{align*}
    We will now fix some arbitrary $s_0 \in \S$ and try to bound $\left \langle d^{\pistar}_{\gamma,s_0}, \bt_{\pistar}\right\rangle$.
    By the assumptions in the lemma statement we have that for all $s \in \S$,
    \begin{align*}
        n(s, \pistar(s)) &\geq m \mu^{\pistar}(s) + 4 \Dow(P, \pistar) = (1-\gamma)m \frac{1}{1-\gamma}\mu^{\pistar}(s) + 4 \Dow(P, \pistar) \\
        & \geq (1-\gamma)m d^{\pistar}_{\gamma,s_0}(s) - (1-\gamma)m \left| d^{\pistar}_{\gamma,s_0}(s) - \frac{1}{1-\gamma}\mu^{\pistar}(s)\right| + 4 \Dow(P, \pistar) \\
        & \geq (1-\gamma)m d^{\pistar}_{\gamma,s_0}(s) - (1-\gamma)m 4 \Dow(P, \pistar) + 4 \Dow(P, \pistar) \\
        & \geq (1-\gamma)m d^{\pistar}_{\gamma,s_0}(s)
    \end{align*}
    where the third inequality is a consequence of Lemma \ref{lem:occupancy_l1_diff_coupling_bound_stationary}. For convenience we will let $C := (1-\gamma)m$, and so we have shown that $n(s, \pistar(s)) \geq C d^{\pistar}_{\gamma,s_0}(s)$ for all $s \in \S$. Also for convenience abbreviate $\ell = \log \left( \frac{6S^2A\ntot}{(1-\gamma)\delta}\right)$. Using the fact that $n(s, \pistar(s)) \geq 4$ which implies
    $\frac{1}{\max\{n(s, \pistar(s))-1, 1\}}= \frac{1}{n(s, \pistar(s))-1} \leq \frac{4/3}{n(s, \pistar(s))}\leq \frac{2}{n(s, \pistar(s))}$, we can simplify $\bt_{\pistar}$ as
    \begin{align*}
        \bt_{\pistar}(s) &= 2\sqrt{\beta(s,\pistar(s))  \Var_{P_{s\pistar(s)}} \left[ \Vhpe^{\pistar} \right]} + 4\beta(s,\pistar(s)) \spannorm{\Vhpe^{\pistar}} + \frac{12}{n} \\
        &= 2\sqrt{\frac{8 \ell}{n(s, \pistar(s))-1}  \Var_{P_{s\pistar(s)}} \left[ \Vhpe^{\pistar} \right]} + 4\frac{8 \ell}{n(s, \pistar(s))-1} \spannorm{\Vhpe^{\pistar}} + \frac{12}{\ntot} \\
        & \leq 2\sqrt{\frac{16 \ell}{n(s, \pistar(s))}  \Var_{P_{s\pistar(s)}} \left[ \Vhpe^{\pistar} \right]} + 4\frac{16 \ell}{n(s, \pistar(s))} \spannorm{\Vhpe^{\pistar}} + \frac{12}{\ntot}.
    \end{align*}
    Using this and the fact that $n(s, \pistar(s)) \geq C d^{\pistar}_{\gamma,s_0}(s)$ for all $s \in \S$, we have
    \begin{align}
        \left \langle d^{\pistar}_{\gamma,s_0} , \bt_{\pistar}\right\rangle &\leq \sum_{s \in \S} d^{\pistar}_{\gamma,s_0}(s) \left( 2\sqrt{\frac{16 \ell}{n(s, \pistar(s))}  \Var_{P_{s\pistar(s)}} \left[ \Vhpe^{\pistar} \right]} + 4\frac{16 \ell}{n(s, \pistar(s))} \spannorm{\Vhpe^{\pistar}} + \frac{12}{\ntot} \right)\nonumber\\
        & \leq \sum_{s \in \S} d^{\pistar}_{\gamma,s_0}(s) \left( 2\sqrt{\frac{16 \ell}{C d^{\pistar}_{\gamma,s_0}(s)}  \Var_{P_{s\pistar(s)}} \left[ \Vhpe^{\pistar} \right]} + 4\frac{16 \ell}{C d^{\pistar}_{\gamma,s_0}(s)} \spannorm{\Vhpe^{\pistar}} + \frac{12}{\ntot} \right) \nonumber \\
        &= \sum_{s \in \S} 2\sqrt{d^{\pistar}_{\gamma,s_0}(s)  \frac{16 \ell}{C}  \Var_{P_{s\pistar(s)}} \left[ \Vhpe^{\pistar} \right]} +  S4\frac{16 \ell}{C} \spannorm{\Vhpe^{\pistar}} + \sum_{s \in \S} d^{\pistar}_{\gamma,s_0}(s) \frac{12}{\ntot} \nonumber\\
        & \leq   \sqrt{\frac{64 S \ell}{C}} \sqrt{ \sum_{s \in \S}d^{\pistar}_{\gamma,s_0}(s) \Var_{P_{s\pistar(s)}} \left[ \Vhpe^{\pistar} \right]} +  \frac{64 S \ell}{C} \spannorm{\Vhpe^{\pistar}} +  \frac{12}{(1-\gamma) \ntot} \label{eq:error_analysis_empirical_span_1}
    \end{align}
    where in the final inequality we used Cauchy-Schwarz to bound the first term.

    Now we focus on bounding the quantity $\sum_{s \in \S}d^{\pistar}_{\gamma,s_0}(s) \Var_{P_{s\pistar(s)}} \left[ \Vhpe^{\pistar} \right]$.
    Let $c = \min_{s \in \S} \Vhpe^{\pistar}(s)$ and $\Vc = \Vhpe^{\pistar} - c\one$. Then
    \begin{align}
        \Vc \circ \Vc - \gamma^2 P_{\pistar} \Vc \circ P_{\pistar} \Vc &= (\Vc - \gamma P_{\pistar} \Vc) \circ (\Vc + \gamma P_{\pistar} \Vc) \nonumber \\
        & \leq (\Vc - \gamma P_{\pistar} \Vc + \gamma \bt_{\pistar} + (1-\gamma)c \one) \circ (\Vc + \gamma P_{\pistar} \Vc) \nonumber\\
        & \leq 2 \infnorm{\Vc} (\Vc - \gamma P_{\pistar} \Vc + \gamma \bt_{\pistar} + (1-\gamma)c \one) \label{eq:error_analysis_empirical_span_var_term_bd}
    \end{align}
    where for the first inequality we used that $\Vc + \gamma P_{\pistar} \Vc \geq \zero$ and that $\bt_{\pistar} + (1-\gamma)c \one \geq \zero$, and for the second inequality we used that $\Vc + \gamma P_{\pistar} \Vc \leq 2 \infnorm{\Vc} \one$ and that $\Vc - \gamma P_{\pistar} \Vc + \gamma \bt_{\pistar} + (1-\gamma)c \one \geq \zero$, which follows from the fact that
    \begin{align*}
        \Vc - \gamma P_{\pistar} \Vc + \gamma \bt_{\pistar} + (1-\gamma)c \one = \Vhpe^{\pistar} - \gamma P_{\pistar} \Vhpe^{\pistar} + \gamma \bt_{\pistar} \geq r_{\pistar} \geq \zero
    \end{align*}
    using~\eqref{eq:Vhpe_pistar_lb_sim_lem} in the inequality step.
    Thus
    \begin{align}
        \left\langle d^{\pistar}_{\gamma,s_0}, \Var_{P_{\pistar}} \left[ \Vhpe^{\pistar} \right]  \right\rangle &= \left\langle d^{\pistar}_{\gamma,s_0}, \Var_{P_{\pistar}} \left[ \Vc \right]  \right\rangle \nonumber\\
        &= \left\langle d^{\pistar}_{\gamma,s_0}, P_{\pistar} (\Vc)^{\circ 2} - (P_{\pistar} \Vc)^{\circ 2}  \right\rangle \nonumber\\
        &= \left\langle d^{\pistar}_{\gamma,s_0}, P_{\pistar} (\Vc)^{\circ 2} - \frac{1}{\gamma^2} (\Vc)^{\circ 2} + \frac{1}{\gamma^2 } \left((\Vc)^{\circ 2}  - \gamma^2(P_{\pistar} \Vc)^{\circ 2}  \right)\right \rangle \nonumber\\
        & \stackrel{(i)}{\leq} \left\langle d^{\pistar}_{\gamma,s_0}, P_{\pistar} (\Vc)^{\circ 2} - \frac{1}{\gamma^2} (\Vc)^{\circ 2} + \frac{1}{\gamma^2 } 2 \infnorm{\Vc} (\Vc - \gamma P_{\pistar} \Vc + \gamma \bt_{\pistar} + (1-\gamma)c \one) \right \rangle \nonumber\\
        & \stackrel{(ii)}{\leq}\left\langle d^{\pistar}_{\gamma,s_0},  \frac{1}{\gamma^2 } 2 \infnorm{\Vc} (\Vc - \gamma P_{\pistar} \Vc + \gamma \bt_{\pistar} + (1-\gamma)c \one) \right \rangle \nonumber\\
        &= \frac{2 \infnorm{\Vc}}{\gamma^2} e_{s_0}^\top (I - \gamma P_{\pistar})^{-1} \left((I - \gamma P_{\pistar}) \Vc + \gamma \bt_{\pistar} + (1-\gamma )c\one \right) \nonumber \\
        &= \frac{2 \infnorm{\Vc}}{\gamma^2} e_{s_0}^\top (I - \gamma P_{\pistar})^{-1} \left((I - \gamma P_{\pistar}) \Vhpe^\star + \gamma \bt_{\pistar}  \right) \nonumber \\
        &= \frac{2 \infnorm{\Vc}}{\gamma^2}  e_{s_0}^\top \Vhpe^\star + \frac{4 \infnorm{\Vc}}{\gamma^2} \langle d^{\pistar}_{\gamma,s_0}, \gamma \bt_{\pistar} \rangle \nonumber\\
        &\leq \frac{2 \infnorm{\Vc}}{\gamma^2}\frac{1}{1-\gamma}   + \frac{4 \infnorm{\Vc}}{\gamma} \langle d^{\pistar}_{\gamma,s_0},  \bt_{\pistar} \rangle. \label{eq:error_analysis_empirical_span_2}
    \end{align}
    In $(i)$ we use~\eqref{eq:error_analysis_empirical_span_var_term_bd} and in $(ii)$ we use that
    \begin{align*}
        \left\langle d^{\pistar}_{\gamma,s_0}, P_{\pistar} (\Vc)^{\circ 2} - \frac{1}{\gamma^2} (\Vc)^{\circ 2} \right \rangle & \leq \left\langle d^{\pistar}_{\gamma,s_0}, P_{\pistar} (\Vc)^{\circ 2} - \frac{1}{\gamma} (\Vc)^{\circ 2} \right \rangle \\
        &=  \frac{1}{\gamma} e_{s_0}^\top(I - \gamma P_{\pistar})^{-1} (\gamma P_{\pistar} - I) (\Vc)^{\circ 2} \leq  0.
    \end{align*}

    Combining the bound~\eqref{eq:error_analysis_empirical_span_2} with~\eqref{eq:error_analysis_empirical_span_1} (and noting that $\infnorm{\Vc} = \spannorm{\Vhpe^{\pistar}}$), we obtain that
    \begin{align*}
        \left \langle d^{\pistar}_{\gamma,s_0} , \bt_{\pistar}\right\rangle 
        & \leq   \sqrt{\frac{64 S \ell}{C}} \sqrt{ \frac{2 \spannorm{\Vhpe^{\pistar}}}{\gamma^2}\frac{1}{1-\gamma}   + \frac{4 \spannorm{\Vhpe^{\pistar}}}{\gamma} \left \langle d^{\pistar}_{\gamma,s_0} , \bt_{\pistar}\right\rangle} \\
        & \qquad \qquad+  \frac{64 S \ell}{C} \spannorm{\Vhpe^{\pistar}} +  \frac{12}{(1-\gamma) \ntot} \\
        & \leq \sqrt{\frac{512 \spannorm{\Vhpe^{\pistar}} S \ell}{C}} \left(\sqrt{\frac{1}{1-\gamma}} + \sqrt{\left \langle d^{\pistar}_{\gamma,s_0} , \bt_{\pistar}\right\rangle} \right) \\
        & \qquad \qquad+  \frac{64 S \ell}{C} \spannorm{\Vhpe^{\pistar}} +  \frac{12}{(1-\gamma) \ntot}
    \end{align*}
    where we simplified by using that $\sqrt{a + b} \leq \sqrt{a} + \sqrt{b}$ and that $\frac{1}{\gamma} \leq 2$ (since $\frac{1}{1-\gamma} \geq 2$ implies that $\gamma \geq \frac{1}{2}$). The above is a quadratic inequality in $x := \sqrt{\left \langle d^{\pistar}_{\gamma,s_0} , \bt_{\pistar}\right\rangle}$ of the form
    \begin{align*}
        x^2 \leq x\sqrt{8y} + \sqrt{\frac{8y}{1-\gamma}} + y + \frac{12}{(1-\gamma)\ntot}
    \end{align*}
    where $y = \frac{64 S \spannorm{\Vhpe^{\pistar}} \ell}{C}$. From the quadratic formula we obtain that
    \begin{align*}
        x \leq \frac{\sqrt{8y} + \sqrt{8y + 4\left(\sqrt{\frac{8y}{1-\gamma}} + y + \frac{12}{(1-\gamma)\ntot} \right)}}{2}
    \end{align*}
    and then squaring both sides we obtain that
    \begin{align*}
         \left \langle d^{\pistar}_{\gamma,s_0} , \bt_{\pistar}\right\rangle = x^2 &\leq \frac{\left( \sqrt{8y} + \sqrt{8y + 4\left(\sqrt{\frac{8y}{1-\gamma}} + y + \frac{12}{(1-\gamma)\ntot} \right)} \right)^2}{4} \\
        &\leq \frac{1}{2}\left(8y + 8y + 4\left(\sqrt{\frac{8y}{1-\gamma}} + y + \frac{12}{(1-\gamma)\ntot} \right) \right) \\
        &= 10y + \sqrt{\frac{32 y}{1-\gamma}} + \frac{12}{(1-\gamma)\ntot} \\
        &= 10 \frac{64 S \spannorm{\Vhpe^{\pistar}}  \ell}{C} + \sqrt{32\frac{64 S \spannorm{\Vhpe^{\pistar}} \ell}{C(1-\gamma)}} + \frac{12}{(1-\gamma)\ntot}
    \end{align*}
    using that $(a + b)^2 \leq 2a^2 + 2b^2$. Recalling the definitions of $C = (1-\gamma)m$ and $\ell = \log \left( \frac{6S^2A\ntot}{(1-\gamma)\delta}\right)$, and also since the above bound held for arbitrary $s_0$, we have thus shown that
    \begin{align*}
        &\max_{s_0 \in \S} \left( V^{\pistar}(s_0) - \Vhpe^{\pistar}(s_0) \right) \\
        &\leq  \frac{1}{1-\gamma}\sqrt{\frac{2048 S \spannorm{\Vhpe^{\pistar}} \log \left( \frac{6S^2A\ntot}{(1-\gamma)\delta}\right)}{m}} +  \frac{640 S \spannorm{\Vhpe^{\pistar}}  \log \left( \frac{6S^2A\ntot}{(1-\gamma)\delta}\right)}{(1-\gamma)m} + \frac{12}{(1-\gamma)\ntot}.
    \end{align*}
\end{proof}

\subsection{Controlling the empirical span}
While Lemma \ref{lem:error_analysis_empirical_span} is approaching the desired result, it involves the empirical span term $\spannorm{\Vhpe^{\pistar}}$ which we would like to bound in terms of $\spannorm{V^{\pistar}}$. Such a bound is the objective of this subsection, and makes crucial use of our assumption of data even for states which are transient under $P_{\pistar}$.

\begin{lem}
\label{lem:dow_stability}
    Fix a deterministic unichain policy $\pistar$.
    Suppose that $n(s, \pistar(s)) \geq 72(\Dow(P, \pistar))^2 \log \left(\frac{2S}{\delta} \right)$ for all $s \in \S$. Then with probability at least $1 - \delta$,
    \begin{align*}
        \Dow(\Phat, \pistar) \leq 24 \Dow(P, \pistar).
    \end{align*}
\end{lem}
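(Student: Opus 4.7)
My approach uses the hitting-time value function of the true chain as a Lyapunov-like certificate for the empirical chain, combined with one-vector Hoeffding concentration. First, let $D := \Dow(P,\pistar)$, which is finite by Lemma~\ref{lem:Dow_finite_unichain}, and pick $s^\star \in \S$ attaining the infimum in the definition of $\Dow(P,\pistar)$. Define $v(s) := \E_s^{\pistar}[\eta_{s^\star}]$, so that $v(s^\star)=0$, $0 \leq v \leq D\one$, and the Bellman-like identity $v(s) - P_{s,\pistar(s)} v = 1$ holds for every $s \neq s^\star$.

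The key observation is that $v$ is a deterministic function of $P$ and $\pistar$, so concentrating $\Phat v$ around $Pv$ does not require a uniform TV-type bound (which would cost an additional $\sqrt S$). For each $s$, the samples $v(S_{s,\pistar(s)}^i)$ are i.i.d.\ in $[0,D]$, so Hoeffding's inequality gives
\begin{align*}
    \bigl|(\Phat_{s,\pistar(s)} - P_{s,\pistar(s)}) v\bigr| \leq D\sqrt{\frac{\log(2S/\delta)}{2\, n(s,\pistar(s))}}
\end{align*}
with probability at least $1-\delta/S$. The hypothesis $n(s,\pistar(s)) \geq 72 D^2 \log(2S/\delta)$ reduces this bound to $1/12$, and a union bound over $s \in \S$ secures the inequality simultaneously on an event $\mathcal{E}$ of probability at least $1-\delta$.

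Work on $\mathcal{E}$ henceforth. For every $s \neq s^\star$, the Bellman identity rewrites as $v(s) - \Phat_{s,\pistar(s)} v = 1 - (\Phat_{s,\pistar(s)} - P_{s,\pistar(s)}) v \geq 11/12$. Under the empirical chain $\Phat_{\pistar}$, the process $M_t := v(S_t) + \sum_{k=0}^{t-1}\bigl(v(S_k) - \Phat_{S_k,\pistar(S_k)} v\bigr)$ is a martingale. Set $\tau := \eta_{s^\star}$; optional stopping applied to the bounded stopping time $\tau \wedge T$ yields
\begin{align*}
    v(s_0) = \E M_{\tau \wedge T} = \E\bigl[v(S_{\tau \wedge T})\bigr] + \E\Bigl[\sum_{k=0}^{\tau \wedge T - 1} \bigl(v(S_k) - \Phat_{S_k,\pistar(S_k)} v\bigr)\Bigr] \geq \tfrac{11}{12}\,\E[\tau \wedge T],
\end{align*}
since $v \geq 0$ and $S_k \neq s^\star$ for all $k < \tau$. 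Rearranging gives $\E^{\Phat,\pistar}_{s_0}[\tau \wedge T] \leq \tfrac{12}{11} v(s_0) \leq \tfrac{12}{11} D$, and monotone convergence as $T \to \infty$ extends this to $\E^{\Phat,\pistar}_{s_0}[\eta_{s^\star}] \leq \tfrac{12}{11} D$. Taking the supremum over $s_0$ yields $\Dow(\Phat,\pistar) \leq \tfrac{12}{11} D \leq 24 D$ on $\mathcal{E}$.

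The main subtlety is justifying optional stopping without knowing a priori that $\eta_{s^\star}$ is $\Phat_{\pistar}$-a.s.\ finite; stopping first at $\tau \wedge T$ (bounded) and then passing to the limit via monotone convergence avoids this issue, and the resulting finite expectation retroactively certifies that $\Phat_{\pistar}$ is unichain with $s^\star$ in its recurrent class.
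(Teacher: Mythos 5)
Your proof is correct, and it takes a genuinely different (and arguably cleaner) route than the paper's. The paper also fixes the optimal center state $s^\star$ and applies one Hoeffding bound per state to a deterministic vector of span at most $\Dow(P,\pistar)$ --- there it is the discounted value function of an auxiliary MDP with reward $\ind(s\neq s^\star)$ and $s^\star$ made absorbing, at discount $\overline{\gamma}=1-1/(12\Dow(P,\pistar))$ --- then uses a simulation-lemma perturbation to control the same value function under $\Phat$, converts that into the tail bound $\max_{s_0}\P_{s_0,\Mhat}(\eta_{s^\star}\geq k)\leq \tfrac12$ at $k=12\Dow(P,\pistar)$, and finally turns the tail bound into an expectation bound via the geometric-decay argument, which is where the factor $24$ comes from. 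You instead use the exact hitting-time function $v(s)=\E^{\pistar}_s[\eta_{s^\star}]$ as a Lyapunov certificate: the first-step identity $v-P_{\pistar}v=\one$ off $s^\star$ survives the perturbation to $\Phat_{\pistar}$ with drift at least $11/12$ on your event $\mathcal{E}$, and optional stopping of the associated martingale at $\tau\wedge T$ followed by monotone convergence gives $\E^{\Phat,\pistar}_{s_0}[\eta_{s^\star}]\leq \tfrac{12}{11}\Dow(P,\pistar)$ directly. Both arguments spend the same concentration budget (a single fixed test vector of range $\Dow(P,\pistar)$ per state, hence the identical sample-size requirement and failure probability), but yours avoids the auxiliary discounted MDP and the probability-to-expectation conversion, and yields the sharper constant $12/11$ in place of $24$, which of course still implies the stated bound. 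Your handling of the optional-stopping subtlety --- bounded stopping time first, then monotone convergence, with finiteness of the expectation retroactively certifying recurrence of $s^\star$ under $\Phat_{\pistar}$ --- is exactly right.
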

\begin{proof}
    The proof of this lemma is inspired by that of \citet[Lemma 4]{zurek_plug-approach_2025}. For any MDP $\mathcal{M}$ and $s \in \S$ we let $E^\pi_{s_0, \mathcal{M}}$ denote the expectation with respect to the Markov chain induced by $\pi$ in the MDP $\mathcal{M}$ from starting state $s_0$, and similarly we let $\P^\pi_{s_0, \mathcal{M}}(E) = 
    \E^\pi_{s_0, \mathcal{M}} [\ind(E)]$ denote the associated probability measure. Let $s^\star \in \S$ satisfy $\Dow(P, \pistar) = \Dow(P, \pi, s^\star)$. 
    Let $\Mhat$ be the MDP $(\Phat, r)$. Then
    \begin{align}
        \Dow(\Phat, \pistar) \leq \Dow(\Phat, \pistar, s^\star) = \max_{s_0 \in \S} \E_{s_0, \Mhat}^\pi [\eta_{s^\star}]. \label{eq:dow_stability_1}
    \end{align}
    Supposing that $k \in \N$ satisfies $\max_{s_0 \in \S} \P_{s_0, \Mhat} (\eta_{s^\star} \geq k) \leq \frac{1}{2}$, then we have for any $s_0'$ that
    \begin{align}
        \E_{s_0', \Mhat}^\pi [\eta_{s^\star}] &= \sum_{t=0}^\infty  \P_{s_0', \Mhat}(\eta_{s^\star} >  t) \nonumber \\
        &= \sum_{i=0}^\infty \sum_{t=0}^{k-1}  \P_{s_0', \Mhat}(\eta_{s^\star} > ik +t) \nonumber \\
        &\leq \sum_{i=0}^\infty \sum_{t=0}^{k-1}  \P_{s_0', \Mhat}(\eta_{s^\star} > ik) \nonumber \\
        &= k \sum_{i=0}^\infty  \P_{s_0', \Mhat}(\eta_{s^\star} > ik) \nonumber \\
        & \leq k \sum_{i=0}^\infty  2^{-i} = 2k \label{eq:dow_stability_2}
    \end{align}
    where the final inequality step used that
    \begin{align*}
        \P_{s_0', \Mhat}(\eta_{s^\star} > ik) \leq \left(\max_{s_0 \in \S} \P_{s_0, \Mhat} (\eta_{s^\star} > k) \right)^i \leq 2^{-i}
    \end{align*}
    which follows from the following standard arguments: for any integer $i \geq 1$ (since this formula obviously holds for $i = 0$), we have
    \begin{align*}
         &\P_{s_0', \Mhat}(\eta_{s^\star} > ik) \\
         & \leq \P_{s_0', \Mhat}(\eta_{s^\star} \geq ik)\\
         &= \P_{s_0', \Mhat} \left(\text{$\eta_{s^\star} \not \in \{0, \dots, (i-1)k - 1\}$ and $\eta_{s^\star} \not \in \{(i-1)k, \dots, ik - 1\}$} \right) \\
         &\stackrel{(i)}{=} \E_{s_0', \Mhat} \P_{s_0', \Mhat} \left(\text{$\eta_{s^\star} \not \in \{0, \dots, (i-1)k - 1\}$ and $\eta_{s^\star} \not \in \{(i-1)k, \dots, ik - 1\}$} \mid \mathcal{F}_{(i-1)k}\right) \\
         &\stackrel{(ii)}{=} \E_{s_0', \Mhat}\left[ \ind \left(\text{$\eta_{s^\star} \not \in \{0, \dots, (i-1)k - 1\}$} \right)\P_{s_0', \Mhat} \left(\text{$\eta_{s^\star} \not \in \{(i-1)k, \dots, ik - 1\}$} \mid \mathcal{F}_{(i-1)k}\right)\right] \\
         &\stackrel{(iii)}{=} \E_{s_0', \Mhat}\left[ \ind \left(\text{$\eta_{s^\star} \not \in \{0, \dots, (i-1)k - 1\}$} \right)\P_{S_k, \Mhat} \left(\text{$\eta_{s^\star} \not \in \{0, \dots, k-1\}$} \right)\right] \\
         &\stackrel{(iv)}{\leq} \frac{1}{2}\E_{s_0', \Mhat}\left[ \ind \left(\text{$\eta_{s^\star} \not \in \{0, \dots, (i-1)k - 1\}$} \right) \right] \\
         &= \frac{1}{2}\P_{s_0', \Mhat}(\eta_{s^\star} \geq  (i-1)k)
    \end{align*}
    where $\mathcal{F}_{(i-1)k}$ is the sigma-algebra generated by $S_0, \dots, S_{(i-1)k}$, step $(i)$ is the tower property, step $(ii)$ is because the event $\eta_{s^\star} \not \in \{0, \dots, (i-1)k - 1\}$ is $\mathcal{F}_{(i-1)k}$-measurable, step $(iii)$ is the Markov property (e.g., \cite[Theorem 5.2.3]{durrett_probability_2019}), and step $(iv)$ is because
    $\P_{S_k, \Mhat} \left(\text{$\eta_{s^\star} \not \in \{0, \dots, k-1\}$} \right) = \P_{S_k, \Mhat}(\eta_{s^\star} \geq k) \leq \frac{1}{2}$ (this last inequality holding almost surely, due to the assumption that $\max_{s_0 \in \S} \P_{s_0, \Mhat} (\eta_{s^\star} \geq k) \leq \frac{1}{2}$).
    Since these arguments held for arbitrary $i$, we can repeat them to obtain the desired bound.

    Now we try to find such a $k$.
    Define the reward function $\raux$ by $\raux(s,a ) = \ind(s \neq s^\star)$ and also let $P'$ be the same transition matrix as $P$ except with state $s^\star$ made to be absorbing for all actions. Then, for some $\gammap$ to be chosen later, letting $V^{\pistar}_{\gammap, \Maux}$ be the discounted value function for policy $\pistar$ in MDP $\Maux = (P', \raux)$, and letting $\E^{\pistar}_{s_0, \Maux}, \E^{\pistar}_{s_0, \Mor}$ denote expectations with respect to the MDPs $\Maux$ and $\Mor$ respectively, we have that
    \begin{align*}
        V^{\pistar}_{\gammap, \Maux}(s_0) &= \E^{\pistar}_{s_0, \Maux} \sum_{t = 0}^\infty \gammap^t \ind(S_t \neq s^\star)\\
        &= \E^{\pistar}_{s_0, \Mor} \sum_{t = 0}^\infty \gammap^t \ind(\eta_{s^\star} > t) \\
        & \leq \E^{\pistar}_{s_0, \Mor} \sum_{t = 0}^\infty \ind(\eta_{s^\star} > t) \\
        &= \E^{\pistar}_{s_0, \Mor} [\eta_{s^\star}] \leq \Dow(P, \pistar, s^\star).
    \end{align*}
    This implies $\spannorm{V^{\pistar}_{\gammap, \Maux}} \leq \Dow(P, \pistar, s^\star)$, which will be needed shortly.

    Let $\Pt$ similarly be the same transition matrix as $\Phat$ except $s^\star$ is absorbing for all actions. Let $\Mt $ be the MDP $(\Pt, \raux)$. Then for any $k \in \N$ we have
    \begin{align*}
        V^{\pistar}_{\gammap, \Mt}(s_0) &= \E^{\pistar}_{s_0, \Mt} \sum_{t = 0}^\infty \gammap^t \ind(S_t \neq s^\star)\\
        &= \E^{\pistar}_{s_0, \Mhat} \sum_{t = 0}^\infty \gammap^t \ind(\eta_{s^\star} > t) \\
        &\geq \E^{\pistar}_{s_0, \Mhat} \sum_{t = 0}^{k-1} \gammap^t \ind(\eta_{s^\star} > t) \\
        &\geq \E^{\pistar}_{s_0, \Mhat} \sum_{t = 0}^{k-1} \gammap^{k-1} \ind(\eta_{s^\star} > k-1) \\
        &= k \gammap^{k-1} \P_{s_0, \Mhat} (\eta_{s^\star} > k-1).
    \end{align*}
    Rearranging this implies that
    \begin{align}
        \P_{s_0, \Mhat} (\eta_{s^\star} > k-1) \leq \frac{ V^{\pistar}_{\gammap, \Mt}(s_0)}{k \gammap^{k-1}} \leq \frac{ 3V^{\pistar}_{\gammap, \Mt}(s_0)}{k } \label{eq:dow_stability_3}
    \end{align}
    where for the second inequality we set $\gammap = 1 - \frac{1}{k}$ and used the fact that $(1-\frac{1}{k})^{k-1} \geq 1/e \geq 1/3$ for all integers $k > 1$. 

    Now we bound $V^{\pistar}_{\gammap, \Mt}(s_0)$ using concentration inequalities.
    For concreteness in the following application of Hoeffding we set $k = 12 \Dow(P, \pistar)$ so $\gamma = 1 - 1/(12 \Dow(P, \pistar))$.
    By Hoeffding's inequality, we have for any $s \neq s^\star$ that with probability at least $1 - \delta'$
    \begin{align*}
        \left| e_s^\top (\Pt_{\pistar} - P'_{\pistar}) V^{\pistar}_{\gammap, \Maux}\right| \leq \sqrt{\frac{\spannorm{V^{\pistar}_{\gammap, \Maux}}^2 \log \left(\frac{2}{\delta'} \right)}{2 n(s,\pistar(s))}} \leq \sqrt{\frac{(\Dow(P, \pistar, s^\star))^2 \log \left(\frac{2}{\delta'} \right)}{2 n(s,\pistar(s))}}
    \end{align*}
    and trivially we have $\left| e_{s^\star}^\top (\Pt_{\pistar} - P'_{\pistar}) V^{\pistar}_{\gamma, \Maux}\right| = 0$. Therefore by a union bound over all $s \in \S$ and setting $\delta' = \frac{\delta}{S}$, we have with probability at least $1 - \delta$ that
    \begin{align*}
        \infnorm{ (\Pt_{\pistar} - P'_{\pistar}) V^{\pistar}_{\gammap, \Maux}} \leq \min_{s \in \S} \sqrt{\frac{(\Dow(P, \pistar, s^\star))^2 \log \left(\frac{2S}{\delta} \right)}{2 n(s,\pistar(s))}} \leq \frac{1}{12}
    \end{align*}
    where the second inequality uses the condition that $n(s, \pistar(s)) \geq \frac{12^2}{2}(\Dow(P, \pistar, s^\star))^2 \log \left(\frac{2S}{\delta} \right) = 72(\Dow(P, \pistar))^2 \log \left(\frac{2S}{\delta} \right)$ for all $s \in \S$.

    Following standard arguments for the difference between two value functions with different transition matrices we have
    \begin{align*}
        V^{\pistar}_{\gammap, \Mt} - V^{\pistar}_{\gammap, \Maux} &= (I - \gammap \Pt_{\pistar})^{-1} \raux_{\pistar} - (I - \gammap P'_{\pistar})^{-1} \raux_{\pistar} \\
        &= (I - \gammap \Pt_{\pistar})^{-1} (I - \gammap P'_{\pistar}) (I - \gammap P'_{\pistar})^{-1} \raux_{\pistar} - (I - \gammap \Pt_{\pistar})^{-1} (I - \gammap \Pt_{\pistar}) (I - \gammap P'_{\pistar})^{-1} \raux_{\pistar} \\
        &= \gammap (I - \gammap \Pt_{\pistar})^{-1} (\Pt_{\pistar} - P'_{\pistar}) (I - \gammap P'_{\pistar})^{-1} \raux_{\pistar} \\
        &= \gammap (I - \gammap \Pt_{\pistar})^{-1} (\Pt_{\pistar} - P'_{\pistar}) V^{\pistar}_{\gammap, \Maux}.
    \end{align*}
    Hence
    \begin{align*}
        \infnorm{V^{\pistar}_{\gammap, \Mt} - V^{\pistar}_{\gammap, \Maux}} &= \infnorm{\gammap (I - \gammap \Pt_{\pistar})^{-1} (\Pt_{\pistar} - P'_{\pistar}) V^{\pistar}_{\gammap, \Maux}} \\
        & \leq \infinfnorm{\gammap (I - \gammap \Pt_{\pistar})^{-1}} \infnorm{ (\Pt_{\pistar} - P'_{\pistar}) V^{\pistar}_{\gammap, \Maux}} \\
        & \leq \frac{\gammap}{1-\gammap} \frac{1}{12} \\
        & \leq \frac{k}{12} = \Dow(P, \pistar, s^\star).
    \end{align*}
    Combining this with~\eqref{eq:dow_stability_3}, we have that
    \begin{align*}
        \max_{s_0 \in \S} \P_{s_0, \Mhat} (\eta_{s^\star} \geq k) &= 
        \max_{s_0 \in \S} \P_{s_0, \Mhat} (\eta_{s^\star} > k-1) \\
        &\leq \frac{3 \infnorm{V^{\pistar}_{\gammap, \Mt}}}{k} \leq \frac{3 \infnorm{V^{\pistar}_{\gammap, \Maux}}}{k} + \frac{3 \infnorm{V^{\pistar}_{\gammap, \Mt} - V^{\pistar}_{\gammap, \Maux}}}{k} \\
        &\leq \frac{3 \Dow(P, \pistar) + 3 \Dow(P, \pistar)}{12 \Dow(P, \pistar)} = \frac{1}{2}.
    \end{align*}
    
    Using $k = 12 \Dow(P, \pistar)$ in~\eqref{eq:dow_stability_2} and combining with~\eqref{eq:dow_stability_1}, we conclude that
    \begin{align*}
        \Dow(\Phat, \pistar) \leq  \max_{s_0 \in \S} \E_{s_0, \Mhat}^\pi [\eta_{s^\star}] \leq 2k = 24\Dow(P, \pistar)
    \end{align*}
    as desired.
\end{proof}

\begin{lem}
\label{lem:empirical_span_bound}
    Fix a deterministic unichain policy $\pistar$.
    Suppose that $n(s, \pistar(s))\geq 1 + \alpha \left(576\Dow(P, \pistar)\right)^2$ for all $s \in \S$, where $\alpha = 8\log \left( \frac{6S^2A\ntot}{(1-\gamma)\delta}\right)$. Then with probability at least $1 - 2\delta$,
    \begin{align*}
        \spannorm{\Vhpe^{\pistar}} \leq 3\spannorm{V^{\pistar}} + 2.
    \end{align*}
\end{lem}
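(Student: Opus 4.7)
Introduce the unpenalized empirical value function $\widetilde{V} := (I - \gamma \Phat_{\pistar})^{-1} r_{\pistar}$ and use the span-semi-norm triangle inequality
\[
    \spannorm{\Vhpe^{\pistar}} \leq \spannorm{V^{\pistar}} + \spannorm{\widetilde{V} - V^{\pistar}} + \spannorm{\widetilde{V} - \Vhpe^{\pistar}};
\]
I will show that both the perturbation term $\spannorm{\widetilde{V} - V^{\pistar}}$ and the pessimism gap $\spannorm{\widetilde{V} - \Vhpe^{\pistar}}$ are small under the given sample condition, and then solve for $\spannorm{\Vhpe^{\pistar}}$. The workhorse inequality is $\spannorm{(I - \gamma \Phat_{\pistar})^{-1} x} \leq 4 \Dow(\Phat, \pistar) \infnorm{x}$ for any $x \in \R^{\S}$, obtained by writing the left side as $\max_{s, s'} |(d^{\Phat,\pistar}_{\gamma, s} - d^{\Phat,\pistar}_{\gamma, s'})^\top x|$ and applying Lemma \ref{lem:occupancy_l1_diff_coupling_bound} inside the empirical MDP. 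Our sample condition implies the hypothesis of Lemma \ref{lem:dow_stability}, so with probability at least $1 - \delta$, $\Dow(\Phat, \pistar) \leq 24 \Dow(P, \pistar)$.

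For the perturbation term, the resolvent identity $\widetilde{V} - V^{\pistar} = \gamma (I - \gamma \Phat_{\pistar})^{-1}(\Phat_{\pistar} - P_{\pistar}) V^{\pistar}$ combined with an entrywise Hoeffding bound and a union bound over $s$ (valid since $V^{\pistar}$ does not depend on $\Phat$) yields, with probability at least $1-\delta$, $\infnorm{(\Phat_{\pistar} - P_{\pistar}) V^{\pistar}} \lesssim \spannorm{V^{\pistar}} / \Dow(P, \pistar)$ thanks to $n(s,\pistar(s)) \gtrsim \alpha \Dow(P, \pistar)^2$. The two $\Dow$ factors cancel, giving $\spannorm{\widetilde{V} - V^{\pistar}} \leq c_1 \spannorm{V^{\pistar}}$ for some small constant $c_1 < 1$. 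For the pessimism gap, first note $\Vhpe^{\pistar} \leq \widetilde{V}$ because every argument of the $\max$ defining $\Thpe^{\pistar}$ is at most $\Phat_{sa} M^{\pistar} Q$, so the usual monotonicity argument applies. Letting $E := \widetilde{V} - \Vhpe^{\pistar} \geq 0$ and subtracting the fixed-point equations of $\widetilde{V}$ and $\Vhpe^{\pistar}$, the inequality $\Phat_{sa}(V - T_{\beta(s,a)}(\Phat_{sa}, V)) \leq \beta(s,a) \spannorm{V}$ from the excerpt yields $(I - \gamma \Phat_{\pistar}) E \leq \gamma u$ with
\[
    u(s) := \beta(s, \pistar(s)) \spannorm{\Vhpe^{\pistar}} + b(s, \pistar(s), \Vhpe^{\pistar}).
\]
Writing $E(s) - E(s') = (d^{\Phat,\pistar}_{\gamma, s} - d^{\Phat,\pistar}_{\gamma, s'})^\top w$ for $w := (I - \gamma \Phat_{\pistar}) E \in [0, \gamma u]$ and invoking Lemma \ref{lem:occupancy_l1_diff_coupling_bound} once more gives $\spannorm{E} \leq 4 \Dow(\Phat, \pistar) \gamma \infnorm{u}$. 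Bounding $\infnorm{u} \leq 2 \sqrt{\beta_{\max}} \spannorm{\Vhpe^{\pistar}} + 5/\ntot$ via Lemma \ref{lem:truncated_variance_bound}, $\Var \leq (\spannorm{\cdot})^2/4$, and $\spannorm{T_\beta(\cdot)} \leq \spannorm{\cdot}$, then substituting $\sqrt{\beta_{\max}} \leq 1/(576 \Dow(P, \pistar))$ from the sample condition, the $\Dow$ factors cancel once more, leaving $\spannorm{E} \leq c_2 \spannorm{\Vhpe^{\pistar}} + O(\Dow(P, \pistar)/\ntot)$ with $c_2 < 1$.

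The main obstacle is exactly this $(I - \gamma \Phat_{\pistar})^{-1}$ inversion: a naive $\spannorm{E} \leq \infnorm{E} \leq \gamma \infnorm{u}/(1-\gamma)$ would be vacuous in our regime $1 - \gamma = 1/\ntot$. The span-specific coupling of Lemma \ref{lem:occupancy_l1_diff_coupling_bound} trades the forbidden $1/(1-\gamma)$ factor for the manageable $\Dow(\Phat, \pistar) \lesssim \Dow(P, \pistar)$ factor, which is then absorbed by the small $\sqrt{\beta_{\max}}$ and $1/\sqrt{n(s, \pistar(s))}$ factors produced by the sample condition $n(s, \pistar(s)) \geq 1 + \alpha(576 \Dow(P, \pistar))^2$. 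Substituting into the triangle inequality yields $\spannorm{\Vhpe^{\pistar}} \leq (1 + c_1)\spannorm{V^{\pistar}} + c_2 \spannorm{\Vhpe^{\pistar}} + O(\Dow(P, \pistar)/\ntot)$; rearranging and using $\ntot \geq S \alpha(576\Dow(P, \pistar))^2 \gg \Dow(P, \pistar)$ to bound the additive term by $2$ leaves $\spannorm{\Vhpe^{\pistar}} \leq 3 \spannorm{V^{\pistar}} + 2$, as required.
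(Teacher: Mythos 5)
Your proposal is correct and follows essentially the same route as the paper's proof: the paper's single resolvent identity $V^{\pistar} - \Vhpe^{\pistar} = \gamma (I-\gamma\Phat_{\pistar})^{-1}(P_{\pistar}-\Phat_{\pistar})V^{\pistar} + (I-\gamma\Phat_{\pistar})^{-1}\gamma\btp$ is exactly your two-term split through $\widetilde{V}$, and both arguments rest on the same ingredients (the span bound $\onenorm{(e_s-e_{s'})^\top(I-\gamma\Phat_{\pistar})^{-1}} \leq 4\Dow(\Phat,\pistar)$ from Lemma \ref{lem:occupancy_l1_diff_coupling_bound}, Lemma \ref{lem:dow_stability}, Hoeffding on $(\Phat_{\pistar}-P_{\pistar})V^{\pistar}$, and the $\sqrt{\beta}$-versus-$\Dow$ cancellation before rearranging the self-bounding inequality). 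The only difference is cosmetic packaging of the error decomposition.
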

\begin{proof}
    By the definition~\eqref{eq:defn_of_Thpe_pi} of $\Thpe^{\pistar}$, we have for any $s \in \S$ that
    \begin{align*}
        \Vhpe^{\pistar}(s) &= e_s^\top M^{\pistar} \Qhpe^{\pistar} \\
        &= e_s^\top M^{\pistar} \Thpe^{\pistar} \left(\Qhpe^{\pistar} \right) \\
        &= r(s, \pistar(s)) + \gamma \max \Big\{  \Phat_{s\pistar(s)} T_{\beta(s,\pistar(s))}( \Phat_{s\pistar(s)}, M^{\pistar} \Qhpe^{\pistar}) - b(s,\pistar(s), M^{\pistar} \Qhpe^{\pistar}), \\
        &\qquad\qquad\qquad\qquad\qquad\qquad\min_{s'}(M^{\pistar} \Qhpe^{\pistar})(s') \Big \} \\
        &= r_{\pistar}(s) + \gamma \max \Big\{  \Phat_{s\pistar(s)} T_{\beta(s,\pistar(s))}( \Phat_{s\pistar(s)}, \Vhpe^{\pistar}) - b(s,\pistar(s), \Vhpe^{\pistar}), \min_{s'}(\Vhpe^{\pistar})(s') \Big \} \\
        &= r_{\pistar}(s) + \gamma  \Phat_{s\pistar(s)} \Vhpe^{\pistar} + \gamma \max \Big\{  \Phat_{s\pistar(s)} \left(T_{\beta(s,\pistar(s))}( \Phat_{s\pistar(s)}, \Vhpe^{\pistar}) - \Vhpe^{\pistar} \right) - b(s,\pistar(s), \Vhpe^{\pistar}),\\
        & \qquad\qquad\qquad\qquad\qquad\qquad \min_{s'}(\Vhpe^{\pistar})(s') - \Phat_{s\pistar(s)} \Vhpe^{\pistar} \Big \} \\
        &= r_{\pistar}(s) + \gamma  \Phat_{s\pistar(s)} \Vhpe^{\pistar} - \gamma \btp(s)
    \end{align*}
    where we have defined $\btp \in \R^\S$ as
    \begin{align*}
        \btp(s) = -\max \Big\{  \Phat_{s\pistar(s)} \left(T_{\beta(s,\pistar(s))}( \Phat_{s\pistar(s)}, \Vhpe^{\pistar}) - \Vhpe^{\pistar} \right) - b(s,\pistar(s), \Vhpe^{\pistar}), \min_{s'}(\Vhpe^{\pistar})(s') - \Phat_{s\pistar(s)} \Vhpe^{\pistar} \Big \}.
    \end{align*}
    Note that both terms within the $\max$ in the definition of $\btp(s)$ are $\leq 0$, so $\btp \geq 0$, and also we can bound
    \begin{align}
        \btp(s) &\leq - \left(\Phat_{s\pistar(s)} \left(T_{\beta(s,\pistar(s))}( \Phat_{s\pistar(s)}, \Vhpe^{\pistar}) - \Vhpe^{\pistar} \right) - b(s,\pistar(s), \Vhpe^{\pistar}) \right) \nonumber\\
        &= \Phat_{s\pistar(s)} \left(\Vhpe^{\pistar} - T_{\beta(s,\pistar(s))}( \Phat_{s\pistar(s)}, \Vhpe^{\pistar})  \right) + b(s,\pistar(s), \Vhpe^{\pistar}) \nonumber\\
        &\stackrel{(i)}{\leq } \beta(s, \pistar(s)) \spannorm{\Vhpe^{\pistar}} + b(s,\pistar(s), \Vhpe^{\pistar}) \nonumber\\
        &\stackrel{(ii)}{\leq }  \sqrt{\beta(s,\pistar(s)) \spannorm{\Vhpe^{\pistar}}^2 } + 2\beta(s,\pistar(s)) \spannorm{\Vhpe^{\pistar}}  + \frac{5}{\ntot} \label{eq:btp_bound}
    \end{align}
    where $(i)$ is due to the fact that $\Phat_{s\pistar(s)}  T_{\beta(s,\pistar(s))}( \Phat_{s\pistar(s)}, \Vhpe^{\pistar}) \geq \Phat_{s\pistar(s)} \Vhpe^{\pistar} - \beta(s, \pistar(s)) \spannorm{\Vhpe^{\pistar}}$, which holds by an argument identical to that of~\eqref{eq:Vhpe_pistar_inner_prod_lb}, and $(ii)$ holds since
    \begin{align*}
        b(s,\pistar(s),\Vhpe^{\pistar}) &=  \max \Big \{ \sqrt{\beta(s,\pistar(s))  \Var_{\Phat_{s\pistar(s)}} \left[ T_{\beta(s,\pistar(s))}( \Phat_{s\pistar(s)}, \Vhpe^{\pistar}) \right]} , \\
        & \qquad\beta(s,\pistar(s)) \spannorm{T_{\beta(s,\pistar(s))}( \Phat_{s\pistar(s)}, \Vhpe^{\pistar})} \Big \} + \frac{5}{\ntot} \\
        & \leq \max \Big \{ \sqrt{\beta(s,\pistar(s))  \Var_{\Phat_{s\pistar(s)}} \left[ \Vhpe^{\pistar} \right]} , \beta(s,\pistar(s)) \spannorm{\Vhpe^{\pistar}} \Big \} + \frac{5}{\ntot} \\
        & \leq \max \Big \{ \sqrt{\beta(s,\pistar(s)) \spannorm{\Vhpe^{\pistar}}^2 } , \beta(s,\pistar(s)) \spannorm{\Vhpe^{\pistar}} \Big \} + \frac{5}{\ntot} \\
        & \leq  \sqrt{\beta(s,\pistar(s)) \spannorm{\Vhpe^{\pistar}}^2 } + \beta(s,\pistar(s)) \spannorm{\Vhpe^{\pistar}}  + \frac{5}{\ntot}
    \end{align*}
    where we used Lemma \ref{lem:truncated_variance_bound} and the fact that $\spannorm{T_{\beta(s,\pistar(s))}( \Phat_{s\pistar(s)}, \Vhpe^{\pistar})} \leq \spannorm{\Vhpe^{\pistar}}$ in the first inequality, 
    then that $\Var_{\Phat_{s\pistar(s)}} \left[ \Vhpe^{\pistar} \right] \leq \spannorm{\Vhpe^{\pistar}}^2$, and then bounded the $\max$ by the sum. 
    (While Lemma \ref{lem:truncated_variance_bound} is stated for $\beta(s, \pistar(s)) \leq 1$, if $\beta(s, \pistar(s)) > 1$ then $T_{\beta(s,\pistar(s))}( \Phat_{s\pistar(s)}, \Vhpe^{\pistar})$ is a constant vector so the bound is still true.)

    Now since $\btp$ satisfies $\Vhpe^{\pistar} = r_{\pistar} - \gamma\btp + \gamma \Phat_{\pistar} \Vhpe^{\pistar}$, we can rearrange to obtain that $\Vhpe^{\pistar} = (I - \gamma \Phat_{\pistar})^{-1} (r_{\pistar} - \gamma \btp)$. Likewise by the standard Bellman equation we have that $V^{\pistar} = r_{\pistar} + \gamma P_{\pistar} V^{\pistar}$ so $V^{\pistar} = (I - \gamma P_{\pistar})^{-1}r_{\pistar}$.
    Then we can calculate that
    \begin{align}
        V^{\pistar} - \Vhpe^{\pistar} &= (I - \gamma P_{\pistar})^{-1} r_{\pistar} - (I - \gamma \Phat_{\pistar})^{-1} (r_{\pistar}- \gamma\btp) \nonumber\\
    &= (I - \gamma \Phat_{\pistar})^{-1} (I - \gamma \Phat_{\pistar}) (I - \gamma P_{\pistar})^{-1} r_{\pistar} \nonumber\\
    & \qquad \qquad - (I - \gamma \Phat_{\pistar})^{-1} (I - \gamma P_{\pistar}) (I - \gamma P_{\pistar})^{-1} (r_{\pistar}- \gamma\btp) \nonumber\\
    &= \gamma (I - \gamma \Phat_{\pistar})^{-1} (P_{\pistar} - \Phat_{\pistar}) (I - \gamma P_{\pistar})^{-1} r_{\pistar} + (I - \gamma \Phat_{\pistar})^{-1}\gamma \btp \nonumber\\
    &= \gamma (I - \gamma \Phat_{\pistar})^{-1} (P_{\pistar} - \Phat_{\pistar}) V^{\pistar} + (I - \gamma \Phat_{\pistar})^{-1} \gamma \btp. \label{eq:empirical_span_bound_sim_lem}
    \end{align}

Now we can bound
    \begin{align}
        \spannorm{\Vhpe^{\pistar}} &= \max_{s,s'} (e_s - e_{s'})^\top \Vhpe^{\pistar} \nonumber\\
        &= \max_{s,s'} (e_s - e_{s'})^\top \left(V^{\pistar} + \Vhpe^{\pistar} - V^{\pistar} \right) \nonumber \\
        & \leq \max_{s,s'} (e_s - e_{s'})^\top \left(V^{\pistar}  \right) + \max_{s,s'} (e_s - e_{s'})^\top \left( \Vhpe^{\pistar} - V^{\pistar} \right) \nonumber \\
        &= \spannorm{V^{\pistar}} + \max_{s,s'} (e_s - e_{s'})^\top \left( \Vhpe^{\pistar} - V^{\pistar} \right). \label{eq:emp_sp_bd_1}
    \end{align}

    Fixing arbitrary $s, s' \in \S$ and letting $\xi = e_s - e_{s'}$, and using~\eqref{eq:empirical_span_bound_sim_lem}, we have that
    \begin{align}
        \xi^\top \left(\Vhpe^{\pistar} - V^{\pistar} \right) &= \xi^\top \left( \gamma (I - \gamma \Phat_{\pistar})^{-1} (\Phat_{\pistar} - P_{\pistar}) V^{\pistar} - (I - \gamma \Phat_{\pistar})^{-1} \gamma \btp\right) \nonumber \\
        & \leq \gamma \onenorm{\xi^\top (I - \gamma \Phat_{\pistar})^{-1}} \infnorm{(\Phat_{\pistar} - P_{\pistar}) V^{\pistar}} + \gamma \onenorm{\xi^\top (I - \gamma \Phat_{\pistar})^{-1}} \infnorm{\btp} .\label{eq:emp_sp_bd_2}
    \end{align}
    Next we bound all the terms in~\eqref{eq:emp_sp_bd_2}.
    First, $\onenorm{\xi^\top (I - \gamma \Phat_{\pistar})^{-1}} \leq 4 \Dow(\Phat, \pistar)$ by Lemma \ref{lem:occupancy_l1_diff_coupling_bound}, and furthermore by Lemma \ref{lem:dow_stability}, since its conditions are satisfied under the conditions of the present lemma (since $\alpha \geq \log(\frac{2S}{\delta})$), we have with probability at least $1 - \delta$ that $\Dow(\Phat, \pistar) \leq 24 \Dow(P, \pistar)$. Hence $\onenorm{\xi^\top (I - \gamma \Phat_{\pistar})^{-1}} \leq 96 \Dow(P, \pistar)$. Next, for any $s\in \S$, by Hoeffding's inequality, with probability at least $1 - \delta'$ we have
    \begin{align*}
        \left|e_s^\top (\Phat_{\pistar} - P_{\pistar}) V^{\pistar} \right| & \leq \sqrt{\frac{\spannorm{V^{\pistar}}^2 \log \left( \frac{2}{\delta'}\right)}{2n(s,\pistar(s))}}
    \end{align*}
    and so by a union bound over all $s \in \S$ and setting $\delta' = \frac{\delta}{S}$, we have that with additional failure probability at most $\delta$ that 
    \begin{align*}
        \infnorm{(\Phat_{\pistar} - P_{\pistar}) V^{\pistar}} \leq \spannorm{V^{\pistar}} \sqrt{\max_{s \in \S}\frac{\log(\frac{2S}{\delta})}{2 n(s, \pistar(s))}}.
    \end{align*}
    Finally, using the bound~\eqref{eq:btp_bound}, we have
    \begin{align*}
        \infnorm{\btp} &\leq \max_{s \in \S} \sqrt{\beta(s,\pistar(s)) \spannorm{\Vhpe^{\pistar}}^2 } + 2\beta(s,\pistar(s)) \spannorm{\Vhpe^{\pistar}}  + \frac{5}{\ntot} \\
        & \leq \max_{s \in \S} 3\sqrt{\beta(s,\pistar(s)) } \spannorm{\Vhpe^{\pistar}}  + \frac{5}{\ntot}
    \end{align*}
    because our condition on $n(s, \pistar(s))$ guarantees that $\beta(s, \pistar(s)) \leq 1$ so $\beta(s, \pistar(s)) \leq \sqrt{\beta(s, \pistar(s))}$.

    Combining these three bounds with~\eqref{eq:emp_sp_bd_2}, using that $\gamma \leq 1$, and taking the maximum over all $s, s'$, we have that
    \begin{align*}
        &\max_{s,s'} (e_s - e_{s'})^\top \left( \Vhpe^{\pistar} - V^{\pistar} \right) \\
        &\leq 96 \Dow(P, \pistar) \left(\spannorm{V^{\pistar}} \sqrt{\max_{s \in \S}\frac{\log(\frac{2S}{\delta})}{2 n(s, \pistar(s))}} + 3\sqrt{\max_{s\in \S}\beta(s,\pistar(s)) } \spannorm{\Vhpe^{\pistar}}  + \frac{5}{\ntot} \right)
    \end{align*}
    Combining this with~\eqref{eq:emp_sp_bd_1} and rearranging, we have that
    \begin{align}
        &\spannorm{\Vhpe^{\pistar}}  \left(1 - 3 \cdot 96 \Dow(P, \pistar)\sqrt{\max_{s\in \S}\beta(s,\pistar(s)) } \right) \nonumber\\
        & \qquad \leq \spannorm{V^{\pistar}} \left(1 + 96 \Dow(P, \pistar)\sqrt{\max_{s \in \S}\frac{\log(\frac{2S}{\delta})}{2 n(s, \pistar(s))}} \right) + 96 \Dow(P, \pistar) \frac{5}{\ntot}. \label{eq:emp_sp_bd_3}
    \end{align}
    Noticing that $576 = 3 \cdot 2 \cdot 96$, our condition on $n(s, \pistar(s))$ in the lemma statement is chosen exactly so that
    \begin{align*}
        \left(1 - 3 \cdot 96 \Dow(P, \pistar)\sqrt{\max_{s\in \S}\beta(s,\pistar(s)) } \right) &=  \left(1 - 3 \cdot 96 \Dow(P, \pistar)\sqrt{\max_{s\in \S}\frac{\alpha}{n(s,\pistar(s))-1} } \right)\\
        &\geq 1 - \frac{1}{2} = \frac{1}{2}.
    \end{align*}
    Also since for all $s \in \S$, $\beta(s, \pistar(s)) = \frac{\alpha}{\max\{n(s, \pistar(s)) - 1,1\}}= \frac{\alpha}{n(s, \pistar(s)) - 1}\geq \frac{\log(\frac{2S}{\delta})}{2 n(s, \pistar(s))}$ (since $\alpha \geq 8\log(\frac{2S}{\delta})$ and $n(s, \pistar(s)) \geq 4$ so $\max\{n(s, \pistar(s)) - 1,1\} = n(s, \pistar(s)) -1 \geq \frac{1}{2} n(s, \pistar(s))$), we can also simply bound
    \begin{align*}
        \left(1 + 96 \Dow(P, \pistar)\sqrt{\max_{s \in \S}\frac{\log(\frac{2S}{\delta})}{2 n(s, \pistar(s))}} \right) \leq 1 + \frac{1}{2}.
    \end{align*}
    We can also bound $96 \Dow(P, \pistar) \frac{5}{\ntot} \leq 1$ (by lower-bounding $\ntot$ by $n(s_0, \pistar(s_0))$ for one arbitrary $s_0 \in \S$). Combining all these bounds with~\eqref{eq:emp_sp_bd_3}, we obtain
    \begin{align*}
        \frac{1}{2}\spannorm{\Vhpe^{\pistar}} \leq \frac{3}{2}\spannorm{V^{\pistar}} + 1
    \end{align*}
    which implies
    \begin{align*}
        \spannorm{\Vhpe^{\pistar}} \leq 3\spannorm{V^{\pistar}} + 2
    \end{align*}
    as desired.
\end{proof}

\subsection{Average-reward-to-discounted reduction}
Now we can combine our previous results and relate the discounted MDP quantities to $\rho^{\pistar}$ and $h^{\pistar}$.

\begin{lem}
\label{lem:amdp_dmdp_red}
    There exist some absolute constants $C_1, C_2$ such that the following holds:\
    Fix a deterministic unichain policy $\pistar$.
    Suppose that $n(s, \pistar(s))\geq m \mu^{\pistar}(s) + 4 + \alpha \left(576\Dow(P, \pistar)\right)^2$ for all $s \in \S$, where $\alpha = 8\log \left( \frac{6S^2A\ntot}{(1-\gamma)\delta}\right)$, and that $\frac{1}{1-\gamma} \geq m$ and $\frac{1}{1-\gamma} \geq 2$. Then with probability at least $1 - 5\delta$, we have that
    \begin{align*}
        \rho^{\pihat} &\geq \rho^{\pistar} -  \sqrt{\frac{C_1 S \left(\spannorm{h^{\pistar}} +1\right)\alpha}{m}}\one -  \frac{C_2 S \left(\spannorm{h^{\pistar}} +1\right)  \alpha}{m}\one.
    \end{align*}
\end{lem}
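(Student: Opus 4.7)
The plan is to chain the preceding lemmas into a single lower bound on $V^{\pihat}_\gamma$ and then convert this discounted statement into the desired bound on $\rho^{\pihat}$ via the Poisson equation for $\pistar$. On the event of Lemma~\ref{lem:concentration_for_pessimism_Thpe}, I combine Lemma~\ref{lem:pessimism} (which gives $Q^{\pihat} \geq \Qhat$, and thus $V^{\pihat}_\gamma \geq \Vhat$ since $\pihat$ is greedy w.r.t.\ $\Qhat$), Lemma~\ref{lem:opt_guarantees} ($\Vhat \geq \Vhpe^\star - \frac{1}{2\ntot}\one$), and item~\ref{lem:pessimistic_Bellman_operator_properties_pt3} of Lemma~\ref{lem:pessimistic_Bellman_operator_properties} ($\Vhpe^\star \geq \Vhpe^{\pistar}$) to obtain $V^{\pihat}_\gamma \geq \Vhpe^{\pistar} - \frac{1}{2\ntot}\one$ elementwise. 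On the event of Lemma~\ref{lem:concentration_for_pessimism_Thpe_pistar}, Lemma~\ref{lem:empirical_span_bound} (whose hypothesis $n(s,\pistar(s)) \geq 1+\alpha(576\Dow(P,\pistar))^2$ is implied by ours) further yields $\spannorm{\Vhpe^{\pistar}} \leq 3\spannorm{V^{\pistar}_\gamma}+2$.

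To bound $\spannorm{V^{\pistar}_\gamma}$ in terms of $\spannorm{h^{\pistar}}$, I would use the Poisson-equation identity
\[
    V^{\pistar}_\gamma = \tfrac{\rho^{\pistar}}{1-\gamma}\one + h^{\pistar} - (1-\gamma)(I-\gamma P_{\pistar})^{-1}P_{\pistar}h^{\pistar},
\]
obtained by applying $(I-\gamma P_{\pistar})^{-1}$ to the Poisson equation $r_{\pistar} = \rho^{\pistar}\one + (I-P_{\pistar})h^{\pistar}$. The final term is the image of $h^{\pistar}$ under the stochastic matrix $(1-\gamma)(I-\gamma P_{\pistar})^{-1}P_{\pistar}$, so its span is at most $\spannorm{h^{\pistar}}$, giving $\spannorm{V^{\pistar}_\gamma} \leq 2\spannorm{h^{\pistar}}$ and hence $\spannorm{\Vhpe^{\pistar}} \leq 6\spannorm{h^{\pistar}}+2$. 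Substituting into Lemma~\ref{lem:error_analysis_empirical_span} (whose hypotheses are implied by ours) and chaining with the pessimism step yields, for every $s_0$,
\[
    V^{\pihat}_\gamma(s_0) \geq V^{\pistar}_\gamma(s_0) - E - \tfrac{1}{2\ntot},
\]
with $E = \Otilde\!\big(\tfrac{1}{1-\gamma}\sqrt{\tfrac{S(\spannorm{h^{\pistar}}+1)\alpha}{m}} + \tfrac{S(\spannorm{h^{\pistar}}+1)\alpha}{(1-\gamma)m} + \tfrac{1}{(1-\gamma)\ntot}\big)$.

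The same Poisson identity also gives the pointwise lower bound $V^{\pistar}_\gamma(s_0) \geq \frac{\rho^{\pistar}}{1-\gamma} - \spannorm{h^{\pistar}}$ (after the harmless shift making $\min_s h^{\pistar}(s)=0$). Applying the analogous algebraic identity to $\pihat$ (valid for any policy, with $\rho^{\pihat}$ the gain vector and $h^{\pihat}$ any solution to the multichain Poisson equation $r_{\pihat}-\rho^{\pihat} = (I-P_{\pihat})h^{\pihat}$) yields $(1-\gamma)V^{\pihat}_\gamma(s_0) \leq \rho^{\pihat}(s_0) + (1-\gamma)\spannorm{h^{\pihat}}$. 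Chaining everything and multiplying by $(1-\gamma)=1/\ntot$ gives
\[
    \rho^{\pihat}(s_0) \geq \rho^{\pistar} - (1-\gamma)\big(\spannorm{h^{\pistar}} + E + \spannorm{h^{\pihat}}\big) - \tfrac{1}{2\ntot};
\]
using $\ntot\geq m$ absorbs the $(1-\gamma)\spannorm{h^{\pistar}}$, $(1-\gamma)E$, and $\frac{1}{2\ntot}$ contributions into the two stated terms of the lemma.

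The main obstacle is controlling $(1-\gamma)\spannorm{h^{\pihat}}$ for possibly multichain $\pihat$, since no a priori bound on $\spannorm{h^{\pihat}}$ is available from the algorithm. I would close this by picking $h^{\pihat}$ with the minimum span in its affine family of Poisson-equation solutions, then exploiting that the pessimism chain forces $V^{\pihat}_\gamma$ to be componentwise close to $V^{\pistar}_\gamma$; combined with the representation $V^{\pihat}_\gamma - \frac{\rho^{\pihat}}{1-\gamma}\one = h^{\pihat} - (1-\gamma)(I-\gamma P_{\pihat})^{-1}P_{\pihat}h^{\pihat}$ and the stochastic-matrix property of the last factor, this bounds the minimal $\spannorm{h^{\pihat}}$ by $O(\spannorm{h^{\pistar}} + E)$, so that $(1-\gamma)\spannorm{h^{\pihat}} = O((\spannorm{h^{\pistar}}+E)/\ntot)$ is also absorbed into the stated error.
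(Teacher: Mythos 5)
Your chain up to the discounted bound $V_\gamma^{\pihat} \geq \Vhpe^{\pistar} - \frac{1}{2\ntot}\one \geq V_\gamma^{\pistar} - (\varepsilon + \frac{1}{2\ntot})\one$ matches the paper (pessimism + optimization + $\Qhpe^\star \geq \Qhpe^{\pistar}$ + Lemmas \ref{lem:error_analysis_empirical_span} and \ref{lem:empirical_span_bound}), and your Poisson-equation bound $\tspannorm{V_\gamma^{\pistar}} \leq 2\tspannorm{h^{\pistar}}$ is the same fact the paper imports. The gap is in your final discounted-to-average conversion for $\pihat$. You reduce the problem to showing $(1-\gamma)\tspannorm{h^{\pihat}}$ is negligible, and your proposed closing argument does not work: the pessimism chain gives only a \emph{one-sided} lower bound on $V_\gamma^{\pihat}$ (there is no matching upper bound in terms of $V_\gamma^{\pistar}$, which need not even be gain-optimal here), and even two-sided closeness of value functions would not let you recover $\tspannorm{h^{\pihat}}$ from the representation $V_\gamma^{\pihat} - \frac{1}{1-\gamma}\rho^{\pihat} = (I-W)h^{\pihat}$, since $W$ is stochastic, $I-W$ is singular, and $\rho^{\pihat}$ is itself the unknown. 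More fundamentally, a bound of the form $\tspannorm{h^{\pihat}} = O(\tspannorm{h^{\pistar}} + E)$ is exactly the kind of uniform-over-policies control the paper is structured to avoid: in the hard instances of Theorem \ref{thm:transient_lb}, near-greedy policies can have bias span of order $m \gg \tspannorm{h^{\pistar}}$, so $(1-\gamma)\tspannorm{h^{\pihat}}$ can be a constant; nothing in your sketch rules this out for the policy the algorithm actually returns.

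The detour through $h^{\pihat}$ is unnecessary. The paper instead uses the exact identity $\rho^{\pihat} = (1-\gamma)P_{\pihat}^\infty V_\gamma^{\pihat}$ (equivalently, \citet[Lemma 6]{zurek_span-agnostic_2025}), which follows from $r_{\pihat} = (1-\gamma)V_\gamma^{\pihat} + \gamma(I-P_{\pihat})V_\gamma^{\pihat}$ and $P_{\pihat}^\infty P_{\pihat} = P_{\pihat}^\infty$. Since $P_{\pihat}^\infty$ is stochastic this gives the elementwise inequality $\rho^{\pihat} \geq (1-\gamma)\big(\min_{s} V_\gamma^{\pihat}(s)\big)\one$ for an \emph{arbitrary} (possibly multichain) policy, with no reference to $h^{\pihat}$. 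Combining this with your lower bound $\min_s V_\gamma^{\pihat}(s) \geq \frac{1}{1-\gamma}\rho^{\pistar} - \tspannorm{V_\gamma^{\pistar}} - \varepsilon - \frac{1}{2\ntot}$ and $(1-\gamma) \leq 1/m$ closes the proof; you should substitute this step for your $h^{\pihat}$ argument.
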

\begin{proof}
    By Lemma \ref{lem:error_analysis_empirical_span} (the conditions of which are met here as $\alpha(s, \pistar(s)) \left(576\Dow(P, \pistar)\right)^2 \geq 4 \Dow(P, \pistar)$), we have under the event of Lemma \ref{lem:concentration_for_pessimism_Thpe_pistar}, which holds with probability at least $1 - 2\delta$, that
    \begin{align*}
        &\max_{s_0 \in \S} \left( V^{\pistar}(s_0) - \Vhpe^{\pistar}(s_0) \right) \\
        &\leq \frac{1}{1-\gamma}\sqrt{\frac{2048 S \spannorm{\Vhpe^{\pistar}} \log \left( \frac{6S^2A\ntot}{(1-\gamma)\delta}\right)}{m}} +  \frac{640 S \spannorm{\Vhpe^{\pistar}}  \log \left( \frac{6S^2A\ntot}{(1-\gamma)\delta}\right)}{(1-\gamma)m} + \frac{12}{(1-\gamma)\ntot}.
    \end{align*}
    Combining this with the conclusion of Lemma \ref{lem:empirical_span_bound} which implies $\spannorm{\Vhpe^{\pistar}} \leq 3\left(\spannorm{V^{\pistar}} +1\right)$ and adds additional failure probability at most $2\delta$ by the union bound, we have that
    \begin{align}
        \max_{s_0 \in \S} \left( V^{\pistar}(s_0) - \Vhpe^{\pistar}(s_0) \right)
        &\leq \frac{1}{1-\gamma}\sqrt{\frac{6144 S \left(\spannorm{V^{\pistar}} +1\right)\log \left( \frac{6S^2A\ntot}{(1-\gamma)\delta}\right)}{m}} \nonumber \\
        & \qquad +  \frac{1920 S \left(\spannorm{V^{\pistar}} +1\right)  \log \left( \frac{6S^2A\ntot}{(1-\gamma)\delta}\right)}{(1-\gamma)m} + \frac{12}{(1-\gamma)\ntot}. \label{eq:dmdp_red_1}
    \end{align}
    For convenience abbreviate the right-hand-side of~\eqref{eq:dmdp_red_1} as $\varepsilon$. Then since $Q^{\pihat} \geq \Qhat$ by Lemma \ref{lem:pessimism} (which holds under the event of Lemma \ref{lem:concentration_for_pessimism_Thpe}, adding additional failure probability at most $\delta$) and $\Qhat \geq \Qhpe^\star - \frac{1}{2\ntot}\one$ by Lemma \ref{lem:opt_guarantees}, we have that
    \begin{align}
        V^{\pihat} = M^{\pihat} Q^{\pihat} \geq M^{\pihat}\Qhat = M\Qhat\geq M\left( \Qhpe^\star - \frac{1}{2\ntot}\one \right) = M \Qhpe^{\star} - \frac{1}{2\ntot}\one = \Vhpe^\star - \frac{1}{2\ntot}\one. \label{eq:dmdp_red_2}
    \end{align}
    Furthermore we have
    \begin{align}
        \Vhpe^\star \stackrel{(i)}{\geq} \Vhpe^{\pistar} \stackrel{(ii)}{\geq} V^{\pistar} - \varepsilon \one \stackrel{(iii)}{\geq} \frac{1}{1-\gamma}\rho^{\pistar} - \spannorm{V^{\pistar}}\one - \varepsilon \one \label{eq:dmdp_red_3}
    \end{align}
    where $(i)$ is due to Lemma \ref{lem:pessimistic_Bellman_operator_properties} which gives $\Qhpe^\star \geq \Qhpe^{\pistar}$, which implies $\Vhpe^\star = M\Qhpe^\star \geq M^{\pistar} \Qhpe^\star \geq M^{\pistar}\Qhpe^{\pistar}$ using monotonicity of $M^{\pistar}$. $(ii)$ is due to~\eqref{eq:dmdp_red_1}, and $(iii)$ uses $\infnorm{V^{\pistar} - \frac{1}{1-\gamma}\rho^{\pistar}} \leq \spannorm{V^{\pistar}}$ due to \citet[Lemma 6]{zurek_span-agnostic_2025}.
    Also by \citet[Lemma 6]{zurek_span-agnostic_2025}, we have the elementwise inequality $\rho^{\pihat} \geq (1-\gamma)\left(\min_{s \in \S} V^{\pihat}(s)\right) \one$. Thus
    \begin{align*}
        \rho^{\pihat} &\geq (1-\gamma)\min_{s \in \S} V^{\pihat}(s)\one \\
        &\stackrel{(i)}{\geq} (1-\gamma)\min_{s \in \S} \Vhpe^\star(s)\one  - \frac{1-\gamma}{2\ntot}\one \\
        & \stackrel{(ii)}{\geq} \min_{s \in \S} \rho^{\pistar}(s) \one-(1-\gamma)\spannorm{V^{\pistar}}\one - (1-\gamma)\varepsilon \one  - \frac{1-\gamma}{2\ntot}\one \\
        & \stackrel{(iii)}{\geq} \rho^{\pistar} -(1-\gamma)\spannorm{V^{\pistar}}\one- \frac{1-\gamma}{2\ntot}\one -  \sqrt{\frac{6144 S \left(\spannorm{V^{\pistar}} +1\right)\log \left( \frac{6S^2A\ntot}{(1-\gamma)\delta}\right)}{m}} \one\nonumber \\
        & \qquad -  \frac{1920 S \left(\spannorm{V^{\pistar}} +1\right)  \log \left( \frac{6S^2A\ntot}{(1-\gamma)\delta}\right)}{m} \one- \frac{12}{\ntot} \one\\
        & \stackrel{(iv)}{\geq} \rho^{\pistar} -  \sqrt{\frac{6144 S \left(\spannorm{V^{\pistar}} +1\right)\log \left( \frac{6S^2A\ntot}{(1-\gamma)\delta}\right)}{m}}\one -  \frac{1933 S \left(\spannorm{V^{\pistar}} +1\right)  \log \left( \frac{6S^2A\ntot}{(1-\gamma)\delta}\right)}{m}  \one
    \end{align*}
    where $(i)$ uses~\eqref{eq:dmdp_red_2}, $(ii)$ uses~\eqref{eq:dmdp_red_3}, $(iii)$ uses the fact that $\rho^{\pistar}$ is assumed to be state-independent and the definition of $\varepsilon$ (and canceling/simplifying), and $(iv)$ uses that $\frac{1}{1-\gamma} \geq m$ (so $(1-\gamma ) \leq \frac{1}{m}$), that $1 - \gamma \leq 1$, and $\ntot \geq m$.

    Furthermore, using \citet[Lemma 26]{zurek_span-agnostic_2025} we have (since $\rho^{\pistar}$ is constant) that $\spannorm{V^{\pistar}} \leq 2\spannorm{h^{\pistar}}$. Combining this with the above bound and letting $C_1 = 2 \cdot 6144 / 8, C_2 = 2 \cdot 1933/8$, we obtain the desired bound.
\end{proof}

\subsection{Completing the proof}
Here we complete the proof of the main Theorem \ref{thm:main_theorem} by checking conditions and simplifying previous results. The following result is actually more general than Theorem \ref{thm:main_theorem} because it allows an arbitrary unichain deterministic comparator policy $\pistar$, rather than requiring $\pistar$ to be gain-optimal. Theorem \ref{thm:main_theorem} follows immediately from the below theorem by adding this additional requirement that $\rho^{\pistar} = \rho^\star$.
\begin{thm}
\label{thm:main_thm_arbitrary_comp}
    There exist absolute constants $C_1', C_2'$ such that the following holds:
    Fix $\delta > 0$. Let $\gamma = 1-\frac{1}{\ntot}$ and $\alpha = 8\log \big( \frac{6S^2A\ntot}{(1-\gamma)\delta}\big)$. Let $\pistar $ be a deterministic policy which is unichain with stationary distribution $\mu^{\pistar}$. Suppose there exists some $m \in \N$ such that
    \begin{align*}
        n(s, \pistar(s)) \geq m \mu^{\pistar}(s) + \alpha \left(C_2'\Dow(P, \pistar)\right)^2 +4.
    \end{align*}
    Then letting $\pihat$ be the policy returned by Algorithm \ref{alg:LCBVI} with inputs $\dataset$, $r$, $\gamma = 1 - \frac{1}{\ntot}$, and $\delta$,  we have with probability at least $1  - 5\delta$ that
    \begin{align*}
        \rho^{\pihat} \geq \rho^{\pistar} - \sqrt{\frac{C_1' S \left(\spannorm{h^{\pistar}} + 1\right)\alpha }{m}}.
    \end{align*}
\end{thm}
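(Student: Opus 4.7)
The theorem reduces to Lemma \ref{lem:amdp_dmdp_red} once one chooses $C_2'$ to match the coverage hypothesis, verifies the two side conditions on $\gamma$, and collapses the two error terms there into a single $\sqrt{\cdot}$-rate. The strategy is a three-step reduction plus a short case analysis; no new technical content is required beyond what is already in the supporting lemmas.

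First I would take $C_2' := 576$ so that the coverage assumption matches that of Lemma \ref{lem:amdp_dmdp_red} exactly. With $\gamma = 1 - 1/\ntot$, the lemma further requires $1/(1-\gamma) = \ntot \geq m$ and $\ntot \geq 2$. Summing the theorem's coverage hypothesis over $s \in \S$ gives
\[
\ntot \;\geq\; \sum_{s\in\S} n(s,\pistar(s)) \;\geq\; m\sum_{s\in\S}\mu^{\pistar}(s) \;=\; m,
\]
and the additive $+4$ in the coverage hypothesis forces $\ntot \geq 4 \geq 2$. Applying Lemma \ref{lem:amdp_dmdp_red} then yields, with probability at least $1-5\delta$ and elementwise,
\[
\rho^{\pihat} \;\geq\; \rho^{\pistar} - \sqrt{\frac{C_1 S(\spannorm{h^{\pistar}}+1)\alpha}{m}}\,\one - \frac{C_2 S(\spannorm{h^{\pistar}}+1)\alpha}{m}\,\one,
\]
with $C_1, C_2$ the absolute constants supplied by that lemma.

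To consolidate the two subtracted terms, let $U := \sqrt{S(\spannorm{h^{\pistar}}+1)\alpha/m}$, so the total error is $\sqrt{C_1}\,U + C_2\,U^2$. If $U > 1$, then $\sqrt{C_1'}\,U > 1 \geq \rho^{\pistar} - \rho^{\pihat}$ trivially for any $C_1' \geq 1$ (using $\rho\in[0,1]^\S$). If $U \leq 1$, then $U^2 \leq U$, so
\[
\sqrt{C_1}\,U + C_2\,U^2 \;\leq\; (\sqrt{C_1}+C_2)\,U \;=\; \sqrt{(\sqrt{C_1}+C_2)^2}\,U.
\]
Taking $C_1' := \max\{1,(\sqrt{C_1}+C_2)^2\}$ covers both cases and gives the stated bound.

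The main obstacle lies not in this final step but upstream in Lemma \ref{lem:amdp_dmdp_red} and its supporting results (the pessimism guarantee of Lemma \ref{lem:pessimism}, the variance-aware error analysis of Lemma \ref{lem:error_analysis_empirical_span}, and the empirical-span control of Lemma \ref{lem:empirical_span_bound}). Given that machinery, the present theorem is essentially a bookkeeping corollary: the discount-factor choice $\gamma = 1-1/\ntot$ is engineered so that the effective horizon equals the dataset size, which automatically ensures $\ntot \geq m$ and renders the additive $O(S(\spannorm{h^{\pistar}}+1)\alpha/m)$ term lower-order compared to the $O(\sqrt{S(\spannorm{h^{\pistar}}+1)\alpha/m})$ leading term (modulo the trivial $U>1$ regime).
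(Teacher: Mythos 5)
Your proposal is correct and follows essentially the same route as the paper: choose $C_2'=576$ to match Lemma \ref{lem:amdp_dmdp_red}, verify $\ntot \geq m$ and $\ntot \geq 4$ from the coverage hypothesis, and then absorb the lower-order $O(U^2)$ term into the $O(U)$ term by casing on whether the error quantity exceeds $1$ (using $\rho \in [0,1]^{\S}$ for the trivial regime). The only difference from the paper is cosmetic — it cases on the second term being $\geq 1$ and uses $\sqrt{a}+\sqrt{b}\leq\sqrt{2(a+b)}$ to get $C_1'=2(C_1+C_2)$, whereas you case on $U\leq 1$ and get $C_1'=\max\{1,(\sqrt{C_1}+C_2)^2\}$ — both yield valid absolute constants.
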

\begin{proof}
    Note that the condition on $n$ implies that $\ntot \geq 4$, so setting $\frac{1}{1-\gamma} = \ntot$ has $\frac{1}{1-\gamma} \geq 2$. Also we have
    \begin{align*}
        \ntot \geq \sum_{s \in \S} n(s, \pistar(s)) \geq \sum_{s \in \S} m \mu^{\pistar}(s) = m
    \end{align*}
    using the assumption on $n(s, \pistar(s))$ for all $s$, so setting $\frac{1}{1-\gamma} = \ntot$ also ensures $\frac{1}{1-\gamma} \geq m$. Therefore we can apply Lemma \ref{lem:amdp_dmdp_red} to obtain that
    if $n(s, \pistar(s))\geq m \mu^{\pistar}(s) + 4 + \alpha \left(576\Dow(P, \pistar)\right)^2$ for all $s \in \S$, then with probability at least $1 - 5\delta$, we have
    \begin{align*}
        \rho^{\pihat} &\geq \rho^{\pistar} -  \sqrt{\frac{C_1 S \left(\spannorm{h^{\pistar}} +1\right)\alpha}{m}}\one -  \frac{C_2 S \left(\spannorm{h^{\pistar}} +1\right)  \alpha}{m}\one
    \end{align*}
    where $\alpha = 8\log \left( \frac{6S^2A\ntot}{(1-\gamma)\delta}\right) = 8\log \left( \frac{6S^2A\ntot^2}{\delta}\right)$. Thus we can set $C_2' = 576$. To choose $C_1'$, note that since trivially $\rho^{\pistar} \leq \one$ and $\rho^{\pihat} \geq \zero$, if the term $\frac{C_2 S \left(\spannorm{h^{\pistar}} +1\right)  \alpha}{m} \geq 1$ then the bound
    \begin{align*}
        \rho^{\pihat} &\geq \rho^{\pistar} -  \sqrt{ \frac{C_2 S \left(\spannorm{h^{\pistar}} +1\right)  \alpha}{m}}\one
    \end{align*}
    holds vacuously, and otherwise if it is $\leq 1$ then we have
    \begin{align*}
        \rho^{\pihat} &\geq \rho^{\pistar} -  \sqrt{\frac{C_1 S \left(\spannorm{h^{\pistar}} +1\right)\alpha}{m}}\one -  \sqrt{\frac{C_2 S \left(\spannorm{h^{\pistar}} +1\right)  \alpha}{m}\one}
    \end{align*}
    since $\sqrt{x} \geq x$ for $x \in [0,1]$. Since $\sqrt{a} + \sqrt{b} \leq \sqrt{2(a+b)}$, we can take $C_1' = 2(C_1 + C_2)$.
\end{proof}

\section{Proof of Theorem \ref{thm:transient_lb}}
\label{sec:proof_transient_lb}

Let $T \geq 4$ and $m \in \mathbb{N}$ be arbitrary. 

\paragraph{Step 1: MDP construction}

Define $p = \frac{1}{3(m+T)}$, $A = \left\lceil \frac{16}{pT} \right\rceil$, and $q = \frac{1}{AT}$.
The set of states is $\S=\{0,1\}$, and the set of actions is $\A=\{0,1,\dots,A-1\}$.
The reward function $r : \S \times \A \to [0,1]$ is defined by $r(0,a)=1$ and $r(1,a)=0$ for all $a\in\A$.
We define an index set $\Theta = \Big\{(i,b)\,\Big|\, i\in\{0,1\}, b\in\{0,1,\dots,A-1\}\Big\}$. 
For each $\theta = (i,b) \in \Theta$, we define the transition matrix $P_\theta$ as follows:
\begin{center}
\begin{tabular}{ c | c | c }
$s$ & $a$ & $P_\theta(s' |s,a)$ \\ 
 \hline
0 & $i$ & $\ind(s'=0)$ \\  
0 & $1-i$ & $\left(1-p\right)\ind(s'=0) + p\ind(s'=1)$\\
0 & $\geq 2$ & $\ind(s'=1)$ \\
1 & $b$ & $\frac{1}{T}\ind(s'=0) + \left( 1 - \frac{1}{T} \right) \ind(s'=1)$\\
1 & $\neq b$ & $q\ind(s'=0) + \left(1-q\right)\ind(s'=1)$
\end{tabular}
\end{center}
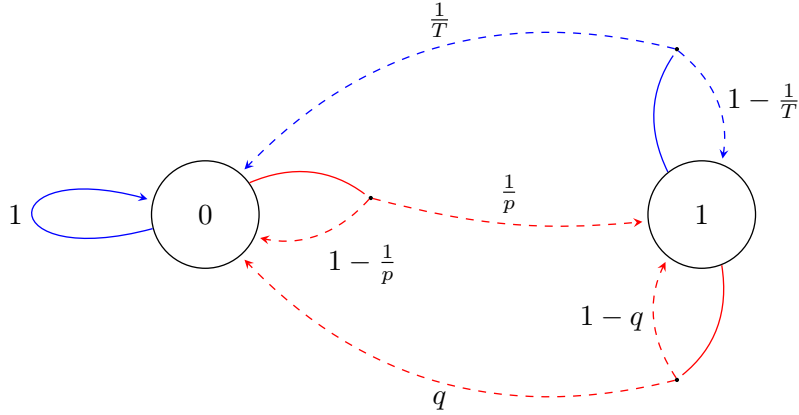
\begin{figure}[H]
    \centering
    \begin{tikzpicture}
    [ -> , >=stealth, shorten >=2pt , line width=0.5pt, node distance =2cm, scale=1.1, transform shape]
    
    \node [circle, draw, minimum width=1.3cm] (s0) at (-3 , 0) {0};
    \node [circle, draw, minimum width=1.3cm] (s1) at (3 , 0) {1};
    \node [circle, draw, fill, inner sep=0.01cm] (dot0) at (-1, 0.2) {};
    \node [circle, draw, fill, inner sep=0.01cm] (dot1) at (2.7, 2) {};
    \node [circle, draw, fill, inner sep=0.01cm] (dot2) at (2.7, -2) {};
    
    \path (s0) edge [loop left, looseness=15, blue] node [left,text=black] {1}  (s0) ;

    \path (s0) edge[-,red] [bend left] node [above] {} (dot0) ;
    \path (dot0) edge[dashed,red] [bend left] node [below right,text=black] {$1-p$} (s0);
    \path (dot0) edge[dashed,red] [bend right=10] node [above,text=black] {$p$} (s1);
    
    \path (s1) edge[-,blue] [bend left] node [above] {} (dot1) ;
    \path (dot1) edge[dashed,blue] [bend left] node [right,text=black] {$1-\frac{1}{T}$} (s1);
    \path (dot1) edge[dashed,blue] [bend right] node [above,text=black] {$\frac{1}{T}$} (s0);
    
    \path (s1) edge[-,red] [bend left] node [above] {} (dot2) ;
    \path (dot2) edge[dashed,red] [bend left] node [left,text=black] {$1-q$} (s1);
    \path (dot2) edge[dashed,red] [bend left] node [below,text=black] {$q$} (s0);
    
    \end{tikzpicture}
    \caption{Diagram of the MDP $(P_{(0,0)},r)$. Arrows splitting into multiple dashed arrows indicate stochastic transitions, and each dashed arrow is annotated with the associated probability. Blue arrows represent action 0 and red arrows represent action 1. In state 1, the red arrow also represents actions $2,\dots,A-1$ (which are all identical). The reward function does not depend on the action, and is $+1$ in state $0$ and $+0$ in state $1$. 
    In general, the MDP $(P_{(i,b)},r)$ is similar, except that the blue arrow in state $0$ represents action $i$ and the blue arrow in state $1$ represents action $b$.}
    \label{fig:transient_lb}
\end{figure}
See Figure \ref{fig:transient_lb} for a diagram of the MDP $(P_\theta,r)$ for $\theta = (0,0)$.
We now state some easily verifiable facts about the MDP ($P_\theta, r$):
\begin{itemize}
\item The unique deterministic gain-optimal stationary policy $\pistar_\theta$ is the one that takes action $i$ in state 0 and action $b$ in state 1.
\item The optimal gain is $\rho^*_\theta = 1$.
\item $\mu^{\pistar_\theta}_\theta(0) = 1$ and  $\mu^{\pistar_\theta}_\theta(1) = 0$.
\item The policy hitting radius $\Dow(P_\theta, \pistar_\theta)$, the optimal bias span $\spannorm{h^{\pistar_\theta}_{P_\theta}}$, and the diameter are all at most $T$.
\item Suppose a stationary policy $\pi$ usually makes the wrong decisions -- specifically $\pi(i|0) < \frac{1}{2}$ and $\pi(b |1) < \frac{4}{A}$. Then $\rho^\pi_\theta < \frac{\frac{4}{A}\cdot\frac{1}{T}+(1-\frac{4}{A})q}{\frac{4}{A}\cdot\frac{1}{T}+(1-\frac{4}{A})q + \frac{p}{2}} \leq \frac{5q}{5q + \frac{p}{2}} \leq \frac{\frac{5p}{16}}{\frac{5p}{16} + \frac{p}{2}} < \frac{1}{2}$. In words, our choice of $A$ is one that is sufficiently large so that randomly guessing the optimal action $b$ in state 1 will not yield a good policy.
\end{itemize}
Note that action 2 in state 0 is added to keep the diameter bounded by $T$, and actions $3,\dots,A-1$ in state 0 simply keep the action space independent of the state, consistent with our upper bounds.
Since actions $2,\dots,A-1$ in state 0 are always suboptimal, whenever we consider some policy $\pi$, we will assume that $\pi(a|0) = 0$ for $a\geq 2$.

\paragraph{Step 2: dataset construction}
For any $\delta \in \left(0,\frac{1}{e^9}\right]$, denote $t_\delta = \left\lceil \frac{T}{6} \log\left(\frac{1}{\delta}\right) \right\rceil$.
We define $n : \mathcal{S}\times\mathcal{A} \to \mathbb{N}$ by
$n(0,0) = n(0,1) = m+t_\delta$ and $n(1,a) = t_\delta$ for all $a \in \mathcal{A}$.
Observe that this choice of $n$ satisfies the desired requirements.
Indeed, since $\mu_\theta^{\pistar_\theta}(0) = 1$ and $\mu_\theta^{\pistar_\theta}(1) = 0$, we have
\[
n(0,\pistar_\theta(0)) 
= n(0,i) \geq m+ \frac{T}{6} \log\left(\frac{1}{\delta}\right)  
= m \mu_\theta^{\pistar_\theta}(0) + \frac{T}{6} \log\left(\frac{1}{\delta}\right)
\]
and
\[
n(1,\pistar_\theta(1)) 
= n(1,b) 
\geq \frac{T}{6} \log\left(\frac{1}{\delta}\right) 
= m \mu_\theta^{\pistar_\theta}(1) + \frac{T}{6} \log\left(\frac{1}{\delta}\right)
.\]

\paragraph{Step 3: impossible to do well in all MDPs}
Suppose towards a contradiction that there exists an algorithm $\mathscr{A}$ that maps the dataset $\mathcal{D}$ to a stationary policy $\hat{\pi} = \mathscr{A}(\mathcal{D})$ such that for all $\theta \in \Theta$, $\P_{\theta,n}\left(\rho_\theta^{\hat{\pi}} > \frac{1}{2}\right)$.

Before proceeding, we define some events.
Let $\mathcal{B}$ be the bad event that $\mathcal{D}$ contains no transitions from state 0 to state 1 and no transitions from state 1 to state 0.
Let $\mathcal{E}_0$ be the event that $\hat{\pi}(0|0) \geq \frac{1}{2}$ ($\hat{\pi}$ prefers action 0 in state 0). 
Similarly, let $\mathcal{E}_1$ be the event that $\hat{\pi}(1|0) \geq \frac{1}{2}$ ($\hat{\pi}$ prefers action 1 in state 0).
For each $a\in\mathcal{A}$, let $\mathcal{F}_a$ be the event that $\hat{\pi}(a|1) \geq \frac{4}{A}$ ($\hat{\pi}$ gives significant weight to action $a$ in state 1).

A key idea is that under event $\mathcal{B}$, the dataset is the same no matter the underlying MDP.
That is, under event $\mathcal{B}$, we always have
\[
\mathcal{D} = (\underbrace{0,\dots,0}_{2n(0,0) \text{ times}},\underbrace{1,\dots,1}_{An(1,0) \text{ times}})
.\]
It follows that for all $\theta,\theta' \in \Theta$,
\[
\P_{\theta,n}(\mathcal{E}_i\,|\,\mathcal{B}) = \P_{\theta',n}(\mathcal{E}_i\,|\,\mathcal{B}) \qquad \forall i\in\{0,1\} 
\]
and
\[
\P_{\theta,n}(\mathcal{F}_a\,|\,\mathcal{B}) = \P_{\theta',n}(\mathcal{F}_a\,|\,\mathcal{B}) \qquad \forall a\in\A
.\]
For ease of notation, going forward we will drop the subscript $\theta,n$ when it does not matter what the underlying MDP is.

Since $\P(\mathcal{E}_0 \cup \mathcal{E}_1 \,|\, \mathcal{B}) = 1$, we must have $\P(\mathcal{E}_{i'} \,|\, \mathcal{B}) \geq \frac{1}{2}$ for some $i'\in\{0,1\}$.
Furthermore, for some $a'\in\mathcal{A}$ we have $\P(\mathcal{F}_{a'} \,|\, \mathcal{B}) \leq \frac{1}{4}$, or equivalently, $\P(\mathcal{F}_{a'}^c \,|\, \mathcal{B}) > \frac{3}{4}$.
Indeed, if this were not the case, we would have
\[
    \mathbb{E}\left[ \sum_{a\in\mathcal{A}} \hat{\pi}(a | 1)\,\Bigg|\,\mathcal{B}\right]
    = \sum_{a\in\mathcal{A}} \mathbb{E} \left[ \hat{\pi}(a | 1) \,\big|\,\mathcal{B}\right]
    \geq \sum_{a\in\mathcal{A}} \mathbb{E}\left[ \hat{\pi}(a | 1) \,\big|\, \mathcal{F}_a \cap \mathcal{B} \right] \P(\mathcal{F}_a \,|\,\mathcal{B})
    > \sum_{a\in\mathcal{A}} \frac{4}{A} \cdot \frac{1}{4}
    = 1
,\]
which is a contradiction because we always have $\sum_{a\in\mathcal{A}} \hat{\pi}(a | 1) = 1$.

We have shown that when the dataset does not contain any useful transitions, there must be at least one MDP where the algorithm is likely to make a poor guess.
Our last step will be to combine this fact with Lemma \ref{lem:bad_event_prob} which tells us that the dataset will be useless with large enough probability.
We noted above that when the underlying MDP is $(P_{(i',a')},r)$ and a policy $\pi$ satisfies $\pi(i'|0)<\frac{1}{2}$ and $\pi(a'|1) < \frac{4}{A}$ we have $\rho^\pi_{(i',a')} < \frac{1}{2}$.
In particular, under the the event $\mathcal{E}_{i'}^c \cap \mathcal{F}_{a'}^c$ we have $\rho^{\hat{\pi}}_{(i',a')} < \frac{1}{2}$.
Subsequently, for $\theta' = (i',a')$, we have
\[
\P_{\theta',n}\left(\rho^{\hat{\pi}}_{\theta'} < \frac{1}{2}\right)
\geq
\P_{\theta',n}(\mathcal{E}_{i'}^c \cap \mathcal{F}_{a'}^c)
\geq
\P_{\theta',n}(\mathcal{E}_{i'}^c \cap \mathcal{F}_{a'}^c \cap \mathcal{B}) 
=
\P(\mathcal{E}_{i'}^c \cap \mathcal{F}_{a'}^c | \mathcal{B}) \P_{\theta'}(\mathcal{B})
\geq
\frac{1}{4} \cdot 4\delta
= \delta
,\]
where the final inequality follows from Lemma \ref{lem:bad_event_prob}.

In summary, we have shown that
\[
\max_{\theta \in \Theta} \P_{\theta,n} \left( \rho_\theta^* - \rho_\theta^{\mathscr{A}(\mathcal{D})} \geq \frac{1}{2} \right) \geq \delta
,\]
as desired.
\qed

\subsection{Auxiliary lemmas}
\begin{lem} \label{lem:bad_event_prob}
For all $\theta \in \Theta$, we have $\P_{\theta,n}(\mathcal{B}) \geq 4\delta$.
\end{lem}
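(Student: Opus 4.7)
The plan is to compute $\P_{\theta,n}(\mathcal{B})$ explicitly by decomposing over the independent sample blocks in $\mathcal{D}$, and then lower-bound the resulting product of Bernoulli-trial probabilities using elementary logarithmic inequalities. The key observation is that the hypothesis $\delta \leq e^{-9}$ gives plenty of room: the lower bound will be of the form $C\delta^c$ with $c$ strictly less than $1$, which dominates $4\delta$ once $\delta$ is small.

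First I would decompose the event $\mathcal{B}$ across the independent sample blocks. The samples from the absorbing action $(0,i)$ contribute the factor $1$, and $n(0,a) = 0$ for $a \geq 2$ so those contribute nothing. By independence across the remaining blocks,
\[
\P_{\theta,n}(\mathcal{B}) = (1-p)^{m+t_\delta}\bigl(1-\tfrac{1}{T}\bigr)^{t_\delta}(1-q)^{(A-1)t_\delta},
\]
the three factors coming respectively from the blocks $(0, 1-i)$, $(1, b)$, and $\{(1, a) : a \neq b\}$.

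Next I would bound each factor separately using $\log(1-x) \geq -x/(1-x)$ for $x \in [0,1)$. For the first factor, substituting $p/(1-p) = 1/(3(m+T) - 1)$ and splitting $(m + t_\delta)/(3(m+T)-1) \leq m/(3m) + t_\delta/(3T - 1)$, then invoking $t_\delta \leq T \log(1/\delta)/6 + 1$ together with $T/(3T-1) \leq 1/2$ (valid for $T \geq 1$), produces an exponent of $1/12$ on $\delta$ plus a bounded constant term. Analogously, using $T/(T-1) \leq 4/3$ for $T \geq 4$ yields $\delta^{2/9}$ for the second factor, and the elementary bound $(A-1)/(AT-1) \leq 1/T$ (valid for $T \geq 1$) yields $\delta^{1/6}$ for the third factor. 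Each factor additionally picks up an absolute-constant multiplier from the bounded leftover terms.

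Finally I would multiply the three lower bounds to obtain $\P_{\theta,n}(\mathcal{B}) \geq C\delta^{17/36}$ for an explicit absolute constant $C > 0$, and verify this exceeds $4\delta$. The inequality reduces to $\delta^{19/36} \leq C/4$, which holds with ample margin under $\delta \leq e^{-9}$ since $(e^{-9})^{19/36} = e^{-19/4}$ is much smaller than any fixed constant $C/4$. The main obstacle will simply be the bookkeeping: tracking constants carefully enough to ensure the three $\delta$-exponents $1/12$, $2/9$, $1/6$ really do sum to strictly less than $1$, and making $C$ explicit enough for the final numerical comparison to go through. No step is genuinely delicate, as the threshold $1/e^9$ is quite generous relative to what the calculation actually demands.
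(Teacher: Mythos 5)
Your proof is correct and uses the same decomposition as the paper: the event $\mathcal{B}$ factors over the independent sample blocks into exactly the product $(1-p)^{m+t_\delta}\bigl(1-\tfrac{1}{T}\bigr)^{t_\delta}(1-q)^{(A-1)t_\delta}$ that the paper computes (the paper writes it as $\P(\mathcal{B}_0)\P(\mathcal{B}_1)$). The only difference is in the elementary estimates. The paper bounds the first factor by a case split on $m \geq t_\delta$ versus $m < t_\delta$ and uses tailored auxiliary lemmas (based on $\log(1-y)\geq -y-y^2$) to land on exactly $\P(\mathcal{B}_0)\geq 4\delta^{1/3}$ and $\P(\mathcal{B}_1)\geq \delta^{2/3}$, whose product is precisely $4\delta$. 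You instead apply the uniform bound $\log(1-x)\geq -x/(1-x)$ to all three factors, avoid the case split, and arrive at $C\delta^{17/36}$ with $C=e^{-(1/3+1/11+1/3+1/4)}\approx 0.36$; the final comparison $\delta^{19/36}\leq C/4$ indeed holds comfortably under $\delta\leq e^{-9}$ since $e^{-19/4}\approx 0.0087 < 0.09$. Your route is slightly less tight in the exponent but cleaner in the bookkeeping, and all the intermediate inequalities you cite ($T/(3T-1)\leq 1/2$, $T/(T-1)\leq 4/3$ for $T\geq 4$, $(A-1)/(AT-1)\leq 1/T$) check out. One remark: like the paper, you implicitly rely on $n(0,a)=0$ for $a\geq 2$ (those actions transition deterministically to state $1$, so any sample from them would kill $\mathcal{B}$); this matches the construction's intent, so it is not a gap, but it is worth stating since the dataset-construction step only specifies $n$ on the pairs that receive samples.
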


\begin{proof}
By symmetry $\P_\theta(\mathcal{B})$ are equal for all $\theta$, so for ease of notation we drop the subscript $\theta$.
Let $\mathcal{B}_0$ be the event that $\mathcal{D}$ contains no transitions from state 0 to state 1, and let $\mathcal{B}_1$ be the event that $\mathcal{D}$ contains no transitions from state 1 to state 0.
Then \[
\P(\mathcal{B}) = \P(\mathcal{B}_0 \cap \mathcal{B}_1) = \P(\mathcal{B}_0) \P(\mathcal{B}_1)
,\]
with the last equality following by independence.
Now,
\[
\P(\mathcal{B}_0) 
= (1-p)^{m+t_\delta}
.\]
Recall that $p = \frac{1}{3(m+T)}$. In the case that $m \geq t_\delta$, we have
\begin{equation} \label{eq:bad_event_eq1}
(1-p)^{m+t_\delta} 
\geq \left(1 - \frac{1}{6m}\right)^{2m} \geq \frac{1}{e}
,\end{equation}
with the last inequality following from Lemma \ref{lem:geom_bound_fixed} with $x=2m$ and $c=3$.
Otherwise, when $m < t_\delta$, we have
\begin{equation} \label{eq:bad_event_eq2}
(1-p)^{m+t_\delta} 
\geq \left( 1 - \frac{1}{6T} \right)^{2t_\delta} 
\geq 4\delta^{1/3}
,\end{equation}
with the last inequality following from claim 3 of Lemma \ref{lem:geom_bound_delta} with $x=2T$.
Combining Equations (\ref{eq:bad_event_eq1}) and (\ref{eq:bad_event_eq2}) and the fact that $4\delta^{1/3} \leq \frac{1}{e}$, we have
\[
\P(\mathcal{B}_0) \geq 4\delta^{1/3}
.\]

Next,
\[
\P(\mathcal{B}_1) = \left(1-\frac{1}{T} \right)^{t_\delta} (1-q)^{(A-1)t_\delta}
.\]
Claim 2 of Lemma \ref{lem:geom_bound_delta} with $x=T$ gives us that $\left(1-\frac{1}{T} \right)^{t_\delta} \geq \delta^{1/3}$.
Moreover, recalling that $q = \frac{1}{AT}$, we have
\[
(1-q)^{(A-1)t_\delta}
\geq (1-q)^{At_\delta}
= \left( 1- \frac{1}{AT} \right)^{A t_\delta}
\geq \delta^{1/3}
,\]
with the last inequality following from claim 2 of Lemma \ref{lem:geom_bound_delta} with $x=AT$.
Hence, $\P(\mathcal{B}_1) \geq \delta^{2/3}$, and consequently, $\P(\mathcal{B}) \geq 4\delta$. 

\end{proof}

\begin{lem} \label{lem:geom_bound_fixed}
For all $x\geq 2$ and $c\geq 2$, we have
\[
\left( 1 - \frac{1}{cx} \right)^x \geq \frac{1}{e}
.\]
\end{lem}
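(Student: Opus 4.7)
The plan is to reduce the target inequality to a one-line analytic bound on $\ln(1-y)$ via taking logarithms. Taking the logarithm of both sides, the claim $\left(1 - \tfrac{1}{cx}\right)^x \geq \tfrac{1}{e}$ is equivalent to showing
\begin{align*}
    x \ln\!\left(1 - \tfrac{1}{cx}\right) \geq -1.
\end{align*}

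The key analytic input is the standard inequality $\ln(1-y) \geq -\tfrac{y}{1-y}$, valid for $y \in [0,1)$. (One can verify this in one line by differentiating $f(y) = \ln(1-y) + \tfrac{y}{1-y}$ and observing that $f'(y) = \tfrac{y}{(1-y)^2} \geq 0$ with $f(0) = 0$.) Applying this with $y = 1/(cx)$ yields
\begin{align*}
    x\ln\!\left(1 - \tfrac{1}{cx}\right) \geq -x \cdot \frac{1/(cx)}{1 - 1/(cx)} = -\frac{x}{cx - 1}.
\end{align*}

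It then suffices to show $\tfrac{x}{cx-1} \leq 1$, equivalently $(c-1)x \geq 1$. This is immediate from the hypotheses $c \geq 2$ and $x \geq 2$, which give $(c-1)x \geq 2$. Combining and exponentiating gives the desired bound. There is no significant obstacle here; the entire argument is a few lines once the right logarithmic inequality is invoked.
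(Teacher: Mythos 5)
Your proof is correct and follows essentially the same route as the paper: take logarithms and apply a standard lower bound on $\log(1-y)$ at $y = 1/(cx)$, then use $c \geq 2$ and $x\geq 2$ to finish. The only (immaterial) difference is that you use $\log(1-y) \geq -y/(1-y)$ where the paper uses $\log(1-y)\geq -y-y^2$; both are valid and both close the argument.
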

\begin{proof}
We have
\begin{align*}
\log\left( \left( 1- \frac{1}{cx}\right)^x \right) 
&= x\log\left( 1 - \frac{1}{cx}\right) \\
&\geq x\left(- \frac{1}{cx} - \frac{1}{c^2x^2} \right) \\
&= -\frac{1}{c} \left(1 + \frac{1}{cx}\right) \\
&\geq -\frac{2}{c} \\
&\geq -1 \\
&= \log\left(\frac{1}{e}\right),
\end{align*}
where the first inequality follows from $\log(1-y) \geq -y - y^2$ for $y \in[0,0.68]$.
Since $\log x$ is monotonically increasing, we are done.
\end{proof}

\begin{lem} \label{lem:geom_bound_delta}
For any $x \geq 4$, the following holds:
\begin{enumerate}
\item For any $\delta\in\left(0,\frac{1}{e}\right]$, we have 
$\left(1-\frac{1}{x}\right)^{\left\lceil \frac{x}{2} \log \left( \frac{1}{\delta} \right)\right\rceil}
\geq \delta$.

\item For any $\delta\in\left(0,\frac{1}{e^3}\right]$, we have
$\left(1-\frac{1}{x}\right)^{\left\lceil \frac{x}{6} \log \left( \frac{1}{\delta} \right)\right\rceil}
\geq \delta^{1/3}$.

\item 
For any $\delta\in\left(0,\frac{1}{e^9}\right]$, we have
$\left(1-\frac{1}{3x}\right)^{\left\lceil \frac{x}{6} \log \left( \frac{1}{\delta} \right)\right\rceil}
\geq 4 \delta^{1/3}$.
\end{enumerate}
\end{lem}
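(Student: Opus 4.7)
\textbf{Proof proposal for Lemma \ref{lem:geom_bound_delta}.} All three claims have the same shape: a bound of the form $(1-y)^{\lceil \alpha \log(1/\delta)\rceil} \geq C\delta^{\beta}$ with $y \in \{1/x, 1/(3x)\}$, $\alpha \in \{x/2, x/6\}$, and $(C,\beta) \in \{(1,1), (1, 1/3), (4, 1/3)\}$. The plan is to take logarithms on both sides; each claim then reduces to verifying an affine inequality in $\log(1/\delta)$, which holds under the stated lower bound on $\log(1/\delta)$. The key analytic tool is the same quadratic bound $\log(1-y) \geq -y - y^2$ (valid for $y \in [0,0.68]$) used in the proof of Lemma \ref{lem:geom_bound_fixed}. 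For $x \geq 4$ both $y = 1/x \leq 1/4$ and $y = 1/(3x) \leq 1/12$ lie comfortably in this range.

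First I would handle the ceiling using $\lceil \alpha \log(1/\delta)\rceil \leq \alpha \log(1/\delta) + 1$, and then write
\[
\log\bigl((1-y)^{\lceil \alpha \log(1/\delta)\rceil}\bigr) \geq -(\alpha \log(1/\delta) + 1)\cdot y (1+y)
= -\alpha y (1+y)\log(1/\delta) - y(1+y).
\]
Plugging in $y = 1/x$ with $1 + 1/x \leq 5/4$ (using $x \geq 4$) turns each of Claims 1 and 2 into an inequality of the form $c_1 \log(1/\delta) + c_2 \leq c_3 \log(1/\delta)$: for Claim 1 this becomes $\tfrac{5}{16} \leq \tfrac{3}{8}\log(1/\delta)$, which is immediate from $\log(1/\delta) \geq 1$; for Claim 2 it becomes $\tfrac{5}{16} \leq \tfrac{1}{8}\log(1/\delta)$, which follows from $\log(1/\delta) \geq 3$.

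Claim 3 is handled analogously with $y = 1/(3x)$ and $1 + 1/(3x) \leq 13/12$, but now the right-hand side of the target inequality is $\log(4) - \tfrac{1}{3}\log(1/\delta)$ rather than $-\tfrac{1}{3}\log(1/\delta)$. Rearranging yields
\[
\tfrac{13}{216}\log(1/\delta) + \tfrac{13}{144} \leq \tfrac{1}{3}\log(1/\delta) - \log 4,
\quad\text{i.e.,}\quad
\log(1/\delta) \geq \tfrac{216}{59}\bigl(\log 4 + \tfrac{13}{144}\bigr).
\]
A quick numerical check shows the right side is less than $6$, so the assumption $\log(1/\delta) \geq 9$ comfortably suffices. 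The main (and really only) obstacle is bookkeeping the constants: the factor of $4$ on the right-hand side of Claim 3 is what forces the stronger hypothesis $\delta \leq 1/e^9$, and one must keep the quadratic correction $y^2$ from $\log(1-y) \geq -y-y^2$ together with the ``$+1$'' from the ceiling, so that the leftover additive constant remains dominated by the available slack in $\log(1/\delta)$.
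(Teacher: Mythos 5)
Your proposal is correct; all three affine inequalities you reduce to check out numerically (I verified the constants $\tfrac{5}{16}\le\tfrac38\log(1/\delta)$, $\tfrac{5}{16}\le\tfrac18\log(1/\delta)$, and $\log(1/\delta)\ge\tfrac{216}{59}(\log 4+\tfrac{13}{144})\approx 5.41$ against the respective hypotheses $\log(1/\delta)\ge 1,3,9$). For Claim 1 your argument is essentially the paper's: same handling of the ceiling via the exponent $\alpha\log(1/\delta)+1$ and the same bound $\log(1-y)\ge -y-y^2$. Where you diverge is Claims 2 and 3: the paper does not redo the logarithmic computation but instead reduces Claim 2 to Claim 1 by the substitution $\delta'=\delta^{1/3}$ (noting $\tfrac{x}{6}\log(1/\delta)=\tfrac{x}{2}\log(1/\delta')$), and reduces Claim 3 to Claim 2 by the substitutions $y=3x$, $\delta'=\delta^{1/3}$, finishing with the elementary observation that $(\delta')^{1/3}=\delta^{1/9}\ge 4\delta^{1/3}$ because $\delta^{1/3}<\tfrac18$. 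The paper's route is shorter and makes transparent why the thresholds $1/e$, $1/e^3$, $1/e^9$ scale as they do; your route is more uniform and self-contained, at the cost of tracking slightly messier constants (e.g.\ $\tfrac{13}{216}$, $\tfrac{59}{216}$) in Claim 3. Either is acceptable.
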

\begin{proof}
We will prove claim 1 by showing that $\left(1-\frac{1}{x}\right)^{\frac{x}{2} \log \left( \frac{1}{\delta} \right) + 1}
\geq \delta$.
For any $x \geq 4$ and $\delta \in \left(0,\frac{1}{e}\right]$, we have
\begin{align*}
\log\left(\left( 1 - \frac{1}{x}\right)^{\frac{x}{2}\log\left( \frac{1}{\delta}\right)+1} \right)
&= \left(\frac{x}{2}\log\left(\frac{1}{\delta}\right)+1\right) \log\left(1 - \frac{1}{x}\right) \\
&\geq \left(\frac{x}{2}\log\left(\frac{1}{\delta}\right)+1\right) \left( -\frac{1}{x} - \frac{1}{x^2} \right) \\
&= \left( \frac{1}{2} + \frac{1}{2x} \right)\log\delta
 - \frac{1}{x} - \frac{1}{x^2}\\
&\geq \frac{5}{8} \log\delta - \frac{5}{16}\\
&= \frac{5}{8} \log\delta + \frac{5}{16}\log\left(\frac{1}{e}\right) \\
&\geq \left(\frac{5}{8}+\frac{5}{16}\right)\log \delta \\
&\geq \log\delta,
\end{align*}
where the first inequality follows from $\log(1-y) \geq -y - y^2$ for $y \in[0,0.68]$.
Since $\log x$ is monotonically increasing, claim 1 follows. 

For claim 2, take $x\geq 4$ and $\delta \in \left(0,\frac{1}{e^3}\right]$.
Then $\delta' = \delta^{1/3} \in \left(0,\frac{1}{e}\right]$, so by claim 1 we have
\[
\left(1-\frac{1}{x}\right)^{\left\lceil \frac{x}{6} \log \left( \frac{1}{\delta} \right)\right\rceil}
=\left(1-\frac{1}{x}\right)^{\left\lceil \frac{x}{2} \log \left( \frac{1}{\delta'} \right)\right\rceil}
\geq \delta'
= \delta^{1/3}
.\]

Finally, for claim 3, take $x\geq 4$ and $\delta \in \left(0,\frac{1}{e^9}\right]$, and let $y = 3x$.
Since $\delta' = \delta^{1/3}\in \left(0,\frac{1}{e^3}\right]$, claim 2 gives us that
\[
\left(1-\frac{1}{3x}\right)^{\left\lceil \frac{x}{6} \log \left( \frac{1}{\delta} \right)\right\rceil}
=\left(1-\frac{1}{y}\right)^{\left\lceil \frac{y}{6} \log \left( \frac{1}{\delta'} \right)\right\rceil}
\geq (\delta')^{1/3}
\geq 4\delta^{1/3}
,\]
where the last inequality holds because $\delta^{1/3} < \frac{1}{8}$.
\end{proof}

\section{Proof of Theorem \ref{thm:recurrent_lb}}
\label{sec:proof_dataset_lb}

We define the absolute constants $c_1 = 4$ and $c_2 = 33$.
Let $T \geq c_1$, $S \geq c_2$, $k\geq 0$, and $m \geq \max \{ TS,kS\}$ be arbitrary.

\paragraph{Step 1: MDP construction}

Define $S' = S-1$, $D = T-2$, $\varepsilon =  \frac{1}{256} \sqrt{\frac{TS}{m}}$.
Note that $\varepsilon \leq \frac{1}{256}$.
Let $p = \frac{1-\varepsilon}{D}$ and $q = \frac{1}{D}$.
The set of states is $\S = \{0,1,\dots,S'\}$ and the set of actions is $\A = \{0,1,\dots,S'\}$.
The reward function $r : \S \times \A \to [0,1]$ is defined to be 1 when $s \neq 0$ and $a\leq 1$, and 0 otherwise.
We define an index set $\Theta = \{0,1\}^{S'}$.
For each $\theta \in \Theta$, we define the transition matrix $P_\theta$ as follows:

\begin{center}
\begin{tabular}{ c | c | c}
$s$ & $a$ & $P_\theta(s' |s,a)$ \\ 
 \hline
0 & 0 & $(1-q)\ind(s'=s) + \frac{q}{S'} \sum_{s''\geq1} \ind(s'=s'')$ \\  
0 & $a\geq 1$ & $(1-\frac{q}{2})\ind(s'=s) + \frac{q}{2S'} \sum_{s''\geq1} \ind(s'=s'')$ \\  
$s\geq1$ & $\theta_s$ & $(1-p)\ind(s'=s) + p\ind(s'=0)$\\
$s\geq1$ & $1-\theta_s$ & $(1-q)\ind(s'=s) + q\ind(s'=0)$\\
$s \geq 2$ & $s$ & $\frac{1}{2}\ind(s'=1) + \frac{1}{2S'} \sum_{s''\geq1} \ind(s'=s'')$\\
$s \geq 1$ & $a \neq s,a\geq2$ & $\frac{1}{2}\ind(s'=a) + \frac{1}{2S'} \sum_{s''\geq1} \ind(s'=s'')$\\
\end{tabular}
\end{center}

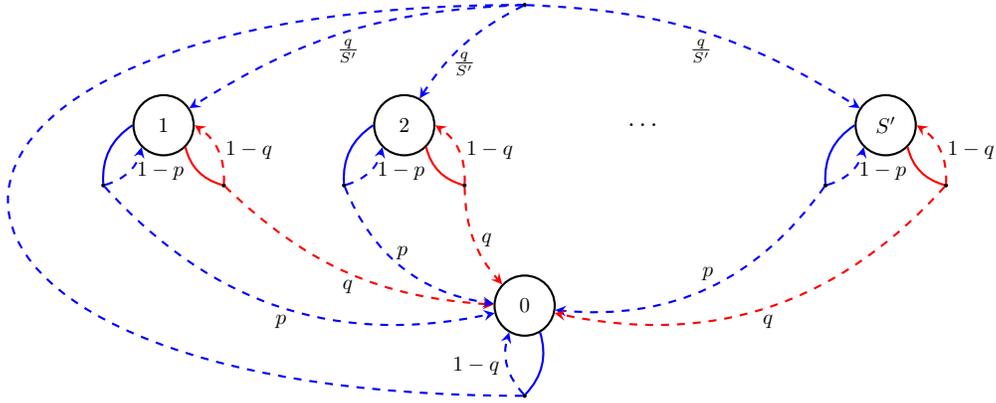
\begin{figure}[H]
    \centering
    \begin{tikzpicture}[->, >=stealth, thick, node distance=2cm, scale=0.8, transform shape]
    
    \path[use as bounding box] (-6.5,-2) rectangle (6,5.5);
    
    \node[circle, draw, minimum size=1cm] (s0) at (0,0) {0};
    \node[circle, draw, minimum size=1cm] (s1) at (-6,3) {1};
    \node[circle, draw, minimum size=1cm] (s2) at (-2,3) {2};
    \node (sdots) at (2,3) {\large $\cdots$};
    \node[circle, draw, minimum size=1cm] (sp) at (6,3) {$S'$};
    
    \node [circle, draw, fill, inner sep=0.01cm] (dot1p) at (-7, 2) {};
    \node [circle, draw, fill, inner sep=0.01cm] (dot1q) at (-5, 2) {};
    \node [circle, draw, fill, inner sep=0.01cm] (dot2p) at (-3, 2) {};
    \node [circle, draw, fill, inner sep=0.01cm] (dot2q) at (-1, 2) {};
    \node [circle, draw, fill, inner sep=0.01cm] (dotsp) at (5, 2) {};
    \node [circle, draw, fill, inner sep=0.01cm] (dotsq) at (7, 2) {};
    \node [circle, draw, fill, inner sep=0.01cm] (dot0a) at (0, -1.5) {};
    \node [circle, draw, fill, inner sep=0.01cm] (dot0b) at (0, 5) {};
    
    \path (s1.west) edge[-,blue] [bend right] node [left] {} (dot1p) ;
    \path (dot1p) edge[dashed,blue] [bend right] node [below,text=black] {$p$} (s0);
    \path (dot1p) edge[dashed,blue] [bend right] node [right,text=black] {$1-p$} (s1.south west);
    
    \path (s1.south east) edge[-,red] [bend right] node [below] {} (dot1q) ;
    \path (dot1q) edge[dashed,red] [bend right=20] node [below,text=black] {$q$} (s0);
    \path (dot1q) edge[dashed,red] [bend right] node [right,text=black] {$1-q$} (s1.east);
    
    \path (s2.west) edge[-,blue] [bend right] node [above] {} (dot2p) ;
    \path (dot2p) edge[dashed,blue] [bend right] node [above,text=black] {$p$} (s0);
    \path (dot2p) edge[dashed,blue] [bend right] node [right,text=black] {$1-p$} (s2.south west);
    
    \path (s2.south east) edge[-,red] [bend right] node [above] {} (dot2q) ;
    \path (dot2q) edge[dashed,red] [bend right=20] node [right,text=black] {$q$} (s0);
    \path (dot2q) edge[dashed,red] [bend right] node [right,text=black] {$1-q$} (s2.east);
    
    \path (sp.west) edge[-,blue] [bend right] node [above] {} (dotsp) ;
    \path (dotsp) edge[dashed,blue] [bend left] node [above,text=black] {$p$} (s0);
    \path (dotsp) edge[dashed,blue] [bend right] node [right,text=black] {$1-p$} (sp.south west);
    
    \path (sp.south east) edge[-,red] [bend right] node [above] {} (dotsq) ;
    \path (dotsq) edge[dashed,red] [bend left] node [below,text=black] {$q$} (s0);
    \path (dotsq) edge[dashed,red] [bend right] node [right,text=black] {$1-q$} (sp.east);

    \path (s0) edge[-,blue] [bend left] node [right] {} (dot0a);
    \path (dot0a) edge[dashed,blue] [bend left] node [left,text=black] {$1-q$} (s0);
    \draw[dashed, -, blue] (dot0a) to [bend left=90,looseness=4.5] (dot0b) ;
    \path (dot0b) edge[dashed,blue] [bend right=15] node [below,text=black] {$\frac{q}{S'}$} (s1);
    \path (dot0b) edge[dashed,blue] [bend right=15] node [below,text=black] {$\frac{q}{S'}$} (s2);
    \path (dot0b) edge[dashed,blue] [bend left=15] node [below,text=black] {$\frac{q}{S'}$} (sp);
    \end{tikzpicture}
    \caption{Diagram of the MDP $(P_{(0,\dots,0)},r)$ only including actions 0 and 1. Arrows splitting into multiple dashed arrows indicate stochastic transitions, and each dashed arrow is annotated with the associated probability. Blue arrows represent action 0 and red arrows represent action 1. The reward is 0 at state 0 and the reward is 1 at all other states. 
    In general, the MDP $(P_\theta, r)$ is similar, except in each state $s\geq1$, the blue arrow represents the optimal action $\theta_s$.
    \label{fig:recurrent_lb}}
\end{figure}
Observe that the decision-maker needs to decide between two actions in states $1,\dots,S'$. Both actions give an immediate reward of $1$, but one action has a slightly higher probability of transiting to the bad state 0. At state 0, which has a reward of 0, the agent will likely be trapped for a long time before returning to one of states $1,\dots,S'$.
See Figure \ref{fig:recurrent_lb} for a diagram of the MDP $(P_\theta,r)$ for $\theta=(0,\dots,0)$.
We now state some easily verifiable facts about the MDP $(P_\theta, r)$:
\begin{itemize}
\item The MDP has $S$ states, is unichain, and has diameter $\frac{1}{q}+\frac{1}{1/2} = D+2= T$.
\item There is a unique gain-optimal policy $\pistar_\theta$. It takes action 0 in state 0 and action $\theta_s$ in state $s$ for $s\geq 1$.
\item $\mu^{\pistar_\theta}_\theta(0) = \frac{p}{p+q} = \frac{1-\varepsilon}{2-\varepsilon}$. By symmetry, it follows that $\mu^{\pistar_\theta}_\theta(s) = \frac{1}{S'}\left(1-\mu_\theta^{\pistar_\theta}\right) = \frac{1/S'}{2-\varepsilon}$ for $s\geq 1$.
\item The optimal gain is $\rho^*_\theta = 1 - \mu_\theta^{\pistar_\theta}(0) = \frac{1}{2-\varepsilon}$.
\end{itemize}

Note that actions $2,\dots,S'$ for states $s\geq 1$ are always suboptimal, and only exist to keep the diameter bounded by $T$.
Furthermore, actions $1,\dots,S'$ in state 0 simply keep the action space independent of the state, consistent with our upper bounds.
As such, whenever we consider some policy $\pi$, we will assume that it may only take actions 0 and 1 in states $s\geq 1$ and action 0 in state 0.

\paragraph{Step 2: dataset construction}

We define $n : \mathcal{S} \times \mathcal{A} \to \mathbb{N}$ by $n(0,0) = m$ and 
\[
n(s,a) = \frac{2m}{S'}
\]
for all $s\geq 1$ and $a\in\{0,1\}$.
For all other $(s,a)$ we set $n(s,a) = 0$.
Observe that this choice of $n$ satisfies $n(s,\pistar_\theta(s)) = \frac{m}{S'} + \frac{m}{S'} \geq m\mu_\theta^{\pistar_\theta}(s) + k$ for all $s\in\S$.

\paragraph{Step 3: reduction to estimation}

Given a stationary policy $\pi$ and some $\theta\in\Theta$, let $L_\theta^\pi(s)$ be the proportion of incorrect actions $\pi$ takes in state $s$. 
To be precise, we define $L_\theta^\pi(s) = \pi(1-\theta_s|s)$.
We also set $L_\theta^\pi = \sum_{s=1}^{S'} L_\theta^\pi(s)$.
By Lemma \ref{lem:gain}, we can upper bound the gain of a policy $\pi$ in terms of $L^\pi_\theta$:
\[
    \rho_\theta^\pi \leq \frac{1+\varepsilon^2}{2-\varepsilon(1-L_\theta^\pi/S')}
.\]
Subsequently, for any stationary policy $\pi$,
\begin{equation} \label{eq:optimality_gap}
\rho_\theta^* - \rho_\theta^{\pi} \geq \frac{1}{2-\varepsilon} - \frac{1+\varepsilon^2}{2-\varepsilon(1-L_\theta^\pi/S')} \geq  \frac{\varepsilon L_\theta^\pi/S'  - 2\varepsilon^2}{4}
.
\end{equation}

Now, suppose the underlying MDP is $(P_\theta,r)$.
Let $\mathscr{A}$ be an algorithm that maps the dataset to a stationary policy $\hat{\pi}=\mathscr{A}(\mathcal{D})$,
and consider the estimator $\hat{\theta}^\mathscr{A}$ whose $s$th coordinate is $\hat{\pi}(1|s)$.
By the definition of $L_\theta^{\hat{\pi}}$, we have $L_\theta^{\hat{\pi}} = \left\| \hat{\theta}^\mathscr{A} - \theta \right\|_1$.
Our next step is to show that no estimator can achieve low $\ell_1$ error uniformly over $\Theta$ with high probability, a result which will lower bound $L_\theta^{\hat{\pi}}$ and consequently also the sub-optimality of $\hat{\pi}$ for some $\theta$.

\paragraph{Step 4: Fano's method}

We will achieve such a lower bound with Fano's method.
First, by the Gilbert-Varshamov Lemma (Lemma \ref{lem:gilbert_varshamov}), there exists some subset $\Theta'\subset\Theta$ such that $|\Theta'| \geq 2^{S'/8}$ and $\|\theta - \theta'\|_1 \geq S'/8$ for any $\theta\neq\theta'\in \Theta'$.
Since $\max_{\theta,\theta'\in\Theta'} \text{KL}(\P_{\theta,n} \,\|\, \P_{\theta',n}) \leq (S'/16 - 1)\log2$ by Lemma \ref{lem:maxkl}, Local Fano's (Lemma \ref{lem:fano}) gives us that for any estimator $\hat{\theta}$,

\[
\max_\theta \E_{\theta,n}\left[ \left\| \hat{\theta} - \theta \right\|_1 \right] \geq \frac{S'}{16} \left( 1 - \frac{(S'/16 - 1)\log2+ \log2}{\log\left(2^{S'/8}\right)} \right) \geq \frac{S'}{32}
,\]
which implies that
\[
\max_{\theta\in\Theta} \P_{\theta,n}\left( \left\| \hat{\theta} - \theta \right\|_1 > \frac{S'}{64} \right) \geq \frac{1}{64}
.\]
Since the above holds for estimator of the dataset, it of course holds for $\hat{\theta}^\mathscr{A}$, where $\mathscr{A}$ is any algorithm that maps the dataset to a stationary policy.
Therefore,
\begin{equation} \label{eq:lb_error}
\max_\theta \P_{\theta,n}\left( L^{\mathscr{A}(\mathcal{D})}_\theta > \frac{S'}{64} \right) \geq \frac{1}{64}
.
\end{equation}
Now, by Equation \ref{eq:optimality_gap}, in the event that $L_\theta^{\mathscr{A}(\mathcal{D})} > \frac{S'}{64}$,
\[
\rho_\theta^* - \rho_\theta^{\mathscr{A}(\mathcal{D})} > \frac{\varepsilon/64 - 2\varepsilon^2}{4} \geq \frac{\varepsilon}{512}
= 2^{-17} \sqrt{\frac{TS}{m}}
,\]
with the second inequality holding by $\varepsilon \leq \frac{1}{256}$.
Thus, plugging back into Equation \ref{eq:lb_error} yields
\[
\max_\theta \P_{\theta,n}\left( \rho_\theta^* - \rho_\theta^{\mathscr{A}(\mathcal{D})} > c_3 \sqrt{\frac{TS}{m}}\right) \geq \frac{1}{64}
,\]
with $c_3 = 2^{-17}$.
\qed

\subsection{Auxiliary lemmas}

\begin{lem} \label{lem:gain}
Let $\pi$ be a stationary policy on MDP $M_\theta$. Then
\[
\rho_\theta^\pi \leq \frac{1+\varepsilon^2}{2-\varepsilon(1-L_\theta^\pi/S')}
.\]
\end{lem}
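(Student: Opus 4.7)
\textbf{Proof proposal for Lemma~\ref{lem:gain}.} Since the Markov chain $P_{\theta,\pi}$ is irreducible (every state communicates with $0$), the gain $\rho_\theta^\pi$ is state-independent and equals the expected reward under the stationary distribution $\mu^\pi$. My plan is to compute $\mu^\pi$ in closed form, express $\rho_\theta^\pi$ as a simple function of an average quantity, and then apply one convexity inequality plus routine algebra.

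First I would observe that, under the assumption that $\pi$ only plays actions $0$ or $1$ in states $s\ge1$ and action $0$ in state $0$, the one-step probability of falling back to state $0$ from state $s\ge1$ is
\[
P_\pi(0\,|\,s)=\pi(\theta_s\,|\,s)p+\pi(1-\theta_s\,|\,s)q=\frac{1-\varepsilon\alpha_s}{D},\qquad \alpha_s:=1-L_\theta^\pi(s),
\]
using $p=(1-\varepsilon)/D$, $q=1/D$, $q-p=\varepsilon/D$. The only way to enter state $s\ge1$ is from state $0$ (with probability $q/S'$ per action), so the flow-balance equation $\mu^\pi(s)P_\pi(0\,|\,s)=\mu^\pi(0)\cdot q/S'$ gives
\[
\mu^\pi(s)=\frac{\mu^\pi(0)}{S'(1-\varepsilon\alpha_s)}\quad(s\ge1),\qquad \mu^\pi(0)=\frac{1}{1+Z},\quad Z:=\frac{1}{S'}\sum_{s=1}^{S'}\frac{1}{1-\varepsilon\alpha_s}.
\]
Since $r$ is $1$ on states $\ge1$ under actions $\{0,1\}$ and $0$ on state $0$, summing yields
\[
\rho_\theta^\pi=1-\mu^\pi(0)=\frac{Z}{1+Z}.
\]

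Next I would upper bound $Z$ using convexity of $g(\alpha)=1/(1-\varepsilon\alpha)$ on $[0,1]$. The chord from $(0,1)$ to $(1,\tfrac{1}{1-\varepsilon})$ dominates $g$, so $g(\alpha)\le 1+\alpha\varepsilon/(1-\varepsilon)$ for each $\alpha\in[0,1]$. Averaging over $s$ and writing $\bar\alpha=\tfrac{1}{S'}\sum_s\alpha_s=1-L_\theta^\pi/S'$ gives
\[
Z\le 1+\frac{\bar\alpha\,\varepsilon}{1-\varepsilon}.
\]
Since $x\mapsto x/(1+x)$ is increasing on $\mathbb{R}_{\ge 0}$, I plug this in and clear fractions by multiplying top and bottom by $(1-\varepsilon)$ to obtain
\[
\rho_\theta^\pi\le\frac{1-\varepsilon+\bar\alpha\varepsilon}{2-2\varepsilon+\bar\alpha\varepsilon}=\frac{1-\varepsilon L_\theta^\pi/S'}{2-\varepsilon-\varepsilon L_\theta^\pi/S'}.
\]

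The final step is an algebraic verification that this last bound is $\le\frac{1+\varepsilon^2}{2-\varepsilon(1-L_\theta^\pi/S')}$. Writing $\ell=L_\theta^\pi/S'\in[0,1]$ and cross-multiplying, the inequality reduces after expansion to $\varepsilon^2\bigl(2-\varepsilon(1+\ell)-\ell+\ell^2\bigr)\ge 0$, and for $\varepsilon\le 1/256$ (in fact any $\varepsilon\le 7/8$) the bracket is at least $2-2\varepsilon-\tfrac14>0$, completing the proof. I don't foresee any real obstacles here: the only non-mechanical step is recognizing that the convex function $g(\alpha)=1/(1-\varepsilon\alpha)$ should be dominated by its secant line over $[0,1]$ so that the per-state losses $L_\theta^\pi(s)$ can be aggregated into the single quantity $\bar\alpha$; everything else is setup (stationary distribution via flow balance) and one explicit polynomial inequality.
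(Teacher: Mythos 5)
Your proposal is correct. The setup — flow balance to get $\mu^\pi(s)\kappa_s=\mu^\pi(0)q/S'$, hence $\rho_\theta^\pi=Z/(1+Z)$ with $Z=\frac{1}{S'}\sum_s\frac{1}{1-\varepsilon(1-L_\theta^\pi(s))}$ — is exactly what the paper does (it isolates this computation as Lemma~\ref{lem:recurrent_lb_gain_computation}). Where you genuinely diverge is the aggregation step. The paper notes that Jensen goes the wrong way for the convex map $f(x)=1/(1-x)$ and invokes a ``reverse Jensen'' inequality (Lemma~\ref{lem:reverse_jensen}, citing Simi\'c), obtaining $Z\le f(\varepsilon\bar\alpha)+f(0)+f(\varepsilon)-2f(\varepsilon/2)\le\frac{1}{1-\varepsilon\bar\alpha}+\varepsilon^2$ and then pushing the additive $\varepsilon^2$ through $x\mapsto x/(1+x)$. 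You instead bound the convex function by its secant line on $[0,1]$, $g(\alpha)\le 1+\alpha\varepsilon/(1-\varepsilon)$, which averages cleanly to $Z\le 1+\bar\alpha\varepsilon/(1-\varepsilon)$ and yields the bound $\frac{1-\varepsilon\ell}{2-\varepsilon-\varepsilon\ell}$; the final cross-multiplication reducing to $\varepsilon^2\bigl(2-\varepsilon(1+\ell)-\ell+\ell^2\bigr)\ge0$ checks out for $\ell\in[0,1]$ and $\varepsilon\le 7/8$. Your route is more elementary (no external reverse-Jensen theorem needed) and in fact produces a slightly sharper intermediate bound, at the cost of one explicit polynomial verification at the end; the paper's route keeps the final expression in the exact form stated in the lemma with less algebra.
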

\begin{proof}
A routine computation (see Lemma \ref{lem:recurrent_lb_gain_computation}) yields
\[
\rho_\theta^\pi = \frac{\frac{q}{S'}\sum_{s=1}^{S'}\frac{1}{\kappa_s}}{1+\frac{q}{S'}\sum_{s=1}^{S'}\frac{1}{\kappa_s}}
,\]
where $\kappa_s = L_\theta^\pi(s) q + (1-L_\theta^\pi(s))p = \frac{1 - \varepsilon(1-L_\theta^\pi(s))}{D}$ is the probability of transiting from state $s$ to state 0 under $\pi$.
Since $\frac{x}{1+x}$ is monotonically increasing for $x>-1$, to achieve the desired upper bound for $\rho_\theta^\pi$ it suffices to find an acceptable upper bound for $\lambda :=\frac{q}{S'}\sum_{s=1}^{S'}\frac{1}{\kappa_s} = \frac{1}{S'}\sum_{s=1}^{S'} \frac{1}{1-\varepsilon(1-L_\theta^\pi(s))}$.

Defining $f(x) = \frac{1}{1-x}$ and $\lambda_s = \varepsilon(1-L_\theta^\pi(s))$, we have that 
\[
\lambda = \sum_{s=1}^{S'} \frac{1}{S'} f(\lambda_s)
.\]
We would like to get a bound that looks like $\lambda \leq f\left( \frac{1}{S'} \sum_{s=1}^{S'} \lambda_s \right)$.
This goal suggests applying Jensen's inequality, but since $f$ is convex for $x < 1$ it gives us an inequality in the wrong direction.
It turns out, however, that because $f$ is nearly linear in the sufficiently small interval of interest, we can obtain an inequality in the right direction with some error term of lower order.

Since $\lambda_s \in [0,\varepsilon]$ for all $s\in\{1,\dots,S'\}$, Lemma \ref{lem:reverse_jensen} give us
\begin{align*}
\lambda 
&\leq f\left( \sum_{s=1}^{S'} \frac{\lambda_s}{S'} \right) + f(0) + f(\varepsilon) - 2f\left(\frac{\varepsilon}{2}\right) \\
&= \frac{1}{1 - \varepsilon(1 - L_\theta^\pi/S')} + 1 + \frac{1}{1-\varepsilon} - \frac{2}{1-\varepsilon/2} \\
&\leq \frac{1}{1 - \varepsilon(1 - L_\theta^\pi/S')} + \varepsilon^2,
\end{align*}
where the last inequality holds for $\varepsilon < \frac{1}{3}$.
Consequently,
\[
\rho_\theta^\pi \leq \frac{\lambda}{1 + \lambda} \leq \frac{\frac{1}{1 - \varepsilon(1 - L_\theta^\pi/S')} + \varepsilon^2}{1 + \frac{1}{1 - \varepsilon(1 - L_\theta^\pi/S')} + \varepsilon^2} \leq \frac{1 + \varepsilon^2}{2- \varepsilon(1 - L_\theta^\pi / S')}
.\]
\end{proof}

\begin{lem}[Gilbert-Varshamov Lemma {\citep[Lemma 4.7]{massart_concentration_2007}}] \label{lem:gilbert_varshamov}
Let $d \geq 8$.
There exists $\Omega_d \subset \{0,1\}^d$ such that $|\Omega_d| \geq 2^{d/8}$ and $\|\omega - \omega' \|_1 \geq d / 8$ for all $\omega \neq \omega' \in \Omega_d$.
\end{lem}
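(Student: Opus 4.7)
The plan is to establish Lemma~\ref{lem:gilbert_varshamov} via the standard greedy packing argument combined with an entropy-based bound on the volume of Hamming balls. Since this result is classical and appears in Massart's monograph (as the paper indicates), I would essentially reproduce the textbook proof; the outline below is what I expect to write.

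First I would construct $\Omega_d$ greedily. Initialize $\Omega_d = \emptyset$ and $R = \{0,1\}^d$. While $R$ is nonempty, pick any $\omega \in R$, add it to $\Omega_d$, and delete from $R$ all $\omega' \in \{0,1\}^d$ with $\|\omega - \omega'\|_1 < d/8$, i.e.\ all points in the open Hamming ball of radius $d/8$ around $\omega$. By construction, any two points eventually placed in $\Omega_d$ lie at Hamming distance at least $d/8$, so the separation property is automatic. Lower bounding $|\Omega_d|$ is the only remaining task.

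Second, I would bound the size of each Hamming ball. At every iteration we delete at most $B := \sum_{k=0}^{\lceil d/8\rceil - 1} \binom{d}{k}$ points, and the process terminates only once all $2^d$ binary strings have been removed, giving $|\Omega_d| \cdot B \geq 2^d$. To convert this into $|\Omega_d| \geq 2^{d/8}$ I would invoke the standard entropy upper bound
\begin{equation*}
\sum_{k=0}^{\lfloor \alpha d \rfloor} \binom{d}{k} \leq 2^{H(\alpha) d}, \qquad \alpha \in [0,1/2],
\end{equation*}
where $H(\alpha) = -\alpha \log_2 \alpha - (1-\alpha)\log_2(1-\alpha)$. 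Setting $\alpha = 1/8$ gives $H(1/8) = \tfrac{3}{8} + \tfrac{7}{8}\log_2(8/7) < 7/8$, hence $B \leq 2^{7d/8}$ and $|\Omega_d| \geq 2^{d - 7d/8} = 2^{d/8}$.

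The only genuine obstacle is the numerical check $H(1/8) \leq 7/8$, which is immediate from the formula; all remaining steps are the classical greedy/volume bookkeeping. The hypothesis $d \geq 8$ plays no essential role beyond ensuring that the radius $d/8$ and the target cardinality $2^{d/8}$ are nontrivial integers, so I would note it only in passing.
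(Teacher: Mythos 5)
Your argument is correct. The paper does not prove this lemma at all --- it is imported verbatim as Lemma 4.7 of Massart's monograph --- and your greedy-packing construction combined with the entropy bound $\sum_{k\leq \alpha d}\binom{d}{k}\leq 2^{H(\alpha)d}$ is precisely the standard proof of that cited result; the separation property is immediate from the pruning rule, the counting identity $|\Omega_d|\cdot B\geq 2^d$ is sound, and the numerical check $H(1/8)\approx 0.544<7/8$ gives the bound with room to spare.
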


\begin{lem} \label{lem:maxkl}
For any $\theta, \theta' \in \Theta$, we have
\[
\textnormal{KL}(\P_{\theta,n} \,\|\, \P_{\theta',n}) \leq \left( \frac{S'}{16} - 1\right) \log 2
.\]
\end{lem}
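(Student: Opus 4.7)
The plan is to exploit (i) the product structure of the dataset distribution and (ii) the fact that $P_\theta$ and $P_{\theta'}$ differ only at a restricted collection of state-action pairs.

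First, for each $(s,a)$ the dataset contains $n(s,a)$ i.i.d.\ samples from $P_\theta(\cdot\mid s,a)$, and samples across different $(s,a)$ are independent, so $\P_{\theta,n}$ is a product measure. By tensorization of KL divergence,
\[
\text{KL}(\P_{\theta,n}\,\|\,\P_{\theta',n}) = \sum_{s,a} n(s,a)\cdot \text{KL}\left(P_\theta(\cdot\mid s,a)\,\|\,P_{\theta'}(\cdot\mid s,a)\right).
\]
Examining the MDP construction, $P_\theta(\cdot\mid s,a) = P_{\theta'}(\cdot\mid s,a)$ for all $(s,a)$ except when $s\geq 1$, $a\in\{0,1\}$, and $\theta_s\neq\theta'_s$. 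Hence only $2\|\theta-\theta'\|_1$ pairs contribute to the sum, each with $n(s,a)=2m/S'$ samples; moreover, the zero-sample pairs in the construction ($a\geq 2$, etc.) drop out regardless.

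At each contributing pair the two transition distributions are Bernoulli on $\{0,s\}$, with parameters $p=(1-\varepsilon)/D$ and $q=1/D$ swapped between the two MDPs (and between the two actions). I would bound each such Bernoulli KL by the standard $\chi^2$-style inequality
\[
\text{KL}(\text{Bern}(u)\,\|\,\text{Bern}(v)) \leq \frac{(u-v)^2}{v(1-v)}.
\]
Since $|p-q|=\varepsilon/D$ and both $p(1-p)$ and $q(1-q)$ are $\Theta(1/D)$ (using $\varepsilon\leq 1/256$ and $D=T-2\geq 2$), each per-sample KL is at most $C\varepsilon^2/D$ for a small absolute constant $C$. Summing and using $\|\theta-\theta'\|_1\leq S'$ gives
\[
\text{KL}(\P_{\theta,n}\,\|\,\P_{\theta',n}) \leq 2\|\theta-\theta'\|_1 \cdot \frac{2m}{S'}\cdot\frac{C\varepsilon^2}{D} \leq \frac{4 C m \varepsilon^2}{D}.
\]

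Plugging in $\varepsilon^2=TS/(256^2 m)$ and using $T/D\leq 2$ (valid since $T\geq 4$) collapses this to a bound of order $S/256^2$, which is well below $(S'/16-1)\log 2\geq \log 2$ for $S\geq 33$. The main obstacle is simply bookkeeping the numerical constants carefully enough to confirm that the prefactor $1/16$ on the right-hand side is comfortably satisfied; no deeper ideas are needed beyond the tensorization identity and the Bernoulli KL bound, and the very small choice of $\varepsilon$ leaves ample slack.
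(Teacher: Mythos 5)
Your proposal is correct and follows essentially the same route as the paper: tensorize the KL over the independent $(s,a)$ sample blocks, reduce each contributing block to a KL between $\mathrm{Bern}(p)$ and $\mathrm{Bern}(q)$ with $|p-q|=\varepsilon/D$, bound it by a $\chi^2$-type inequality of order $\varepsilon^2/D$, and plug in $\varepsilon^2 = 2^{-16}TS/m$ together with $T/(T-2)\le 2$ and $S'\ge 32$. The only cosmetic differences are that you restrict the sum to coordinates where $\theta_s\neq\theta'_s$ (the paper sums over all $s\ge 1$, which costs nothing since the extra terms vanish) and you use the $\chi^2$ denominator $v(1-v)$ rather than the paper's worst-case $p(1-p)$; both give the same $\Theta(1/D)$ scaling and the same final bound.
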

\begin{proof}
Let $\theta,\theta' \in \Theta$.
By the construction of $\P_{\theta,n}$ and $\P_{\theta',n}$, we can decompose
\[
\text{KL}(\P_{\theta,n} \,\|\, \P_{\theta',n})
= 
\sum_{s=0}^{S'} \sum_{a\in\{0,1\}} n(s,a) \text{KL}(P_\theta(\cdot\,|\,s,a)\,\|\, P_{\theta'}(\cdot\,|\,s,a))
.\]
Recalling our choice of $n$, we can further simplify
\[
\text{KL}(\P_{\theta,n} \,\|\, \P_{\theta',n})
=
\sum_{s=1}^{S'} \frac{2m}{S'} \left( \text{KL}(P_\theta(\cdot\,|\,s,0)\,\|\, P_{\theta'}(\cdot\,|\,s,0)) + \text{KL}(1_\theta(\cdot\,|\,s,1)\,\|\, P_{\theta'}(\cdot\,|\,s,1))\right)
,\]
where we remove the $s=0$ term from the sum because the data coming from state 0 has the same distribution for all possible MDPs.
Observing that
\[
\frac{2(p-q)^2}{p(1-p)}
=
\frac{2(\varepsilon/D)^2}{\left(\frac{1-\varepsilon}{D}\right) \left(\frac{D-1+\varepsilon}{D}\right)}
\leq
\frac{2\varepsilon^2}{\left(\frac{1}{2}\right)\left(\frac{D}{2}\right)} = \frac{8\varepsilon^2}{D}
,\]
we can apply Lemma \ref{lem:bernoulli_kl} to further simplify
\begin{align*}
\text{KL}(\P_{\theta,n} \,\|\, \P_{\theta',n})
& = \sum_{s=1}^{S'} \frac{2m}{S'}  \left( \text{KL}(P_\theta(\cdot\,|\,s,0)\,\|\, P_{\theta'}(\cdot\,|\,s,0)) + \text{KL}(1_\theta(\cdot\,|\,s,1)\,\|\, P_{\theta'}(\cdot\,|\,s,1))\right) \\
&\leq 2m  \left(\text{KL}\left(\text{Ber}(p) \,\|\, \text{Ber}(q)\right) + \text{KL}\left(\text{Ber}(q) \,\|\, \text{Ber}(p)\right) \right) \\
&\leq 2m  \frac{8\varepsilon^2}{D}  \\
&= 2m \frac{8 \cdot 2^{-16} \frac{TS}{m}}{T-2} \\
&\leq 2^{-10} S' \\
&\leq \left( \frac{S'}{16} - 1\right) \log 2.
\end{align*}
The final inequality holds due to the assumption that $S \geq 33 \implies S'\geq 32$.
\end{proof}

\begin{lem}[Local Fano's inequality {\cite[Proposition 15.12, Equation 15.34]{wainwright_high-dimensional_2019}}] 
\label{lem:fano}
Let $\mathcal{P}$ be a class of distributions with parameter space $\Theta$, and let $\{\P_1,\dots,\P_N\}\subset \mathcal{P}$.
Letting $\theta(\P) \in\Theta$ denote the parameters of $\P$,
define $\delta = \min_{j\neq k} \| \theta(\P_j) - \theta(\P_k) \|_1$.
For any estimator $\hat{\theta}$, we have
\[
\sup_{\P\in\mathcal{P}} \underset{\mathcal{D}\sim\P}{\mathbb{E}}\left[ \left\| \hat{\theta}(\mathcal{D}) - \theta(\P) \right\|_1 \right]
\geq
\frac{\delta}{2} \left( 1 - \frac{\max_{j,k} \textnormal{KL}(\P_j \,\|\, \P_k) + \log 2}{\log N} \right)
.\]
\end{lem}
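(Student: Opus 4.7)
The plan is to use the standard reduction from estimation to hypothesis testing, combined with Fano's inequality and the data-processing inequality from information theory. The key idea is that if an estimator achieves small expected $\ell_1$ error, then a nearest-neighbor rule based on this estimator yields a good test among the finite collection $\{\P_1,\dots,\P_N\}$, and we can then invoke Fano to lower-bound the testing error.

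First I would reduce estimation to testing as follows. Define the test $\hat{J}(\mathcal{D}) = \argmin_{j \in [N]} \| \hat{\theta}(\mathcal{D}) - \theta(\P_j)\|_1$, breaking ties arbitrarily. By triangle inequality, if the true index is $j$ and $\|\hat\theta(\mathcal{D}) - \theta(\P_j)\|_1 < \delta/2$, then for every $k \neq j$,
\[
\| \hat\theta(\mathcal{D}) - \theta(\P_k)\|_1 \;\geq\; \|\theta(\P_j) - \theta(\P_k)\|_1 - \|\hat\theta(\mathcal{D}) - \theta(\P_j)\|_1 \;>\; \delta/2,
\]
so $\hat{J}(\mathcal{D}) = j$. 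Hence, by Markov's inequality,
\[
\P_j\bigl(\hat{J}(\mathcal{D}) \neq j\bigr) \;\leq\; \P_j\bigl(\|\hat\theta(\mathcal{D}) - \theta(\P_j)\|_1 \geq \delta/2\bigr) \;\leq\; \frac{2}{\delta}\E_j\bigl[\|\hat\theta(\mathcal{D}) - \theta(\P_j)\|_1\bigr].
\]

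Next I would apply Fano's inequality to the testing problem with $J$ chosen uniformly at random from $[N]$. The standard argument (e.g. via the chain rule and the bound $H(\mathbf{1}\{\hat J \neq J\}) \leq \log 2$) gives
\[
H(J \mid \mathcal{D}) \;\leq\; \log 2 + P_e \log(N-1),
\]
where $P_e = \P(\hat{J} \neq J)$. Rearranging using $I(J;\mathcal{D}) = H(J) - H(J\mid \mathcal{D}) = \log N - H(J\mid \mathcal{D})$ yields $P_e \geq 1 - \frac{I(J;\mathcal{D}) + \log 2}{\log N}$. A standard convexity computation bounds the mutual information by the maximum pairwise KL:
\[
I(J;\mathcal{D}) \;=\; \frac{1}{N}\sum_j \mathrm{KL}\bigl(\P_j \,\big\|\, \tfrac{1}{N}\textstyle\sum_k \P_k\bigr) \;\leq\; \frac{1}{N^2}\sum_{j,k} \mathrm{KL}(\P_j \| \P_k) \;\leq\; \max_{j,k} \mathrm{KL}(\P_j\|\P_k),
\]
where the first inequality is Jensen applied to the convexity of $\mathrm{KL}(\,\cdot\,\|\,\cdot\,)$ in its second argument.

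Finally I would combine these ingredients. Averaging the Markov bound over $J$ uniform and taking the supremum gives
\[
\sup_{\P \in \mathcal{P}} \E_{\mathcal{D}\sim \P}\bigl[\|\hat\theta(\mathcal{D}) - \theta(\P)\|_1\bigr] \;\geq\; \max_j \E_j\bigl[\|\hat\theta - \theta(\P_j)\|_1\bigr] \;\geq\; \frac{\delta}{2} P_e,
\]
and inserting the Fano lower bound on $P_e$ yields the claim. The only real obstacle is getting the information-theoretic chain of inequalities clean, in particular the mutual-information bound in terms of pairwise KL; the estimation-to-testing reduction itself is just triangle inequality plus Markov, and is standard.
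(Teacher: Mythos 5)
Your proof is correct. The paper does not prove this lemma at all --- it imports it directly by citation to \citet[Proposition 15.12]{wainwright_high-dimensional_2019} --- and your argument is precisely the standard one underlying that reference: the nearest-neighbor reduction from estimation to testing via the triangle inequality and Markov, Fano's inequality for the induced $N$-ary test, and the bound $I(J;\mathcal{D}) \leq \max_{j,k}\mathrm{KL}(\P_j\,\|\,\P_k)$ obtained from the mixture representation of mutual information together with convexity of the KL divergence in its second argument. All three steps are carried out correctly, so nothing is missing.
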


\begin{lem} \label{lem:bernoulli_kl}
For any $p,q\in\left(0,\frac{1}{2}\right]$ satisfying $p<q$, we have
\[
\textnormal{KL}\left(\textnormal{Ber}(p) \,\|\, \textnormal{Ber}(q)\right)
\leq
\textnormal{KL}\left(\textnormal{Ber}(q) \,\|\, \textnormal{Ber}(p)\right)
\leq
\frac{(p-q)^2}{p(1-p)}
,\]
which implies that
\[
\textnormal{KL}\left(\textnormal{Ber}(p) \,\|\, \textnormal{Ber}(q)\right)
+
\textnormal{KL}\left(\textnormal{Ber}(q) \,\|\, \textnormal{Ber}(p)\right)
\leq
\frac{2(p-q)^2}{p(1-p)}
.\]
\end{lem}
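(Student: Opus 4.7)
The plan is to prove the two inequalities separately; the displayed sum bound is then an immediate corollary, since by the first inequality the smaller summand $d(p,q)$ is bounded by $d(q,p)$, and by the second inequality $d(q,p) \leq (p-q)^2/(p(1-p))$, giving the factor of $2$.

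For the second inequality $d(q,p) \leq (p-q)^2/(p(1-p))$ (where $d(a,b) := \text{KL}(\text{Ber}(a)\,\|\,\text{Ber}(b))$), my plan is to invoke the standard bound $\text{KL}(P\|Q) \leq \chi^2(P\|Q)$, which follows from $\log x \leq x - 1$ applied termwise. For two-point distributions a direct computation gives
\[
\chi^2(\text{Ber}(q)\,\|\,\text{Ber}(p)) = \frac{(q-p)^2}{p} + \frac{(q-p)^2}{1-p} = \frac{(p-q)^2}{p(1-p)},
\]
yielding the claim. This part is routine and uses neither $p < q$ nor $q \leq 1/2$.

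The less standard claim is $d(p,q) \leq d(q,p)$. My plan is to use the integral representation. Differentiating, $\partial_q d(p,q) = (q-p)/(q(1-q))$, and integrating from $q=p$ (where $d(p,p)=0$) gives $d(p,q) = \int_p^q (x-p)/(x(1-x))\,dx$. Symmetrically, $\partial_p d(q,p) = (p-q)/(p(1-p))$ yields $d(q,p) = \int_p^q (q-x)/(x(1-x))\,dx$. Subtracting,
\[
d(q,p) - d(p,q) = \int_p^q \frac{(p+q) - 2x}{x(1-x)}\,dx.
\]
Set $m := (p+q)/2$ and substitute $x = m+u$, pairing $u$ with $-u$ to exploit the oddness of the numerator. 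This rewrites the right side as
\[
2\int_0^{(q-p)/2} u\left[\frac{1}{(m-u)(1-m+u)} - \frac{1}{(m+u)(1-m-u)}\right]\,du.
\]

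The key algebraic step is the identity $(m+u)(1-m-u) - (m-u)(1-m+u) = 2u(1-2m)$. Since $p < q \leq 1/2$, we have $m = (p+q)/2 \leq 1/2$, so $1 - 2m \geq 0$; combined with $u \geq 0$, this gives $(m+u)(1-m-u) \geq (m-u)(1-m+u) > 0$, making the bracketed expression non-negative. The integrand is therefore non-negative, establishing $d(q,p) \geq d(p,q)$. The main (minor) obstacle is just pinpointing where the hypothesis $q \leq 1/2$ enters: it is precisely the condition $m \leq 1/2$ that ensures the reflected point $m+u$ (farther from the mode $1/2$ of $x(1-x)$ on the upper side) lies in the region where $x(1-x)$ is larger than at $m-u$, making $1/(x(1-x))$ smaller there and driving the inequality in the correct direction.
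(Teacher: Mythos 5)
Your proposal is correct, but it takes a genuinely different route from the paper. The paper's proof is a two-line reduction: it cites Lemma 10 of \citet{li_settling_2023}, which gives exactly the chain $\textnormal{KL}(\textnormal{Ber}(p')\|\textnormal{Ber}(q')) \leq \textnormal{KL}(\textnormal{Ber}(q')\|\textnormal{Ber}(p')) \leq (p'-q')^2/(p'(1-p'))$ for $p',q'\in[\frac12,1)$ with $p'>q'$, and then transfers it to $(0,\frac12]$ via the reflection $p'=1-p$, $q'=1-q$ together with the symmetry $\textnormal{KL}(\textnormal{Ber}(1-p)\|\textnormal{Ber}(1-q))=\textnormal{KL}(\textnormal{Ber}(p)\|\textnormal{Ber}(q))$ and the observation that $p'(1-p')=p(1-p)$. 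You instead prove both inequalities from scratch: the upper bound via $\textnormal{KL}\leq\chi^2$ (whose two-point computation indeed gives $(p-q)^2/(p(1-p))$), and the asymmetry $d(p,q)\leq d(q,p)$ via the integral representations $d(p,q)=\int_p^q\frac{x-p}{x(1-x)}dx$ and $d(q,p)=\int_p^q\frac{q-x}{x(1-x)}dx$, followed by the symmetrization about $m=(p+q)/2$ and the identity $(m+u)(1-m-u)-(m-u)(1-m+u)=2u(1-2m)$. All steps check out (the derivative computations, the sign analysis using $m\leq\frac12$, and the positivity of both products on the integration range since $m-u\geq p>0$). Your argument is self-contained and cleanly isolates where $q\leq\frac12$ is used, at the cost of being longer; the paper's buys brevity at the cost of importing an external lemma. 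Either is acceptable.
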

\begin{proof}
By Lemma 10 in \cite{li_settling_2023}, we have
\[
\text{KL}\left(\text{Ber}(p') \,\|\, \text{Ber}(q')\right)
\leq \text{KL}\left(\text{Ber}(q') \,\|\, \text{Ber}(p')\right)
\leq \frac{\left( p'-q' \right)^2}{p'(1-p')}
\]
for any $p',q'\in \left[\frac{1}{2},1\right)$ satisfying $p' > q'$.
The desired result follows immediately by taking $p'=1-p$ and  $q'=1-q$, along with the observation that $\text{KL}\left(\text{Ber}(1-p) \,\|\, \text{Ber}(1-q)\right) = \text{KL}\left(\text{Ber}(p) \,\|\, \text{Ber}(q)\right)$.
\end{proof}

\begin{lem}[Theorem 1 in \cite{simic_global_2008}] \label{lem:reverse_jensen}
Let $I = [a,b]$ be a closed interval with $a,b\in\mathbb{R}, a<b$.
For some $n\in \mathbb{Z}^+$, let $x_1,\dots,x_n \in I$, and let $p_1,\dots,p_n>0$ satisfy $\sum_{i=1}^n p_i = 1$.
If $f:[a,b]\to\mathbb{R}$ is convex, then
\[
\sum_{i=1}^n p_i f(x_i) \leq f\left( \sum_{i=1}^n p_i x_i \right) + f(a) + f(b) - 2f\left( \frac{a+b}{2} \right)
.\]
\end{lem}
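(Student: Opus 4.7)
My plan is to prove the reverse Jensen inequality by a short chain of convexity arguments, with a reflection trick connecting the two halves. The first step is the obvious bound coming from convexity at the endpoints: since each $x_i \in [a,b]$, I write $x_i = \lambda_i a + (1-\lambda_i) b$ with $\lambda_i = (b - x_i)/(b-a) \in [0,1]$, and convexity of $f$ gives $f(x_i) \leq \lambda_i f(a) + (1-\lambda_i) f(b)$. Taking the weighted average against $p_1,\dots,p_n$, I obtain $\sum_i p_i f(x_i) \leq \bar\lambda f(a) + (1-\bar\lambda) f(b)$, where $\bar x := \sum_i p_i x_i$ and $\bar\lambda := \sum_i p_i \lambda_i = (b - \bar x)/(b-a)$.

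The second step is a rewriting and a reversed application of convexity. I rewrite $\bar\lambda f(a) + (1-\bar\lambda) f(b) = f(a) + f(b) - \bigl[(1-\bar\lambda) f(a) + \bar\lambda f(b)\bigr]$, and then apply convexity to the bracketed term to obtain the \emph{lower} bound $(1-\bar\lambda) f(a) + \bar\lambda f(b) \geq f\bigl((1-\bar\lambda) a + \bar\lambda b\bigr)$. A direct computation shows that $(1-\bar\lambda) a + \bar\lambda b = a + b - \bar x$, i.e.\ the reflection of $\bar x$ about the midpoint $(a+b)/2$.

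The final step exploits this reflection via midpoint convexity: since $\bar x$ and $a + b - \bar x$ have arithmetic mean $(a+b)/2$, convexity gives $2 f\bigl(\tfrac{a+b}{2}\bigr) \leq f(\bar x) + f(a + b - \bar x)$, equivalently $- f(a+b-\bar x) \leq f(\bar x) - 2 f\bigl(\tfrac{a+b}{2}\bigr)$. Chaining everything together yields $\sum_i p_i f(x_i) \leq f(a) + f(b) - f(a+b-\bar x) \leq f(\bar x) + f(a) + f(b) - 2 f\bigl(\tfrac{a+b}{2}\bigr)$, which is the claimed bound.

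The only nontrivial maneuver is step two: the naive route stops at $\sum_i p_i f(x_i) \leq \bar\lambda f(a) + (1-\bar\lambda) f(b)$, which is not comparable in general to $f(\bar x) + [f(a)+f(b) - 2 f(\tfrac{a+b}{2})]$. Recognizing that one should subtract off the \emph{opposite} convex combination $(1-\bar\lambda) f(a) + \bar\lambda f(b)$ (which by convexity is at least $f(a+b-\bar x)$) is what lets the endpoint terms combine symmetrically so that a single application of midpoint convexity produces the Jensen gap $f(a)+f(b)-2f(\tfrac{a+b}{2})$. Once this reflection is spotted, the rest is mechanical.
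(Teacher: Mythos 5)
Your proof is correct. Note that the paper does not prove this lemma at all --- it is imported verbatim as Theorem~1 of \cite{simic_global_2008} --- so there is no internal proof to compare against; what you have produced is a correct, self-contained elementary derivation of the cited result. Each link in your chain checks out: with $\lambda_i=(b-x_i)/(b-a)$ you have $x_i=\lambda_i a+(1-\lambda_i)b$, so convexity and averaging give $\sum_i p_i f(x_i)\leq \bar\lambda f(a)+(1-\bar\lambda)f(b)$ with $\bar\lambda=(b-\bar x)/(b-a)$; the algebraic identity $\bar\lambda f(a)+(1-\bar\lambda)f(b)=f(a)+f(b)-\bigl[(1-\bar\lambda)f(a)+\bar\lambda f(b)\bigr]$ together with Jensen applied to the bracketed combination (whose argument $(1-\bar\lambda)a+\bar\lambda b=a+b-\bar x$ indeed lies in $[a,b]$ since $\bar x$ does) yields $\sum_i p_i f(x_i)\leq f(a)+f(b)-f(a+b-\bar x)$; and the midpoint inequality $2f\bigl(\tfrac{a+b}{2}\bigr)\leq f(\bar x)+f(a+b-\bar x)$ converts this into the stated bound. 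The reflection step you single out is exactly the right pivot: it is what turns the ``wrong-direction'' endpoint bound into one that is controlled by $f(\bar x)$ plus the symmetric Jensen gap $f(a)+f(b)-2f\bigl(\tfrac{a+b}{2}\bigr)$. This argument could serve as a drop-in replacement for the external citation.
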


\begin{lem} \label{lem:recurrent_lb_gain_computation}
Suppose the underlying MDP is $(P_\theta, r)$.
Let $\pi$ be a stationary policy such that for each $s\neq 0$, if the current state is $s$ then the probability of transiting to state 0 after taking action according to $\pi$ is $\kappa_s$.
Then
\[
\rho_\theta^\pi = \frac{\frac{q}{S'}\sum_{s=1}^{S'} \frac{1}{\kappa_s}}{1 + \frac{q}{S'}\sum_{s=1}^{S'} \frac{1}{\kappa_s}}
.\] 	
\end{lem}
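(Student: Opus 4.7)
\textbf{Proof plan for Lemma \ref{lem:recurrent_lb_gain_computation}.} The plan is to compute $\rho_\theta^\pi$ via the stationary distribution of the Markov chain $P_{\pi}$, exploiting the very simple ``star'' topology of the chain (every state $s \geq 1$ only exchanges mass with state $0$) together with the fact that the reward structure makes $\rho_\theta^\pi$ a simple function of $\mu^\pi(0)$.

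First, as noted in the construction, whenever we consider a policy $\pi$ we assume it plays action $0$ in state $0$ and one of $\{0,1\}$ in each state $s \geq 1$. Under this constraint, $r(s,\pi(s)) = 1$ for $s \geq 1$ and $r(0,\pi(0)) = 0$, so on the Markov chain induced by $\pi$ the reward vector $r_\pi$ equals $\mathbf{1}_{\{s \geq 1\}}$. Observe also that since $\kappa_s \in (0,1)$ for all $s \geq 1$ and state $0$ transitions to each state $s \geq 1$ with positive probability $q/S'$, the chain $P_\pi$ is irreducible (hence unichain) with a unique stationary distribution $\mu^\pi$, and $\rho_\theta^\pi = \langle \mu^\pi, r_\pi\rangle = 1 - \mu^\pi(0)$. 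So the entire task reduces to computing $\mu^\pi(0)$.

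Next I would write down the balance equations. For each $s \geq 1$, the only inflow into $s$ comes from state $0$ (no direct transitions from $s' \geq 1, s' \neq s$ occur in $P_\pi$), so
\begin{align*}
\mu^\pi(s) &= \mu^\pi(s)(1 - \kappa_s) + \mu^\pi(0)\tfrac{q}{S'},
\end{align*}
which rearranges to $\mu^\pi(s) = \mu^\pi(0) \tfrac{q}{S' \kappa_s}$. (The balance equation at state $0$, $\sum_{s \geq 1} \mu^\pi(s)\kappa_s = q \mu^\pi(0)$, is then automatically satisfied, providing a consistency check.)

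Finally, I would apply the normalization $\mu^\pi(0) + \sum_{s=1}^{S'} \mu^\pi(s) = 1$, which gives
\begin{align*}
\mu^\pi(0)\Bigl(1 + \tfrac{q}{S'}\sum_{s=1}^{S'}\tfrac{1}{\kappa_s}\Bigr) = 1,
\end{align*}
so $\mu^\pi(0) = 1/\bigl(1 + \tfrac{q}{S'}\sum_s 1/\kappa_s\bigr)$. Substituting into $\rho_\theta^\pi = 1 - \mu^\pi(0)$ yields the claimed formula. There is no real obstacle here; the lemma is a routine stationary-distribution computation whose only subtlety is noticing that the ``star'' structure of $P_\pi$ makes each $\mu^\pi(s)$ for $s \geq 1$ expressible in closed form in terms of $\mu^\pi(0)$ and $\kappa_s$.
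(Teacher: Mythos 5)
Your proposal is correct and follows essentially the same route as the paper's proof: write the balance equation $\mu^\pi(s) = \mu^\pi(s)(1-\kappa_s) + \mu^\pi(0)\tfrac{q}{S'}$ for each $s\geq 1$, solve for $\mu^\pi(s)$ in terms of $\mu^\pi(0)$, normalize, and use $\rho_\theta^\pi = 1-\mu^\pi(0)$. The extra remarks you add (irreducibility of the induced chain and the consistency check at state $0$) are fine but not needed beyond what the paper does.
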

\begin{proof}
We first solve for $\mu_\theta^\pi(0)$ by considering the balance equations for the MDP  $(P_\theta, r)$.
For each $s\neq 0$, we have
\[
\mu_\theta^\pi(s) = \frac{q}{S'} \mu_\theta^\pi(0) + (1-\kappa_s) \mu_\theta^\pi(s)
.\] 
Rearranging gives us
\[
\mu_\theta^\pi(s) = \frac{q}{S'}\mu_\theta^\pi(0) \frac{1}{\kappa_s}
.\] 
Since $\sum_{s=0}^{S'} \mu_\theta^\pi(s) = 1$, we have
\[
\mu_\theta^\pi(0) 
= 1 - \sum_{s=1}^{S'} \mu_\theta^\pi(s)
= 1 - \mu_\theta^\pi(0)\frac{q}{S'}\sum_{s=1}^{S'} \frac{1}{\kappa_s}
.\] 
We then solve for $\mu_\theta^\pi(0)$ to obtain
\[
\mu_\theta^\pi(0) = \frac{1}{1 + \frac{q}{S'}\sum_{s=1}^{S'} \frac{1}{\kappa_s}}
.\]
Since the reward is 0 in state 0 and 1 in all other states, we conclude that
\[
\rho^\pi_\theta = 1 - \mu_\theta^\pi(0) = \frac{\frac{q}{S'}\sum_{s=1}^{S'} \frac{1}{\kappa_s}}{1 + \frac{q}{S'} \sum_{s=1}^{S'} \frac{1}{\kappa_s}}
.\]
\end{proof}

\section{Deferred proofs and auxiliary lemmas}
\label{sec:deferred_proofs}

\subsection{Proof of Lemma \ref{lem:tope_monotonicity}}
\label{sec:tope_monotonicity_proof}
\begin{proof}[Proof of Lemma \ref{lem:tope_monotonicity}]
    Letting $V, V' \in \R^\S$ satisfy $V \geq V'$ elementwise, we seek to show that
    \begin{align*}
        \Tope(V) \geq \Tope(V').
    \end{align*}
    Since this is an elementwise bound, we can fix arbitrary $s \in \S, a \in \A$ and show that $ \Tope(V)(s,a) \geq \Tope(V')(s,a)$. From here on, since $s, a$ are fixed, we abbreviate $\beta(s,a) \in \R$ as $\beta$ for notational convenience.

    Consider the simpler function $\Mopm : \R^{\S} \to \R$ (which depends on our fixed $s, a$) defined as
    \begin{align*}
        \Mopm(V'') :=  \Phat_{sa} T_{\beta}( \Phat_{sa}, V'') -  \max \left \{ \sqrt{\beta  \Var_{\Phat_{sa}} \left[ T_{\beta}( \Phat_{sa}, V'') \right]} ,  \beta \spannorm{T_{\beta}( \Phat_{sa}, V'')} \right \}
    \end{align*}
    for any $V'' \in \R^{\S}$.
    Note that
    \begin{align*}
        \Tope(V'')(s,a) &= r(s,a) +\gamma\max \left\{  \Phat_{sa} T_{\beta}( \Phat_{sa}, V'') - b(s,a, V''), \min_{s'}(V'')(s') \right \}\\ 
        &= r(s,a) + \gamma \max \left\{\Mopm(V'') - \frac{5}{\ntot}, \min_{s'}(V'')(s') \right\}.
    \end{align*}
    Therefore, if we could show that
    \begin{align}
        \Mopm(V) \geq \Mopm(V'), \label{eq:mopm_monotonicity_key}
    \end{align}
    then since clearly $V \geq V'$ implies $\min_{s'}(V)(s') \geq \min_{s'}(V')(s')$, we could immediately conclude that
    \begin{align*}
        \Tope(V)(s,a)
        &= r(s,a) + \gamma \max \left\{\Mopm(V) - \frac{5}{\ntot}, \min_{s'}(V)(s') \right\} \\
        &\geq r(s,a) + \gamma \max \left\{\Mopm(V') - \frac{5}{\ntot}, \min_{s'}(V')(s') \right\} \\
        &= \Tope(V')(s,a)
    \end{align*}
    as desired.
    
    Thus we now focus on showing~\eqref{eq:mopm_monotonicity_key}. 
    First we can quickly handle the case that $\beta > 1$, since in this case for any $V'' \in \R^\S$ we have $T_{\beta}( \Phat_{sa}, V'') = \left(\min_{s '} V''(s')\right) \one$, and then
    \begin{align*}
        \Mopm(V) &= \Phat_{sa} T_{\beta}( \Phat_{sa}, V) -  \max \left \{ \sqrt{\beta  \Var_{\Phat_{sa}} \left[ T_{\beta}( \Phat_{sa}, V) \right]} ,  \beta \spannorm{T_{\beta}( \Phat_{sa}, V)} \right \} \\
        &= \left(\min_{s '} V(s')\right)  \Phat_{sa} \one - 0 = \min_{s '} V(s') \\
        &\geq \min_{s '} V'(s') = \left(\min_{s '} V'(s')\right)  \Phat_{sa} \one - 0 \\
        &= \Phat_{sa} T_{\beta}( \Phat_{sa}, V') -  \max \left \{ \sqrt{\beta  \Var_{\Phat_{sa}} \left[ T_{\beta}( \Phat_{sa}, V') \right]} ,  \beta \spannorm{T_{\beta}( \Phat_{sa}, V')} \right \} \\
        &= \Mopm(V'),
    \end{align*}
    confirming~\eqref{eq:mopm_monotonicity_key}. Now we can focus on the case that $\beta \leq 1$.

    The fact that $\beta \leq 1$ means that the following expression for $T_\beta$ holds: for any $s' \in \S$ and $V'' \in \R^\S$, we have
    \begin{align*}
        T_\beta(\Phat_{sa}, V'')(s') = \min \left\{ V''(s') , Q_\beta(\Phat_{sa}, V'')\right\}
    \end{align*}
    where $Q_\beta(\Phat_{sa}, V'') = \sup \{V''(x) : x \in \S, \sum_{x' \in \S : V(x') \geq V(x)} \Phat_{sa}(x') \geq \beta \}$ is the $1 - \beta$ quantile of $V''$ with respect to $\Phat_{sa}$ 
    (in words, we choose the largest $V''(x)$ such that $\Phat_{sa}$ places probability at least $\beta$ on states $x'$ with $V''(x') \geq V''(x)$). We will make use of the function $Q_\beta$ shortly.
    We also make the useful definitions
    \begin{align*}
        \Mopsp(V) &:=  \Phat_{sa} T_{\beta}( \Phat_{sa}, V) -   \beta \spannorm{T_{\beta}( \Phat_{sa}, V)}  \\
        \Mopvar(V) &:=  \Phat_{sa} T_{\beta}( \Phat_{sa}, V) -   \sqrt{\beta  \Var_{\Phat_{sa}} \left[ T_{\beta}( \Phat_{sa}, V) \right]} 
    \end{align*}
    so that we can decompose $\Mopm$ as $\Mopm(V) = \min \left\{\Mopsp(V), \Mopvar(V) \right\}$.
    To show~\eqref{eq:mopm_monotonicity_key}, it suffices to show that this holds when $V$ and $V'$ differ in only one coordinate, since then we could decompose $V = V' + \sum_{s' \in \S} e_{s'} e_{s'}^\top (V - V')$ and apply the inequalities $\Mopm\left(V' + \sum_{{s'} = 1}^{k-1} e_{s'} e_{s'}^\top (V - V') \right) \leq \Mopm\left(V' + \sum_{{s'} = 1}^{k} e_{s'} e_{s'}^\top (V - V') \right)$ for each $k = 1, \dots, S$.
    Therefore we fix one state $x \in \S$ and try to show $\Mopm(V)$ is montonically non-decreasing as $V(x)$ increases (with the other entries of $V$ held constant). We will show this by using Lemma \ref{lem:analysis_fact_monotonic}, which says that if a univariate function is continuous and at all but a finite number of points has a non-negative right derivative, then it must be non-decreasing.

First we justify that $\Mopm$ is continuous. Since we have decomposed $\Mopm$ as the composition of many continuous functions, it suffices to check that $Q_\beta(\Phat_{sa}, V)$ is a continuous function of $V(x)$. This follows immediately from Lemma \ref{lem:quantile_continuity}, which shows $1$-Lipschitzness. (We remark that the $1-\beta$ quantile is well-known to be discontinuous in $\beta$, a fact which is irrelevant here since $\beta$ is fixed and we instead vary $V(x)$.)

We will now compute the right derivative at all values of $V(x)$ such that $V(x)$ is not equal to $V(s')$ for some other $s' \in \S$ with $s' \neq x$ (which is a finite set). 
We define some new notation for this purpose. With respect to this fixed value of $V(x)$, let $\S_{>} = \{s' \in \S : V(s') > V(x)\}$ and $\S_{<} = \{s' \in \S : V(s') < V(x)\}$. Define a neighborhood of $V(x)$, the open interval $U := (\max_{s' \in \S_{<}} V(s'),   \min_{s' \in \S_{>}} V(s'))$. Let $V' \in \R^\S$ have $V'(s')=V(s')$ for all $s'\neq x$, and we vary $V'(x)$ within the neighborhood $U$ of $V(x)$ in order to compute the (full/two-sided) derivatives $\frac{d \Mopsp(V')}{d V'(x)}\Big|_{V'(x) = V(x)}$ and $\frac{d \Mopvar(V')}{d V'(x)}\Big|_{V'(x) = V(x)}$. Once we have computed these two derivatives, we will be able to compute the right derivative of $\Mopm(V')$, since if both $\Mopsp(V')$ and $\Mopvar(V')$ are differentiable at a point $V(x)$, then by Lemma \ref{lem:analysis_fact_min} the right derivative of $\Mopm(V')$ satisfies
\begin{align}
    \frac{d \Mopm(V')}{d V'(x)}\Big|_{V'(x) = V(x)^+} &= \frac{d }{d V'(x)}\Big|_{V'(x) = V(x)^+} \left( \min\left\{  \Mopsp(V'), \Mopvar(V') \right \}\right) \nonumber \\
    &= \begin{cases}
        \frac{d \Mopsp(V')}{d V'(x)}\Big|_{V'(x) = V(x)} & \Mopsp(V) < \Mopvar(V) \\
        \frac{d \Mopvar(V')}{d V'(x)}\Big|_{V'(x) = V(x)} & \Mopsp(V) > \Mopvar(V) \\
        \min \left \{ \frac{d \Mopsp(V')}{d V'(x)}\Big|_{V'(x) = V(x)}, \frac{d \Mopvar(V')}{d V'(x)}\Big|_{V'(x) = V(x)}\right \} & \Mopsp(V) = \Mopvar(V)
    \end{cases}. \label{eq:right_derivative_cases_Mop}
\end{align}

To compute the derivatives of $\Mopsp(V')$ and $\Mopvar(V')$, we also analyze the functions $Q_\beta(\Phat_{sa}, V')$ and $T_\beta(\Phat_{sa}, V')$ on the set $U$ (all considered as functions of $V'(x)$). For any set $\S' \subseteq \S$, let $\Phat_{sa}(\S') = \sum_{s' \in \S'}\Phat_{sa}(s')$. We define three possible cases depending on the (fixed) state $x$:
\begin{gather}
    \beta \leq \Phat_{sa}(\S_{>})  \label{eq:deriv_case_1}\\
    \Phat_{sa}(\S_{>}) < \beta \leq \Phat_{sa}(\S_{>}) + \Phat_{sa}(x) \label{eq:deriv_case_2}\\
    \Phat_{sa}(\S_{>}) + \Phat_{sa}(x) < \beta .\label{eq:deriv_case_3}
\end{gather}
\begin{enumerate}
    \item In case~\eqref{eq:deriv_case_1}, we have $Q_\beta(\Phat_{sa}, V') = Q_\beta(\Phat_{sa}, V)$ on the entire interval $U$ and also that for any $V'(x)\in U$, $Q_\beta(\Phat_{sa}, V) > V'(x)$ (since the $(1-\beta)$-percentile is achieved at some state $s' \in \S_{>}$), so $T_\beta(\Phat_{sa}, V')(x) = V'(x)$ and $T_\beta(\Phat_{sa}, V')(s') = T_\beta(\Phat_{sa}, V)(s')$ for all $s' \neq x$. Therefore
    \begin{align*}
        \frac{d T_\beta(\Phat_{sa}, V' )(s')}{d V'(x)}\bigg|_{V'(x)=V(x)} &= \begin{cases}
            1 & s' = x \\
            0 & \text{otherwise}
        \end{cases}
    \end{align*}
    and
    \begin{align*}
        \frac{d \Mopsp(V')}{dV'(x)}\bigg|_{V'(x)=V(x)} 
        &=\frac{d }{dV'(x)}\bigg|_{V'(x)=V(x)} \left( \Phat_{sa} T_\beta(\Phat_{sa}, V') - \beta \spannorm{T_\beta(\Phat_{sa}, V')} \right) \\
        &= \frac{d }{dV'(x)}\bigg|_{V'(x)=V(x)} \left( \Phat_{sa} T_\beta(\Phat_{sa}, V') - \beta Q_\beta(\Phat_{sa}, V') + \beta \min_{s'}V'(s') \right) \\
        &= \frac{d }{dV'(x)}\bigg|_{V'(x)=V(x)} \left( \Phat_{sa} T_\beta(\Phat_{sa}, V') - \beta Q_\beta(\Phat_{sa}, V) + \beta \min_{s'}V'(s') \right) \\
        &= \Phat_{sa}(x) + \beta \begin{cases}
            1 & \S_{<} = \emptyset \\
            0 & \text{otherwise}
        \end{cases} \\
        & \geq \Phat_{sa}(x) \geq 0.
    \end{align*}

     \item In case~\eqref{eq:deriv_case_2}, we have $Q_\beta(\Phat_{sa}, V') = V'(x)$ on the entire interval $U$. Thus $T_\beta(\Phat_{sa}, V') (s') = V'(x)$ if $s' \in \S_{>} \cup \{x\}$, and $T_\beta(\Phat_{sa}, V') (s') = V'(s') = V(s')$ for $s' \in S_{<}$. Thus
     \begin{align*}
        \frac{d T_\beta(\Phat_{sa}, V' )(s')}{d V'(x)}\bigg|_{V'(x)=V(x)} &= \begin{cases}
            1 & s' \in \S_{>} \cup \{x\} \\
            0 & \text{otherwise}
        \end{cases}
    \end{align*}
    and
    \begin{align*}
        \frac{d \Mopsp(V')}{dV'(x)}\bigg|_{V'(x)=V(x)} 
        &= \frac{d }{dV'(x)}\bigg|_{V'(x)=V(x)} \left( \Phat_{sa} T_\beta(\Phat_{sa}, V') - \beta \spannorm{T_\beta(\Phat_{sa}, V')} \right) \\
        &= \frac{d }{dV'(x)}\bigg|_{V'(x)=V(x)} \left( \Phat_{sa} T_\beta(\Phat_{sa}, V') - \beta Q_\beta(\Phat_{sa}, V') + \beta \min_{s'}V'(s') \right) \\
        &= \frac{d }{dV'(x)}\bigg|_{V'(x)=V(x)} \left( \Phat_{sa} T_\beta(\Phat_{sa}, V') - \beta V'(x) + \beta \min_{s'}V'(s') \right) \\
        &= \Phat_{sa}(\S_{>} \cup \{x\}) -\beta + \beta \begin{cases}
            1 & \S_{<} = \emptyset \\
            0 & \text{otherwise}
        \end{cases} \\
        & \geq \Phat_{sa}(\S_{>} \cup \{x\}) -\beta \geq 0.
    \end{align*}
    \item In case~\eqref{eq:deriv_case_3}, we have $Q_\beta(\Phat_{sa}, V') = Q_\beta(\Phat_{sa}, V)$ and also that $T_\beta(\Phat_{sa}, V')(x) = Q_\beta(\Phat_{sa}, V) < V'(x)$ (since $V'(x) < Q_\beta(\Phat_{sa}, V)$ in this case), so $T_\beta(\Phat_{sa}, V') = T_\beta(\Phat_{sa}, V)$ on the interval $U$. Also $\min_{s'} V'(s') < V'(x)$ on $U$, so $\min_{s'} V'(s') = \min_{s'} V(s')$ on $U$. Thus
    \begin{align*}
        \frac{d T_\beta(\Phat_{sa}, V' )(s')}{d V'(x)}\bigg|_{V'(x)=V(x)} &= 0
    \end{align*}
    for all $s' \in \S$, and
    \begin{align*}
        \frac{d \Mopsp(V')}{dV'(x)}\bigg|_{V'(x)=V(x)} 
        &= \frac{d }{dV'(x)}\bigg|_{V'(x)=V(x)} \left( \Phat_{sa} T_\beta(\Phat_{sa}, V') - \beta Q_\beta(\Phat_{sa}, V') + \beta \min_{s'}V'(s') \right) \\
        &= \frac{d }{dV'(x)}\bigg|_{V'(x)=V(x)} \left( \Phat_{sa} T_\beta(\Phat_{sa}, V) -  \beta Q_\beta(\Phat_{sa}, V) + \beta \min_{s'}V(s') \right) \\
        &= 0.
    \end{align*}
\end{enumerate}
Next we calculate $ \frac{d \Mopvar(V')}{dV'(x)}\bigg|_{V'(x)=V(x)}$. First, letting $T \in \R^{\S}$, if $\Var_{\Phat_{sa}} \left[ T \right] \neq 0$ then (recalling $\Phat_{sa}$ is a row vector so $\Phat_{sa}^\top$ is a column vector)
\begin{align}
    \nabla_T \sqrt{\Var_{\Phat_{sa}} \left[ T \right]} &= \frac{1}{2} \frac{1}{ \sqrt{\Var_{\Phat_{sa}} \left[ T \right]}} \nabla_T \left( \Phat T^{\circ 2} - (\Phat T)^{\circ 2} \right) \nonumber\\
    &= \frac{1}{ \sqrt{\Var_{\Phat_{sa}} \left[ T \right]}} \left(  \Phat_{sa}^\top \circ T -  (\Phat_{sa} T) \Phat_{sa}^\top \right) \nonumber \\
    &= \frac{1}{ \sqrt{\Var_{\Phat_{sa}} \left[ T \right]}} \Phat_{sa}^\top \circ \left(  T -  (\Phat_{sa} T) \one  \right) \nonumber \\
    &\leq \frac{\spannorm{T}}{ \sqrt{\Var_{\Phat_{sa}} \left[ T \right]}} \Phat_{sa}^\top \label{eq:var_deriv_bound}
\end{align}
where the final inequality is elementwise and uses the fact that for any $s'$, $T(s') - \Phat_{sa} T \leq \max_{s''} T(s'') - \min_{s''} T(s'') = \spannorm{T}$. Now we will combine this calculation with the chain rule to lower bound $\frac{d \Mopvar(V')}{d V'(x)}\Big|_{V'(x) = V(x)}$. Note that in light of~\eqref{eq:right_derivative_cases_Mop}, we only need to bound $\frac{d \Mopvar(V')}{d V'(x)}\Big|_{V'(x) = V(x)}$ when $\Mopsp(V) > \Mopvar(V)$ or equivalently when our fixed value of $V(x)$ satisfies
\begin{align}
    \sqrt{\Var_{\Phat_{sa}} \left[ T_\beta(\Phat_{sa}, V) \right]} > \sqrt{\beta}\spannorm{T_\beta(\Phat_{sa}, V)}. \label{eq:condition_for_Mopvar}
\end{align}
Since we have already excluded the finite set of values of $V(x)$ where $V(x)$ is equal to $V(s')$ for some other state $s' \neq x$, the only way for $\Var_{\Phat_{sa}} \left[ T_\beta(\Phat_{sa}, V) \right] = 0$ is if $\Phat_{sa}(x) = 1$, but in that case we have $\spannorm{T_\beta(\Phat_{sa}, V)} = 0$ which contradicts~\eqref{eq:condition_for_Mopvar}. Therefore we can calculate that if $V(x)$ satisfies~\eqref{eq:condition_for_Mopvar}, we have
\begin{align*}
    \frac{d \Mopvar(V')}{d V'(x)}\Big|_{V'(x) = V(x)} &= \frac{d }{d V'(x)}\Big|_{V'(x) = V(x)} \left( \Phat_{sa} T_{\beta}( \Phat_{sa}, V) -   \sqrt{\beta  \Var_{\Phat_{sa}} \left[ T_{\beta}( \Phat_{sa}, V) \right]} \right) \\
    &= \sum_{s' \in \S}\left( \frac{\partial }{\partial T(s')}\Big|_{T(s') = T_\beta(\Phat_{sa}, V)(s')}  \left(\Phat_{sa} T -   \sqrt{\beta  \Var_{\Phat_{sa}} \left[ T \right]} \right) \right) \cdot  \frac{d T_\beta(\Phat_{sa}, V' )(s')}{d V'(x)}\bigg|_{V'(x)=V(x)} \\
    & = \sum_{s' \in \S}\left(   \Phat_{sa}(s') -   \sqrt{\beta} \frac{\partial \sqrt{  \Var_{\Phat_{sa}} \left[ T \right]}}{\partial T(s')}\Big|_{T(s') = T_\beta(\Phat_{sa}, V)(s')}  \right) \cdot  \frac{d T_\beta(\Phat_{sa}, V' )(s')}{d V'(x)}\bigg|_{V'(x)=V(x)} \\
    & \geq  \sum_{s' \in \S}\left(   \Phat_{sa}(s') -   \sqrt{\beta} \frac{\spannorm{T_\beta(\Phat_{sa}, V)}}{\sqrt{\Var_{\Phat_{sa}}\left[ T_\beta(\Phat_{sa}, V)\right]  }} \Phat_{sa}(s') \right) \cdot  \frac{d T_\beta(\Phat_{sa}, V' )(s')}{d V'(x)}\bigg|_{V'(x)=V(x)} \\
    & > \sum_{s' \in \S}\left(   \Phat_{sa}(s') -   \Phat_{sa}(s') \right) \cdot  \frac{d T_\beta(\Phat_{sa}, V' )(s')}{d V'(x)}\bigg|_{V'(x)=V(x)} \\
    &= 0
\end{align*}
where the first inequality step is using the fact that $\frac{d T_\beta(\Phat_{sa}, V' )(s')}{d V'(x)}\bigg|_{V'(x)=V(x)} \geq 0$ for all $s'$ (verified above in all three cases) and inequality~\eqref{eq:var_deriv_bound}, and the second inequality step uses~\eqref{eq:condition_for_Mopvar}.
\end{proof}

\subsection{Auxiliary lemmas}

\begin{lem} \label{lem:analysis_fact_monotonic}
If $f:\R\to\R$ is a continuous function that has a nonnegative right derivative for all but finitely many points, then $f$ is monotonically non-decreasing.
\end{lem}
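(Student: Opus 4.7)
The plan is to reduce to the well-known case of a continuous function with a nonnegative right derivative at every point of an open interval (no exceptions), and then to splice together the finitely many subintervals created by the exceptional points.

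First I would dispose of the exceptional set. Let $E = \{x_1 < \dots < x_n\}$ be the finite set of points where the nonnegative-right-derivative hypothesis may fail. To show $f(a) \leq f(b)$ for arbitrary $a < b$, I would insert the points of $E \cap (a,b)$ into the interval and obtain a partition $a = y_0 < y_1 < \dots < y_k = b$ such that on each closed subinterval $[y_i, y_{i+1}]$ the function $f$ is continuous and has a nonnegative right derivative at every point of the open subinterval $(y_i, y_{i+1})$. Chaining $f(y_0) \leq f(y_1) \leq \dots \leq f(y_k)$ then yields $f(a) \leq f(b)$, so the whole problem reduces to the exception-free case.

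The core step is therefore: if $g:[c,d]\to\R$ is continuous and has a nonnegative right derivative at every point of $(c,d)$, then $g(c) \leq g(d)$. I would prove this via the classical perturbation-and-sup argument: fix $\varepsilon > 0$, set $g_\varepsilon(x) = g(x) + \varepsilon x$ so that $g_\varepsilon$ has strictly positive right derivative on $(c,d)$, and let
\[
    S = \{ x \in [c,d] : g_\varepsilon(y) \geq g_\varepsilon(c) \text{ for all } y \in [c,x]\}.
\]
By continuity $S$ is closed and contains $c$, so $s := \sup S$ lies in $S$. If $s < d$, then the strictly positive right derivative of $g_\varepsilon$ at $s$ forces $g_\varepsilon(s+h) > g_\varepsilon(s) \geq g_\varepsilon(c)$ for all sufficiently small $h > 0$, contradicting $s = \sup S$. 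Hence $s = d$, giving $g_\varepsilon(d) \geq g_\varepsilon(c)$; letting $\varepsilon \to 0^+$ yields $g(d) \geq g(c)$.

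The main subtlety I expect is precisely the argument that $s = \sup S$ must lie in $S$ and that the contradiction at $s < d$ is clean; one has to use continuity to close $S$ (avoiding any issue with whether the supremum is attained) and use the right derivative (not a two-sided derivative) so the argument goes through at the left endpoint $c$ as well. Everything else is routine.
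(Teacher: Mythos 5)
Your reduction to the exception-free case (partition $[a,b]$ at the exceptional points and chain the endpoint inequalities) is exactly the paper's first step. For the core claim you take a genuinely different route: the classical $\varepsilon$-perturbation plus supremum argument, whereas the paper argues by contradiction with secant slopes, taking $z = \inf\{t \in (x,y] : \frac{f(t)-f(x)}{t-x} < r/2\}$ and showing $g(t)=\frac{f(t)-f(x)}{t-x}$ must equal $r/2$ at $z$ before deriving a contradiction from the right derivative at $z$. Your version is shorter and more standard; the paper's avoids introducing the auxiliary $\varepsilon x$ term but needs a more delicate continuity analysis of $g$.

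There is, however, one concrete gap in your core step. You state it with the right-derivative hypothesis only on the \emph{open} interval $(c,d)$ --- correctly, since the left endpoint of each subinterval may be one of the exceptional points, where nothing is assumed. But your contradiction at $s=\sup S < d$ invokes ``the strictly positive right derivative of $g_\varepsilon$ at $s$,'' and nothing rules out $s=c$: if $g_\varepsilon$ dips below $g_\varepsilon(c)$ immediately to the right of $c$, then $S=\{c\}$ and you have no derivative information at $c$ to contradict this. Your closing remark that the right-derivative argument ``goes through at the left endpoint $c$ as well'' presumes a hypothesis at $c$ that your own setup does not grant. The fix is a one-liner: run your argument on $[c',d]$ for each $c'\in(c,d)$ (where the right derivative at $c'$ is available), obtaining $g_\varepsilon(c')\le g_\varepsilon(d)$, and let $c'\to c^+$ using continuity. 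This is precisely the point the paper's proof handles explicitly with its remark ``Since $f$ is continuous, we can assume that $x>a$.''
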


\begin{proof}
We make the following claim:
for $a,b\in\R$ with $a<b$, if $f:[a,b]\to\R$ is continuous on $[a,b]$ and has a nonnegative right derivative on $(a,b)$, then $f$ is monotonically non-decreasing on $[a,b]$.

We first prove the lemma assuming that the claim holds.
Let $f:\R\to\R$ be a continuous function that has a nonnegative right derivative for all but finitely many points.
Let $x,y\in\R$ satisfy $x<y$, and denote by $a_1,\dots,a_{n-1}$ the points in $(x,y)$ where $f$ either is not right-differentiable or has negative right derivative.
Also denote $a_0 = x$ and $a_n = y$.
By the claim, $f$ is monotonically increasing on $[a_{i-1},a_i]$ for each $i=1,\dots,n$.
Hence $f(x)=f(a_0)\leq f(a_1) \leq \dots \leq f(a_n) = f(y)$.
Since $x$ and $y$ were arbitrary, we conclude that $f$ is monotonically increasing.

It remains to prove the claim.
Let $a,b\in\R$ with $a<b$, and let $f:[a,b]\to\R$ be continuous on $[a,b]$ with a nonnegative right derivative on $(a,b)$.
Suppose towards a contradiction that there exist $x,y\in[a,b]$ such that $x<y$ and $f(x) > f(y)$.
Since $f$ is continuous, we can assume that $x>a$ (if $x=a$ we have $x+\delta < y$ and $f(x+\delta) > f(y)$ for sufficiently small $\delta>0$).

Now, set $r:=\frac{f(y)-f(x)}{y-x}<0$ and
\[
z:= \inf\left\{ t\in(x,y] \,\Big|\, \frac{f(t)-f(x)}{t-x} < \frac{r}{2} \right\}
.\]

Consider the case where $z=x$.
$f$ has a nonnegative right derivative at $x$, so there exists $w\in (x,y]$ such that $\frac{f(t)-f(x)}{t-x} > \frac{r}{2}$ for all $t\in (x,w]$.
However, this implies a contradiction:
\[
z = \inf\left\{ t\in(x,y] \,\Big|\, \frac{f(t)-f(x)}{t-x} < \frac{r}{2} \right\} \geq w > x = z
.\]

We next consider the case where $z > x$.
Note that by continuity of $f$, the function $g(t):=\frac{f(t)-f(x)}{t-x}$ is continuous on $(x,y]$.
It follows that $g(z) = \frac{f(z)-f(x)}{z-x} = \frac{r}{2}$.
Indeed, if we had $g(z) > \frac{r}{2}$, then by continuity of $g$ there would exist $\delta>0$ such that $g(t) > \frac{r}{2}$ for $t\in[z,z+\delta]$, which would imply that $z \geq z+\delta$.
And by a similar argument, $g(z) < \frac{r}{2}$ would imply $z \leq z-\delta$.

At $z$ the right-derivative is nonnegative, so there exists $w\in(z,y]$ such that $\frac{f(t)-f(z)}{t-z} > \frac{r}{2}$ for all $t\in(z,w]$.
Consequently, for all $t\in(z,w]$, we have
\[
\frac{f(t)-f(x)}{t-x}
= \frac{1}{t-x}(f(t) - f(z) + f(z) - f(x))
> \frac{1}{t-x}\left(\frac{r}{2}(t-z) + (z-x) \right)
= \frac{r}{2}
,\]
which implies the following contradiction:
\[
z = \inf\left\{ t\in(x,y] \,\Big|\, \frac{f(t)-f(x)}{t-x} < \frac{r}{2} \right\} \geq w > z
.\]

\end{proof}

\begin{lem} \label{lem:analysis_fact_min}
Let $f,g:\R\to\R$ be differentiable at some $x\in\R$, and suppose $f(x) = g(x)$.
Then $\phi:\R\to\R$ defined by $\phi(t) = \min\{f(t),g(t)\}$ is right-differentiable at $x$, and its right derivative satisfies $\phi'_+(x) = \min\{f'(x), g'(x)\}$.
\end{lem}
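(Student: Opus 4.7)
The plan is to reduce the statement to a short Taylor-expansion computation, exploiting that $f(x)=g(x)$ and the continuity of the two-argument $\min$ function.

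First I would translate so that $x = 0$ and $f(x) = g(x) = 0$ (just for notational clarity), which is harmless since differentiability and $\min$ are preserved by adding constants and shifting the input. Then, using differentiability of $f$ and $g$ at $x$, I would write the first-order expansions
\begin{align*}
f(x+h) &= f'(x)\,h + r_1(h), \qquad g(x+h) = g'(x)\,h + r_2(h),
\end{align*}
where $r_1(h)/h \to 0$ and $r_2(h)/h \to 0$ as $h \to 0$.

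Next I would substitute into $\phi$ and divide by $h$ for $h > 0$, obtaining
\begin{align*}
\frac{\phi(x+h) - \phi(x)}{h} = \min\!\left\{ f'(x) + \tfrac{r_1(h)}{h},\; g'(x) + \tfrac{r_2(h)}{h} \right\}.
\end{align*}
The key observation is that $(a,b)\mapsto \min\{a,b\}$ is continuous on $\R^2$ (indeed $1$-Lipschitz in each argument, since $\min\{a,b\} = \tfrac{1}{2}(a+b) - \tfrac{1}{2}|a-b|$). So as $h \to 0^+$, the inner arguments tend to $f'(x)$ and $g'(x)$, and therefore the right-hand side tends to $\min\{f'(x), g'(x)\}$. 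This directly yields $\phi'_+(x) = \min\{f'(x), g'(x)\}$, establishing both right-differentiability and the value of the right derivative.

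There is essentially no hard step: the entire argument rests on the definition of differentiability together with continuity of $\min$, and it uses the hypothesis $f(x) = g(x)$ precisely to ensure that after subtracting $\phi(x)$ the zeroth-order terms cancel so that we can factor out $h$ cleanly. The only thing to be a little careful about is that continuity of $\min$ applies pointwise to $(f'(x) + o(1),\, g'(x) + o(1))$, which is immediate from the $1$-Lipschitz bound $|\min\{a,b\} - \min\{a',b'\}| \le \max\{|a-a'|,|b-b'|\}$.
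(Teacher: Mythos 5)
Your proof is correct. The chain of identities you rely on is valid: since $\phi(x)=f(x)=g(x)$, subtracting this common constant gives $\min\{f(x+h),g(x+h)\}-\phi(x)=\min\{f(x+h)-f(x),\,g(x+h)-g(x)\}$, and dividing by $h>0$ (positivity is essential here, and is exactly why only the right derivative comes out) preserves the $\min$, so the difference quotient of $\phi$ equals $\min$ of the difference quotients of $f$ and $g$. Passing to the limit via the $1$-Lipschitz bound on $(a,b)\mapsto\min\{a,b\}$ then handles all configurations of $f'(x)$ and $g'(x)$ at once.

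Your route differs from the paper's. The paper argues by cases on the sign of $f'(x)-g'(x)$: when the derivatives differ, it shows $\phi$ coincides with the smaller-derivative function on a right neighborhood of $x$ and reads off the derivative; when they are equal, it runs a sequential $\varepsilon$--$N$ argument bounding the difference quotient of $\phi$ by the maximum of the two individual errors (essentially the same Lipschitz property of $\min$ you invoke, but deployed only in that one case). Your single algebraic computation subsumes all three cases and is shorter; the paper's case analysis has the minor side benefit of exhibiting that $\phi$ is actually fully differentiable at $x$ when $f'(x)=g'(x)$, though the lemma as stated does not need that. Either argument suffices for the application.
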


\begin{proof}
We first consider the case where $f'(x) < g'(x)$.
Since $\lim_{h\to0} \frac{f(x+h)-f(x)}{h} < \lim_{h\to 0} \frac{g(x+h) - g(x)}{h}$, there exists some $\delta > 0$ such that $\frac{f(x+h)-f(x)}{h} < \frac{g(x+h) - g(x)}{h}$ for all $h\in(0,\delta)$.
Subsequently, since $f(x) = g(x)$, we have $f(x+h) < g(x+h)$ for all $h \in (0,\delta)$.
It follows that $\phi(x+h) = f(x+h)$ for all $h\in(0,\delta)$, and thus
\[
\lim_{h\to 0^+} \frac{\phi(x+h) - \phi(x)}{h}
=
\lim_{h\to 0^+} \frac{f(x+h) - f(x)}{h}
=
f'(x)
=
\min \{f'(x), g'(x)\}
.\]

Next, the case where $f'(x) > g'(x)$ is identical to the previous case except we swap the roles of $f$ and $g$.

Finally, we consider the case where $f'(x) = g'(x)$.
Here we can even show that $\phi$ is differentiable at $x$.
Let $\{h_n\}_{n\in\N}$ be a sequence such that $h_n\to 0$.
To show that $\frac{\phi(x+h_n) - \phi(x)}{h_n} \to f'(x)$, fix $\varepsilon > 0$.
Since $\frac{f(x+h_n) - f(x)}{h_n}\to f'(x)$ and $\frac{g(x+h_n) - g(x)}{h_n} \to g'(x)$, there exist $N_1, N_2\in\N$ such that
\[
n \geq N_1 \implies \left| \frac{f(x+h_n) - f(x)}{h_n} - f'(x) \right| \leq \varepsilon
\]
and
\[
n \geq N_2 \implies \left| \frac{g(x+h_n) - g(x)}{h_n} - g'(x) \right| \leq \varepsilon
.\]
Taking $N = \max\{N_1,N_2\}$, we have for all $n\geq \N$,
\begin{align*}
&\left| \frac{\phi(x+h_n) - \phi(x)}{h_n} - f'(x)\right| \\
&\qquad\leq \max \left\{ \left| \frac{f(x+h_n) - f(x)}{h_n} - f'(x) \right| , \left| \frac{g(x+h_n) - g(x)}{h_n} - g'(x) \right| \right\} \\
&\qquad\leq \max\{\varepsilon, \varepsilon\} = \varepsilon,
\end{align*}
where the first inequality holds due to $f(x)=g(x), f'(x)=g'(x)$, and the fact that for each $n$, either $\phi(x+h_n)=f(x+h_n)$ or $\phi(x+h_n) = g(x+h_n)$.
Thus, we have that $\frac{\phi(x+h_n) - \phi(x)}{h_n} \to f'(x)$.
Since the sequence $\{h_n\}_{n\in\N}$ was arbitrary, we conclude that
\[
\phi'(x) = \lim_{h\to 0} \frac{\phi(x+h) - \phi(x)}{h} = f'(x) = \min\{ f'(x), g'(x) \}
.\]
\end{proof}

\begin{lem}
\label{lem:quantile_continuity}
    For any probability distribution $\mu \in \R^\S$ and any $\beta \in [0,1]$, the largest-$(1-\beta)$-quantile function
    \begin{align*}
        Q_\beta(\mu, V'') = \sup \{V''(x) : x \in \S, \sum_{x' \in \S : V(x') \geq V(x)} \mu(x') \geq \beta \}
    \end{align*}
    satisfies
    \begin{align*}
        \left|Q_\beta(\mu, V) - Q_\beta(\mu, V')\right| \leq \infnorm{V - V'}
    \end{align*}
    for any $V , V' \in\R^\S$.
\end{lem}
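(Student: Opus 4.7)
The plan is to prove this by showing the Lipschitz bound in one direction via a symmetric argument. Specifically, setting $\epsilon = \infnorm{V - V'}$, it suffices to show $Q_\beta(\mu, V') \geq Q_\beta(\mu, V) - \epsilon$, as the reverse direction follows by swapping the roles of $V$ and $V'$.

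First I would observe that since $\S$ is finite, the supremum in the definition of $Q_\beta$ is always attained. So I can pick $x^\star \in \S$ with $V(x^\star) = Q_\beta(\mu, V) =: q$ and $\sum_{x' : V(x') \geq q} \mu(x') \geq \beta$. The key containment is then that
\[
\{x' \in \S : V'(x') \geq q - \epsilon\} \supseteq \{x' \in \S : V(x') \geq q\},
\]
which holds because $V'(x') \geq V(x') - \epsilon$ elementwise. This containment implies that the left-hand set has $\mu$-measure at least $\beta$.

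The subtle point I would highlight next is that $q - \epsilon$ need not itself equal $V'(x)$ for any state $x$, so I cannot directly declare $q - \epsilon$ to be the value of $Q_\beta(\mu, V')$ at some state. Instead, I would choose $x^{\star\star} \in \S$ to attain the minimum of $V'$ on the nonempty set $\{x' : V'(x') \geq q - \epsilon\}$; call this minimum $v_{\min} \geq q - \epsilon$. Then $\{x' : V'(x') \geq v_{\min}\}$ equals $\{x' : V'(x') \geq q - \epsilon\}$ (any state with $V'$-value at least $v_{\min} \geq q-\epsilon$ lies in the larger set, and the reverse inclusion holds by definition of $v_{\min}$), so its $\mu$-measure is at least $\beta$. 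This certifies $x^{\star\star}$ as a valid candidate in the definition of $Q_\beta(\mu, V')$, giving $Q_\beta(\mu, V') \geq V'(x^{\star\star}) = v_{\min} \geq q - \epsilon$, as required.

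I do not anticipate any real obstacle here; the only point requiring care is the one above, namely that the sup defining $Q_\beta$ ranges over values $V(x)$ actually attained at states rather than over all real numbers, so one cannot use $q - \epsilon$ directly as a threshold but must instead round up to the smallest attained $V'$-value in the relevant super-level set.
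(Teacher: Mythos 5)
Your proof is correct and follows essentially the same route as the paper's: fix a witness for $Q_\beta(\mu,V)$ and shift by $\infnorm{V-V'}$ to obtain a valid candidate for $Q_\beta(\mu,V')$. The only cosmetic difference is that the paper first rewrites $Q_\beta$ as a supremum over subsets $\S'$ with $\mu(\S')\geq\beta$ of $\min_{s'\in\S'}V''(s')$, which sidesteps the attained-value subtlety that you instead handle explicitly (and correctly) by rounding up to $v_{\min}$.
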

\begin{proof}
    First, we note that the definition of $Q_\beta$ can be written equivalently as
    \begin{align*}
        Q_\beta(\mu, V'') = \sup \left\{\min_{s' \in \S'} V''(s') : \S' \subseteq \S \text{ and } \sum_{s' \in \S'} \mu(s') \geq \beta \right\}.
    \end{align*}

    Without loss of generality we can assume that $Q_\beta(\mu, V) \geq Q_\beta(\mu, V')$, so it suffices to lower-bound $Q_\beta(\mu, V')$.
    By the definition of $Q_\beta(\mu, V)$ (and the fact that $\S$ is finite so the supremum within its definition is attained exactly), there exists some set $\S' \subseteq \S$ such that
    \begin{align*}
        Q_\beta(\mu, V) = \min_{s' \in \S'} V(s')
    \end{align*}
    and $\sum_{s' \in \S'} \mu(s') \geq \beta$.
    Therefore since
    \begin{align*}
        V'(s') \geq V(s') - \infnorm{V - V'} \geq Q_\beta(\mu, V) - \infnorm{V - V'}
    \end{align*}
    for all $s' \in \S'$, we have that
    \begin{align*}
        Q_\beta(\mu, V'') \geq Q_\beta(\mu, V) - \infnorm{V - V'}
    \end{align*}
    as desired.
\end{proof}

\end{document}